\newcommand{\myitem}[1][]{%
	\item[#1]\protected@edef\@currentlabel{#1}\ignorespaces%
}
\newcommand*\circled[1]{\tikz[baseline=(char.base)]{
    \node[shape=circle, draw, inner sep=1pt, 
        minimum height=12pt] (char) {\vphantom{1g}#1};}}
\newtheorem{theorem}{Theorem}[section]
\newtheorem{proposition}[theorem]{Proposition}
\newtheorem{lemma}[theorem]{Lemma}
\newtheorem{definition}[theorem]{Definition}
\newtheorem{assumption}{Assumption}
\theoremstyle{remark}
\newtheorem{remark}{Remark}[section]
\newtheorem{exmp}[theorem]{Example}
\newcommand{\E}{\mathbb{E}}
\newcommand{\R}{\mathbb{R}}
\newcommand{\sM}{\mathcal{M}}
\newcommand{\sL}{\mathcal{L}}
\newcommand{\calL}{\mathcal{L}}
\newcommand{\brho}{\bar{\rho}}
\newcommand{\hrho}{\hat{\rho}}
\newcommand{\trho}{\tilde{\rho}}
\newcommand{\un}{{\rm un}}
\newcommand{\rw}{{\rm rw}}
\newcommand{\diag}{\text{diag}}
\newcommand{\var}{\text{var}}
\newcommand{\Vol }{\text{Vol}}
\newcommand{\Hess}{\text{Hess}}
\newcommand{\rd}{\mathrm{d}}
\newcommand{\calM}{\mathcal{M}}
\begin{document}

\title{Improved convergence rate of $k$NN graph Laplacians:\\
differentiable self-tuned affinity}

\author[1]{Xiuyuan Cheng\thanks{Email: xiuyuan.cheng@duke.edu. Authors listed alphabetically.}}
\author[1]{Yixuan Tan}
\author[2]{Nan Wu}

\affil[1]{{\small Department of Mathematics, Duke University}}
\affil[2]{\small Department of Mathematical Sciences, The University of Texas at Dallas}

\date{\vspace{-30pt}}

\maketitle

\begin{abstract}
In graph-based data analysis, $k$-nearest neighbor ($k$NN) graphs are widely used due to their adaptivity to local data densities. Allowing weighted edges in the graph, the kernelized graph affinity provides a more general type of $k$NN graph where the $k$NN distance is used to set the kernel bandwidth adaptively. In this work, we consider a general class of $k$NN graph where the graph affinity is $W_{ij} = \epsilon^{-d/2} k_0 ( \| x_i - x_j \|^2 / \epsilon \phi(  \hrho(x_i),  \hrho(x_j) )^2 ) $, with $\hat{\rho}(x)$ being the (rescaled) $k$NN distance at the point $x$, $\phi$ a symmetric bi-variate function, and $k_0$ a non-negative function on $[0,\infty)$. Under the manifold data setting, where $N$ i.i.d. samples $x_i$ are drawn from a density $p$ on a $d$-dimensional unknown manifold embedded in a high dimensional Euclidean space, we prove the operator pointwise convergence of the $k$NN graph Laplacian to the limiting manifold operator (depending on $p$) at the rate of $O(N^{-2/(d+6)})$, up to a log factor, when $k_0$ and $\phi$ have $C^3$ regularity and satisfy other technical conditions. This is obtained when $\epsilon \sim N^{-2/(d+6)}$ and $k \sim N^{6/(d+6)}$, both at the optimal order to balance the theoretical bias and variance errors. Our improved convergence rate is based on a refined analysis of the $k$NN estimator, which can be of independent interest. We validate our theory by numerical experiments on simulated data.
\end{abstract}

\section{Introduction}

The graph Laplacian has been an essential tool in data analysis, and the applications include dimension reduction \cite{tenenbaum2000global, belkin2003laplacian, coifman2006diffusion, talmon2013diffusion}, clustering analysis \cite{ng2001spectral, von2007tutorial}, and semi-supervised learning \cite{belkin2002semi, zhu2003semi, el2016asymptotic, slepcev2019analysis, flores2022analysis}, among others.
In this work, we consider graph Laplacians constructed from high-dimensional data vectors, and we further assume that they lie on an unknown low-dimensional manifold. 
Specifically, from a set of i.i.d. samples $X = \{x_i\}_{i=1}^N$ in $\R^m$, 
one constructs a (kernelized) {\it graph affinity} matrix as $W_{ij} = K_\epsilon(x_i, x_j)$, where $K_\epsilon(x,y)$ is a kernel function typically parametrized by a bandwidth parameter $\epsilon > 0$.
For example, $W_{ij} = k_0(\|x_i - x_j\|^2 / \epsilon)$ for some function $k_0$ on  $[0, \infty)$, and this is called a ``fixed-bandwidth'' kernelized affinity. When $k_0$ is the indicator function $\mathbf{1}_{[0,1]}$, $W$ will assign a weight one to an edge between $x_i$ and $x_j$ when $\|x_i - x_j\| \le \sqrt{\epsilon}$, and zero otherwise. 

Fixed-bandwidth graph affinity encounters challenges in practice when data samples are distributed unevenly, 
which results in smaller local mutual distances at some samples (in dense regions) and larger distances at others (in sparse regions). By using a universal bandwidth $\epsilon$, the points in dense regions may be over-connected while those in sparse regions are left with no neighbors. This often results in instability (large variance error) in the performance of graph Laplacian methods. 

To overcome the issue of fixed-bandwidth kernelized graph affinity, the construction of {\it adaptive} bandwidth has been developed and those based on $k$-nearest neighbor ($k$NN) distances are most often used. In a classical $k$NN graph, a point $x_j$ is connected to $x_i$ (i.e. the affinity $W_{ij}=1$) if it is within the $k$NN of $x_i$, and otherwise there is no edge ($W_{ij}=0$).
This construction is asymmetric with respect to $i$ and $j$, and one way to enforce symmetric $W$ is by letting 
\begin{equation}\label{eq:def-kNN-graph}
	W_{ij} = \mathbf{1}_{[0,1]} \left(  \frac{\| x_i - x_j \|^2}{ \max\{ \hat{R}(x_i),  \hat{R}(x_j)\}^2   } \right), \quad \text{or} \quad W_{ij} = \mathbf{1}_{[0,1]} \left(  \frac{\| x_i - x_j \|^2}{ \min\{\hat{R}(x_i),  \hat{R}(x_j)\}^2   } \right),
\end{equation}
where $\hat{R}(x)$ denotes the distance between $x$ and its $k$NN within the given dataset $X$. 
The convergence rate of graph Laplacians using $W$ as in \eqref{eq:def-kNN-graph} has been theoretically analyzed in \cite{calder2022improved} on manifold data.
We call the model \eqref{eq:def-kNN-graph} the ``classical $k$NN'' graph affinity. 

Another way to enforce symmetry is by multiplying $\hat{R}(x_i)$ and $\hat{R}(x_j)$ instead of taking the max/min of the two. Specifically, for some function $k_0$, e.g., $k_0(\eta) = \exp(-\eta)$, 
\begin{equation}\label{eq:def-self-tune-graph}
	W_{ij} = k_0 \left(   \frac{\| x_i - x_j \|^2}{  \hat{R}(x_i) \hat{R}(x_j) } \right).
\end{equation}
This model was utilized in \cite{zelnik2004self} for spectral clustering and called ``self-tuned'',
and the graph Laplacian convergence was previously analyzed in \cite{cheng2022convergence}.

In this work, we consider a more general class of $k$NN graph Laplacians which includes the classical $k$NN affinity \eqref{eq:def-kNN-graph} and the self-tuned affinity \eqref{eq:def-self-tune-graph} as special cases. Technically, we let
\begin{equation}\label{eq:def-W}
W_{ij} = \epsilon^{-d/2} k_0 \left(  \frac{\| x_i - x_j \|^2}{ \epsilon \phi(\hrho(x_i), \hrho(x_j))^2 } \right),
\end{equation}
where $\phi: (0, \infty) \times (0, \infty) \to (0, \infty)$ is symmetric, i.e., $\phi( u,v) = \phi(v, u)$,
$k_0$ is a non-negative function on $[0, \infty)$, 
and $\hrho$ is the so-called empirical bandwidth function rescaled from the $k$NN distance $\hat{R}$ (will be defined in \eqref{eq:def-hat-rho} below). 
The functions $\phi$ and $k_0$ will need to satisfy additional assumptions for our convergence results to hold. 
The general model \eqref{eq:def-W} recovers \eqref{eq:def-kNN-graph} and \eqref{eq:def-self-tune-graph} by choosing the suitable $\phi$ and $k_0$, and only $\hat R$ (instead of $\hat \rho$) is involved because $\phi$ is homogeneous in these cases, see the details in Section \ref{sec:prelim}.

To prove the convergence of graph Laplacians, we follow the {\it manifold data} setting,
which assumes that high dimensional data samples $x_i$ are drawn i.i.d. from a density $p$ supported on an (unknown) $d$-dimensional manifold $\calM$ embedded in $\R^m$ (Assumption \ref{assump:M}). 
Under additional technical conditions, one can show that a graph Laplacian matrix $L_N$ converges to a limiting manifold differential operator $\calL$.
We will see that the graph Laplacians associated with $W$ converge to the limiting operator $\sL_p$, to be defined in \eqref{eq:def-sL}.
We quantify the convergence speed in terms of the {\it pointwise convergence} rate, namely, the high-probability bound of $| L_N f(x) - \calL f(x)|$ under certain asymptotic regime (a joint limit of large $N$ and small bandwidth $\epsilon$),
where $f$ is a test function on $\calM$. 
Our theoretical results on the $k$NN graph Laplacian convergence rates are summarized as follows:

\begin{itemize}
    \item
When $L_N$ is the un-normalized graph Laplacian $L_\un$ or the random-walk graph Laplacian $L_\rw$
using $W$ as in \eqref{eq:def-W}, and $\calL = \sL_p$ as in \eqref{eq:def-sL},
and  $k_0$ and $\phi$ have sufficient regularity,
then, with properly chosen parameters $k$ and $\epsilon$, $| L_N f(x) - \calL f(x)|$ is bounded to be 
    $ O (N^{-2/(d+6)}\sqrt{\log N}).$

\item 
When $L_N$ is the un-normalized graph Laplacian $\widetilde L_\un$ or the random-walk graph Laplacian $\widetilde L_\rw$ using $\widetilde W$ as in \eqref{eq:def-Wtilde}, and $\calL = \Delta_p$ as in \eqref{eq:def-Delta-p}, we have the same convergence rates.
   \end{itemize}

For the constant in the big-$O$ notation, we primarily track the constant dependence on the manifold density $p$ and the test function $f$.
When discussing overall error rates, we omit the constant dependence for notation brevity. 
The detailed constant dependence can be found in the proofs. 
Our pointwise rate
matches the known result for fixed-bandwidth graph Laplacians using differentiable kernels \cite{singer2006graph, cheng2022eigen}.

Our graph Laplacian analysis is based on a refined analysis of the $k$NN estimator. 
Specifically, we uniformly bound the relative error between 
 the empirical bandwidth function $\hrho$ 
 and a population bandwidth function $\brho_{r_k}$ (see Definition \ref{def:bar-rho-epsilon})  on $\sM$, with high probability, to be
\[ O( (k / N)^{3/d} +  \,  \sqrt{\log N / k}). \]
This refined $k$NN estimation is due to a correction term that will be introduced in the construction of $\brho_{r_k}$ depending on $k$, and this analysis may be of independent interest. 
We further validate our theoretical findings using numerical experiments.

\textbf{Notation}
The notations in this paper are standard.
We use $\cdot$ to stand for the vector-vector inner-product, possibly between two vector fields.
We use $| \cdot  |$ for the absolute value, and $\| \cdot \|$ for vector or tensor norms.
Throughout the paper, $\R_+ =(0, \infty)$. 
We utilize the following asymptotic notations: 
suppose $g > 0$,
$f = O(g)$ indicates that there exists a constant $C > 0$ such that $|f| \leq C g$ in the limit. 
The notation $f \lesssim g$ means that there exists a constant $C > 0$ such that $0 \le f \leq C g$ in the limit.
$f = o(g)$ means that, $|f|/ g \to 0$ in the limit.
$f \ll g$ means that $f \ge 0$ and $f/g \to 0$.
$f = \Omega(g)$ implies that $f> 0$ and $f / g \to \infty $.
$f = \Theta(g)$ means that for $f, g > 0$, there exist constants $C_1, C_2 > 0$ such that $C_1 g\le f \le C_2 g$ in the limit.
We may use the superscript $^{[\; ]}$ to emphasize constant dependence, e.g., $O^{[\rm x]}(\cdot )$ means the constant in the big-$O$ notation depends on x.

\subsection{Related works}\label{sec:related-works}

\paragraph{$k$NN for density estimation.}
Our analysis of $k$NN graph will be based on the analysis of $k$NN distance to its (properly normalized) deterministic counterpart.
A closely related problem is the density estimation by $k$NN distance, 
known as nearest neighbor density estimation (NNDE),
the study of which dated back to the 70's \cite{moore1977consistency, devroye1977strong, mack1979multivariate, mack1983rate, zhao2022analysis}.
Early works demonstrated the strong uniform convergence of NNDE without specifying rates, see reference in \cite{biau2015lectures}. 
\cite{mack1979multivariate} established a point-wise convergence at a rate of $O((k/N)^{2/d} + \sqrt{\log N / k})$ where $N$ is the sample size, assuming that the density is lower and upper bounded and $C^2$ on $\R^d$.
More recently, \cite{zhao2022analysis} proved the minimax optimality of the NNDE estimator under certain cases.

These works considered densities on a Euclidean domain, and the analysis of $k$NN estimator applied to samples lying on a $d$-dimensional manifold embedded in a higher dimensional ambient Euclidean space appeared as intermediate results in the two works
	\cite{calder2022improved, cheng2022convergence} on the $k$NN graph Laplacians.
Specifically, 
\cite{calder2022improved} showed the convergence of a $k$NN distance estimator with a relative error of $O(\sqrt{\log N / k})$ given that $(k/N)^{2/d} \lesssim \sqrt{\log N / k}$;
\cite{cheng2022convergence} proved the uniform convergence of the $k$NN distance estimator with a relative error of $O((k/N)^{2/d} + \sqrt{\log N / k})$.
These rates echoed that in \cite{mack1979multivariate} by replacing the Euclidean dimension to be the manifold intrinsic dimension $d$.
In this work, we will refine the $k$NN analysis by introducing a correction term to the limiting deterministic bandwidth function, and we follow some techniques in \cite{cheng2022convergence}, see more in Remark \ref{rmk:comp-hrho-est}.

\paragraph{Convergence of graph Laplacians.}
The convergence of fixed-bandwidth graph Laplacians has been extensively studied
dating back to the 2000s \cite{hein2005graphs, coifman2006diffusion, singer2006graph, hein2007graph, belkin2008towards}.
The seminal work of Diffusion Map  \cite{coifman2006diffusion} showed
the convergence of the continuous kernel integral operator to the limiting manifold Laplacian.
Considering graph Laplacians constructed from $N$ i.i.d. samples using a fixed-bandwidth kernel, 
\cite{singer2006graph} established pointwise convergence at the rate  $O(N^{-2/(d+6)})$.
The same convergence rate was extended to the connection Laplacian in \cite{singer2017spectral}.
The present work considers adaptive bandwidths determined by empirical $k$NN distances and establishes the same rate of graph Laplacian pointwise convergence, 
despite the additional estimation error and statistical dependence introduced by the $k$NN bandwidth.
In addition to point-wise convergence, spectral convergence (eigen-convergence), namely the convergence of eigenvalues and eigenvectors of graph Laplacians to the spectra of limiting manifold operators has also been analyzed in the literature \cite{singer2017spectral,dunson2021spectral,garcia2020error, calder2022improved,calder2022lipschitz,cheng2022eigen}. 
We discuss the pointwise convergence and eigen-convergence in more detail in Section \ref{subsec:eigen-rate}.

The convergence of $k$NN graph Laplacians has also been analyzed in several places.
The earlier work of \cite{ting2010analysis} considered a class of graph Laplacians that include the $k$NN affinity as a special case, and the affinity is obtained using a compactly supported kernel. 
The analysis in \cite{ting2010analysis} can prove the point-wise convergence of the $k$NN graph Laplacian but without a convergence rate.
Analyzing both fixed-bandwidth compactly supported kernel and $k$NN graph construction (the affinity is as in  \eqref{eq:def-kNN-graph}), 
\cite{calder2022improved} proved both the point-wise and spectral convergence of the graph Laplacian with a rate of $O(N^{-1/(d+4)})$.
Utilizing empirical process techniques, \cite{guerin2022strong} analyzed $k$NN graph Laplacian operators and proved uniform point-wise convergence to the manifold limiting operator for both fixed-bandwidth kernel affinity and $k$NN kernel affinity (using an asymmetric version), 
achieving a rate of $O(N^{-1/(d+4)})$.
Considering differentiable kernel functions instead of compactly supported ones, 
\cite{cheng2022convergence} studied $k$NN graph Laplacians with a family of self-tuned kernels, possibly with normalization, and obtained a point-wise convergence rate. 
Our work is most closely related to \cite{calder2022improved, cheng2022convergence}, and we will further comment on the relationship of our result to these prior works later in the paper.

\paragraph{$k$NN graph applications.}
$k$NN graphs have been applied in various scenarios,
including computing manifold geometry-preserving low-dimensional embedding (using a self-tuned kernel) \cite{kohli2021ldle},
constructing the manifold scattering transform \cite{chew2022manifold}, 
and computing the persistent homology  \cite{damrich2023persistent}.
In particular, $k$NN graph distance can be used to approximate manifold distance, e.g., ISOMAP \cite{tenenbaum2000global} used it to compute manifold geodesic distance \cite{bernstein2000graph}, 
and more recently, $k$NN was used in the study of the so-called Fermat distances \cite{trillos2024fermat}.
Meanwhile, $k$NN has also been adopted for regression problems on manifold data. 
Regressors based on $k$NN have been adopted in \cite{goldberg2009multi} using Mahalanobis distance, and the statistical minimax optimality was studied in \cite{moscovich2017minimax,qiu2024semi}.
In the context of the Gaussian process on manifolds, \cite{fichera2024implicit} used eigen-pairs of the $k$NN graph Laplacian to approximate the Mat\'ern kernel,
and \cite{tang2024adaptive} used $k$NN distance to design the prior of kernel bandwidth of the Gaussian process in manifold Bayesian regression.
Focusing on semi-supervised learning on graphs, \cite{flores2022analysis} extended the pointwise convergence analysis of $k$NN graphs in \cite{calder2022improved} to $p$-Laplacian (by considering a convex combination of $p=2$ and $p=\infty$ Laplacians) and proposed efficient methods for 
semi-supervised learning with very few labels.
The analysis of $k$NN graphs in this work may be extended to study $k$NN methods in these applications.

\subsection{Preliminaries}  \label{sec:prelim}

\paragraph{Riemannian manifold.}
We briefly provide some background on differential geometry used in this work. For formal definitions and additional explanations, we refer readers to standard Riemannian geometry textbooks \cite{do1992riemannian, petersen2006riemannian}.

We consider a connected compact smooth manifold $\sM$ embedded in a high dimensional Euclidean space $\R^m$. 
The manifold Riemannian metric is denoted as $g$.
We further assume that the embedding in $\R^m$ is isometric, and denote by 
$\iota$ the embedding mapping from $\sM$ to $\R^m$, $\iota \in C^\infty(\calM)$.
When there is no danger of confusion, we use the same notation $x$ to denote $x\in {\sM}$ and $\iota(x)\in \mathbb{R}^m$.

For any two points $x,y \in \sM$, the {\it Riemannian distance} between $x$ and $y$ is the infimum of the lengths of all piece-wise regular curves on $\calM$ connecting $x$ and $y$, denoted as $d_{\sM}(x,y)$.
When the manifold is connected and compact, the metric space $(\calM, d_{\sM})$ is complete, and 
for any two points $x,y$ on $\sM$ there exists a length-minimizing geodesic $\gamma$ joining from $x$ to $y$. 
That is to say, the length of the geodesic is equal to $d_{\sM}(x,y)$.
We also call $d_{\sM}(x,y)$ the manifold geodesic distance.

Let $dV$ represent the (local) volume form on $\sM$ induced by $g$, then $(\calM, dV)$ gives a measure space. 
For a probability distribution $P$ on $\calM$, we can use $dV$ as the base measure and denote $P$ as $dP = p dV$, whenever $P$ has a density  $p: \sM \to [0, \infty)$.

\paragraph{Manifold derivatives.}
To differentiate a function $f$ on the manifold $\calM$, 
one can compute the derivatives of $f$ using local charts. Specifically, for $f \in C^k(\calM)$, 
given $ x\in \sM$ and a local chart $(U, \phi)$ around $x$, $\phi: U \to \R^d$,
$f \circ \phi^{-1}$ is $C^k$ on  $\phi(U) \subset \R^d$. 
This way of defining function differentiation depends on the choice of charts. 
The mapped coordinates in $\R^m$ by  $\iota$ are called extrinsic coordinates. 
In contrast, a construction is called {\it intrinsic} if it only depends on the Riemannian tensor $g$, not the embedding mapping $\iota$.

The intrinsic manifold derivative (covariant derivative) of a function $f \in C^k(\calM)$ is defined using the Riemannian (Levi-Civita) connection $\nabla$ on $\sM$.
Specifically, 
the first derivative 
$\nabla f$ at $x\in \sM$  is defined 
s.t.  $\forall v \in T_x \sM$, 
$\nabla f\big|_x (v) = \nabla_v f\big|_x := \frac{d}{dt} f( \gamma (t))|_0$ where $\gamma(t)$ is a differentiable curve on $\calM $ s.t. $\gamma (0)= x$ and $\gamma'(0) = v$. 
For a smooth vector field $U$, we denote by $\nabla f(U) = U (f) $ defined as $U (f)|_x = \nabla_{U(x)} f|_x$.
Higher-order derivatives can also be defined using the connection.
Formally, for smooth vector fields $V_1,\dots, V_l, U$ on $\calM$,
$\nabla^{l+1} f(U, V_1,\dots, V_l) 
= \nabla_U (\nabla^l f) (V_1, \dots, V_l)  = \nabla_U ( \nabla^l f (V_1, \dots, V_l) ) - \sum_{j=1}^l \nabla^l f (V_1, \dots,  \nabla_U  V_j  , \dots, V_l) $.

It is useful to introduce normal coordinates for the representation of manifold derivatives at a point.
For any $x \in \calM$, 
let $ \{ \partial_i|_x \}_{i=1}^d$ be an orthonormal basis of $T_x \calM$
and $ s = \{ s_i \}_{i=1}^d$ be the normal coordinates associated with $\{\partial_i|_x\}$. 
We denote the {\it exponential map} at $x$ by $\exp_x: T_x \sM \to \sM$.
For any $f \in C^k(\sM)$, we define $\tilde f \coloneqq f \circ \exp_x: T_x \sM\cong \R^d \to \R$. Then, for any $v \in T_x \sM$, 
denoting the $k$-th derivative in $\R^d$ by $D^k$,  we have 
\begin{equation} \label{eq:nabla^k-f-norm-coord-v}
	\nabla^k f\big|_x (v, \cdots, v)   = D^k \tilde{f}(0)(v, \dots, v).
\end{equation}
In \eqref{eq:nabla^k-f-norm-coord-v}, we identify $v = \sum_{j=1}^d v_{j}  \partial_j | _x \in T_x \sM$ with $v = (v_{1}, \dots, v_{d}) \in \R^d$, which we also denote by $v$,
and $D^k \tilde{f}(0)(v, \dots, v) = \sum_{j_1,\dots,j_k = 1}^{d} v_{j_1}\cdots v_{j_k}  \frac{ \partial^k \tilde{f}}{\partial s_{j_1} \cdots \partial s_{j_k}}(0)$. 

In this work, we use the operator norm of the $k$-th manifold derivative of a manifold function $f$, which can be intrinsically defined in the following way.
Observe that on the r.h.s. of \eqref{eq:nabla^k-f-norm-coord-v}, $D^k \tilde{f}(0)$ is  a $k$-way tensor mapping from $(\R^d)^{\otimes k}$ to $\R$.
Because $f$ is $C^k$, 
$D^k \tilde{f}(0)$ is a real symmetric tensor, 
and this allows us to use the spectral norm of the symmetric tensor 
$D^k \tilde{f}(0)$
to define the operator norm of $\nabla^k f|_x$. 
Specifically, the spectral norm of a $k$-way real symmetric tensor $T$ is defined as $\| T\| := \sup_{ v_1, \cdots, v_k \in  \R^d, \|v_i\| \le 1} |T( v_1,\cdots, v_k)|$,
and $ \| T\| = \sup_{ v \in \R^d, \|v\| \le 1} |T( v,\cdots, v)|$ as a result of Banach's Theorem \cite{banach1938homogene}.
Inserting into \eqref{eq:nabla^k-f-norm-coord-v}, we have
$
\| D^k \tilde{f}(0) \| 
=  \sup_{v  \in T_x \sM, \|v\| \leq 1} \left|  \nabla^k f\big|_x  (v,\dots,v) \right|$,
which is intrinsic since both $\exp_x$ and $\nabla^k f$ are.

\begin{definition}\label{def:Dkf-norm}
For $f \in C^k(\calM)$, define   
$ \|\nabla^k f|_x\| :=  \sup_{v  \in T_x \sM, \|v\| \leq 1} \left|  \nabla^k f\big|_x  (v,\dots,v) \right|$
for any $x\in \calM$,
and 
$   \| \nabla^k f \|_\infty 
    \coloneqq  \sup_{x \in \sM}  \| \nabla^k f |_x \|$.
\end{definition}

In the definition of $\| \nabla^k f \|_\infty$, the supremum $ \sup_{x \in \sM} $ is finite and attained due to the continuity of $\nabla^k f$ and the compactness of $\calM$.
The definitions of $\nabla^k f$ and the norm $\|\nabla^k f|_x\|$ are all intrinsic.

\paragraph{Graph affinity and graph Laplacians.}

We first explain how our general affinity $W$ in \eqref{eq:def-W} contains the classical $k$NN affinity \eqref{eq:def-kNN-graph} and the self-tuned affinity \eqref{eq:def-self-tune-graph}:
We will introduce $r_k$ in \eqref{eq:def-rk} which leads to \eqref{eq:rhohat-Rhat}.
By \eqref{eq:rhohat-Rhat} and choosing $\epsilon = r_k$, \eqref{eq:def-W} becomes (we omit the constant factor $\epsilon^{-d/2}$)
\begin{equation*} 
	W_{ij} =  k_0 \left(  \frac{\| x_i - x_j \|^2}{ r_k \phi(\hat{R}(x_i) / \sqrt{r_k}, \hat{R}(x_j) / \sqrt{r_k})^2 } \right).
\end{equation*}

Recovery of \eqref{eq:def-kNN-graph}:
let $\phi(u, v) = \max\{u,v\}$. Since $\phi$ is homogeneous, the factors $r_k$ in the denominator cancel out, and we have that $W_{ij} = k_0 ( \frac{\| x_i - x_j \|^2}{ \max\{\hat{R}(x_i), \hat{R}(x_j) \}^2 } ) $.
By setting $k_0(\eta) = \mathbf{1}_{[0,1]}(\eta)$, this recovers the affinity with ``max'' in \eqref{eq:def-kNN-graph}.
Similarly, by letting $\phi(u, v) = \min\{u,v\}$, this recovers the affinity with ``min'' in \eqref{eq:def-kNN-graph}.

Recovery of \eqref{eq:def-self-tune-graph}:
let $\phi(u,v) = \sqrt{uv}$. Since this $\phi$ is also homogeneous, 
the denominator becomes $\hat{R}(x_i)\hat{R}(x_j)$, and 
we have that $W_{ij} = k_0 ( \frac{\| x_i - x_j \|^2}{ \hat{R}(x_i)\hat{R}(x_j) } ) $ which is \eqref{eq:def-self-tune-graph}.

Given an affinity matrix $W$, the associated un-normalized graph Laplacian matrix is defined as $ D - W$, where $D$ denotes the degree matrix, which is a diagonal matrix with $D_{ii} = \sum_{j=1}^N W_{ij}$. 
The random-walk graph Laplacian is defined as $ D^{-1}(D-W) = I - D^{-1}W$.
In addition to these two basic constructions, various normalization techniques have been introduced in the literature, see e.g. \cite{hoffmann2022spectral, berry2016variable}. 
In this work, our analysis will focus on un-normalized and random-walk graph Laplacians with possibly minor modifications,
and details will be specified later.

\section{Refined $k$NN estimation}\label{sec:knn-estimation}

In this section, we bound the (relative) error between 
the $k$NN bandwidth function $\hrho$ and a population bandwidth function $\brho_{r_k}$ that depends on $k$. We will formalize the definitions of $\hrho$ and $\brho_{r_k}$. 
The relative error is bounded uniformly in Theorem \ref{thm:consist-hrho}.
All the proofs in this section are in Appendix \ref{sec:proof-sec-rho}.

\subsection{$k$NN distance and the empirical bandwidth $\hat \rho$}  \label{sec:def-hrho}

Given a dataset $X$ in $\R^m$ consisting of $N$ samples $\{ x_j \}_{j=1}^N$ and $x \in \R^m$, we define
\begin{equation}\label{eq:def-hat-R}
\hat{R}(x)\coloneqq \inf_r \{r>0, \text{s.t. } \sum_{j=1}^N \mathbf{1}_{\{ \|x_j-x\|\leq r \}} \geq k\},
\end{equation}
which is equivalently the distance between $x$ and its $k$NN in $X$. 
In this work, we will focus on the manifold data setting where samples $x_i$ lie on a low-dimensional manifold embedded in $\R^m$:

\begin{assumption}[Data manifold]\label{assump:M}
Let ${\sM}$ be a $d$-dimensional connected compact smooth manifold (without boundary) isometrically embedded in $\mathbb{R}^{m}$.
The data samples $x_i$, $i=1,\dots, N$ are drawn i.i.d. from a probability distribution with density $p$ supported on $\calM $.
\end{assumption}

Note that the $k$NN function $\hat R$ is defined everywhere on $\R^m$ (not necessarily on the support of $X$), and $\hat R$ can be constructed once an arbitrary dataset $X$ is given. 
The Lipschitz continuity of $\hat{R}$ in $\R^m$ (Lemma \ref{lemma:Lip-cont-hat-R}) also holds for general $X$ and does not use the manifold data assumption.

When the number of samples $N$ increases, we will theoretically set the $k$NN parameter $k$ to scale with $N$ in a way such that $k \to \infty$ and  $k/N \to 0$. 
As a result, the magnitude of $\hat R(x)$ will also decrease to zero. 
For the $k$NN estimator to have an $O(1)$ limit, one will need to rescale $\hat R(x)$ with a factor.
Following \cite{cheng2022convergence}, we introduce the (rescaled) $k$NN estimator $\hrho(x)$ as
\begin{equation}\label{eq:def-hat-rho}
\hrho(x) \coloneqq \hat{R}(x)\left(\frac{k}{\alpha_dN}\right)^{-1/d},
\quad 
\text{$\alpha_d := \Vol\,(\,\text{unit ball in $\R^d$} \,)$},
\end{equation}
which is called the {\it empirical bandwidth function}, and $\alpha_d$ is a constant depending on $d$. 
We emphasize that  $\hat{\rho}$ is introduced as a theoretical object, 
and particularly we do not need to know the normalizing constant $\alpha_d$ in practice.

It is known that, with a proper choice of $k$,  as $N$ increases, $\hat \rho $ converges to an $O(1)$ limit 
\begin{equation*}
\bar \rho: = p^{-1/d}
\end{equation*}
under the manifold data setting \cite{cheng2022convergence, calder2022improved}.
We will refine the estimation analysis by introducing a correction term to $\bar \rho$  that depends on $k$ and $N$ in Section \ref{sec:def-brho}. 
We define the constant $r_k$ as 
\begin{equation} \label{eq:def-rk}
 r_k \coloneqq \left(\frac{k}{\alpha_d N} \right)^{2/d}, 
\end{equation}
and then by \eqref{eq:def-hat-rho}, we have 
\begin{equation}  \label{eq:rhohat-Rhat}
\sqrt{r_k} \hrho(x) = \hat{R}(x).    
\end{equation}
The constant $ r_k \sim (k/N)^{2/d}$ is asymptotically small as $k/N \to 0$.
We will denote the corrected population bandwidth function as $\bar \rho_r$ and $r$ may have a subscript $_k$ to indicate the dependence on $k$.

\subsection{The population bandwidth function $\brho_{r_k}$}  \label{sec:def-brho}

The goal of our analysis in this section is to establish the approximation 
\[
\hat{\rho}(x) \approx \bar{\rho}_{r_k} (x),
\quad \text{uniformly for $x \in \sM$,}
\]
where $\bar{\rho}_{r_k}$ is called the {\it population bandwidth function}.
The definition of $\bar{\rho}_{r_k}$ involves a correction term of order $O( (k/N)^{2/d} )$ added to $\bar\rho $, the latter being the $O(1)$ limit of $\hat{\rho}$.
We will show in Section \ref{sec:concen-hrho} that 
this correction term refines the bound of bias error and improves over previous estimates in the literature.
The refined estimation is key to our later analysis of $k$NN graph Laplacians.

We first introduce the definition of $\brho_r$ for any $r \geq 0$ sufficiently small, and the threshold for $r$ depends on $\sM$ and $p$. 
In later analysis, we will set $r =r_k $ in $\bar \rho_r$,
where $r_k$ is defined in \eqref{eq:def-rk}.
At every point $x\in \sM$, the value of $\brho_r (x)$ 
is implicitly defined via the solution to the following equation in variable $t$, 
\begin{equation}\label{eq:def-bar-rho-epsilon}
t^d\left(1 + r Q(x)
t^2\right) = \frac{1}{p(x)}, 
\quad 
Q(x)\coloneqq \frac{1}{2(d+2)}
    \left( \frac{\Delta p(x)}{p(x)} + \omega(x) \right),
\end{equation}
where $\omega(x)$ is a function on $\calM$ depending on the manifold extrinsic coordinates at $x$. $\omega(x)$ is defined in \eqref{eq:def-omega} and introduced in Lemma \ref{lemma:G-expansion-h-indicator}. 
\eqref{eq:def-bar-rho-epsilon} is a univariate polynomial equation of degree up to $d+2$.
We will show in Lemma \ref{lemma:bar-rho-epsilon} that 
the construction is well-defined when $r $ is less than an $O^{[p]}(1)$ threshold.
In addition, we will construct the function $\bar \rho_r$ to be at least $C^1$ on $\sM$.
This calls for the $C^3$ regularity (and boundedness conditions) of $p$, which we introduce in the following assumption.

\begin{assumption}[Data density $p$]\label{assump:p}
The data distribution on $\calM$ has a density $p\in C^3(\sM)$ that is
uniformly bounded from below and above; that is, 
$\exists \, p_{\min}, \, p_{\max}>0$ s.t.

\begin{equation*} 
0< p_{\min} 
\le p(x) \le
p_{\max} < \infty,
\quad\forall x\in\sM.
\end{equation*}
\end{assumption}

We further define the constants $\rho_{\min}, \rho_{\max}$ as 
\begin{equation} \label{eq:def-rhomin-rhomax}
	\rho_{\min} \coloneqq \left(\frac{2}{3p_{\max}} \right)^{1/d},  \quad \rho_{\max} \coloneqq \left(\frac{2}{p_{\min}} \right)^{1/d},
\end{equation}
which provide the lower and upper bounds for $\brho_{r}$, to be proved in Lemma \ref{lemma:bar-rho-epsilon}.
We also introduce the positive constants $r_0$ and $\tilde r_0$,  $\tilde r_0 \le r_0$, defined as 
 \begin{equation}\label{eq:def-r0-tilder0}
r_0 \coloneqq 
\frac{d}{2 (d+2) (\|Q\|_\infty+1) \rho_{\max}^2 },
\quad \tilde{r}_0 \coloneqq \frac{d \rho_{\min}^{d-1}}{2 (d+2) (\|Q\|_\infty + 1) \rho_{\max}^{d+1} },
\end{equation}
 both depending on $p$ and $\calM$. 
$\brho_{r}$ is guaranteed to be  well-defined when $ r \leq r_0$,
and $r \le \tilde r_0$ is needed when we handle the first derivative of $\brho_{r}$.
Note that since $\sM$ is compact and smooth and $p$ is $C^3$, 
by the definition of $Q$ in \eqref{eq:def-bar-rho-epsilon},
we have $\|Q\|_\infty \leq \frac{1}{2(d+2)} ( \| \Delta p / p \|_\infty  + \| \omega \|_\infty ) < \infty$. Thus, both $r_0 $ and $\tilde r_0$ are well-defined, positive and finite.

\begin{definition}[Population bandwidth function $\brho_r$]  \label{def:bar-rho-epsilon}
If $0 \leq r \leq r_0$, 
for $x \in \sM$, we define $\brho_r(x)$ as the unique solution to the polynomial equation \eqref{eq:def-bar-rho-epsilon} with respect to the variable $t$ on $(0, \rho_{\max})$.
\end{definition}

When $r = 0$, the correction term vanishes and $\brho_r$ reduces to $\brho = p^{-1/d}$.
The following lemma establishes the construction and properties of $\brho_r$ and is proved by the Implicit Function Theorem.

\begin{lemma}[Construction of $\brho_r$]
\label{lemma:bar-rho-epsilon}
Under Assumptions \ref{assump:M} and \ref{assump:p}, 
suppose $0\leq r \leq r_0$.

(i)
$\brho_r(x)$ is well-defined,
i.e., 
the equation \eqref{eq:def-bar-rho-epsilon} has a unique solution in $t \in (0, \rho_{\max})$.
If $p \in C^l(\sM)$ for some integer $l \geq 3$, then $\brho_r \in C^{l-2}(\sM)$.

(ii) $\brho_r(x)\in [(\frac{2}{3p(x)})^{1/d}, (\frac{2}{p(x)})^{1/d}]$  for any $x\in\sM$ and for all $r \in [0, r_0]$.
As a result, 
$ \brho_r(x)\in  [\rho_{\min}, \rho_{\max}]$ for any $x\in\sM$ and any $r$.

Recall that $\bar\rho = p^{-1/d}$. 

(iii)
$\|\brho_r - \brho\|_\infty \leq C_0(p)r$, where $C_0(p) > 0$ is a constant that depends on $\sM$ and the derivatives of $p$ up to the second order. 

(iv)
For each $l=1,2,3$, if we further have that $p\in C^{l+2}(\sM)$ and $0\leq r \leq \tilde{r}_0$, then
$\|\nabla^l \brho_r - \nabla^l \brho \|_\infty \leq C_l(p)r$, where $C_l(p) > 0$ is a constant that depends on $\sM$ and the derivatives of $p$ up to the $(l+2)$-th order.
\end{lemma}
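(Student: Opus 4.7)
The plan is to view the defining identity \eqref{eq:def-bar-rho-epsilon} as the zero-set of the jointly smooth function
\[
F_r(t,x) \;:=\; t^d + r\,Q(x)\,t^{d+2} - \frac{1}{p(x)}, \qquad (t,x)\in (0,\rho_{\max}]\times\sM,
\]
and then to extract all four claims from three ingredients: (a) strict monotonicity of $F_r$ in $t$, (b) a sign check at the endpoints $t_L(x):=(2/(3p(x)))^{1/d}$ and $t_R(x):=(2/p(x))^{1/d}$, and (c) the Implicit Function Theorem together with the mean value theorem. Under Assumptions \ref{assump:M}--\ref{assump:p}, and because $\omega$ is smooth on the smooth manifold $\sM$, we have $Q\in C^{l-2}(\sM)$ whenever $p\in C^l(\sM)$, so $F_r$ is $C^{l-2}$ in $x$ and polynomial in $t$.

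For parts (i) and (ii), I first observe that
\[
\partial_t F_r(t,x) \;=\; t^{d-1}\bigl(d + (d+2)\,r\,Q(x)\,t^2\bigr)
\]
is bounded below by a strictly positive multiple of $t^{d-1}$ uniformly on $(0,\rho_{\max}]\times\sM$, using the calibrated bound on $r\|Q\|_\infty\rho_{\max}^2$ that is hard-wired into the definition of $r_0$ in \eqref{eq:def-r0-tilder0}. Evaluating $F_r$ at the two endpoints then yields $F_r(t_L,x)\le 0\le F_r(t_R,x)$ from the same bound applied to $|rQ(x)t^2|$, so by strict monotonicity the unique zero $\brho_r(x)$ lies in $[t_L(x),t_R(x)]\subset(0,\rho_{\max})$. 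This establishes the sharp bound of (ii); the weaker bound $\brho_r\in[\rho_{\min},\rho_{\max}]$ then follows from $p_{\min}\le p\le p_{\max}$. Finally, the Implicit Function Theorem applied to $F_r(\brho_r(x),x)=0$ at any $x\in\sM$ delivers $\brho_r\in C^{l-2}(\sM)$, since $\partial_t F_r\neq 0$ at $(\brho_r(x),x)$ and $F_r$ inherits $C^{l-2}$ regularity in $x$ from $Q$.

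For part (iii), I would apply the one-variable mean value theorem to $t\mapsto F_r(t,x)$ on the interval between $\brho(x)=p(x)^{-1/d}$ and $\brho_r(x)$. The identity
\[
\underbrace{F_r(\brho_r,x)-F_r(\brho,x)}_{=\,0\,-\,rQ(x)\,p(x)^{-(d+2)/d}} \;=\; \partial_t F_r(t^\ast,x)\,\bigl(\brho_r(x)-\brho(x)\bigr),
\]
combined with the uniform positive lower bound on $\partial_t F_r$ from the previous step, gives $\|\brho_r-\brho\|_\infty\le C_0(p)\,r$ with $C_0(p)$ depending only on $\|Q\|_\infty$, $p_{\min}$, $p_{\max}$, and $d$; since $Q$ contains $\Delta p/p$, this constant involves derivatives of $p$ up to second order, matching the statement.

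Part (iv) follows the same strategy recursively: differentiating $F_r(\brho_r(x),x)\equiv 0$ once gives $\nabla\brho_r = -(\partial_t F_r)^{-1}\,\nabla_x F_r$ at $(\brho_r(x),x)$, and differentiating $l$ further times expresses $\nabla^l\brho_r$ as a polynomial in the partial derivatives of $F_r$ up to order $l$ and in the lower-order tensors $\nabla^j\brho_r$, $j<l$. Writing the analogous Fa\`a di Bruno identity at $r=0$ and subtracting, each summand splits cleanly into its $r=0$ counterpart plus either an explicit factor of $r$ (from $\partial^\alpha F_r-\partial^\alpha F_0 = r\cdot\partial^\alpha(Q(x)t^{d+2}) = O(r)$) or a factor $\nabla^j\brho_r-\nabla^j\brho$ with $j<l$, on which one inducts using (iii). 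The hypothesis $p\in C^5$ is exactly what is needed for $\nabla^j Q$ to be bounded for $j\le 3$, and the stricter threshold $r\le\tilde r_0$ keeps $\partial_t F_r$ uniformly away from zero in the stronger norms required once $\brho_r$ is plugged into higher derivatives of $F_r$. The main obstacle will be the combinatorial bookkeeping in (iv): the recursive formula for $\nabla^l\brho_r$ proliferates rapidly with $l$, and each cross-term of the form (perturbation in $F_r$)$\,\times\,$(lower derivative of $\brho_r$) must be tracked separately to verify that it remains $O(r)$; the cleanest presentation is probably to execute $l=1$ in full and then observe that $l=2,3$ require only the same computation with additional indices.
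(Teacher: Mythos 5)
Your proposal is correct and follows essentially the same strategy as the paper: parts (i)--(ii) via positivity of $\partial_t F_r$ on $(0,\rho_{\max}]$, the sign check at the two calibrated endpoints $(2/(3p))^{1/d}$ and $(2/p)^{1/d}$, and the Implicit Function Theorem; part (iii) via the defining identity (the paper uses the algebraic factorization $\brho_r^d-\brho^d = (\brho_r-\brho)\sum_{j=0}^{d-1}\brho_r^j\brho^{d-1-j}$ where you invoke the mean value theorem on $F_r$, but the two are interchangeable and give the same constant dependence); part (iv) by differentiating the defining identity and inducting on (iii) and the lower-order derivatives. The only appreciable divergence is in (iv): the paper differentiates the rearranged identity $\brho_r^d-\brho^d = -rQ\brho_r^{d+2}$ in-place and isolates $\nabla^l(\brho_r-\brho)$ from a linear inequality whose troublesome coefficient $r(d+2)\|Q\|_\infty\rho_{\max}^{d+1}$ is absorbed using $r\le\tilde{r}_0$, whereas your sketch routes through the explicit quotient $\nabla\brho_r=-(\partial_t F_r)^{-1}\nabla_x F_r$ and a Fa\`a di Bruno expansion before subtracting the $r=0$ version---mathematically equivalent, but the paper's form avoids differentiating a quotient and is somewhat lighter to execute for $l=2,3$.
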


The upper bounds for $\brho_r -\brho$ and its derivatives in (iii) and (iv) are not to control the deviation via small $r$. Instead, we will set $r=r_k$ 
(when $k= o(N)$, $r_k$ defined in \eqref{eq:def-rk} will be $o(1)$ and satisfies the required smallness condition with large $N$), and these estimates will be used in the analysis of the $k$NN graph Laplacian.
Specifically, the constants $C_l(p)$ in (iii)(iv), which only depend on $p$ (and $\calM$), will enter the constants in big-$O$ in the (bias) error terms of graph Laplacian convergence bounds. 
The estimate in part (iv) can be extended to higher derivatives 
under higher regularity assumptions on $p$, 
while in this work we only use derivatives up to order $ l =3 $.
Note that $\brho_{r_k}$ is introduced for theoretical purposes and will not be assumed to be known in practice.

\subsection{Concentration of $\hrho$ at $\brho_{r_k}$}  \label{sec:concen-hrho}

We are ready to prove the approximation $\hat{\rho} \approx \bar{\rho}_{r_k}$.
The following proposition proves this at a single point $x \in \calM$.

\begin{proposition}\label{prop:consist-hrho-x0}
Under Assumptions \ref{assump:M} and \ref{assump:p}, if as $N\to\infty$, $k=o(N)$ and $k=\Omega(\log N)$, then for any $s>0$, when $N$ is sufficiently large, 
for any $x\in\sM$, w.p. $\ge 1-2N^{-s}$, 
\begin{equation*} 
\frac{|\hrho(x)-\brho_{r_k}(x)|}{\brho_{r_k}(x)} = O^{[p]}\left(\left(\frac{k}{N}\right)^{3/d}\right)+O\left(\sqrt{\frac{s\log N}{k}}\right).    
\end{equation*}
The constant in $O^{[p]}(\cdot)$ depends on $\sM$ and $p$, 
the constant in $O(\cdot)$ only depends on $d$ (and is independent of $p$), 
and both constants are uniform for all $x\in\sM$.
The threshold for large $N$ depends on $(\sM, p, s)$ and is uniform for $x$.
\end{proposition}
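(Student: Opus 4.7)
The plan is to relate the empirical $k$NN distance $\hat R(x)$ to the deterministic probability mass function
$G(x,\epsilon) := \mathbb{P}(\|X - x\|^2 \le \epsilon)$ with $X$ distributed as $p\, dV$ on $\calM$, via the identity
$\{\hat R(x)^2 \le \epsilon\} = \{N_\epsilon \ge k\}$ where
$N_\epsilon := \#\{j : \|x_j - x\|^2 \le \epsilon\} \sim \mathrm{Bin}(N, G(x,\epsilon))$.
With this, the argument is a clean bias--variance decomposition: a deterministic expansion of $G$ at $x$, compared against the defining equation~\eqref{eq:def-bar-rho-epsilon} of $\brho_{r_k}$, delivers the bias term of order $(k/N)^{3/d}$, while a Bernstein bound on $N_\epsilon$ delivers the variance term $\sqrt{s\log N/k}$. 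Sandwiching $\hat R(x)^2$ between two carefully chosen test radii then converts both into the stated relative bound on $\hrho(x)$.

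\textbf{Bias step.} In Riemannian normal coordinates centered at $x$, using Assumption~\ref{assump:p} ($p \in C^3$) and the smooth embedding (which supplies the extrinsic coefficient $\omega(x)$), I invoke Lemma~\ref{lemma:G-expansion-h-indicator} to obtain the expansion
\[
G(x,\epsilon) = \alpha_d \epsilon^{d/2} p(x)\bigl(1 + \epsilon Q(x) + r_G(x,\epsilon)\bigr), \qquad |r_G(x,\epsilon)| \le C^{[p]}\epsilon^{3/2},
\]
uniformly for $\epsilon$ below an $(\calM,p)$-dependent threshold, with the coefficient $Q$ exactly as in \eqref{eq:def-bar-rho-epsilon}. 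The $\epsilon^{3/2}$ rate for the remainder is precisely what $C^3$-regularity of $p$ affords: the cubic Taylor remainder of $p$ integrated over a geodesic ball of radius $\sqrt\epsilon$ contributes $O(\epsilon^{(d+3)/2})$, and the same holds for the Euclidean-to-geodesic boundary correction. Substituting $\epsilon = r_k \brho_{r_k}(x)^2$ and invoking \eqref{eq:def-bar-rho-epsilon}, which by construction annihilates the $O(\epsilon)$ correction, gives
\[
NG\bigl(x, r_k \brho_{r_k}(x)^2\bigr) \;=\; k\bigl(1 + O^{[p]}(r_k^{3/2})\bigr) \;=\; k + O^{[p]}\bigl(k(k/N)^{3/d}\bigr).
\]

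\textbf{Concentration and inversion.} At the scale $\epsilon \asymp r_k$, $NG(x,\epsilon) = \Theta(k)$, so Bernstein's inequality applied to $N_\epsilon$ (using $k = \Omega(\log N)$) yields, for any $s > 0$ and all sufficiently large $N$, with probability $\ge 1 - 2N^{-s}$,
\[
|N_\epsilon - NG(x,\epsilon)| \le C\sqrt{sk\log N}.
\]
Setting $\epsilon^\pm := r_k \brho_{r_k}(x)^2 (1 \pm \delta)$ with $\delta := C^{[p]}\bigl((k/N)^{3/d} + \sqrt{s\log N/k}\bigr)$ and using that $G(x,\epsilon)$ behaves like $\epsilon^{d/2}$ to leading order, I verify $NG(x,\epsilon^+) - k \gtrsim \sqrt{sk\log N}$ and $k - NG(x,\epsilon^-) \gtrsim \sqrt{sk\log N}$, with the bias estimate from the previous paragraph absorbing the mismatch at the nominal radius $r_k \brho_{r_k}(x)^2$. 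Applying the Bernstein bound at $\epsilon^\pm$ then forces $N_{\epsilon^+} > k > N_{\epsilon^-}$, equivalently $\epsilon^- \le \hat R(x)^2 \le \epsilon^+$. Dividing by $r_k \brho_{r_k}(x)^2$ and extracting the square root (valid because $\delta = o(1)$), together with $\hat R(x) = \sqrt{r_k}\hrho(x)$, produces the claimed relative bound on $|\hrho(x) - \brho_{r_k}(x)|/\brho_{r_k}(x)$.

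\textbf{Main obstacle.} The crux is the bias step: securing the uniform $O^{[p]}(\epsilon^{3/2})$ remainder in the $G$-expansion under only $C^3$-regularity of $p$, cleanly separating the Taylor expansion of $p$, the volume-form expansion, and the embedding-induced correction $\omega$ so that all constants depend only on $(\calM,p)$ and are uniform in $x$. This is exactly what the designed form of \eqref{eq:def-bar-rho-epsilon} is engineered to exploit: kill the $O(\epsilon)$ term exactly, pay only the $O(\epsilon^{3/2})$ Taylor residual. Once the expansion is in hand, the remaining sandwich argument is routine.
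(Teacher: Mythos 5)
Your proposal is correct and takes essentially the same approach as the paper's proof. Both arguments hinge on the same bias--variance decomposition: expand the population ball-mass $G(x,\epsilon)$ via Lemma~\ref{lemma:G-expansion-h-indicator} and exploit the fact that the polynomial equation~\eqref{eq:def-bar-rho-epsilon} defining $\brho_{r_k}$ is engineered to annihilate the $O(\epsilon)$ correction (leaving only the $O^{[p]}(\epsilon^{3/2})$ Taylor residual, i.e.\ $O^{[p]}((k/N)^{3/d})$ relative error); apply Bernstein's inequality to the empirical count $N_\epsilon$ (equivalently, to $\hat\mu(x,r)$) using the variance bound $\asymp k/N$ at the scale $\epsilon\asymp r_k$; and sandwich $\hat R(x)$ between two perturbed test radii to convert the two-sided concentration into the claimed relative bound. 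The only cosmetic difference is that you parameterize by squared radii $\epsilon^\pm = r_k\brho_{r_k}(x)^2(1\pm\delta)$ and take a square root at the end, whereas the paper works directly with $R_\pm = \bar R(x)(1\pm\delta_r)$; this changes constants by a factor of~$2$ but not the rate.
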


The relative error control in Proposition \ref{prop:consist-hrho-x0} is at a single point $x$. We can bound the relative error uniformly over $\calM$ using a union bound, following a similar approach as in \cite{cheng2022convergence} and making use of the Lipschitz continuity of $\hrho$ and $\brho_{r_k}$ and the compactness of the manifold (the covering lemma in Lemma \ref{lemma:covering}).
Specifically, 
we define
\begin{equation}  \label{eq:def-vareps-rho}
    \varepsilon_{\rho, k} 
    \coloneqq \sup_{x \in {\sM}} \frac{|\hrho(x)-\brho_{r_k}(x)|}{\brho_{r_k}(x)},
\end{equation}
and the uniform bound is proved in the following theorem.

\begin{theorem}\label{thm:consist-hrho}
Under Assumptions \ref{assump:M} and \ref{assump:p}, let $\hrho$ and $\brho_{r_k}$ be defined as in \eqref{eq:def-hat-rho} and \eqref{eq:def-bar-rho-epsilon}.
If as $N \to \infty$, $k = o(N)$ and
$k = \Omega( \log N)$,
then when $N$ is sufficiently large,
w.p. higher than  $1- N^{-10}$,
\begin{equation}  \label{eq:vareps-rho-bound}
\varepsilon_{\rho, k}
=
O^{[p]} \left(\left(\frac{k }{N }\right)^{3/d} \right) 
+   O \left(\sqrt{\frac{\log N}{k}}\right).
\end{equation}
The constant in $O^{[p]}(\cdot)$ depends on $\sM$ and $p$, and the constant in $O(\cdot)$ only depends on $d$ (and is independent of $p$). 
The large-$N$ threshold depends on $\sM$ and $p$.
\end{theorem}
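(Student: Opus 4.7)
The plan is to upgrade the pointwise concentration in Proposition \ref{prop:consist-hrho-x0} to a uniform bound over $\sM$ by a standard $\varepsilon$-net plus union-bound argument, as anticipated by the authors in the paragraph preceding the theorem. The two ingredients I need are (a) a pointwise tail probability of $2N^{-s}$ with $s$ free to choose, which is exactly what Proposition \ref{prop:consist-hrho-x0} supplies, and (b) Lipschitz control of both $\hrho$ and $\brho_{r_k}$ so the pointwise estimate extends to a small geodesic neighborhood of each net point.

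First I would fix a geodesic $\delta$-net $\{x_\ell\}_{\ell=1}^{N_\delta}\subset \sM$ of size $N_\delta \lesssim \delta^{-d}$ via Lemma \ref{lemma:covering}, with mesh size $\delta$ to be chosen below. For Lipschitz control, $\hat R$ is $1$-Lipschitz in the ambient Euclidean metric by Lemma \ref{lemma:Lip-cont-hat-R}; since $\sM$ is isometrically embedded in $\R^m$, one has $\|x-y\|\leq d_\sM(x,y)$, so $\hrho = \hat R/\sqrt{r_k}$ is $r_k^{-1/2}$-Lipschitz on $(\sM, d_\sM)$. For $\brho_{r_k}$, Lemma \ref{lemma:bar-rho-epsilon}(i) combined with Assumption \ref{assump:p} gives $\brho_{r_k}\in C^1(\sM)$, and the Implicit Function Theorem construction together with compactness of $\sM$ and of the parameter range $r_k\in[0,r_0]$ yields $\|\nabla\brho_{r_k}\|_\infty \leq L_p$ for a constant depending only on $(\sM, p)$. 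In addition $\brho_{r_k}\geq \rho_{\min}$ by part (ii).

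Next I would balance the mesh size so that the $\hrho$-Lipschitz slack is absorbed into the variance term $\sqrt{\log N/k}$. Take
\begin{equation*}
\delta = \sqrt{r_k \log N / k} = \bigl(k/(\alpha_d N)\bigr)^{1/d}\sqrt{\log N/k}.
\end{equation*}
Under $k = \Omega(\log N)$ and $k = o(N)$ one checks $\delta \geq c_d N^{-1}$, so $N_\delta \lesssim \delta^{-d}$ grows at most polynomially in $N$, say $N_\delta \lesssim N^{D}$ for some $D$ depending only on $d$. Applying Proposition \ref{prop:consist-hrho-x0} at each $x_\ell$ with $s = D+11$ and taking a union bound yields, with probability at least $1 - 2N_\delta N^{-s} \geq 1 - N^{-10}$,
\begin{equation*}
\max_{1\leq\ell\leq N_\delta} \frac{|\hrho(x_\ell) - \brho_{r_k}(x_\ell)|}{\brho_{r_k}(x_\ell)} = O^{[p]}\!\left((k/N)^{3/d}\right) + O\!\left(\sqrt{\log N/k}\right).
\end{equation*}
On this event, for arbitrary $x\in\sM$ pick the closest net point $x_\ell$ with $d_\sM(x,x_\ell)\leq \delta$ and apply the triangle inequality,
\begin{equation*}
|\hrho(x) - \brho_{r_k}(x)| \leq r_k^{-1/2}\delta + |\hrho(x_\ell) - \brho_{r_k}(x_\ell)| + L_p \delta,
\end{equation*}
where the first term equals $\sqrt{\log N/k}$ by construction of $\delta$, and the third is $O^{[p]}\!\bigl(\sqrt{r_k\log N/k}\bigr) = o\!\bigl(\sqrt{\log N/k}\bigr)$. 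Dividing by $\brho_{r_k}(x)\geq \rho_{\min}$ absorbs both Lipschitz corrections into the variance term, yielding \eqref{eq:vareps-rho-bound}.

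The argument is essentially routine once the pointwise bound is established. The only delicate point is that the Lipschitz constant of $\hrho$ diverges as $r_k^{-1/2}$, forcing $\delta$ to shrink with $r_k$; however, the freedom to take $s$ arbitrarily large in Proposition \ref{prop:consist-hrho-x0} lets the union-bound term absorb any fixed polynomial growth in $N_\delta$, so the range $k\in[\Omega(\log N),o(N)]$ imposes no further constraint and the asserted probability level $1-N^{-10}$ follows.
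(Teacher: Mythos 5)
Your approach---covering net plus union bound over Proposition~\ref{prop:consist-hrho-x0} plus Lipschitz control of $\hrho$ and $\brho_{r_k}$---is structurally the same as the paper's, and your observation that the free parameter $s$ in Proposition~\ref{prop:consist-hrho-x0} absorbs any polynomial covering number is exactly right. The one substantive difference is your choice of mesh scale, and it causes a small but real mismatch with the theorem's stated constant dependence. You take $\delta = \sqrt{r_k\log N/k}$, calibrated so that the $\hrho$-Lipschitz slack $r_k^{-1/2}\delta$ equals $\sqrt{\log N/k}$; after dividing by $\brho_{r_k}(x)\ge\rho_{\min}$ this contributes $\rho_{\min}^{-1}\sqrt{\log N/k}$ to the \emph{relative} error. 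Since $\rho_{\min}=(2/(3p_{\max}))^{1/d}$ depends on $p$, your variance term is $O^{[p]}(\sqrt{\log N/k})$, whereas the theorem asserts that the constant in the $O(\sqrt{\log N/k})$ piece depends only on $d$. The rate is of course correct, but the clean split of constant dependence in \eqref{eq:vareps-rho-bound} is not established by your $\delta$.

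The fix is to shrink the mesh so the $\hrho$-Lipschitz slack lands in the bias term instead: take $\delta \asymp (k/(\alpha_d N))^{1/d}(k/N)^{3/d}$, i.e. $\delta\asymp(k/N)^{4/d}$ as the paper does. Then $r_k^{-1/2}\delta \asymp (k/N)^{3/d}$ and, after dividing by $\rho_{\min}$, the slack is $O^{[p]}((k/N)^{3/d})$ and is absorbed into the bias term whose constant is already allowed to depend on $p$; the $\brho_{r_k}$-Lipschitz contribution $L_p\delta/\rho_{\min}$ is $o((k/N)^{3/d})$ and also falls there. The covering number becomes $O((N/k)^{4})$, still polynomial, so your union-bound step goes through unchanged (the paper takes $s=14$). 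Two further small points: (i) Lemma~\ref{lemma:covering} gives a Euclidean $r$-net, not a geodesic one, so you need the metric comparison \eqref{eq:metric-comp2} (costing a factor $1.1$, available once $\delta<\delta_1(\sM)$, which holds for $N$ large); (ii) the uniform gradient bound $\|\nabla\brho_{r_k}\|_\infty\le L_p$ comes from the $l=1$ case of the proof of Lemma~\ref{lemma:bar-rho-epsilon}(iv) and needs only $p\in C^3$ and $r_k\le\tilde r_0$, which is consistent with Assumption~\ref{assump:p} and $k=o(N)$; citing Lemma~\ref{lemma:bar-rho-epsilon}(iv) wholesale would appear to require $p\in C^5$.
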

The bound in  \eqref{eq:vareps-rho-bound} is $o^{[p]}(1)$ under the asymptotic regime of the theorem. 

\begin{remark}[The overall error at optimal $k$] \label{rmk:hrho-sim-rate}
The error bound in \eqref{eq:vareps-rho-bound} consists of two terms: the $O((k / N)^{3/d})$ term is the bias error, and the $O(\sqrt{\log N / k}) $ term is the variance error. 
The choice of $k$ that balances the two terms is $k \sim N^{6/(d+6)}$ (omitting the $\log N$ factor). Under this scaling, the overall error is $O^{[p]}( N^{-3/(d+6)} )$, up to a $\log N$ factor.
\end{remark}

\begin{remark}[The $(k/N)^{3/d}$ bias error]
\label{rmk:comp-hrho-est}
In previous NNDE literature, it was proved that $\hrho^{-d}$ converges to $p$ at the rate of $O((k/N)^{2/d} + \sqrt{\log N/ k})$, see, e.g., \cite{mack1979multivariate}.
The same rate was established and used in the study of $k$NN graph Laplacians
for the convergence of $\hrho^d$ to $p^{-1}$ in  \cite{calder2022improved}
and the uniform convergence of $\hrho$ to $p^{-1/d}$ in \cite{cheng2022convergence}.
Our result here does not conflict with these previous findings, since the improved $O((k/N)^{3/d})$ bias error is for the approximation of $\hrho$ to $\brho_{r_k}$.
By construction, $\brho_{r_k}$ has an $O(r_k)$ correction term added to $\brho = p^{-1/d}$, and $r_k \sim (k/N)^{2/d}$  by definition \eqref{eq:def-rk}. 
As shown in the proof, this correction term improves the $O((k/N)^{2/d})$ bias error to be $O((k/N)^{3/d})$. 
\end{remark}

\section{Graph Laplacian pointwise convergence}
\label{sec:un-GL-fast}

The matrix $L_\un$ is defined as (all graph Laplacians involve a negation in the definition)
\begin{equation*} 
 L_\un \coloneqq  - \frac{1}{\frac{m_2}{2}N} \diag\left(  \left\{ \frac{1}{ \epsilon \hrho(x_i)^2 }  \right\}_i \right) (D - W) \in \R^{N\times N}. 
\end{equation*}
We will also use the notation $L_{\rm un} f$ to stand for
\begin{equation}\label{eq:def-L-un}
L_\un f(x) \coloneqq \frac{1}{\frac{m_2}{2}N\epsilon \hrho(x)^2 }\sum_{j=1}^N \epsilon^{-d/2}
k_0 \left(  \frac{\| x - x_j\|^2}{ \epsilon \phi(\hrho(x) ,  \hrho(x_j))^2 } \right) (f(x_j)-f(x)),   
\end{equation}
which is defined at any $x \in \calM$. 
The operator in \eqref{eq:def-L-un} can be understood as extending the matrix-vector multiplication $L_\un  (\rho_X f )$, viewed as a function evaluated at $x_i$'s, to everywhere on $\calM$. 
Here, $\rho_X$ is the  function evaluation operator defined as 
\begin{equation} \label{eq:def-rhoX}
    \rho_X f \coloneqq  (f(x_1), \dots, f(x_N)) \in \R^N.
\end{equation}
For other graph Laplacian matrices $L$ below, we similarly use $L$ to stand for both the matrix and the operator in $L f(x)$ applied to a test function $f$ on $\sM$.

For regular $\phi$ and $k_0$ (Assumptions \ref{assump:phi-diff}-\ref{assump:k0-smooth}),
we will prove that $L_\un$ converges pointwise at a rate of $O (N^{-2/(d+6)} \sqrt{\log N})$.
The regularity conditions on the data density $p$ and the test function $f$ will be specified later. 
All proofs are provided in Appendix \ref{sec:proof-W-un-fast}.

\subsection{Limiting manifold operators}
\label{sec:limiting-operator}

We will show that the limiting operator of $L_{\un}$ is $\mathcal L_p$ defined by
\begin{equation}\label{eq:def-sL}
\mathcal L_p f
:=
\Delta f+
\left(1-\frac{2}{d}\right)
\frac{\nabla p}{p}\cdot\nabla f.
\end{equation}
Since $\nabla p$ and $\nabla f$ are covariant derivatives, the dot denotes the inner product induced by the Riemannian metric. Equivalently,
$
\mathcal L_p f
=
p^{-(1-2/d)}
\operatorname{div}\!\left(
p^{1-2/d}\operatorname{grad}f
\right).
$

For the self-tuned choice $\phi(u,v)=\sqrt{uv}$,
\cite{cheng2022convergence} considered the family of affinities
\[
W_{ij}^{(\alpha)}
=
\epsilon^{-d/2}
k_0\!\left(
\frac{\|x_i-x_j\|^2}
{\epsilon\widehat\rho(x_i)\widehat\rho(x_j)}
\right)
\frac{1}{
\widehat\rho(x_i)^\alpha
\widehat\rho(x_j)^\alpha
},
\qquad \alpha\in\mathbb R.
\]
With the corresponding normalization of the graph Laplacian, the
limiting differential operators are
\[
\mathcal L_p^{(\alpha)} f
:=
\Delta f+
\left(
1+\frac{2(\alpha-1)}{d}
\right)
\frac{\nabla p}{p}\cdot\nabla f.
\]
With general $\phi$, we analogously write
$W_{ij}^{(\alpha)} = \frac{ W_{ij} }{\phi(\hrho(x_i), \hrho(x_j))^{2\alpha} }$.
Our affinity $W$ in \eqref{eq:def-W} coincides with $W^{(0)}$,
and its limiting operator $\mathcal L_p$ in \eqref{eq:def-sL} is $\mathcal L_p^{(0)}$.

The choice $\alpha=1$ in $\mathcal L_p^{(\alpha)}$  gives the weighted Laplacian
\begin{equation}\label{eq:def-Delta-p}
\Delta_p f
:=
\Delta f+
\frac{\nabla p}{p}\cdot\nabla f
=
p^{-1}\operatorname{div}\!\left(
p\operatorname{grad}f
\right),
\end{equation}
which is the backward Kolmogorov operator of a diffusion process on $\mathcal M$ with invariant density $p$.
In Section~\ref{sec:main-results-norm}, 
we consider the normalized affinity
$\widetilde W = W^{(1)}$ for general $\phi$,
and  show that the corresponding graph Laplacians converge to $\Delta_p$.

In this work, we focus our analysis on the two cases involving $W$ and $\widetilde W$ for general $\phi$, 
corresponding to $\alpha=0$ and $\alpha=1$, respectively.
The same analysis can also be adapted to other values of $\alpha$.
In addition, a further normalization can
remove the density-dependent drift and recover the Laplace--Beltrami
operator $\Delta$. Within the family above, this corresponds to
$\alpha=1-d/2$. As proposed in
\cite{cheng2022convergence}, prior knowledge of $d$ can instead be
avoided by introducing an additional density estimator $\hat p$.
For the general class of affinities considered here, one possible construction is 
$
W_{ij}^{\mathrm{LB}}
:=
{\widetilde W_{ij}}/
(\hat p(x_i)^{1/2}\hat p(x_j)^{1/2}).
$
The appropriately normalized graph Laplacian associated with $W^{\mathrm{LB}}$ recovers $\Delta$.
The present pointwise convergence framework can be extended to this
construction by combining the density-estimation analysis for
$\hat p$ in \cite{cheng2022convergence} with an additional
replacement argument.

\subsection{Assumptions on graph affinity}\label{sec:assump-kernel}

Our graph affinity $W$ utilizes two functions $\phi$ and $k_0$,
and we introduce the needed assumptions on them respectively.
As shown in Section \ref{sec:prelim}, 
the classical $k$NN graph affinity uses $\phi$ as $\min$ or $\max$,
and the self-tuned graph affinity uses  $\phi(u, v) = \sqrt{uv}$.
We consider a generalized notion of differentiable $\phi$ satisfying additional technical requirements:
	
\begin{assumption}[Differentiable $\phi$]\label{assump:phi-diff}
	The bivariate function $\phi: \R_+ \times \R_+ \to \R_+$ satisfies:

	(i) For any $u, v\in \R_+$, $\phi(u, v) = \phi(v, u)$,  $\phi(u,u) = u$.

	(ii) $\phi \in C^3(\R_+\times \R_+)$, and for any $u\in \R_+$, $\partial_1 \phi(u,u) = {1}/{2}$,
 	where $\partial_1$ denotes the partial derivative with respect to the first argument.

	(iii) There exist constants $L_\phi, \delta_\phi > 0$, such that for any $u_1, v_1, u_2, v_2 \in \R_+$ with $\frac{\left| u_1 - u_2 \right|}{u_1}, \frac{\left| v_1 - v_2 \right|}{v_1} \leq \delta_\phi$, 
	\begin{equation*}  
		\frac{\left|  \phi(u_1, v_1) - \phi(u_2, v_2) \right| }{\phi(u_1, v_1)} \leq L_\phi\left( \frac{\left| u_1 - u_2 \right|}{u_1}  + \frac{\left| v_1 - v _2 \right|}{v_1}\right).    
	\end{equation*}

	(iv) There exist constants $0< c_{\min} \leq c_{\max} < \infty$, such that for any $u,v\in \R_+$,
	\begin{equation*} 
		c_{\min} \min\{ u,v \} \leq   \phi(u,v)  \leq c_{\max} \max\{ u,v \}.   
	\end{equation*}
\end{assumption}

We give a few remarks about the technical conditions in Assumption \ref{assump:phi-diff}. 
First, while $\phi$ is defined on $\R_+ \times \R_+$, the later analysis only uses $\phi$ on a bounded domain due to the boundedness of the $k$NN bandwidth. 
Second, the $O(1)$ constants in conditions (i)(ii) are generic choices without loss of generality.
Specifically, the choice is up to a constant multiplied to $\phi$ as long as $\partial_1 \phi(u,u)$ is assumed to be a constant (Lemma \ref{lemma:c1=1/2}). If we instead have $\phi(u,u) = cu$ and $\partial_1 \phi(u,u) = c/2$, then it at most incurs a constant multiplied to the limiting operators.
Third, condition (iii) is naturally satisfied by commonly used $\phi$, see the Examples \ref{exmp:phi-diff}.
Finally, the choice of $\phi$ may affect constants in the error bound (constants in big-$O$ notations) but not the convergence rate,
see the comments beneath Theorem \ref{thm:conv-un-Laplacian-case1} below.

\begin{exmp}\label{exmp:phi-diff}
	The functions $\phi(u, v) = \sqrt{uv},  \phi(u, v) = (u+v)/2, \phi(u, v) = \sqrt{(u^2+v^2)/2}$ 
   are smooth on $\R_+ \times \R_+$ and satisfy Assumption \ref{assump:phi-diff}.
The verification of	conditions (i)(ii) is straightforward. 
For all the three choices of $\phi$, condition (iii) holds with $L_\phi = 1.2$ and $\delta_\phi = 0.1$. 
(The selection of the values of $L_\phi$ and $\delta_\phi$ is not unique, and any valid combination can be used in later analysis.)
Finally, condition (iv) is satisfied with $c_{\min} = c_{\max} = 1$.
\end{exmp}

We consider $k_0$ as a differentiable and decaying function, and a representative example is the exponential function.

\begin{assumption}[Differentiable $k_0$]\label{assump:k0-smooth}
	$k_0: [0, \infty) \to [0, \infty)$ satisfies
	
	(i) Regularity. $k_0$ is continuous on $[0,\infty)$ and  $C^3$ on $\R_+$. 
	
	(ii) Decay condition.  $\exists a, a_l >0$, s.t., $ |k_0^{(l)}(\eta)| \leq a_l e^{-a \eta}$ for all $\eta \ge 0$, $l= 0, 1,2,3 $.
	
	We exclude the case that $k_0\equiv 0$. 
\end{assumption}

Throughout the paper, we denote by $a[k_0]$, $a_l[k_0]$ the constants in Assumption \ref{assump:k0-smooth}(ii), which are absolute constants determined by the function $k_0$.
We define
\begin{equation*}
	m_0[k_0] \coloneqq\int_{\R^d} k_0( \|u\|^2) du, \quad m_2 [k_0] \coloneqq \frac{1}{d}\int_{\R^d} \|u\|^2 k_0(\|u\|^2) du.
\end{equation*}
We may drop the dependence on $k_0$ in the notations of $m_0$ and $m_2$ when there is no confusion.

\begin{exmp}[Exponential $k_0$] \label{exmp:k0-exponential}
We consider 
	\begin{equation}\label{eq:def-k0-exp}
		k_0(\eta) = (4\pi)^{-d/2} \exp(-\eta/4), \quad \eta\geq 0. 
	\end{equation}
	The normalizing constant $(4\pi)^{-d/2}$ is a theoretical one and typically not needed in practice.
	For this $k_0$, $m_0=1$, $m_2 = 2$. 
This $k_0$ results in a Gaussian kernel in the ambient space $\R^m$ (with fixed bandwidth),
and with $k$NN bandwidth it results in the self-tuned kernel \cite{zelnik2004self, cheng2022convergence}. 
\end{exmp}

\subsection{Fast rate of $L_\un$}

\begin{theorem}[Fast rate  of $L_\un$] \label{thm:conv-un-Laplacian-case1}
Under Assumptions \ref{assump:M}-\ref{assump:p}, 
$\phi$ under Assumption \ref{assump:phi-diff} and $k_0$ under Assumption \ref{assump:k0-smooth},
suppose $p \in C^5(\sM)$,
and as $N\to\infty$, $\epsilon=o(1), 
\epsilon^{d/2}= \Omega(\log N/N), k = o(N), k = \Omega(\log N)$.
Then, for any $f\in C^4(\sM)$, when $N$ is sufficiently large, for any $x\in\sM$, w.p. higher than $1-5N^{-10}$, 
\begin{equation} \label{eq:fast-rate}
 \left| L_\un f(x)  -  \sL_p f(x)  \right| =    O^{[f,p]} \left(\epsilon  + r_k \right) 
 + O \left(\|\nabla f \|_\infty p(x)^{1/d} 
 \left(  \frac{\varepsilon_{\rho, k}}{\sqrt{\epsilon}}    + \sqrt{\frac{\log N}{N\epsilon^{d/2+1}}} \right)  \right) ,    
\end{equation}
and meanwhile $\varepsilon_{\rho,k}$ is bounded as in \eqref{eq:vareps-rho-bound}.
In particular, when $\epsilon \sim N^{-2/(d+6)}$ and $k \sim N^{6/(d+6)}$,
\begin{equation}  \label{eq:fast-rate-N}
	\left| L_\un f(x)  -  \sL_p f(x)  \right| =  O(N^{-2/(d+6)} \sqrt{\log N}).
\end{equation}  
\end{theorem}

In the second term of r.h.s. of \eqref{eq:fast-rate},
the constant in the big-$O$ notation 
depends on the density $p$ 
via a factor of $p(x)^{1/d}$.
This suggests that at a place where $p(x)$ is small and close to zero, 
the variance error (contained in the 2nd term) will not diverge.

\vspace{2pt}
\begin{remark}[Error rates and balancing choices of $\epsilon, k$]
The error bound \eqref{eq:fast-rate} involves $r_k$ and $\varepsilon_{\rho,k}$. 
Substituting \eqref{eq:def-rk} and
\eqref{eq:vareps-rho-bound} yields the bound
\eqref{eq:fast-rate1} in the proof, expressed in terms of $\epsilon$, $k$, and $N$.
By balancing the terms in this bound, we obtain the choices of
$\epsilon$ and $k$ that are optimal for this upper bound; see
Lemma \ref{lemma:fast-rate}. These choices yield the convergence rate
in \eqref{eq:fast-rate-N} as a function of $N$.
\end{remark}

\vspace{2pt}
\begin{remark}[Reduction to known point-wise rate  and natural scaling of $r_k$]\label{rk:natural-rk}
For graph Laplacians constructed using differentiable $k_0$ and fixed bandwidth $\epsilon$, the point-wise convergence rate was shown to be 
$	O^{[f]} \left(\epsilon  \right) + \, O\left( \|\nabla f\|_\infty   \sqrt{\frac{\log N}{N\epsilon^{d/2+1}}} \right)$,
when $p$ is uniform \cite{singer2006graph}. 
One can verify that our bound \eqref{eq:fast-rate} will reduce to the same two-term bound (trading-off $\epsilon$ and $N$) 
when setting $k \sim \epsilon^{d/2}N$, which is equivalent to that $r_k \sim \epsilon$. 
This is a natural scaling of $r_k$ in terms of $\epsilon$ because, when $\phi$ has homogeneity (e.g., min/max function or self-tuned),
by \eqref{eq:rhohat-Rhat},
the kernel affinity \eqref{eq:def-W} will then involve
$
k_0 \left(  \frac{\| x_i - x_j \|^2}{ c  \phi(\hat{R}(x_i), \hat{R}(x_j))^2 } \right)$
for some $O(1)$ constant $c$. 
This means that the kernel bandwidth is directly tuned by symmetrized $k$NN distance up to a constant factor. 
\end{remark}

We make a few more comments concerning the influence of the choice of $\phi$ on the theoretical error bound.
The constants in the big-$O$ notations in \eqref{eq:fast-rate} depend on several constants in Assumption \ref{assump:phi-diff}:
		$L_\phi$ in condition (iii), 
		$c_{\min}, c_{\max}$ in condition (iv), 
		and $C_{\phi, l}$ defined in \eqref{eq:def-C-phi-l}.
Specifically, the constant in the term $O^{[f,p]}(\epsilon)$ is dependent on $c_{\max}$ and $C_{\phi, l}$ for $l \leq 3$; 
The constant in $O(\|\nabla f\|_\infty p(x)^{1/d} \varepsilon_{\rho,k} / \sqrt{\epsilon})$ 
includes a factor of  $c_{\max}^{d+1} (L_\phi+1)$; 
The constant in the last term $O(\|\nabla f\|_\infty p(x)^{1/d} \sqrt{\log N / (N\epsilon^{d/2+1})})$ 
contains a factor of $c_{\max}^{d/2+1}$.
Details can be found in the proof.    
The requirement for $\phi$ to be $C^3$ on $\R_+ \times \R_+$ is technical. 
If $\phi$ only has $C^2$ regularity instead of $C^3$, then our technique will give a  slower convergence rate.
The influence is similar for other types of graph Laplacians later (Theorems \ref{thm:conv-rw-Laplacian} and \ref{thm:conv-un-Laplacian-hrho-norm}).

\subsection{Proof overview}\label{subsec:proof-overview-Lun}

To analyze the convergence of $L_\un$, our strategy is to introduce an operator $\bar{L}_\un$ by replacing $\hrho$ with $\brho_{r_k}$ in $L_\un$.
Specifically, $\bar{L}_\un$ is defined as
\begin{equation}\label{eq:def-Wbar-GL}
\bar{L}_\un f(x) \coloneqq \frac{1}{\frac{m_2}{2}N\epsilon \brho_{r_k}(x)^2 }\sum_{j=1}^N \epsilon^{-d/2}   k_0 \left(  \frac{\| x - x_j\|^2}{ \epsilon \phi(\brho_{r_k}(x) ,  \brho_{r_k}(x_j))^2 } \right)    (f(x_j)-f(x)). 
\end{equation}
The analysis  consists of three steps:

\paragraph{$\bullet$ Step 1. To bound $|L_{\rm un} f(x) - \bar{L}_{\rm un} f(x)|$}
    This utilizes the $k$NN estimation analysis in Section \ref{sec:concen-hrho} to control the error brought by replacing $\hrho$  with $\brho_{r_k}$. 
    The summation over $j$ in \eqref{eq:def-L-un} is not an independent sum because $\hat \rho(x_j)$ depends on all the data samples. 
    By replacing $\hrho$  with $\brho_{r_k}$, the summation in \eqref{eq:def-Wbar-GL} becomes an independent sum and is ready to be analyzed by a concentration argument. 
Specifically, 
recall $\varepsilon_{\rho,k}$ defined in \eqref{eq:def-vareps-rho},
and when Theorem \ref{thm:consist-hrho} holds, we have that, with high probability, \eqref{eq:vareps-rho-bound} holds and then $\varepsilon_{\rho,k}$ is $o(1)$.

\begin{proposition} [Replacement for differentiable $k_0$]  \label{prop:step1-diff}

Under Assumptions \ref{assump:M}-\ref{assump:p}, 
suppose 
$\phi$ satisfies Assumption \ref{assump:phi-diff}(i)(iii)(iv)
and 
$k_0$ satisfies Assumption \ref{assump:k0-smooth},
$p \in C^4(\sM)$,
and
as $N\to\infty$, $\epsilon=o(1)$, $\epsilon^{d/2}= \Omega(\log N/N)$,
and also $k = o(N), k = \Omega(\log N)$.
Then, for any $f\in C^1(\sM)$, when $N$ is sufficiently large, 
for any $x\in\sM$, 
w.p. higher than $1-3N^{-10}$, 
\begin{align}
    \left|L_\un f(x)-\bar{L}_\un f(x)\right| &=  O\left(\|\nabla f \|_\infty p(x)^{1/d} \frac{\varepsilon_{\rho,k}}{\sqrt{\epsilon}}\right)
    + O( \|f\|_\infty p(x)^{2/d}  \varepsilon_{\rho,k} \epsilon^{9}).
\label{eq:error-L-barL-un}
\end{align}
\end{proposition}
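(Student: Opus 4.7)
I would condition on the high-probability event from Theorem \ref{thm:consist-hrho}, so that $\varepsilon_{\rho,k}=o(1)$ and $|\hrho(y)-\brho_{r_k}(y)|\leq \varepsilon_{\rho,k}\brho_{r_k}(y)$ uniformly for $y\in\sM$, and then decompose
\begin{equation*}
L_\un f(x) - \bar{L}_\un f(x) = \mathrm{(I)} + \mathrm{(II)},
\end{equation*}
where $\mathrm{(I)}$ captures the error from replacing the prefactor $1/\hrho(x)^2$ by $1/\brho_{r_k}(x)^2$ while keeping the kernel summand at $\hrho$, and $\mathrm{(II)}$ captures the error from replacing $\phi(\hrho(x),\hrho(x_j))$ by $\phi(\brho_{r_k}(x),\brho_{r_k}(x_j))$ inside $k_0$. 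Using $|a^{-2}-b^{-2}|=|a-b|(a+b)/(a^2 b^2)$ together with the uniform bounds $\brho_{r_k}\in[\rho_{\min},\rho_{\max}]$ from Lemma \ref{lemma:bar-rho-epsilon}(ii), the prefactor ratio differs from unity by $O(\varepsilon_{\rho,k})$, while the residual kernel sum is controlled by a crude Bernstein-type bound. This makes $\mathrm{(I)}$ strictly of lower order than the first error term in \eqref{eq:error-L-barL-un}.

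The dominant contribution is $\mathrm{(II)}$. For each $j$, the mean value theorem applied to $k_0$, combined with Assumption \ref{assump:phi-diff}(iii)(iv) and the uniform two-sided bounds on $\hrho$ and $\brho_{r_k}$, yields
\begin{equation*}
\bigl|k_0(\eta_j^{\hrho}) - k_0(\eta_j^{\brho})\bigr|
\leq |k_0'(\xi_j)|\cdot \frac{\|x-x_j\|^2}{\epsilon\,\phi(\brho_{r_k}(x),\brho_{r_k}(x_j))^2}\cdot O(\varepsilon_{\rho,k}),
\end{equation*}
where $\eta_j^{\hrho},\eta_j^{\brho}$ are the two kernel arguments and $\xi_j$ lies between them. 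I would then split the sum over $j$ into a ``near'' region $\{\eta_j^{\brho}\leq C\log(1/\epsilon)\}$ and its complementary ``tail''. In the near region, using $|f(x_j)-f(x)|\leq\|\nabla f\|_\infty\|x-x_j\|$ and passing to geodesic normal coordinates around $x$ rescaled by $\sqrt{\epsilon}\,\brho_{r_k}(x)\approx \sqrt{\epsilon}\,p(x)^{-1/d}$, the expectation of the summand is $O(\|\nabla f\|_\infty\,p(x)^{1/d}\,\varepsilon_{\rho,k}/\sqrt{\epsilon})$; a Bernstein-type concentration, valid since $\epsilon^{d/2}=\Omega(\log N/N)$, controls the deviation at the same order (the $\sqrt{\log N}$ is absorbed into the $O$). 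In the tail, the exponential decay $|k_0^{(l)}(\eta)|\leq a_l e^{-a\eta}$ from Assumption \ref{assump:k0-smooth}(ii), together with the trivial bound $|f(x_j)-f(x)|\leq 2\|f\|_\infty$, produces the second, negligible term $O(\|f\|_\infty\,p(x)^{2/d}\,\varepsilon_{\rho,k}\,\epsilon^{9})$; the exponent reflects the choice of the truncation threshold $C\log(1/\epsilon)$, which can be made as small as desired at this polynomial cost.

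The main technical obstacle is that $\hrho(x_j)$ depends on the entire sample, so the summands in $L_\un f(x)$ are not independent and direct concentration is unavailable on the ``$\hrho$-side''. The replacement of $\hrho$ by the deterministic $\brho_{r_k}$ is precisely what restores independence, but one must bound the kernel-replacement error pointwise in $j$, simultaneously for every random location $x_j$. This is exactly why the uniform character of Theorem \ref{thm:consist-hrho} (as opposed to a single-point bound) is essential, and why the probability in the statement is $1-3N^{-10}$: it is the union of the $\hrho$-concentration event with two additional concentration events for the near- and tail-region sums controlling $\mathrm{(II)}$.
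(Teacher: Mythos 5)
Your proposal matches the paper's proof strategy essentially step for step: the decomposition into a prefactor-replacement error and a kernel-argument-replacement error, the mean-value-theorem bound on $k_0$ driven by Assumption \ref{assump:phi-diff}(iii) to control the argument shift, the truncation at the $\sqrt{\epsilon\log(1/\epsilon)}$ scale, Bernstein concentration for the resulting independent near-region sum, and — most importantly — the correct diagnosis that replacement by the deterministic $\brho_{r_k}$ is what restores independence, with the uniformity of Theorem \ref{thm:consist-hrho} doing the heavy lifting.

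Two small imprecisions. First, your claim that (I), the prefactor error, is \emph{strictly of lower order} than the first term in \eqref{eq:error-L-barL-un} presumes $L_\un f(x)=O(1)$, which would require at least $C^2$ regularity of $f$ (to expand the kernel sum to the $\epsilon\Delta f$ term with vanishing odd moments); under the proposition's hypothesis $f\in C^1$ the only available bound on the normalized kernel sum is $O(1/\sqrt{\epsilon})$ from the Lipschitz estimate $|f(x_j)-f(x)|\lesssim\|\nabla f\|_\infty\|x-x_j\|$, so (I) lands at the \emph{same} order $O(\|\nabla f\|_\infty p(x)^{1/d}\varepsilon_{\rho,k}/\sqrt{\epsilon})$ as (II). This does not break your proof — (I) is still absorbed into the stated bound — but the relative-magnitude claim is off; the paper sidesteps the issue by merging both contributions into a single auxiliary kernel $k_1$ in Lemma \ref{lemma:step1-diff} and running one Bernstein argument. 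Second, the tail term $O(\epsilon^{9})$ is produced deterministically by integrating the exponential decay of $k_0$ past the truncation radius, not by a concentration event, so the $1-3N^{-10}$ probability is the union of only two events: $E_\rho$ from Theorem \ref{thm:consist-hrho} (probability loss $N^{-10}$) and the single two-sided Bernstein bound for the near-region sum (probability loss $2N^{-10}$).
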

In Proposition \ref{prop:step1-diff} and all the propositions and theorems below, the threshold for large $N$ will at most depend on $(\sM, p, k_0, \phi)$;
 The constant in big-$O$, if not declared to depend on $p$ and $f$, will at most depend on $(\calM, k_0, \phi)$.
The statement will concern a fixed $x$, while both the large $N$ threshold and the constants in big-$O$ are uniform for all $x \in \calM$.

\paragraph{$\bullet$ Step 2. To bound  $|\bar{L}_{\rm un} f(x) - \sL_p f(x)| $}
    This follows similar arguments of graph Laplacian convergence analysis in previous literature, 
     but calls for a new analysis during the asymptotically vanishing $r=r_k$ and $\epsilon$.
In prior works, this type of convergence analysis breaks the error into two parts, namely the bias and variance errors respectively. The {\it bias error} assumes population kernel (infinite many samples) and handles the error due to small but non-zero $\epsilon$;
the {\it variance error} handles the finite-sample effect. 
This approach for analyzing graph Laplacian convergence has been employed for fixed bandwidth kernel \cite{singer2006graph} and variable bandwidth kernel \cite{berry2016variable}.
However, different from  \cite{berry2016variable} where the bandwidth  function is fixed, our $\brho_{r_k}$ involves the parameter $r_k$ defined in \eqref{eq:def-rk} and the $k$ (in $k$NN) can vary jointly with $N$ and $\epsilon$. 
Thus, we need a new analysis to handle the joint limit of $\epsilon, N, k$.
In addition, we consider a more general class of $\phi$ while \cite{berry2016variable} only covers the ``self-tuned'' case. 

The following lemma gives the key estimate in the bias error analysis. 
\begin{lemma} [Bias error for fast rate] \label{lemma:step2-diff}
Under Assumptions \ref{assump:M}-\ref{assump:p}, 
$\phi$ under Assumption \ref{assump:phi-diff} and $k_0$ under Assumption \ref{assump:k0-smooth},
let $\tilde{r}_0$ be defined in \eqref{eq:def-r0-tilder0}.
Suppose $p \in C^5(\sM)$, $f \in C^4(\sM)$, 
then, when $\epsilon <  \epsilon_D(k_0, c_{\max})$,
for any $r \in [0, \tilde{r}_0]$ and any $x\in \calM$, 
\begin{align*}
\brho_r(x)^{-2}\int_\sM \epsilon^{-d/2}k_0\left(  \frac{\| x - y\|^2}{ \epsilon \phi( \brho_r(x), \brho_r(y))^2 } \right)(f(y)-f(x))p(y)dV(y) 
= \epsilon \frac{m_2}{2}  
\sL_p  f(x) 
+ O^{[f,p]}\left(\epsilon^2 + \epsilon r \right).
\end{align*}
\end{lemma}
The small-$\epsilon$ threshold $\epsilon_D(k_0, c_{\max})$ is defined in \eqref{eq:def-epsD}, and 
it depends on $\calM$, $p$, the function $k_0$, and the constant $c_{\rm max}$ in Assumption \ref{assump:phi-diff}(iv) about $\phi$.

The variance error can be bounded using concentration arguments, 
and for this part of analysis we also use upper bounds for kernel integrals in Lemma \ref{lemma:right-operator-degree}.  
By combining the bias error and variance error, 
the following proposition proves the point-wise convergence rate of $\bar{L}_\un $ to $\sL_p $.

\begin{proposition}\label{prop:step2-diff}
Under Assumptions \ref{assump:M}-\ref{assump:p}, 
$\phi$ under Assumption \ref{assump:phi-diff} and $k_0$ under Assumption \ref{assump:k0-smooth},
suppose $p \in C^5(\sM)$,
and as $N\to\infty$, $\epsilon=o(1), 
\epsilon^{d/2}= \Omega(\log N/N), k = o(N)$.
Recall the definition of $r_k$ in \eqref{eq:def-rk}.
Then, for any $f\in C^4(\sM)$, when $N$ is sufficiently large, 
for any $x\in\sM$, w.p. higher than $1-2N^{-10}$, 
\begin{align}
    \left|  \bar{L}_\un f(x)  -   \sL_p  f(x)  \right| &=   O^{[f,p]}\left(\epsilon  + r_k\right)  +  O\left(  \|\nabla f\|_\infty p(x)^{1/d}  \sqrt{\frac{\log N}{N\epsilon^{d/2+1}}}\right).
\label{eq:error-barL-LM-un}
\end{align}
\end{proposition}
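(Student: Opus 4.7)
The plan is the standard bias--variance decomposition: write
\begin{equation*}
\bar L_\un f(x) - \sL_p f(x) = \bigl(\bar L_\un f(x) - \E[\bar L_\un f(x)]\bigr) + \bigl(\E[\bar L_\un f(x)] - \sL_p f(x)\bigr),
\end{equation*}
and bound the (random) variance part and the (deterministic) bias part separately. The decisive structural point --- which motivates introducing $\bar L_\un$ in the first place --- is that the bandwidth $\brho_{r_k}$ is deterministic, so each summand $T_j \coloneqq \epsilon^{-d/2} k_0\bigl(\|x-x_j\|^2/(\epsilon\phi(\brho_{r_k}(x),\brho_{r_k}(x_j))^2)\bigr)(f(x_j)-f(x))$ is an independent function of $x_j$ alone, enabling a direct concentration argument. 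For the bias, I would compute
\begin{equation*}
\E[\bar L_\un f(x)] = \frac{1}{(m_2/2)\,\epsilon\,\brho_{r_k}(x)^2} \int_\sM \epsilon^{-d/2} k_0\!\left(\frac{\|x-y\|^2}{\epsilon\phi(\brho_{r_k}(x),\brho_{r_k}(y))^2}\right)(f(y)-f(x))\,p(y)\,dV(y).
\end{equation*}
Because $k = o(N)$ forces $r_k \to 0$, so $r_k \leq \tilde r_0$ for large $N$, and \ref{assump:fast} is in force, Lemma \ref{lemma:step2-diff} applies with $r = r_k$ and converts the integral divided by $\brho_{r_k}(x)^2$ into $\epsilon(m_2/2)\sL_p f(x) + O^{[f,p]}(\epsilon^2 + \epsilon r_k)$. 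Dividing by $(m_2/2)\epsilon$ yields the bias contribution $\sL_p f(x) + O^{[f,p]}(\epsilon + r_k)$, which is the first term in the claim.

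For the variance I would apply Bernstein's inequality to $\frac{1}{N}\sum_j (T_j - \E T_j)$, with two ingredients. First, an $L^\infty$ bound of the form $|T_j| \lesssim \|\nabla f\|_\infty \epsilon^{-d/2+1/2}$, obtained by combining the sub-Gaussian decay in Assumption \ref{assump:k0-smooth}(ii) with $|f(x_j)-f(x)| \leq \|\nabla f\|_\infty \|x_j-x\|$ and the boundedness of $\phi$ on $[\rho_{\min},\rho_{\max}]^2$ afforded by Lemma \ref{lemma:bar-rho-epsilon}(ii). Second, a second-moment bound $\E[T_j^2] \lesssim \|\nabla f\|_\infty^2\, p(x)^{-2/d}\,\epsilon^{-d/2+1}$, obtained by a kernel-expansion computation on $\int k_0^2(\cdots)(f(y)-f(x))^2 p(y)\,dV(y)$: the leading term in the expansion, evaluated against $(\nabla f(x)\cdot(y-x))^2$, contributes $\phi^{d+2}\,p(x)\,\|\nabla f(x)\|^2\,\epsilon^{-d/2+1}$ up to a $k_0$-dependent constant, and $\phi \asymp \brho_{r_k}(x) \asymp p(x)^{-1/d}$ collapses this to the stated bound. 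This is exactly the scope covered by Lemma \ref{lemma:right-operator-degree} referenced in the overview. Bernstein at deviation level $\sqrt{\log N}$ then produces, with probability $\geq 1 - 2N^{-10}$,
\begin{equation*}
\left|\frac{1}{N}\sum_j(T_j - \E T_j)\right| \lesssim \|\nabla f\|_\infty\, p(x)^{-1/d}\sqrt{\frac{\log N\cdot \epsilon^{-d/2+1}}{N}} + \|\nabla f\|_\infty\,\frac{\log N}{N}\,\epsilon^{-d/2+1/2}.
\end{equation*}
The assumption $\epsilon^{d/2} = \Omega(\log N/N)$ is exactly what is needed to force the second (sub-Gaussian cutoff) term to be dominated by the first. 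Multiplying through by the normalization $((m_2/2)\epsilon \brho_{r_k}(x)^2)^{-1}$ and using $\brho_{r_k}(x)^{-2} \asymp p(x)^{2/d}$ from Lemma \ref{lemma:bar-rho-epsilon}(ii)--(iii) converts the surviving term into $O(\|\nabla f\|_\infty\, p(x)^{1/d}\sqrt{\log N/(N\epsilon^{d/2+1})})$, matching the second term in the claim.

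The main technical obstacle is the second-moment bound. Lemma \ref{lemma:step2-diff} handles a first-moment integral against $(f(y)-f(x))$, whereas Bernstein requires an analogous expansion for the integral of $k_0^2$ against $(f(y)-f(x))^2\,p(y)$. Squaring $k_0$ preserves the $C^3$ regularity and the exponential decay (Assumption \ref{assump:k0-smooth} is closed under this), so the same normal-coordinate calculation used in the proof of Lemma \ref{lemma:step2-diff} will go through; the book-keeping of the $\phi(\brho_{r_k}(x),\brho_{r_k}(y))$ factor and the dependence on $p(x)$ via $\brho_{r_k}$ is the only non-mechanical part, but it is already the dependence tracked in Lemma \ref{lemma:bar-rho-epsilon}. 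Once this bound is in hand, the remaining steps --- applying Bernstein, balancing its two terms via the bandwidth hypothesis, and adding the bias term from Lemma \ref{lemma:step2-diff} --- are routine.
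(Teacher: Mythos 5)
Your proposal is correct and follows essentially the same approach as the paper: a bias--variance split with the bias controlled by Lemma \ref{lemma:step2-diff} and the variance by Bernstein's inequality, using a second-moment bound on the squared kernel obtained from Lemma \ref{lemma:right-operator-degree}(ii) together with the boundedness and sub-Gaussian decay of $k_0$, and using $\brho_{r_k}(x)\asymp p(x)^{-1/d}$ to track the $p(x)$-dependence. The only small bookkeeping detail you elide is the local truncation of the kernel to a ball of radius $\delta_\epsilon[k_0]$ before invoking the metric comparison $d_\sM(x,y)\le 1.1\|x-y\|$ (the paper does this explicitly), but the exponential decay of $k_0$ makes this a routine $O(\epsilon^{10})$ correction.
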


\paragraph{$\bullet$ Step 3. To bound $|L_\un f(x) - \sL_p f(x)| $}
The final bound in Theorem \ref{thm:conv-un-Laplacian-case1} follows from the previous two steps,
i.e. Propositions \ref{prop:step1-diff} and \ref{prop:step2-diff},
and the triangle inequality.

\section{Theoretical extensions}\label{sec:theory-extend}

Going beyond the unnormalized graph Laplacian $L_\un$, we extend the result to normalized (random-walk) graph Laplacian,
and also with density correction that leads to the weighted Laplacian limiting operator.
All the proved rates are the same as in Theorem \ref{thm:conv-un-Laplacian-case1}, i.e., the fast rate.
We also discuss the extension to eigen-convergence results.
Proofs are left to Appendix \ref{app:proof-theory-extend}.

\subsection{Random-walk graph Laplacian}\label{subsec:Lrw}

With $W$ as in \eqref{eq:def-W}, we consider the random-walk graph Laplacian defined as 
\begin{equation}\label{eq:def-L-rw-matrix}
	L_\rw \coloneqq - \frac{1}{\frac{m_2}{2m_0}}
	\diag\left( \left\{ \frac{1}{\epsilon \hrho(x_i)^2} \right\}_i \right) (I - D^{-1}W) \in \R^{N \times N}.
\end{equation}
$L_\rw f$, the operator applied to $f: \sM \to \R$, can be written as 
\begin{equation}\label{eq:def-L-rw}
	L_\rw f(x) \coloneqq \frac{1}{\frac{m_2}{2m_0}\epsilon\hrho(x)^2}\left(\frac{\sum_{j=1}^N k_0 \left(  \frac{\| x - x_j\|^2}{ \epsilon \phi(\hrho(x), \hrho(x_j))^2 } \right)f(x_j)}{\sum_{j=1}^N k_0 \left(  \frac{\| x - x_j\|^2}{ \epsilon   \phi(\hrho(x), \hrho(x_j))^2   } \right)}-f(x)\right), \quad x\in\sM. 
\end{equation}

By definition, we have 
$ L_\rw f(x) = \frac{L_\un f(x)}{   D(x)  },  $
where
\begin{align} \label{eq:def-D}
	D(x) \coloneqq \frac{1}{m_0 N} \sum_{j=1}^N \epsilon^{-d/2} k_0 (  \frac{\| x - x_j\|^2}{ \epsilon   \phi(\hrho(x), \hrho(x_j))^2   } ), \quad x \in \sM.
\end{align}
Based on the proof in Section \ref{sec:un-GL-fast},
the main idea here is to handle the denominator $D(x)$, which turns out to approximate $1$.
As a result, we can prove the convergence of $L_\rw$ to the same limiting operator as $L_\un$ and at the same rate.

\begin{theorem}[Fast rate of $L_\rw$]\label{thm:conv-rw-Laplacian}

Under the same condition as in Theorem \ref{thm:conv-un-Laplacian-case1}, 
 for any $f\in C^4(\sM)$,
 when $N$ is sufficiently large, for any $x\in\sM$,
	w.p. higher than $ 1- 11N^{-10} $, the same bound as in \eqref{eq:fast-rate} holds for $ | L_\rw f(x) - \sL_pf(x)|$.
When $\epsilon \sim N^{-2/(d+6)}$ and $k \sim N^{6/(d+6)}$, the same bound as in \eqref{eq:fast-rate-N} holds for $|  L_\rw f(x) -  \sL_p f(x) |$.
\end{theorem}

\subsection{Density-corrected graph Laplacians}\label{sec:main-results-norm}

We introduce a normalized graph affinity as 
\begin{equation} \label{eq:def-Wtilde}
	\widetilde{W}_{ij} :=  \frac{ W_{ij} }{\phi(\hrho(x_i), \hrho(x_j))^2 },
\end{equation}
and the corresponding (un-normalized) graph Laplacian matrix is 
\begin{equation*}  
 \widetilde{L}_\un \coloneqq  - \frac{1}{\frac{m_2}{2}N \epsilon }  (\widetilde{D} - \widetilde{W}) \in \R^{N\times N}, 
\end{equation*}
where $\widetilde{D}_{ii} = \sum_j \widetilde{W}_{ij}$.
The operator of applying $\widetilde{L}_\un$ to $f$ is given by 
\begin{equation} \label{eq:def-L-un-tilde}
\widetilde{L}_\un f(x) \coloneqq \frac{1}{\frac{m_2}{2}N\epsilon}\sum_{j=1}^N \epsilon^{-d/2}
k_0 \left(  \frac{\| x - x_j\|^2}{ \epsilon \phi(\hrho(x), \hrho(x_j))^2 } \right) \frac{f(x_j)-f(x)}{ \phi(\hrho(x), \hrho(x_j))^2},  \quad x\in\sM. 
\end{equation}
The next theorem establishes the fast rate of $\widetilde{L}_\un$ converging to $\Delta_p$ defined in \eqref{eq:def-Delta-p}.
The proof follows the same three-step strategy as in Section \ref{subsec:proof-overview-Lun}.

\begin{theorem}[Fast rate of $\widetilde{L}_\un$]
	\label{thm:conv-un-Laplacian-hrho-norm}
Under the same condition as in Theorem \ref{thm:conv-un-Laplacian-case1}, 
 for any $f\in C^4(\sM)$,
when $N$ is sufficiently large, for any $x \in \sM$,  
w.p. higher than $ 1- 5N^{-10} $,  the same bound as in \eqref{eq:fast-rate} holds for $|   \widetilde{L}_\un f(x) -  \Delta_p f(x)  |$.
When $\epsilon \sim N^{-2/(d+6)}$ and $k \sim N^{6/(d+6)}$, the same bound as in \eqref{eq:fast-rate-N}  holds for $|   \widetilde{L}_\un f(x) -  \Delta_p f(x)  | $.
\end{theorem}

We compare our result with \cite{cheng2022convergence} which studied the $k$NN (self-tuned) graph Laplacian corresponding to when $\phi(u,v) = \sqrt{uv}$.
The setup therein utilized a separate dataset $Y$ to compute $k$NN bandwidth $\hrho_Y(x_i)$ for $x_i \in X$, while in practice it is more common to compute $k$NN on the dataset $X$ itself as we have analyzed in this paper. 
In addition, our result improves the point-wise convergence rates in \cite{cheng2022convergence}.
Specifically, the replacement error is improved thanks to our refined $k$NN estimation which compares $\hat \rho$ to $\bar \rho_{r_k}$ in Section \ref{sec:knn-estimation}.  
This allows us to show the overall graph Laplacian convergence rate (the fast-rate case) to be $O(N^{-2/(d+6)})$ up to a log factor, which matches the point-wise convergence rate for fixed bandwidth in the literature \cite{singer2006graph, cheng2022eigen}.
Our result also covers a broader class of kernel affinities that includes the self-tuned kernel as a special case.

Finally, similarly as in Section \ref{subsec:Lrw}, one can consider the random-walk graph Laplacian $\widetilde{L}_\rw$ constructed using $\widetilde D^{-1} \widetilde W$;  its explicit definition is given in Appendix \ref{app-subsec:tildeLrw}.
Combining the convergence of $\widetilde{L}_\un$ established in Theorem \ref{thm:conv-un-Laplacian-hrho-norm} with the argument used in  Theorem \ref{thm:conv-rw-Laplacian},
one obtains convergence of $\widetilde{L}_\rw$ to $ \Delta_p$ at the same rate. The proof is omitted.

\subsection{The eigen-convergence problem}\label{subsec:eigen-rate}

We have been focused on the operator pointwise convergence in our theoretical results.
Another related and important question is the eigen-convergence,
namely how the eigenvalues and associated eigenvectors of the graph Laplacian matrix $L_N$ converge to
the eigenpairs of the limiting manifold operator $\calL$.
To see the relationship between the two, 
first, operator point-wise convergence rates were utilized as a middle step toward proving the eigen-convergence rate \cite{calder2022improved, cheng2022eigen}.
Specifically, pointwise rate bounds $ L_N f  - \calL f  $ in $\infty$-norm,
which, applied to when $f$ equals an eigenfunction of $\calL$,
can be used to prove  eigenvector consistency (in $L^2$-norm) up to the same pointwise rate, see \cite{calder2022improved}.
This means that, with an $N^{-2/(d+6)}$ pointwise convergence rate,
one expects to obtain the same $N^{-2/(d+6)}$ rate for eigenvector $L^2$ convergence (all rates here are up to log factor). 
Combined with an analysis of the graph Dirichlet form, \cite{cheng2022eigen} further showed  an eigenvalue convergence rate of $N^{-2/(d+4)}$.

Back to this work, with the $N^{-2/(d+6)}$  pointwise convergence rate proved for $k$NN graph Laplacians here, 
we expect the same eigen-convergence rates as in \cite{cheng2022eigen} following these prior arguments.
Technically, the proof of Theorem \ref{thm:conv-un-Laplacian-case1} can be extended to bound $\left| L_\un f(x_i)  -  \sL_p f(x_i)  \right|$ for an $x_i$ in $X$ (instead of a fixed $x \in \calM$), by some additional analysis.
(The standard trick of conditioning on $x_i$ will not work directly here due to the joint dependence of the $k$NN bandwidth at other $x_j$'s, but it does apply after the deterministic replacement in Lemma \ref{lemma:step1-diff},
so one instead uses the fact that the replacement argument in Proposition \ref{prop:step1-diff} still holds when $x = x_i$, detailed in Remark \ref{rk:replace-lemma1-xi}.)
This will enable us to bound the vector $\{ L_N f(x_i) - \calL f(x_i) \}_{i=1}^N$ in vector $\infty$-norm,
which readily implies the eigenvector 2-norm convergence bound.

Meanwhile, we note that eigen-convergence can be obtained via other techniques than pointwise bounds and potentially achieve a faster rate, such as in \cite{wormell2021spectral} and recently in \cite{trillos2025minimax}. 
These eigen-convergence results, however, do not imply pointwise convergence of the graph Laplacian operator nor the same rate in the $\infty$-norm.
In particular, convergence rates for eigenvalues and eigenvectors measured in integral norms such as $L^2$ do not  generally control $L_Nf-\mathcal Lf$ in the $\infty$-norm. In applications, eigen-convergence can provide the consistency of spectral embedding for unsupervised learning,
whereas pointwise convergence is useful when one needs local estimates
or error guarantees at individual sample locations, as often arise in
manifold denoising and regression.
We thus view eigen-convergence as a different type of result that is complementary to the pointwise convergence studied here,
and we postpone its investigation to future work.

\section{Numerical experiments}\label{sec:nume-exp}

We numerically investigate the  $k$NN bandwidth function
and
the operator point-wise convergence of graph Laplacian
on simulated manifold data. 
Code is available at \url{https://github.com/xycheng/knn_GL}.

\subsection{$k$NN bandwidth function} \label{sec:nume-exp-knn}

\paragraph{Dataset}
We simulate $N$ i.i.d. data samples from a one-dimensional closed curve $\calM$ embedded in $\R^4$, thus $d=1$.
The density $p$ of data samples is shown in Figure \ref{fig:data-1d}(b), which is non-uniform along the curve. 
The data manifold curve is as in Figure \ref{fig:data-1d}(a). 
Details of the example can be found in Appendix \ref{app:exp-data}.

\paragraph{Method}
We simulate $N = 2000$ samples $\{ x_i \}_{i=1}^N$ and 
let $k = 32$ and $64$. 
At each $x_i$, $\hrho(x_i)$ is calculated from the $k$NN distance $\hat R(x_i)$ by the definition \eqref{eq:def-hat-rho}. 
In addition, we compute $\brho(x_i) = p(x_i)^{-1/d}$ using the analytical expression of $p$. 
We can also obtain the expression of $\brho_{r_k}$ 
defined in Definition \ref{def:bar-rho-epsilon}
making use of the expression of the manifold embedding mapping. 
The details are described in Appendix \ref{app:comp-brho-r}.
We then compute the values of $\brho_{r_k}(x_i)$.

\begin{figure}[t]
\centering
\includegraphics[width=0.85\linewidth]{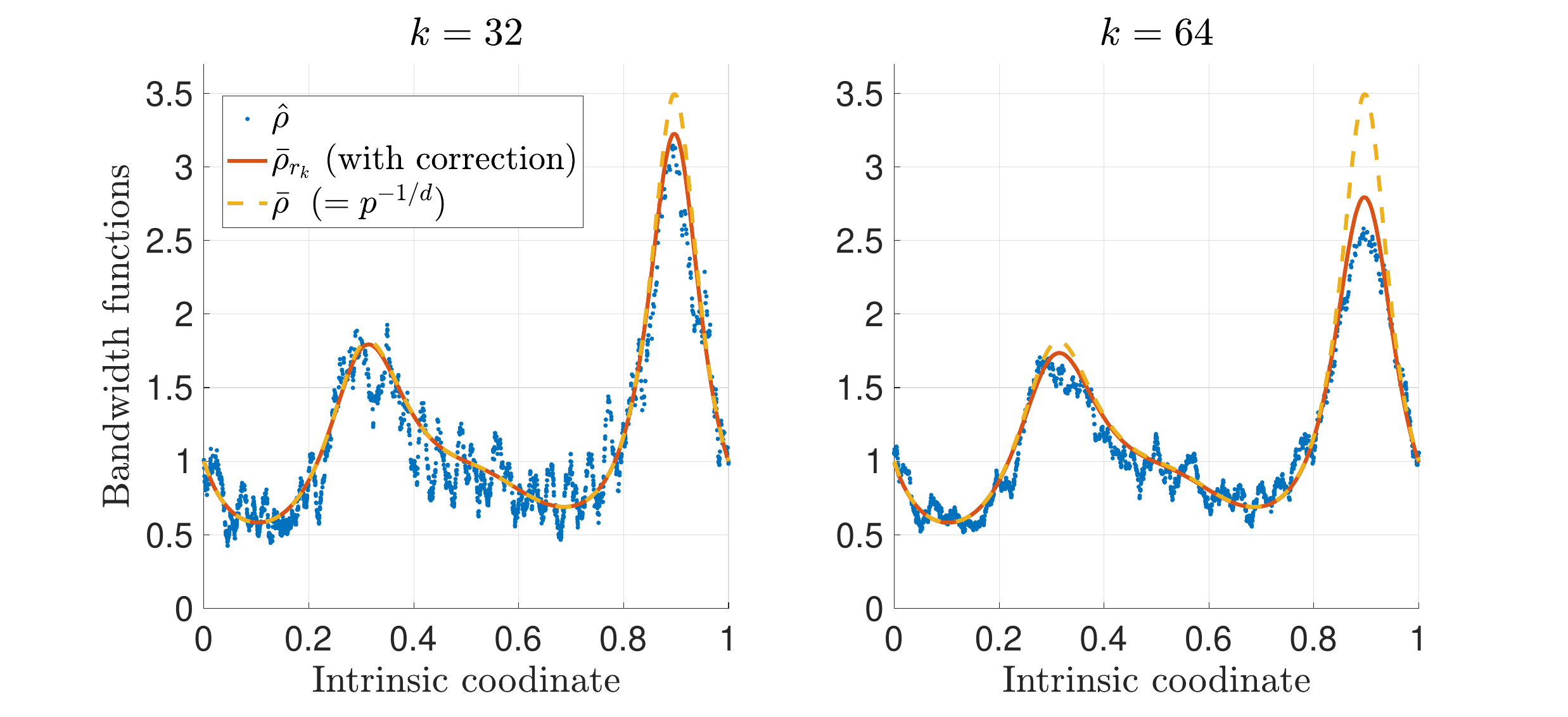}
\vspace{-5pt}
\caption{
\small
The empirical $k$NN bandwidth function $\hrho$  defined in \eqref{eq:def-hat-rho} (marked with blue dots)
computed from  $N = 2000$ samples on a one-dimensional curve 
where $k= 32$ (Left) and 64 (Right). 
Compared with 
the population bandwidth function $\brho_{r_k}$ defined as in Definition \ref{def:bar-rho-epsilon} (marked in red solid line) 
and $\brho = p^{-1/d}$ (marked in orange dashed line). 
}
\label{fig:hrho-est}
\end{figure}

\paragraph{Result}

Figure \ref{fig:hrho-est} shows a typical realization of $\hrho$ and compares it to $\brho_{r_k}$ and  $\brho $, with two different choices of $k$. It can be seen that the correction we introduced makes $\brho_{r_k}$  closer to  $\hat \rho$ than the original $\brho$.
Moreover, when $k$ is larger, the oscillation of $\hrho$ is smaller (due to reduced variance error), and the effect of improvement is more significant. This is consistent with our theory.

\subsection{Point-wise convergence of graph Laplacians}  \label{sec:nume-exp-GL}

\paragraph{Dataset}
We use the same data example as the one in Section \ref{sec:nume-exp-knn}.
To study the operator point-wise convergence,
we specify a test function $f$.
The analytical expression of $f$ and $\Delta_p f$ can be found in Appendix \ref{app:exp-data},
and the plots of the two functions are shown in Figure \ref{fig:data-1d}.

\paragraph{Method}

We calculate $\widetilde{L}_\rw f(x_0)$ at a fixed point $x_0 \in \sM$,
and use approximately 4800 random points within a local neighborhood around $x_0$. 
Details of the choice of the local neighborhood are in Appendix \ref{app:choice-neighhood}. 
$\sigma_0^2$ takes values on a grid that ranges from $0.06$ to $1.54$ (evenly spaced on the $\log_{10}$ scale).
We compute for $k \in \{64, 128, 256, 512\}$,
and only the results for $k=512$, which gives the best overall performance, are presented.
The results for other choices of $k$ are similar. 

We investigate five types of kernels: 
(i) $k_0(\eta) = \exp( - \eta / 4 )$ and self-tuned $\phi$;
(ii) $k_0(\eta) = \exp( - \eta / 4 )$ and squared-mean $\phi$;
(iii) $k_0(\eta) = \exp( - \eta / 4 )$ and min $\phi$;
(iv) $k_0(\eta) = \mathbf{1}_{[0,3]}(\eta)$ and self-tuned $\phi$;
(v) $k_0(\eta) = \mathbf{1}_{[0,3]}(\eta)$ and min $\phi$.
Among these five kernels, (i) and (ii) satisfy 
our fast-rate assumption, namely Assumptions \ref{assump:phi-diff} and \ref{assump:k0-smooth}.
The graph affinity values on the neighborhood around $x_0$ computed by the five kernels are shown in Figure \ref{fig:kernel}.

The value of  $\Delta_p f(x)$ at any $x$ is analytically available.
The  approximation error at $x_0$ is
\begin{equation}\label{eq:def-L1-err}
  \text{Err} := |\widetilde{L}_\rw f(x_0) - \Delta_p f(x_0)|. 
\end{equation}
We report the average error over 2000 replicas of the experiment.
In addition, we also generate evenly-spaced data points along the curve and compute 
$\bar{L}_{\rw}^{\rm (even)} f(x_0)$
by the same expression as \eqref{eq:def-L-rw-tilde} except that the summations over $j$ are weighted by $p(x_j)$.
This can be viewed as computing the kernel integrals by Riemann sums on an even grid. 
We then compute the error
\begin{equation}\label{eq:def-L1bar-err}
	\overline{\rm Err}: =|\bar{L}_{\rw}^{\rm (even)} f(x_0) - \Delta_p f(x_0)|.
\end{equation}
As we use a refined even grid, $\overline{\rm Err}$ represents ``bias error'' due to using a finite kernel bandwidth $\epsilon$.
We thus use the value of $\overline{\rm Err}$ to measure the bias error incurred by the graph Laplacian method.

\begin{figure}[ptb]
\centering{
\includegraphics[width=0.9\linewidth]{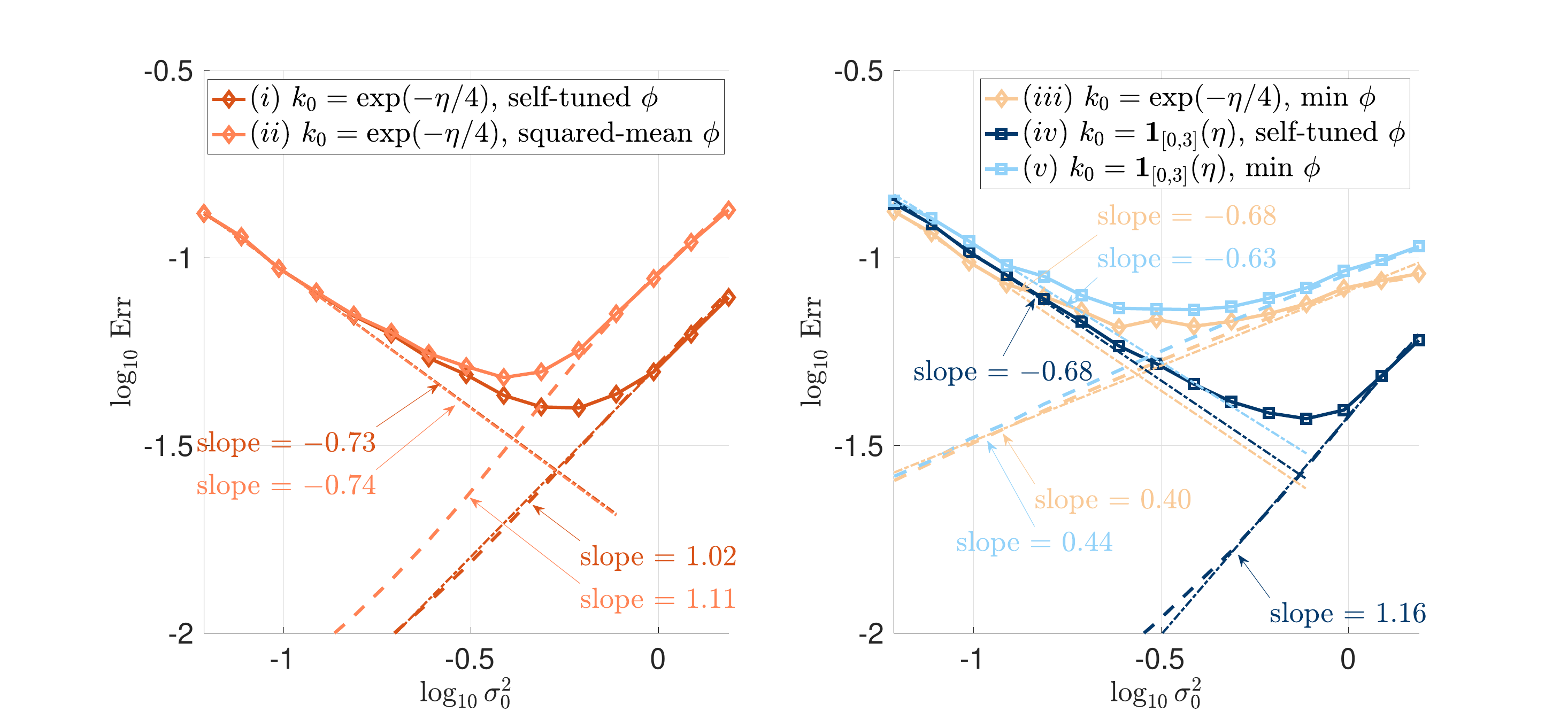}
}
\vspace{-5pt}
\caption{
\small
The errors of different kernels plotted against values of $\sigma_0^2$,
where $\text{Err}$ defined in \eqref{eq:def-L1-err}
(averaged over $2000$ runs) is shown in solid curves,
and $\overline{\rm Err}$ defined in \eqref{eq:def-L1bar-err} is shown in dashed lines.
The fitted slopes on the log-log plots are also shown. 
(Left) 
Theoretical fast-rate cases. 
(Right) 
Other cases. 
}
\label{fig:converg-rates}
\end{figure}

\paragraph{Result}

The averaged values of $\mathrm{Err}$ are plotted in Figure \ref{fig:converg-rates}
over a range of $\sigma_0$ for different choices of kernels. 
We plot against $\sigma_0^2$ because, with fixed $k$ and $N$, it is proportional to $\epsilon$, 
and then we can compare with our theoretical results. 
The dashed lines show the values of  $\overline{\rm Err}$.
In both (left and right) plots, at smaller values of $\sigma_0$ where the variance error dominates, all types of kernels exhibit errors scaling approximately as $\epsilon^{-d/4-1/2} = \epsilon^{-0.75}$ (since $d=1$). 
As $\sigma_0$ increases, the bias error starts to dominate. 
Kernels utilizing smooth $\phi$ exhibit errors that approximately scale as $\epsilon$, while those using min $\phi$ show errors that approximately scale as $\epsilon^{1/2}$. 

In summary, kernels (i)(ii)(iv) show errors approximately $O(\epsilon + \epsilon^{-d/4-1/2})$, 
and  kernels (iii)(v) approximately as $O(\sqrt{\epsilon} + \epsilon^{-d/4-1/2})$.
Recall that  (i)(ii) are theoretically fast-rate kernels.
This is consistent with our theory that regularity of $\phi$ and $k_0$ can improve the bias error in graph Laplacian approximation.
Note that (iv) presents a fast-rate behavior, which suggests that the analysis may be further improved.

\subsection{Spectral embedding}

\begin{figure}[t]
\centering
\includegraphics[width=1\linewidth]{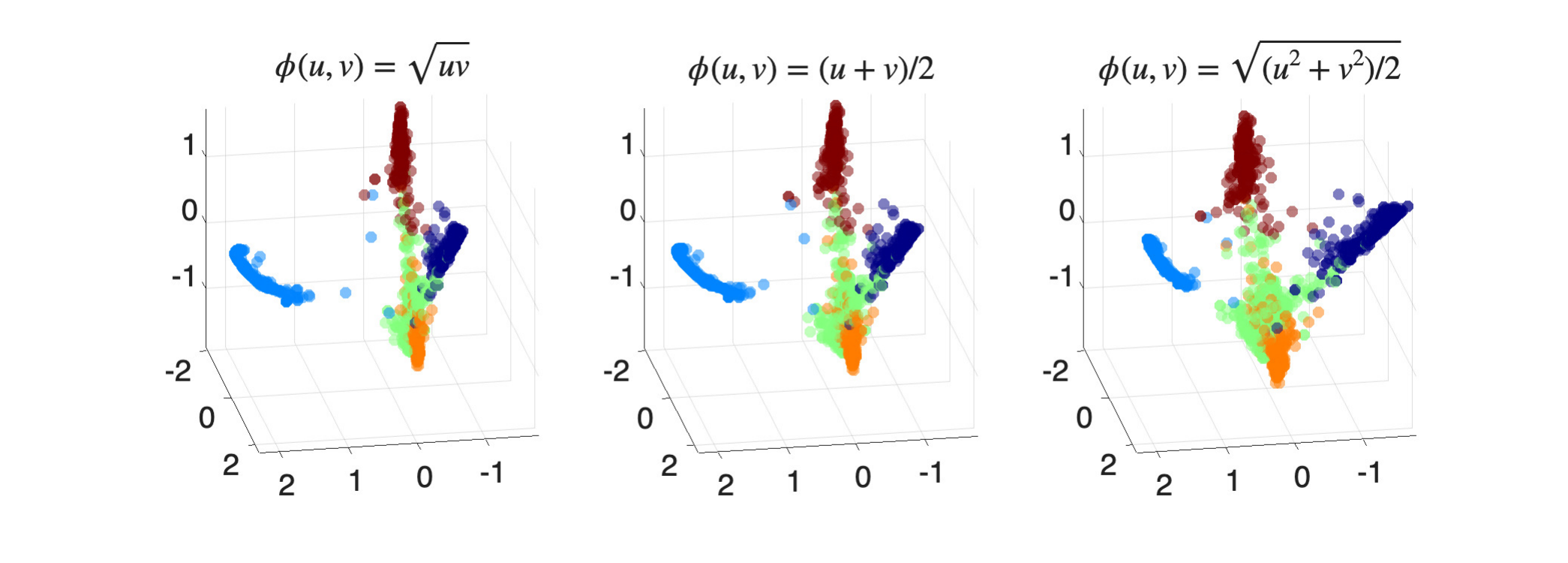}
\vspace{-20pt}
\caption{
\small
Spectral embeddings of the MNIST dataset obtained using three symmetric bandwidth functions $\phi$,
colored by class labels.
}
\label{fig:mnist-embed}
\end{figure}

\paragraph{Data and method.}
We further investigate the effect of the symmetric bandwidth function $\phi$ on spectral embedding using the MNIST handwritten digit dataset. We randomly sample $500$ images from each of the digits `0', ... ,`4', resulting in a dataset of $2500$ images. 
Each image is  $28\times28$ gray-scale
 and represented as a vector in $\mathbb{R}^{784}$. 
 We use $k=7$, and 
 (i) $\phi(u,v)=\sqrt{uv}$,
(ii) $\phi(u,v)=(u+v)/{2}$,
(iii) $\phi(u,v)=\sqrt{(u^2+v^2)/{2}}$.
 The graph affinity is defined by $\widetilde W$ as in \eqref{eq:def-Wtilde} with bandwidth parameter $\epsilon=0.25^2$.
We construct the corresponding random-walk graph Laplacian 
and compute the first three nontrivial graph eigenvectors.
These eigenvectors are then used as coordinates for the spectral embedding. 

\paragraph{Result}
Figure~\ref{fig:mnist-embed} 
shows that all three bandwidth functions produce similar spectral embeddings with clear class separation. Although the cluster shapes vary slightly, each choice of $\phi$ yields a meaningful low-dimensional representation of the handwritten digits. This demonstrates that meaningful spectral embeddings are obtained not only with the self-tuned bandwidth (i), but also with other symmetric bandwidth functions.

\section{Discussion}

\paragraph{Slow rate} 
If one considers $k_0$ or $\phi$ with less regularity,
then our technique allows us to prove a slower rate for the $k$NN graph Laplacians.
Specifically, 
if $\phi$ remains under  Assumption \ref{assump:phi-diff} and $k_0 = \mathbf{1}_{ [0,1] }$,
or $k_0$ remains under Assumption \ref{assump:k0-smooth} and $\phi(u, v) = \max\{u, v\}$ or $\phi(u, v) = \min\{u, v\}$,
then one can prove  a bound by replacing  $O(\epsilon)$ with $O(\sqrt{\epsilon})$ in the 
first term in the bias error in \eqref{eq:fast-rate},
and this leads to a final rate of $O (N^{-1/(d+4)} \sqrt{\log N})$. 
The argument extends to when $k_0$ is compactly supported and  Lipschitz  on $[0,1]$.
This recovers the same rate as previously shown in \cite{calder2022improved}, 
while our detailed bound separates the contribution from $\epsilon$ and $k$
and covers a broader range of graph affinities.

\paragraph{Extension to connection Laplacians}
The pointwise analysis developed here can also be extended to graph
connection Laplacians and vector diffusion maps, following the local
parallel-transport formulation in \cite{singer2017spectral} built upon \cite{singer2006graph}.
 For a fixed evaluation point, a smooth section can be represented in a local frame obtained by
parallel transport, reducing the analysis locally to the vector-valued counterpart of the scalar problem considered here. 
The bandwidth replacement, bias, and concentration arguments then proceed in the same
manner, leading to analogous pointwise convergence results for $k$NN graph connection Laplacians.

\paragraph{Other extensions}

One can consider a symmetric construction of $k$NN graph Laplacian, while $k$NN bandwidth can also give rise to an asymmetric affinity, e.g., by $W_{ij} = k_0( \|x_i- x_j\|^2/\hat R(x_i)^2)$. This will render an $N$-by-$N$ affinity matrix asymmetric, but symmetric affinity can be restored
 if one constructs the asymmetric affinity $A_R$ between $X$ and a ``reference set'' $R$ (which can be subsampled from $X$) and then use $A_R A_R^T$ as the graph affinity \cite{bermanis2016measure, cheng2020two, shen2022scalability}. 
Meanwhile, we consider clean manifold data where data samples lie exactly on the manifold. The extension to include high-dimensional noise and analyze noise robustness will be of practical importance. 
Another interesting direction is to extend to manifolds with boundary, possibly by utilizing techniques in \cite{kuo2024boundary}.
One may also combine the techniques developed here with recent eigen-convergence techniques, such as those in \cite{trillos2025minimax},
to study eigen-convergence for graph Laplacians with empirically estimated $k$NN bandwidths.
Finally, it would be interesting to explore further applications of the method.

\section*{Acknowledgments}
We thank the anonymous reviewers for their valuable feedback, which has helped improve this work.
The work was supported partially by NSF DMS-2237842, 
NSF DMS-2007040, 
and the Simons Foundation MPS-MODL-00814643.

\bibliographystyle{plain}
\bibliography{ref}


\setcounter{figure}{0}
\renewcommand{\thefigure}{A.\arabic{figure}}

\setcounter{table}{0}
\renewcommand{\thetable}{S\arabic{table}}

\setcounter{equation}{0} \renewcommand{\theequation}{A.\arabic{equation}}

\setcounter{assumption}{0}
\renewcommand{\theassumption}{A.\arabic{assumption}}

\appendixtitleon
\appendixtitletocon
\begin{appendices}

\section{Proofs in Section \ref{sec:knn-estimation}}  
\label{sec:proof-sec-rho}

We use $B_r(x)$ to denote the (open) Euclidean ball, and $\bar{B}_r(x)$ is the closure,
for some radius $r > 0$.
To emphasize the dimension, we use superscript $B_r^{m}(x)$ to stand for the Euclidean ball in $\R^m$,
and similarly $B_r^{d}(x)$ means the Euclidean ball in $\R^d$.

\begin{proof}[Proof of Lemma \ref{lemma:bar-rho-epsilon}]
We prove the statements (i)-(iv) respectively. 

\vspace{0.5em}
\noindent
$\bullet$ Proof of (i)-(ii): 
We first show that,  for a fixed $r \in [0, r_0]$, $\brho_r(x)$ is well-defined at each $x\in \calM$.
For a fixed $r \ge 0$, we define
\begin{align} \label{eq:def-F(t,x)}
	F(t, x) \coloneqq t^d+r Q(x)t^{d+2} - \frac{1}{p(x)},
\end{align}
which is $C^1$ on $\R_+ \times \sM$, and for any fixed $x \in \sM$ it is a polynomial of $t$. 
Recall the definitions of $\rho_{\min}$, $\rho_{\max}$ in \eqref{eq:def-rhomin-rhomax}. 
We will consider $F(t,x)$ restricted to $[0, \rho_{\max}] \times \sM$.
We claim that if $r \leq r_0$, then 
\begin{align} \label{eq:property-F}
	\forall x \in \sM, \, \partial_t F(t, x) > 0, \forall t \in (0, \rho_{\max}], \text{ and }
	F(t_{\min}(x), x) < 0, F(t_{\max}(x), x) > 0, 
\end{align}
where $t_{\min}(x) \coloneqq (2 / (3p(x)))^{1/d}$ and $t_{\max}(x) \coloneqq (2/p(x))^{1/d} $. Note that by the definitions of $\rho_{\min}$ and $\rho_{\max}$, we have $0< \rho_{\min} \le t_{\min}(x) < t_{\max}(x) \le \rho_{\max}$ for any $x \in \sM$,
then, assuming claim \eqref{eq:property-F} holds, we have $\partial_t F(t, x) > 0$ for $t \in (t_{\min}(x), t_{\max}(x))$. Combined with the fact that $F(t_{\min}(x), x) < 0, F(t_{\max}(x), x) > 0$, we have that for any $x \in \sM$, there is a unique $\tilde{t}  \in (t_{\min}(x), t_{\max}(x))$ such that $F( \tilde{t} , x) = 0$. This $\tilde{t}$ is thus a function of $x$, which we denote as $f$,
and 
\begin{align} \label{eq:bound-barrho}
	f(x) \in (t_{\min}(x), t_{\max}(x)).
\end{align}
Thus, $f(x) \in (\rho_{\min}, \rho_{\max}) \subset (0, \rho_{\max})$.
While $f$ also depends on $r$, we omit this dependence here since we fixed an $r$ to begin with.
This function $f$ on $\sM$ is then defined as $\brho_r(x)$, which is equivalent to Definition \ref{def:bar-rho-epsilon}.

If $p \in C^l(\sM)$ for some $l \geq 3$, then by the definition of $Q$ in \eqref{eq:def-bar-rho-epsilon},
we have that $Q \in C^{l-2}(\sM)$: this is because ${\Delta p}/{p} \in C^{l-2}(\sM)$ and $\omega$ as defined in \eqref{eq:def-omega} in Lemma \ref{lemma:G-expansion-h-indicator} is $C^\infty$ on $\sM$. 
Therefore, $F(t, x)$ defined in \eqref{eq:def-F(t,x)} is $C^{l-2}$ on $\R_+ \times \sM$.
Then, the fact that $\brho_r \in C^{l-2}(\sM)$ follows from the Implicit Function Theorem. To see this, one can use the definition of differentiable functions on $\sM$ to verify the differentiability of $\brho_r$ with respect to $x \in \sM$. This proves (i).
To verify (ii), one can use the fact that \eqref{eq:bound-barrho} implies that $\brho_r(x) \in (t_{\min}(x), t_{\max}(x))$, where the endpoints $t_{\min}(x)$ and $t_{\max}(x)$ are defined to be $(2 / (3p(x)))^{1/d}$ and $(2 / p(x))^{1/d}$.

Therefore, if the claim in \eqref{eq:property-F} holds, then (i)-(ii) are proved.
Below, we prove  \eqref{eq:property-F}.
First, if $0\leq r \leq r_0 $, then for $0 < t \leq \rho_{\max}$, 
\begin{align*}
\partial_t F(t, x)  &= dt^{d-1}\left(1+r Q(x)\frac{d+2}{d}t^2\right)\geq dt^{d-1}\left(1-r\|Q\|_\infty\frac{d+2}{d}t^2\right) \\
&\geq dt^{d-1}\left(1-r_0\|Q\|_\infty\frac{d+2}{d}\rho_{\max}^2\right) > \frac{1}{2} dt^{d-1} > 0.  
\end{align*}
The second last inequality follows from the definition of $r_0$ as in \eqref{eq:def-r0-tilder0}.
Similarly, for $t_{\max}(x) = (2/p(x))^{1/d} > 0$, we have
\begin{align*}
F(t_{\max}(x), x) 
 &\geq t_{\max}(x) ^d\left(1-r\|Q\|_\infty t_{\max}(x) ^2\right)-\frac{1}{p(x)}  
 \geq  t_{\max}(x) ^d\left(1-r_0\|Q\|_\infty \rho_{\max} ^2\right)-\frac{1}{p(x)} \\
 &> \frac{1}{2}t_{\max}(x)^d-\frac{1}{p(x)} = \frac{1}{p(x)}-\frac{1}{p(x)} = 0,
\end{align*}
and for $t_{\min}(x) = (2/(3p(x)))^{1/d} > 0$, we have
\begin{align*}
	F(t_{\min}(x), x) 
	&\leq t_{\min}(x) ^d\left(1 + r\|Q\|_\infty t_{\min}(x) ^2\right)-\frac{1}{p(x)}  
	\leq  t_{\min}(x) ^d\left(1 + r_0\|Q\|_\infty \rho_{\max} ^2\right)-\frac{1}{p(x)} \\
	&< \frac{3}{2}t_{\min}(x)^d-\frac{1}{p(x)} = \frac{1}{p(x)}-\frac{1}{p(x)} = 0.
\end{align*}
This proves the claim in  \eqref{eq:property-F}, and in turn proves (i)-(ii).

\vspace{0.5em}
\noindent
$\bullet$ Proof of (iii):

Recall that $\brho = p^{-1/d}$,
together with the definition of $\brho_r$ in \eqref{eq:def-bar-rho-epsilon},
we have
$ \brho_r^d ( 1 + r Q \brho_r^2)  =  \brho^d$.
Then, for any $x\in \sM$, we have 
$|\brho_r(x)^d -  \brho(x)^d | = |Q(x)\brho_r(x)^{d+2}| r $.
Using the upper bound $\brho_r(x)  \leq  \rho_{\max}$ from (ii) which was established,
we then have 
$|\brho_r(x)^d -  \brho(x)^d | \leq C_p r $,  where $C_p \coloneqq 
(\|Q\|_\infty +1) \rho_{\max}^{d+2} > 0$. 
Meanwhile, since
\begin{align} \label{eq:bound-rhobar-r^d-rhobar^d}
	 |\brho_r(x)^d -  \brho(x)^d | =  | \brho_r(x) - \brho(x) | \sum_{j=0}^{d-1} \brho_r(x)^j \brho(x)^{d-1-j},
\end{align}
we will derive a lower bound of the term $\sum_{j=0}^{d-1} \brho_r(x)^j \brho(x)^{d-1-j}$ so as to upper bound $| \brho_r(x) - \brho(x) |$.
To do this, for $\rho_{\min}$ defined in \eqref{eq:def-rhomin-rhomax}, we have that $\brho(x) \ge (1/p_{\max})^{1/d} > \rho_{\min}$ from the definition of $\brho$
and $\brho_r(x)  \geq  \rho_{\min}$ from (ii).
Thus, for each $j = 0, \dots, d-1$, we have $ \brho_r(x)^j \brho(x)^{d-1-j} \ge \rho_{\min}^{d-1}$.
This implies that 
 the term $\sum_{j=0}^{d-1} \brho_r(x)^j \brho(x)^{d-1-j}$ is lower bounded by $d\rho_{\min}^{d-1}$.
Therefore, using the upper bound $|\brho_r(x)^d -  \brho(x)^d | \leq C_p r $ and the lower bound $\sum_{j=0}^{d-1} \brho_r(x)^j \brho(x)^{d-1-j} \ge d\rho_{\min}^{d-1}$, 
by \eqref{eq:bound-rhobar-r^d-rhobar^d},
we have
$ | \brho_r(x) - \brho(x) | \leq C_0(p) r$, where $C_0(p) \coloneqq C_p / (d\rho_{\min}^{d-1}) > 0$, that is, $ \| \brho_r - \brho \|_\infty \leq C_0(p) r $.
Here, $C_0(p)$ depends on $\sM$ and the derivatives of $p$ up to the second order.

\vspace{0.5em}
\noindent
$\bullet$ Proof of (iv):

For $f \in C^k(\sM)$, recall the definitions of $\| \nabla^k f|_x\|$ and $\| \nabla^k f\|_\infty$ in Definition \ref{def:Dkf-norm}, and the other notations for the manifold derivative in Section \ref{sec:prelim}. 

For each $l=1,2,3$, we assume that $p\in C^{l+2}(\sM)$. From the established (i), $\brho_r\in C^l(\sM)$, so the derivatives needed in the corresponding case below are well-defined and bounded.
It suffices to prove the bound for $\| \nabla^l (\brho_r - \brho) |_x \|$ for $x \in \sM$. Then, by Definition \ref{def:Dkf-norm}, the bound for  $\| \nabla^l (\brho_r - \brho) \|_\infty $ follows. 

We define $\tilde{Q} \coloneqq Q \circ \exp_x$, $\trho \coloneqq \brho \circ \exp_x$, and $\trho_r \coloneqq \brho_r \circ \exp_x$.
In this proof, for any vector $v \in  T_x \sM $, we also use $v$ to denote the vector in $\R^d$.

Let $l = 1$. 
For any $v  \in T_x\sM$ such that $\|v\| \leq 1$, by \eqref{eq:nabla^k-f-norm-coord-v},
\begin{align*}
	&\nabla \left(\brho_r^d - \brho^d \right) \big|_x \left(v\right)  = D( \trho_r^d - \trho^d )(0)(v) = d \trho_r(0)^{d-1} D \trho_r (0)(v) - d \trho(0)^{d-1} D \trho(0)(v) \\
	&\nabla \left( 	- r Q \brho_r^{d+2} \right) \big|_x \left( v \right) = D(- r \tilde{Q} \trho_r^{d+2})(0)(v) = - r \left(  (d+2) \tilde{Q}(0) \trho_r(0)^{d+1} D \trho_r(0)(v)  + \trho_r(0)^{d+2} D \tilde{Q}(0)(v) \right).
\end{align*}
By the bound for $\| \brho_r  - \brho \|_\infty$ in (iii), $\|\brho_r^{d-1}  - \brho^{d-1}\|_\infty \leq (d-1) \rho_{\max}^{d-2} C_0(p) r$.
Then, combined with the triangle inequality and the lower bound of $\brho_r$ in (ii), we have
\begin{align*} 
	&\left| \nabla \left(\brho_r^d - \brho^d \right) \big|_x \left(v\right)   \right| = \left| d \trho_r(0)^{d-1} D \trho_r (0)(v) - d \trho(0)^{d-1} D \trho(0)(v) \right| \\
	&\geq d \trho_r(0)^{d-1}  \left| D \trho_r (0)(v) - D \trho (0)(v)\right| - d\left| D \trho (0)(v) \right|  \left|\trho_r(0)^{d-1} - \trho(0)^{d-1} \right|\\
	&\geq d \rho_{\min}^{d-1} \left| \nabla (\brho_r - \brho )|_x(v)   \right| -  d(d-1) \rho_{\max}^{d-2} \left| \nabla \brho |_x(v) \right| C_0(p) r  \notag \\
	&\geq d \rho_{\min}^{d-1} \left| \nabla (\brho_r - \brho )|_x(v)   \right| -  d(d-1) \rho_{\max}^{d-2} \| \nabla \brho \|_\infty C_0(p) r,
\end{align*}
where in the second inequality, we used \eqref{eq:nabla^k-f-norm-coord-v};
and in the last inequality, we used the definition of $\| \nabla \brho \|_\infty$ by Definition \ref{def:Dkf-norm}.
We denote by $C_{1,L}(p) \coloneqq d(d-1) \rho_{\max}^{d-2}  \| \nabla \brho \|_\infty C_0(p)$, then
\begin{align} \label{eq:proof-brhor-diff-1}
	\left| \nabla \left(\brho_r^d - \brho^d \right) \big|_x \left(v\right)   \right| \geq d \rho_{\min}^{d-1} \left| \nabla (\brho_r - \brho )|_x(v)   \right| -  C_{1,L}(p) r.
\end{align}
Similarly, by triangle inequality,
\begin{align*} 
	\left| \nabla \left( 	- r Q \brho_r^{d+2} \right) \big|_x \left( v \right)  \right|	
	&\leq  r (d+2) \| Q\|_\infty \rho_{\max}^{d+1} \left|  \nabla (\brho_{r} - \brho)|_x(v) \right| + r (d+2) \| Q\|_\infty \rho_{\max}^{d+1} \left| \nabla \brho|_x(v) \right| \\
	&\quad + r \rho_{\max}^{d+2} \left| \nabla Q|_x(v) \right|  \\
	&\leq r (d+2) \| Q\|_\infty \rho_{\max}^{d+1} \left|  \nabla (\brho_{r} - \brho)|_x(v) \right| + \big(  (d+2) \| Q\|_\infty \rho_{\max}^{d+1} \| \nabla \brho \|_\infty \\
	&\quad + \rho_{\max}^{d+2} \| \nabla Q \|_\infty \big) r,
\end{align*}
where we used the definitions of $\|\nabla \brho\|_\infty$ and $\|\nabla Q\|_\infty$.
We denote by $C_{1,R}(p) \coloneqq  (d+2) \| Q\|_\infty \rho_{\max}^{d+1} \| \nabla \brho \|_\infty + \rho_{\max}^{d+2} \| \nabla Q \|_\infty$, then
\begin{align} \label{eq:proof-brhor-diff-2}
	\left| \nabla \left( 	- r Q \brho_r^{d+2} \right) \big|_x \left( v \right)  \right| \leq r (d+2) \| Q\|_\infty \rho_{\max}^{d+1} \left|  \nabla (\brho_{r} - \brho)|_x(v) \right| + C_{1,R}(p) r.
\end{align}

We are ready to bound $| \nabla (\brho_r - \brho )|_x(v)|$.
We first show an equality 
\begin{equation}\label{eq:proof-nabala-l=1-bound-fact1}
\nabla \left(\brho_r^d - \brho^d \right) = \nabla \left( 	- r Q \brho_r^{d+2} \right).    
\end{equation}
By the definition of $\brho_r$ in Definition \ref{def:bar-rho-epsilon}, we have $\brho_r^d ( 1 + r Q \brho_r^2)  =  \brho^d$, which can be written as
$\brho_r^d - \brho^d   =  - r Q \brho_r^{d+2}$. 
By taking the manifold derivative on both sides, we have \eqref{eq:proof-nabala-l=1-bound-fact1}.
Combining \eqref{eq:proof-nabala-l=1-bound-fact1} with \eqref{eq:proof-brhor-diff-1} and \eqref{eq:proof-brhor-diff-2}, we have
\[ d \rho_{\min}^{d-1} \left| \nabla (\brho_r - \brho )|_x(v)   \right| -  C_{1,L}(p) r
\leq  r (d+2) \| Q\|_\infty \rho_{\max}^{d+1} \left|  \nabla (\brho_{r} - \brho)|_x(v) \right| + C_{1,R}(p) r.
\]
Recall the definition of $\tilde{r}_0$ in \eqref{eq:def-r0-tilder0}.
When $r \leq \tilde{r}_0$, $r(d+2) \| Q\|_\infty \rho_{\max}^{d+1}  \leq \frac{1}{2}d \rho_{\min}^{d-1}$, and thus
\begin{align} \label{eq:proof-brhor-diff-3}
	\left| \nabla (\brho_r - \brho) |_x (v) \right|   \le C_1(p)r,	
\end{align}
where $C_1(p) \coloneqq ( C_{1,L}(p) + C_{1,R}(p) + 1) / (\frac{1}{2}d \rho_{\min}^{d-1}  ) > 0$. Note that $C_1(p)$ depends on $\sM$ and the derivatives of $p$ up to the third order.
Since \eqref{eq:proof-brhor-diff-3} holds for any $v\in T_x \sM$ with $\|v\| \leq 1$, 
by Definition \ref{def:Dkf-norm}, 
we have
\[ \| \nabla (\brho_r - \brho) |_x \|  
\leq  C_1(p)r.\]
Since this holds for arbitrary $x$, we have $\| \nabla \brho_r -   \nabla\brho  \|_\infty \leq  C_1(p)r$. 
Consequently, for $r \leq \tilde{r}_0$,
\begin{align} \label{eq:def-rhomax-1}
	\| \nabla \brho_r \|_\infty \leq   \| \nabla \brho \|_\infty + C_1(p)  \tilde{r}_0
 =: \rho_{\max}^{(1)}.
\end{align}

For $l = 2$,
for any $v \in T_x\sM$ such that $\|v\| \leq 1$, a direct calculation gives 
\begin{align*}
	& \nabla^2 \left(\brho_r^d - \brho^d \right) \big|_x \left(v, v\right)  = D^2 ( \trho_r^d - \trho^d )(0)(v,v) \\
	&= d\trho_r(0)^{d-1} D^2\trho_r(0)(v,v) -  d\trho(0)^{d-1}D^2\trho(0)(v,v)  \\
	&~~~~ +  d(d-1) \trho_r(0)^{d-2} (D\trho_r(0)(v))^2  - d(d-1) \trho(0)^{d-2} (D\trho(0)(v))^2,  \\
    &\nabla^2 \left( 	- r Q \brho_r^{d+2} \right) \big|_x \left( v, v \right) =  -r D^2 (\tilde{Q} \trho_r^{d+2} )(0)(v,v) \\
    &
    = - r \Big(  (d+2) \tilde{Q}(0) \trho_r(0)^{d+1} D^2\trho_r(0)(v,v) + (d+2)(d+1) \tilde{Q}(0) \trho_r(0)^d  (D\trho_r(0)(v))^2 
	\\
	&~~~~ + 2 (d+2) \trho_r(0)^{d+1} D\trho_r(0)(v)  D\tilde{Q}(0)(v) + \trho_r(0)^{d+2} D^2\tilde{Q}(0)(v,v) \Big) . 
 \end{align*}
By a similar argument used to prove the bound for $| \nabla (\brho_r^d - \brho^d ) |_x (v)|$ and $	| \nabla ( 	- r Q \brho_r^{d+2} ) |_x ( v ) | $, we have that
\begin{align} \label{eq:bound-2nd-deriv-lb}
	\left| \nabla^2 \left(\brho_r^d - \brho^d \right) \big|_x \left(v, v\right) \right|   \geq d \rho_{\min}^{d-1} 	\left| \nabla^2 \left(\brho_r - \brho \right) \big|_x \left(v, v\right) \right|   - C_{2,L}(p)r,
\end{align}
where $C_{2,L}(p) := d(d-1) \rho_{\max}^{d-2} \|\nabla^2 \brho\|_\infty C_0(p) + d(d-1)(d-2) \rho_{\max}^{d-3}( \rho_{\max}^{(1)} )^2 C_0(p) + 2d(d-1)  \rho_{\max}^{d-2} \rho_{\max}^{(1)} C_1(p)$, which depends on $\sM$ and the derivatives of $p$ up to the third order.
Meanwhile,
\begin{align} \label{eq:bound-2nd-deriv-ub}
	\left|  \nabla^2 \left( 	- r Q \brho_r^{d+2} \right) \big|_x \left( v, v \right)  \right|  \leq r (d+2) \| Q\|_\infty \rho_{\max}^{d+1} 	\left| \nabla^2 \left(\brho_r - \brho \right) \big|_x \left(v, v\right) \right|   +   C_{2,R}(p)r,
\end{align}
where $C_{2,R}(p) \coloneqq (d+2)  \| Q\|_\infty \rho_{\max}^{d+1} \| \nabla^2 \brho \|_\infty + 2(d+2) \rho_{\max}^{d+1} \rho_{\max}^{(1)} \| \nabla Q \|_\infty + \rho_{\max}^{d+2} \| \nabla^2 Q \|_\infty + (d+2)(d+1) \|Q \|_\infty \rho_{\max}^d (\rho_{\max}^{(1)})^2$, which depends on $\sM$ and the derivatives of $p$ up to the fourth order.
By taking first-order derivatives on both sides of \eqref{eq:proof-nabala-l=1-bound-fact1}, we obtain $\nabla^2 \left(\brho_r^d - \brho^d \right)|_x \left( v, v \right) = \nabla^2 \left( 	- r Q \brho_r^{d+2} \right)|_x \left( v, v \right)$.
Then, 
by the same argument to derive \eqref{eq:proof-brhor-diff-3} from \eqref{eq:proof-brhor-diff-1}\eqref{eq:proof-brhor-diff-2}\eqref{eq:proof-nabala-l=1-bound-fact1},
here, we combine with  
the two inequalities \eqref{eq:bound-2nd-deriv-lb}\eqref{eq:bound-2nd-deriv-ub}
and obtain that 
\[ 
\left| \nabla^2 \left(\brho_r - \brho \right) \big|_x \left(v, v\right) \right|  \leq C_2(p)r, 
\]
where $C_2(p) \coloneqq ( C_{2,L}(p) + C_{2,R}(p) + 1) / (\frac{1}{2}d \rho_{\min}^{d-1}  ) > 0$. 
Note that $C_2(p)$ depends on $\sM$ and the derivatives of $p$ up to the fourth order.
Since this holds for any $v \in T_x \sM$ with $\|v\| \leq 1$, by Definition \ref{def:Dkf-norm}, 
\[   \| \nabla^2 (\brho_r - \brho) |_x \|  \leq  C_2(p)r.   \]
Since this is true for arbitrary $x$, we have $\| \nabla^2 \brho_r -   \nabla^2 \brho  \|_\infty \leq  C_2(p)r$ when $r \le \tilde{r}_0$.

Finally, we consider $l=3$. 
For any $v  \in T_x \sM$ such that $\|v\| \leq 1$, by similar arguments to those for $l=1, 2$, we have
\begin{align*}
	& \left| \nabla^3 \left(\brho_r^d - \brho^d \right) \big|_x \left(v, v, v\right) \right|   \geq d \rho_{\min}^{d-1}  \left| \nabla^3 \left(\brho_r - \brho \right) \big|_x \left(v, v, v\right) \right|  - C_{3,L}(p)r, \\
	& \left|  \nabla^3 \left( 	- r Q \brho_r^{d+2} \right) \big|_x \left( v, v, v \right)  \right|  \leq r (d+2) \| Q\|_\infty \rho_{\max}^{d+1} \left| \nabla^3 \left(\brho_r - \brho \right) \big|_x \left(v, v, v\right) \right|  +   C_{3,R}(p)r,	
\end{align*}
where $C_{3,L}(p)$ is a constant depending on $\sM$ and the derivatives of $p$ up to the fourth order, and $C_{3,R}(p)$ is a constant depending on $\sM$ and the derivatives of $p$ up to the fifth order, similar to the definitions of $C_{2,R}(p)$ and $C_{2,L}(p)$.
By taking the second-order derivative on both sides of \eqref{eq:proof-nabala-l=1-bound-fact1}, we have  $\nabla^3 \left(\brho_r^d - \brho^d \right)|_x \left( v, v, v \right) = \nabla^3 \left( 	- r Q \brho_r^{d+2} \right)|_x \left( v, v, v \right)$. 
Then, similarly as in the proof for $l=2$, combined with the two inequalities above, we have that when $ r \leq \tilde{r}_0$, 
\[ 
\left| \nabla^3 \left(\brho_r - \brho \right) \big|_x \left(v, v, v\right) \right|     \leq C_3(p)r, 
\]
where $C_3(p) \coloneqq ( C_{3,L}(p) + C_{3,R}(p) + 1) / (\frac{1}{2}d \rho_{\min}^{d-1}  ) > 0$. 
Note that $C_3(p)$ depends on $\sM$ and the derivatives of $p$ up to the fifth order.
Consequently, 
by Definition \ref{def:Dkf-norm}, 
$ \| \nabla^3 (\brho_r - \brho) |_x \|   \leq  C_3(p)r$, $\forall x\in \calM$,
and then $\|\nabla^3  \brho_r -   \nabla^3 \brho  \|_\infty \le C_3(p)r$ whenever $r \leq \tilde{r}_0$.
\end{proof}

\begin{proof}[Proof of Proposition \ref{prop:consist-hrho-x0}]
The proof follows the strategy of Proposition 2.2 in \cite{cheng2022convergence}, and we include the proof here for completeness.

Recall the definition of $r_k$ in \eqref{eq:def-rk}.
Since $k = o(N)$, when $N$ is sufficiently large, $r_k \leq r_0$, where $r_0$ is as defined in \eqref{eq:def-r0-tilder0}.
Then, $\brho_{r_k}(x)$ is well-defined, namely, 
$\brho_{r_k}(x)$ is the unique solution of \eqref{eq:def-bar-rho-epsilon} with $r = r_k$. Define
\begin{equation}\label{eq:def-barR}
\bar{R}(x) := \brho_{r_k}(x)\sqrt{r_k} = \brho_{r_k}(x) \left( \frac{k}{\alpha_dN} \right)^{1/d}\,.
\end{equation}
Since we also have $\hat{R}(x) = \hrho(x) \sqrt{r_k}$ by \eqref{eq:rhohat-Rhat},
the proposition can be equivalently proved 	by controlling 
${ |\hat{R}(x) - \bar{R}(x) |}/{\bar{R}(x)}$.

We derive some useful facts about $\bar{R}(x)$. 
By  the definition of $\brho_{r_k}$ in Definition \ref{def:bar-rho-epsilon}, equivalently by \eqref{eq:def-bar-rho-epsilon},
$\bar{R}(x)$ satisfies that
\begin{align} \label{eq:def-bar-R}
	\alpha_d p(x) \bar{R}(x)^d \left(1 + Q(x) \bar{R}(x)^2 \right) = \frac{k}{N},
	\quad \forall x \in \sM.
\end{align}
Furthermore, by Lemma \ref{lemma:bar-rho-epsilon}(ii), we have that for any $ x \in \sM$,
\begin{align} \label{eq:bound-barR}
	\bar{R}(x)  \in  \left[  \left( \frac{2k}{3 \alpha_dN p(x)} \right)^{1/d}, \; \left( \frac{2k}{ \alpha_dN p(x)} \right)^{1/d} \right] \subset \left[ \rho_{\min} \left( \frac{k}{\alpha_dN} \right)^{1/d}, \;  \rho_{\max} \left( \frac{k}{ \alpha_dN} \right)^{1/d} \right],
\end{align} 
where $\rho_{\min}$ and $\rho_{\max}$ are defined in \eqref{eq:def-rhomin-rhomax}.

For the given $s>0$,  
define
\begin{equation*} 
\delta_r := t_1 \left(\frac{k}{N}\right)^{3/d} + \frac{t_2}{d} \sqrt{\frac{s \log N}{k}}\,,
\end{equation*}
where 
$t_1 =\Theta^{[p]} (1)$, $t_2 =\Theta(1)$, 
and both will be determined later.
We will show that,
when $N$ exceeds a threshold depending on $(\sM, p,s)$,
for any fixed $x \in \sM$, 
w.p. $\ge 1-2 N^{-s}$, 
\begin{equation}\label{Proposition2.2 Proof first bound}
\bar{R}(x)(1 - \delta_r) \leq  \hat{R}(x)  \leq \bar{R}(x)(1 + \delta_r).
\end{equation}
To prove \eqref{Proposition2.2 Proof first bound}, 
we introduce some notations. Denote
\[
R_{-}(x) := \bar{R}(x)(1 - \delta_r),
\quad\mbox{and}\quad 
R_ +(x) := \bar{R}  {(x)}(1+ \delta_r)\,.
\]
Let $h(\eta) = {\bf 1}_{[0,1]}(\eta)$,
and define, for any $x\in\sM$ and $r >0$,
\begin{equation} \label{eq:def-Hj}
  \hat{\mu}(x,r) := \frac{1}{N} \sum_{j=1}^N h \left(  \frac{ \| x- x_j\|^2 }{r^2}\right) 
= \frac{1}{N} \sum_{j=1}^N H_j(x,r),
\quad H_j(x,r) :=  h \left(  \frac{ \| x- x_j\|^2 }{r^2}\right),   
\end{equation}
then, by \eqref{eq:def-hat-rho},
$\hat{R} (x) = \inf\left\{ r > 0,\, \text{ s.t. }  \hat{\mu}(x,r) \ge \frac{k}{N} \right\}$.
For fixed $x$ and $r$, $H_j$ are i.i.d. random variables, and
\begin{equation*}
\E H_j(x,r) = \int_{\sM} h \left(  \frac{ \| x- y\|^2 }{r^2}\right)  p(y) dV(y) =: \mu(x,r).
\end{equation*}
Below, to simplify notation, we omit the dependence on $x$ in  $\bar{R}$ and $R_\pm$ when there is no confusion.
The argument is for a fixed $x$, and we make sure that the constants $t_1$ and $t_2$ in $\delta_r$
as well as the large-$N$ threshold
 are uniform for all $x$.

\vspace{5pt}
$\bullet$ Lower bound in \eqref{Proposition2.2 Proof first bound}: 
By definition, for any fixed $x\in\sM$, $\hat{\mu}(x,r)$ is monotonically increasing on $\R_+$ with respect to the variable $r$.
 We claim that 
\begin{equation}\label{eq:lower-bound-relation-1}
\Pr [ \hat{R}(x) < R_- ]
\leq 
\Pr \left[ \hat{\mu}( x, R_- ) \ge \frac{k}{N}  \right].
\end{equation}
Because 
$\hat{R}(x) = \inf \{ r > 0,  \hat{\mu}(x,r) \ge \frac{k}{N} \}$, if $\hat{R}(x) < R_- $, 
there is some $r'$, $ \hat{R}(x) < r'  < R_-$ such that $\hat{\mu}(x, r') \ge \frac{k}{N}$,
and by monotonicity $\hat{\mu}(x, R_-) \ge \hat{\mu}(x, r') \ge \frac{k}{N}$.

To bound the probability $\Pr \left[ \hat{\mu}(x,  R_- ) \ge \frac{k}{N}  \right]$,
we will prove and use that the expectation $\mu(x,R_-)$ would be smaller than $\frac{k}{N}$ for properly chosen $t_1$ and sufficiently large $N$.

Note that by the lower and upper bounds of $\bar{R}$ in \eqref{eq:bound-barR},
$\bar{R} = \Theta^{[p]} ( (\frac{k}{N})^{1/d} ) =o^{[p]}(1)$,
and the constant in the big-$\Theta$ notation is uniform for all $x$.
Also, we have that
$\delta_r = o^{[p]}(1)$ 
under the asymptotic condition on $k$ in the proposition.
As a result, we have that
\begin{equation}\label{eq:Rminus(x)-asymp-uniform-x}
R_-  = \Theta^{[p]} (\bar{R} ) = o^{[p]}(1).
\end{equation}
We apply Lemma \ref{lemma:G-expansion-h-indicator} to expand $\mu( x,R_- ) $ with $\epsilon = R_-^2$. 
By \eqref{eq:Rminus(x)-asymp-uniform-x}, with large $N$ and then $R_-$ is small enough such that $\epsilon$ satisfies the condition of 
Lemma \ref{lemma:G-expansion-h-indicator}, 
\eqref{eq:G-expansion-h-indicator} gives that
\[\mu( x,R_- ) 
=  \alpha_d p(x) R_- ^d + \frac{\alpha_d}{2(d+2)}R_- ^{d+2}(\Delta p(x) + \omega(x)p(x))+ O^{[p]} ( R_-^{d+3}).
\]
Recall that $R_{-} = \bar{R}(1 - \delta_r)$, 
and $Q(x) = \frac{1}{2(d+2)}(\frac{\Delta p(x)}{p(x)} + \omega(x))$ by definition \eqref{eq:def-bar-rho-epsilon} (where $\omega(x)$ is as in \eqref{eq:def-omega} in Lemma \ref{lemma:G-expansion-h-indicator}), we further have
\begin{align}
\mu( x,R_- ) 
& = \alpha_d p(x) \bar{R}^d  (1- \delta_r)^d (1 + Q(x) \bar{R}^2 (1- \delta_r)^2) + O^{[p]}( \bar{R}^{d+3})   \nonumber \\
& = \alpha_d p(x) \bar{R}^d  \left( 1 + Q(x)\bar{R}^2- d \delta_r + O(\delta_r^2) + O^{[p]}(\bar{R}^{2}\delta_r)  \right)  + O^{[p]}( \bar{R}^{d+3})   \nonumber  \\
& \leq 
\frac{k}{N} (1 - 0.6 d  \delta_r )
+ O^{[p]}(\bar{R}^{d+3})
=: \frac{k}{N} - \delta_{\mu_-},
\label{eq:def-delta-mu-minus}
\end{align}
where the last inequality is by \eqref{eq:def-bar-R},
together with that the $ O(\delta_r^2)+ O^{[p]}(\bar{R}^{2}\delta_r)$ term is less than $0.1d\delta_r$ with large $N$ (recall that both $\delta_r$ and $\bar R$ are $o^{[p]}(1)$) and 
that $\alpha_d p(x)\bar{R}^d \geq 2k/ (3N)$ from the lower bound of $\bar{R}$ in \eqref{eq:bound-barR}. 
Note that the constant in the term $O^{[p]}(\bar{R}^{d+3})$ in \eqref{eq:def-delta-mu-minus}, 
denoted as  $c_{p,L}$, is uniform for all $x$.
Meanwhile, since $r_k \leq r_0$, by the upper bound of $\bar{R}$ in \eqref{eq:bound-barR}, we have
\begin{equation} \label{eq:def-cB}
    \bar{R}(x) \leq c_B  \left(\frac{k}{N}\right)^{1/d}, 
\quad \forall x \in \sM,
\quad c_B := \alpha_d^{-1/d}\rho_{\max}.
\end{equation}
We choose
\begin{equation}\label{eq:pick-t1}
t_{1,L} := \frac{ c_{p,L} c_B^{d+3} }{0.5 d}
= \Theta^{[p]}(1),
\end{equation}
which is uniform for all $x$. 
We will choose $t_1$ such that $t_1 \geq t_{1,L}$.
Then, for this choice of $t_{1,L}$, we have
$ 0.6 d  t_{1,L}  \left(\frac{k}{N}\right)^{1+3/d} > c_{p,L}   c_B^{d+3} \left(\frac{k}{N}\right)^{1+3/d}  \ge c_{p,L} \bar{R}(x)^{d+3}, \forall x \in \sM$.
Thus,  when $N$ is sufficiently large and the threshold depends on $(\calM,p)$, we have
\begin{align}
\delta_{\mu_-} &\geq 0.6 d \frac{k}{N}
\left(  t_{1,L} \left(\frac{k}{N}\right)^{3/d} + \frac{t_2}{d} \sqrt{\frac{s \log N}{k}} \right)
 + O^{[p]}(\bar{R}^{d+3}) \nonumber \\
& >  0.6  t_2  \frac{k}{N}\sqrt{\frac{s \log N}{k}} 
= 0.6  t_2  \left( \frac{k}{N} \right)^{1/2} \sqrt{\frac{s \log N}{N}} 
=: \tilde{s}.
\label{eq:def-tilde-s}
\end{align}
Combining \eqref{eq:def-tilde-s} with \eqref{eq:def-delta-mu-minus}, 
we have
\begin{equation} \label{eq:mu-R--bound}
    \mu(x, R_-) 
\leq \frac{k}{N} - {\delta_{\mu_-}} 
< \frac{k}{N} - \tilde{s} 
\implies 
\mu(x, R_-) + \tilde{s} < \frac{k}{N} \,.
\end{equation}

To show the concentration of $\hat{\mu}( x, R_- )$ at $\mu( x, R_- )$,
we use \eqref{eq:def-Hj}
and the boundedness and variance of the r.v. $H_j(x, R_-)$:
Because $0 \leq h \leq 1$, so is $H_j( x, R_- )$, and then $|H_j( x, R_- )| \leq L _H :=1$. The variance satisfies
\[
\var(H_j( x, R_- ))
\leq \E H_j( x, R_- )^2 
= \int_{\sM} h^2 \left(  \frac{ \| x- y\|^2 }{ R_-^2}\right)  p(y) dV(y)
= \mu(x, R_-),
\]
because the indicator function $h$ satisfies $h^2 = h$. 
By \eqref{eq:def-tilde-s}, $\delta_{\mu_-} > \tilde{s} > 0$.
Thus, combined with \eqref{eq:def-delta-mu-minus}, we have
\begin{equation} \label{eq:bound-var-rho}
	\var(H_j( x, R_- )) \leq  \frac{k}{N} - \delta_{\mu_-} < 3 \frac{k}{N} =: \bar{\nu}_H.
\end{equation}
By the Bernstein inequality, 
as long as $\tilde{s} L_H \leq 3 \bar{\nu}_H$, 
then 
\[
\Pr [  \hat{\mu}( x, R_-)-  \mu( x, R_-) > \tilde{s}] \leq e^{- \frac{1}{4} \tilde{s}^2 \frac{N} {\bar{\nu}_H}}.
\]
To verify that $\tilde{s} L_H \leq 3 \bar{\nu}_H$:
note that it is equivalent to that $ 0.6  t_2   \leq 9( \frac{k}{s \log N} )^{1/2}$,
and since  we have assumed $k = \Omega(\log N)$, 
if we have $t_2 = \Theta( 1)$,
then it holds when $N$ is sufficiently large where the threshold depends on $s$.
This is fulfilled by
setting $t_2$ being an absolute constant such that
\begin{equation}\label{eq:pick-t2}
{ ( 0.6 t_2 )^2  }/{3} = 4.
\end{equation}
With this choice of $t_2$, by the definition of $\tilde{s}$ in \eqref{eq:def-tilde-s} and of $\bar{\nu}_H$ in \eqref{eq:bound-var-rho}, $\frac{1}{4} \tilde{s}^2 \frac{N} {\bar{\nu}_H} = s \log N$ holds.
As a result, \eqref{eq:lower-bound-relation-1} continues as
\begin{equation*} 
\Pr [ \hat{R}(x) < R_- ]
\le
\Pr \left[ \hat{\mu}( x, R_-) \ge \frac{k}{N} \right]
\leq 
\Pr \left[ \hat{\mu}( x, R_-) > \mu(x, R_- ) + \tilde{s}  \right]
\leq e^{- \frac{1}{4} \tilde{s}^2 \frac{N} {\bar{\nu}_H}} = N^{-s},
\end{equation*}
which proves that  w.p. $\ge  1 - N^{-s}$,
the lower bound $ \hat{R}(x) \ge R_-  $ holds. 
We call the event $ [ \hat{R}(x) \ge R_- ]$ the good event $E_1$.

\vspace{5pt}
$\bullet$ Upper bound in \eqref{Proposition2.2 Proof first bound}: 
The upper bound is proved in a similar way.
Specifically, we apply Lemma \ref{lemma:G-expansion-h-indicator} to $\mu(x, R_+)$ with $\epsilon = R_+^2$, where,
similar to \eqref{eq:Rminus(x)-asymp-uniform-x}, we have $R_+ = o^{[p]}(1)$,
and then with large $N$, we have 
\begin{align} \label{eq:bound-var}
\mu(x, R_+)	
&= \alpha_d p(x) \bar{R}^d  (1 + \delta_r)^d (1 + Q(x) \bar{R}^2 (1 + \delta_r)^2) + O^{[p]}( \bar{R}^{d+3}) \notag  \\
&= \alpha_d p(x) \bar{R}^d  \left( 1 + Q(x)\bar{R}^2 +  d \delta_r + O(\delta_r^2) + O^{[p]}(\bar{R}^{2}\delta_r)  \right) + O^{[p]}( \bar{R}^{d+3}). 
\end{align}
We will choose a  constant $t_{1,U} \le t_1$ below, and then, following  the same argument used in proving the lower bound, we have that
\begin{align*} 
 \mu(x, R_+)
 &
 \ge \frac{k}{N} (1 + 0.6 d \delta_r ) + O^{[p]}( \bar{R}^{d+3}) \notag \\
 &\geq \frac{k}{N}  + 0.6 d \frac{k}{N} 
 \left( t_{1,U} \left(\frac{k}{N}\right)^{3/d} + \frac{t_2}{d} \sqrt{\frac{s \log N}{k}} \right) + O^{[p]}( \bar{R}^{d+3}),
\end{align*}
where the constant in $O^{[p]}(  \bar{R}^{d+3})$ is denoted by $c_{p,U}$.

Next, following the same argument used to derive \eqref{eq:mu-R--bound} from \eqref{eq:def-delta-mu-minus}, 
we again use \eqref{eq:def-cB} and now introduce 
\begin{equation*} 
t_{1,U} := \frac{ c_{p,U} c_B^{d+3} }{0.5 d}
= \Theta^{[p]}(1),
\end{equation*}
and then we have
\[
\mu(x, R_+) > \frac{k}{N} + 0.6  t_2  \left(\frac{k}{N}\right)^{1/2} \sqrt{ \frac{s \log N}{N}} = \frac{k}{N} + \tilde{s}.
\]

We now bound the magnitude and variance of $H_j(x, R_+)$. Same as before, $H_j(x, R_+)$ is bounded by 1.
We upper bound $\text{Var}(H_j(x, R_+))$ using that $\text{Var}(H_j(x, R_+)) \leq \E H_j^2= \mu( x, R_+) $ and the expansion of $\mu( x, R_+) $ in \eqref{eq:bound-var}.
By the upper bound of $\bar{R}$ in \eqref{eq:bound-barR}, we have $\alpha_d p(x) \bar{R}^d \leq 2k/N$.
Since $\bar{R} = o^{[p]}(1),  \delta_r = o^{[p]}(1)$,
for sufficiently large $N$, $\mu( x, R_+) \leq 2k/N \times 1.5 = 3k/N$. 
Therefore,
\[
\text{Var}(H_j(x, R_+)) \leq \E H_j^2
= \mu( x, R_+) 
\leq 3\frac{k}{N} = \bar{\nu}_H\,.
\]
By letting $t_2$ as in \eqref{eq:pick-t2}, we have
\begin{equation*} 
\Pr[ \hat{R}(x) > R_+ ]
\leq 
\Pr \left[ \hat{\mu}(x, R_+) < \frac{k}{N} \right]
\leq 
\Pr \left[ \hat{\mu}(x, R_+) <  \mu(x, R_+) - \tilde{s} \right]
\leq e^{-\frac{1}{4} \tilde{s}^2 \frac{N}{ \bar{\nu}_H}} = N^{-s}.
\end{equation*}
This proves that w.p. higher than $1- N^{-s}$, the upper bound $\hat{R}(x) \leq R_+$ holds.
We call the event  $[ \hat{R}(x) \leq R_+ ]$ the good event $E_2$.

All the large-$N$ thresholds involved in proving the lower and upper bounds in  \eqref{Proposition2.2 Proof first bound} depend on $\sM$, $p$, and $s$, and are uniform for all $x$.

\vspace{5pt}
$\bullet$ Combining lower and upper bounds in \eqref{Proposition2.2 Proof first bound}:
We choose the constant
	\[ t_1 \coloneqq\max\{t_{1,L}, t_{1,U}\} = \max\left\{ \frac{ c_{p,L} c_B^{d+3} }{0.5 d}, \frac{ c_{p,U} c_B^{d+3} }{0.5 d}\right\} = \Theta^{[p]}(1),
 \]
which ensures $t_1 \ge t_{1,L}$ and $t_1 \ge t_{1,U}$, and then the lower bound holds under $E_1$ and the upper bound holds under $E_2$.
Putting together, 
when $N$ is sufficiently large (whose threshold depends on ($\sM$, $p$, $s$), and is uniform for all $x \in \sM$), under  $E_1\cap E_2$  which happens w.p. $\ge 1- 2N^{-s}$, we have
\begin{align} \label{eq:bound-Rhat-Rbar-const}
\frac{|\hat{R}(x) - \bar{R}(x)|}{ \bar{R}(x) } \leq {\delta_r} =
 \frac{ \max\{c_{p,L}, c_{p,U}\} c_B^{d+3} }{0.5 d} \left(\frac{k}{N}\right)^{3/d} 
 + \frac{2\sqrt{3}}{0.6 d} \sqrt{\frac{s \log N}{k}}.    
\end{align}
This proves the proposition where the constants in big-$O$ are as declared therein. 
\end{proof}

\begin{proof}[Proof of Theorem \ref{thm:consist-hrho}]

Since $k = o(N)$ and $\tilde{r}_0$ is an $O(1)$ constant defined in \eqref{eq:def-r0-tilder0}, with large $N$, 
we have $r_k \leq \tilde{r}_0$.
As a result, $\brho_{r_k}(x)$ is well-defined by Lemma \ref{lemma:bar-rho-epsilon}(i).

We cover $\sM$ using Euclidean balls in $\R^m$ with radius $r$, 
where $r>0$ takes the form as
\begin{equation}\label{eq:def-r-proof-uniform-bound}
r = t_3  (k/N)^{4/d},  
\end{equation}
and $t_3 $ is an $O(1)$ positive constant to be determined. 
Thus, $r = \Theta( (k/N)^{4/d})  = o(1)$.
Then, with large $N$, 
$r < \delta_1$ required in Lemma \ref{lemma:covering}.
Then, by Lemma \ref{lemma:covering}, we can find an $r$-net of $\calM$,
$F :=\{ z_1, \cdots, z_n \}$,
i.e., $\calM \subset \cup_{i=1}^n \bar{B}_r(z_i)$ (where $\bar{B}_r$ is $\bar{B}_r^m$),
and the cardinal number of $F$ is $n$, $n \leq V({\sM}) r^{-d}$.

We now apply Proposition \ref{prop:consist-hrho-x0} at $x=z_i$ for each $i$,
where we choose $s=14$, and the reason will be explained by the argument below. 
Specifically, we will use the bound \eqref{eq:bound-Rhat-Rbar-const} in the proof of Proposition \ref{prop:consist-hrho-x0}, where, due to the normalizing relationship between $\hat R$ ($\bar R$) and $\hat \rho$ ($\bar \rho_{r_k}$) in \eqref{eq:rhohat-Rhat} (in \eqref{eq:def-barR}), \eqref{eq:bound-Rhat-Rbar-const} gives 
\begin{equation}\label{eq:def-epsilon-covering-bound}
\left| \frac{\hrho(z_i) }{\brho_{r_k}(z_i) } - 1  \right| 
\leq c_{1,p} \left(\frac{k}{N}\right)^{3/d} +  c_2 \sqrt{ \frac{s \log N }{k}} 
=: \varepsilon, 
\quad \text{for all $i=1,\cdots, n$,}
\end{equation}
where positive constants $c_{1,p}, c_2$ are as specified in \eqref{eq:bound-Rhat-Rbar-const}, 
$c_{1,p}$ depends on $(\calM, p)$ and $c_2$ only depends on $d$;
 the $n$ inequalities in \eqref{eq:def-epsilon-covering-bound} hold w.p. $\ge 1- 2n N^{-s}$ ($s=14$) by a union bound,
 and we call the intersection of the $n$ good events a good event $E_\rho$.
For \eqref{eq:def-epsilon-covering-bound} to hold, we also require $N$ to exceed a threshold required by \eqref{eq:bound-Rhat-Rbar-const}, and this threshold only depends on $(\sM, p)$ because the large-$N$ threshold required by \eqref{eq:bound-Rhat-Rbar-const} is uniform over all $z_i$.

We now bound the change of $\hrho/\brho_{r_k}$ within each $\bar{B}_{r}(z_i) \cap {\sM}$ from its value at $z_i$. 
Since we already have $r_k \leq \tilde{r}_0$, by Lemma \ref{lemma:bar-rho-epsilon}(i), $\brho_{r_k}$ is $C^1$ on ${\sM}$. 
From the proof of Lemma \ref{lemma:bar-rho-epsilon}(iv) $\| \nabla \brho_{r_k} \|_\infty \leq  L_p  \coloneqq \rho_{\max}^{(1)}$, where $ \rho_{\max}^{(1)}$ is defined in \eqref{eq:def-rhomax-1}.
Then, by applying Lemma \ref{lemma:f-diff-bound} to $\brho_{r_k}$, 
and combined with Lemma \ref{lemma:M-metric}(iii) (which applies since we have assumed $r < \delta_1(\sM)$), for each $i$, we have that 
\begin{align} \label{eq:brho-rk-Lip}
    | \brho_{r_k}(x) - \brho_{r_k}(z_i) | \leq L_p  d_{\sM}(x,z_i) \le
1.1 L_p \| x - z_i \| \leq 1.1 L_p r,
\quad 
\forall x \in \bar{B}_{r}(z_i) \cap {\sM}.
\end{align}
Meanwhile, we restrict to when $X$ has $N$ distinct points,
which, under Assumptions \ref{assump:M}-\ref{assump:p}, holds w.p. one.
Then Lemma \ref{lemma:Lip-cont-hat-R} applies to give that 
\[
\text{Lip}_{\R^m}(\hrho) =  \left( \frac{k}{\alpha_d N} \right)^{-1/d}
\text{Lip}_{\R^m}( \hat{R})
\leq  \left(\frac{k}{\alpha_d N} \right)^{-1/d},
\]
so we have
\begin{align} \label{eq:hrho-Lip}
    | \hrho(x) - \hrho(z_i) | \leq  \left( \frac{k}{\alpha_d N} \right)^{-1/d} r,
\quad
\text{$\forall x \in \bar{B}_{r}(z_i) \cap {\sM}$.}
\end{align}
Besides, $\brho_{r_k}(x) \geq  \rho_{\min}, \forall x \in \sM$ by Lemma \ref{lemma:bar-rho-epsilon}(ii).
Together, we have that, for each $i$  and $\forall x \in \bar{B}_{r}(z_i) \cap {\sM}$,
\begin{align}
\left| \frac{\hrho(x)}{\brho_{r_k}(x)} - \frac{\hrho(z_i)}{\brho_{r_k}(z_i)} \right|
&\leq 
\frac{1}{\brho_{r_k}(x)} \left|  (\hrho(x) - \hrho(z_i)) 
- \frac{\hrho(z_i)}{\brho_{r_k}(z_i)} ( \brho_{r_k}(x)-\brho_{r_k}(z_i) ) \right| \nonumber \\
& \leq 
\rho_{\min}^{-1}
\left( 
\left( \frac{k}{\alpha_d N} \right)^{-1/d} 
+ (1+\varepsilon) 1.1 L_p
\right) r  \nonumber \\
& ~~~~~~~~~~~~
    \text{(the 1st term is by \eqref{eq:hrho-Lip}, 
        the 2nd term is by \eqref{eq:def-epsilon-covering-bound}\eqref{eq:brho-rk-Lip})}  \nonumber \\
& \le c_p'  \left({k}/{N}\right)^{-1/d}  r
\quad \text{with large $N$ and under $E_\rho$,}
\label{eq:bound-diff-rhoratio-from-xi}
\end{align}
where $c_p'$ is a positive constant depending on $p$.
The last inequality in \eqref{eq:bound-diff-rhoratio-from-xi} holds  because
$(k/N)^{-1/d} = \Omega(1)$ (because $k = o(N)$)
and $1+\varepsilon = O(1)$
(by the definition of $\varepsilon$ in \eqref{eq:def-epsilon-covering-bound},
$\varepsilon = o^{[p]}(1)$ as $N \to \infty$).

Now we can choose the constant in the definition of $r$ in \eqref{eq:def-r-proof-uniform-bound} to be 
\begin{align} \label{eq:choice-t3}
    t_3 := {c_{1,p}}/{c_p'},
\end{align}
where $c_{1,p}$ is as in \eqref{eq:def-epsilon-covering-bound}, and $c_p'$ is as in \eqref{eq:bound-diff-rhoratio-from-xi}.
With this choice of $r$, by \eqref{eq:bound-diff-rhoratio-from-xi}, 
we have that when $N$ exceeds all the needed large-$N$ thresholds and under $E_\rho$, 
\[
\left| \frac{\hrho(x)}{\brho_{r_k}(x)} - \frac{\hrho(z_i)}{\brho_{r_k}(z_i)} \right| \leq  c_{1,p}  \left(\frac{k}{N}\right)^{3/d}, \quad \forall i = 1,\dots, n, \quad \forall x \in \bar{B}_{r}(z_i) \cap {\sM}. 
\]
Putting together with \eqref{eq:def-epsilon-covering-bound}, 
for each $i$ and $\forall x \in \bar{B}_{r}(z_i) \cap {\sM}$,
\[
\left| \frac{\hrho(x)}{\brho_{r_k}(x)} -1 \right|
\leq 
\left| \frac{\hrho(x)}{\brho_{r_k}(x)} - \frac{\hrho(z_i)}{\brho_{r_k}(z_i)} \right|
+ \left| \frac{\hrho(z_i)}{\brho_{r_k}(z_i)} -1 \right|
\leq c_{1,p} \left(\frac{k}{N}\right)^{3/d} 
+ \varepsilon.
\]
Since $\sM\subset \cup_{i=1}^n  \bar B_r(z_i)$, the above bound holds for all $x \in \sM$.
Combined with the definition of $\varepsilon$ in \eqref{eq:def-epsilon-covering-bound},
we have that, with large $N$ and under $E_\rho$, 
\begin{align*} 
	\sup_{x \in {\sM}}
	\left| \frac{\hrho(x)}{\brho_{r_k}(x)} -1 \right|
	\leq 2 c_{1,p} \left(\frac{k}{N}\right)^{3/d} +  c_2 \sqrt{ \frac{s \log N }{k}},
\end{align*}
which implies  \eqref{eq:vareps-rho-bound} in the theorem with the declared constant dependence in the big-$O$ notation.

Finally, we verify the high probability of the good event $E_\rho$.
Since $n \leq V({\sM}) r^{-d} $,
and the definition of $r$ in \eqref{eq:def-r-proof-uniform-bound} with the choice of $O(1)$ constant $t_3$ in \eqref{eq:choice-t3}, 
we have 
\begin{align*}
 2n N^{-s}
&\leq 2V({\sM}) r^{-d} N^{-s}
= \frac{ 2V({\sM}) } {({c_{1,p}}/{c_p'})^{d}} k^{-4} N^{-(s-4)} \leq N^{-(s-4)}. 
\end{align*}
The last inequality holds when $k$ is large enough, and since $k = \Omega(1)$, it holds when $N$ is large enough.
By choosing $s = 14$,  we have that the good event $E_\rho$  happens w.p. $\ge 1- N^{-(s-4)} = 1-N^{-10}$.
We note that all the large-$N$ thresholds involved in the proof are uniform for all $x$ and $i$, and only depend on $(\calM, p)$.
\end{proof}

\section{Proofs in Section \ref{sec:un-GL-fast}}
\label{sec:proof-W-un-fast}

For the convergence of the graph Laplacians,
the results are established for large $N$ and under good events which happen with high probability. 
We collect the thresholds when necessary, denoted as $N_i, i=1,2,\dots$, and the good events, denoted as $E_i, i=1,2,\dots$, in the proofs below.

The first threshold  $N_{\rho,1}$ is introduced to ensure that  $r_k \leq \tilde{r}_0$: 
Recall that $\tilde{r}_0$ is an $O(1)$ constant as defined in \eqref{eq:def-r0-tilder0}, 
and $r_k$ is defined as in \eqref{eq:def-rk}, 
thus when $k = o(N)$, $k/N = o(1)$.
Then, there is $N_{\rho,1}(\sM, p)$ s.t. 
when $N \ge N_{\rho,1}$, we have $r_k \leq \tilde{r}_0$.

The second threshold $N_{\rho,2}$ and the good event $E_\rho$ are to ensure that $\varepsilon_{\rho,k} \leq \min\{\delta_\phi, 0.05/L_\phi\}$,
where $\varepsilon_{\rho,k}$ is defined in \eqref{eq:def-vareps-rho},
and $\delta_\phi$ and $L_\phi$ are positive constants introduced in Assumption \ref{assump:phi-diff}(iii). 
The good event $E_\rho$ has already been introduced in the proof of Theorem \ref{thm:consist-hrho},
which happens w.p. $\geq 1 - N^{-10}$.
For large $N$ and under $E_\rho$,  \eqref{eq:vareps-rho-bound} in Theorem \ref{thm:consist-hrho} holds. 
Assuming $\log N \ll k \ll N$, for large $N$, we have the r.h.s. in \eqref{eq:vareps-rho-bound} less than $\min\{\delta_\phi, 0.05/L_\phi\}$.
We denote the maximum of this needed large $N$ and that needed by Theorem \ref{thm:consist-hrho} as  $N_{\rho,2}(\sM, p, \phi)$, and then 
when $N \ge N_{\rho,2}$ and under $E_\rho$, we have $\varepsilon_{\rho,k} \leq \min\{\delta_\phi, 0.05/L_\phi\}$. 

Putting together, we define
$N_\rho(\sM, p, \phi) \coloneqq \max\{ N_{\rho,1}, N_{\rho,2} \}$,
 then 
 \begin{equation} \label{eq:property-Nrho}
     r_k \leq \tilde{r}_0 \text{ when } N \ge N_\rho; 
     \quad 
     \varepsilon_{\rho,k} \leq \min\{\delta_\phi, 0.05/L_\phi\}
     \text{ when $N \ge N_\rho$ and under $E_\rho$.} 
 \end{equation}

We also introduce a technical quantity $E_{\phi,k}$ that bounds the relative error of the empirical bandwidth $\phi(\hrho(x), \hrho(y))$ around the population one $\phi(\brho_{r_k}(x), \brho_{r_k}(y))$:

\begin{equation}  \label{eq:def-vareps-rho-phi}
E_{\phi,k}    \coloneqq    \sup_{x,y \in \calM}   \frac{ \left| \phi( \hrho(x), \hrho(y) )  -  \phi( \brho_{r_k}(x), \brho_{r_k}(y) )   \right|}{ \phi( \brho_{r_k}(x), \brho_{r_k}(y) )  }  .
\end{equation}
The following lemma shows that 
 $E_{\phi,k}$ can be both lower and upper bounded by multiples of $\varepsilon_{\rho,k}$.
\begin{lemma} \label{lemma:lb-ub-E_phi}
	For $\phi$ satisfying Assumption \ref{assump:phi-diff}(i), we have
	\begin{equation} \label{eq:bound-phi-rela-left}
		\varepsilon_{\rho,k}  \leq E_{\phi,k}. 
	\end{equation}
	 If $\phi$ satisfies Assumption \ref{assump:phi-diff}(iii) and $\varepsilon_{\rho,k}$ satisfies $\varepsilon_{\rho,k} \leq \min\{ \delta_\phi, 0.05 / L_\phi\}$, we have
	\begin{equation}  \label{eq:bound-phi-rela-0.1}
		E_{\phi,k} \leq 2L_\phi \varepsilon_{\rho,k} \leq 0.1.    
	\end{equation}
\end{lemma}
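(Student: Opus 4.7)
The plan is to prove the two inequalities separately, exploiting the defining properties of $\phi$ in each case. Both are essentially immediate consequences of the assumptions, so I do not anticipate any serious obstacle — the main care is in book-keeping which argument of $\phi$ is which, so that I can invoke Assumption \ref{assump:phi-diff}(iii) with the correct reference denominator.

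For the lower bound \eqref{eq:bound-phi-rela-left}, my idea is to specialize the supremum defining $E_{\phi,k}$ to the diagonal $y = x$. By Assumption \ref{assump:phi-diff}(i), $\phi(u,u) = u$, so $\phi(\hrho(x),\hrho(x)) = \hrho(x)$ and $\phi(\brho_{r_k}(x),\brho_{r_k}(x)) = \brho_{r_k}(x)$. Hence the diagonal term in the supremum is exactly $|\hrho(x)-\brho_{r_k}(x)|/\brho_{r_k}(x)$, and taking $\sup_{x\in\calM}$ recovers $\varepsilon_{\rho,k}$. Since this is a supremum over a larger set (all pairs $(x,y)$), the inequality $\varepsilon_{\rho,k}\le E_{\phi,k}$ follows directly.

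For the upper bound in \eqref{eq:bound-phi-rela-0.1}, I would apply Assumption \ref{assump:phi-diff}(iii) pointwise in $(x,y)$, choosing $u_1=\brho_{r_k}(x)$, $v_1=\brho_{r_k}(y)$, $u_2=\hrho(x)$, $v_2=\hrho(y)$. By the definition of $\varepsilon_{\rho,k}$, the relative errors $|u_1-u_2|/u_1$ and $|v_1-v_2|/v_1$ are each bounded by $\varepsilon_{\rho,k}$. The hypothesis $\varepsilon_{\rho,k}\le \delta_\phi$ then guarantees the smallness condition needed to invoke Assumption \ref{assump:phi-diff}(iii), and this invocation immediately yields
\[
\frac{|\phi(\brho_{r_k}(x),\brho_{r_k}(y)) - \phi(\hrho(x),\hrho(y))|}{\phi(\brho_{r_k}(x),\brho_{r_k}(y))} \le 2 L_\phi\, \varepsilon_{\rho,k}.
\]
Taking the supremum over $x,y\in\calM$ gives $E_{\phi,k}\le 2L_\phi \varepsilon_{\rho,k}$. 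The final inequality $2L_\phi\varepsilon_{\rho,k}\le 0.1$ is then just arithmetic from the second smallness hypothesis $\varepsilon_{\rho,k}\le 0.05/L_\phi$. No nontrivial geometry or probability is needed; the whole argument is a direct unpacking of the definitions and one application of the assumed local Lipschitz bound on $\phi$.
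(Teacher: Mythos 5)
Your proof is correct and follows exactly the paper's argument: restrict the supremum to the diagonal $y=x$ and use $\phi(u,u)=u$ for the lower bound, then apply Assumption \ref{assump:phi-diff}(iii) with the population bandwidths as the reference arguments for the upper bound, finishing with the arithmetic from $\varepsilon_{\rho,k}\le 0.05/L_\phi$.
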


\begin{proof}[Proof of Lemma \ref{lemma:lb-ub-E_phi}]

	\eqref{eq:bound-phi-rela-left} follows from the definitions of $\varepsilon_{\rho,k}$ and $E_{\phi,k}$. Specifically, since $\phi(u,u) = u$ for all $u \in \mathbb{R}_+$, we have 
	\[
	\varepsilon_{\rho,k} = \sup_{x \in \sM} \frac{| \hrho(x) - \brho_{r_k}(x)|}{ \brho_{r_k}(x)} = \sup_{x \in \sM} \frac{| \phi(\hrho(x), \hrho(x)) - \phi(\brho_{r_k}(x), \brho_{r_k}(x)) |}{\phi(\brho_{r_k}(x), \brho_{r_k}(x))} \leq E_{\phi,k}.
	\]
	\eqref{eq:bound-phi-rela-0.1} is a direct consequence of Assumption \ref{assump:phi-diff}(iii). Since $\varepsilon_{\rho,k} \le \delta_\phi$, by Assumption \ref{assump:phi-diff}(iii), for any $x,y \in \sM$,
	\[
	\frac{ \left| \phi( \hrho(x), \hrho(y) )  -  \phi( \brho_{r_k}(x), \brho_{r_k}(y) )   \right|}{ \phi( \brho_{r_k}(x), \brho_{r_k}(y) )  }  \leq L_\phi \left( \frac{| \hrho(x) - \brho_{r_k}(x)|}{ \brho_{r_k}(x)}  + \frac{| \hrho(y) - \brho_{r_k}(y)|}{ \brho_{r_k}(y)} \right) \leq 2L_\phi \varepsilon_{\rho,k} \leq 0.1.
	\]
	The last inequality is because of the condition $\varepsilon_{\rho,k} \leq 0.05 / L_\phi$. 
	Since this holds for any $x,y \in \sM$, by the definition of $E_{\phi,k}$ in \eqref{eq:def-vareps-rho-phi},
 we have proved \eqref{eq:bound-phi-rela-0.1}.
\end{proof}

\subsection{Step 1: replacement of $\hrho$ with $\brho_{r_k}$}

\begin{proof}[Proof of Proposition \ref{prop:step1-diff}]
Under the 
condition of the proposition,  \eqref{eq:property-Nrho} holds. Then, when $N > N_\rho$, we have $r_k \leq \tilde{r}_0$, and this implies that $\brho_{r_k}$ is well-defined by Lemma \ref{lemma:bar-rho-epsilon}(i). As a result, $\bar{L}_\un$ (defined in \eqref{eq:def-Wbar-GL}) is well-defined.

In the following lemma, we upper bound $|L_\un f(x)-\bar{L}_\un f(x) |$ by $\varepsilon_{\rho,k}$ multiplied by an independent sum involving population bandwidth $\brho_{r_k}$ instead of $\hrho$.
This lemma will be a key intermediate result towards the proof of the replacement error bound in Proposition \ref{prop:step1-diff}.
The proof of the lemma is given after that of Proposition \ref{prop:step1-diff}.

\begin{lemma}\label{lemma:step1-diff}
	Under Assumptions \ref{assump:M} and \ref{assump:p}, suppose $k_0$ satisfies Assumption \ref{assump:k0-smooth} and 
	$\phi$ satisfies Assumption \ref{assump:phi-diff}(i)(iii), 
	if $r_k \leq r_0$ and $\varepsilon_{\rho,k} \leq \min\{ \delta_\phi, 0.05 / L_\phi\}$, then for any $\epsilon > 0$, any $f: \sM \to \R$, and any $x \in \sM$, 
	\begin{align}   \label{eq:lemma-step1-diff}
		\left|L_\un f(x)-\bar{L}_\un f(x)\right|& \leq 
  \frac{C \varepsilon_{\rho,k}}{\epsilon \brho_{r_k}(x)^2} 
    \frac{1}{N}\sum_{j=1}^N \epsilon^{-d/2}k_1\left(  \frac{\| x - x_j\|^2}{ \epsilon \phi( \brho_{r_k}(x), \brho_{r_k}(x_j))^2 } \right) \left|f(x_j)-f(x)\right|,  
	\end{align}
	where $C = 6 (1 + 3L_\phi) / m_2[k_0]$ is a constant depending on $k_0$ and $\phi$, and
	\begin{equation} \label{eq:def-k1}
		k_1(\eta) : = ( a_1[k_0] \eta + a_0[k_0]) \exp( - \frac{a[k_0]}{2}\eta ), \quad \eta\geq 0.   
	\end{equation}
\end{lemma}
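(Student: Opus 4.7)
The plan is to bound $L_\un f(x) - \bar L_\un f(x)$ by a term-wise comparison of the summands. For each $j$, abbreviate $\eta_j := \|x-x_j\|^2/(\epsilon\,\phi(\brho_{r_k}(x),\brho_{r_k}(x_j))^2)$ and let $\tilde\eta_j$ denote the analogous expression with $\hrho$ in place of $\brho_{r_k}$. The two definitions of the $j$th summand differ only in the prefactor $\hrho(x)^{-2}$ vs.\ $\brho_{r_k}(x)^{-2}$ and in the kernel argument, so by adding and subtracting an intermediate term I would split
\[
\frac{k_0(\tilde\eta_j)}{\hrho(x)^2} - \frac{k_0(\eta_j)}{\brho_{r_k}(x)^2} = \frac{k_0(\tilde\eta_j)-k_0(\eta_j)}{\hrho(x)^2} + k_0(\eta_j)\Bigl(\frac{1}{\hrho(x)^2} - \frac{1}{\brho_{r_k}(x)^2}\Bigr),
\]
isolating a \emph{kernel-argument change} and a \emph{prefactor change}. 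Factoring out the common $(\tfrac{m_2}{2}N\epsilon)^{-1}\epsilon^{-d/2}|f(x_j)-f(x)|$ and applying the triangle inequality, it suffices to show that each piece is bounded by a constant multiple of $\varepsilon_{\rho,k}\,\brho_{r_k}(x)^{-2}\,k_1(\eta_j)$.

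The prefactor change is elementary. Since $r_k\leq r_0$, Lemma \ref{lemma:bar-rho-epsilon}(ii) bounds $\brho_{r_k}(x)$ from below by $\rho_{\min}>0$; combined with the hypothesis $\varepsilon_{\rho,k}\leq 0.05/L_\phi$, one gets $\hrho(x) \geq (1-\varepsilon_{\rho,k})\brho_{r_k}(x) \geq 0.9\,\brho_{r_k}(x)$, and the identity $1/u^2 - 1/v^2 = (v-u)(v+u)/(u^2v^2)$ yields $|\hrho(x)^{-2} - \brho_{r_k}(x)^{-2}| \lesssim \varepsilon_{\rho,k}/\brho_{r_k}(x)^2$. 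Assumption \ref{assump:k0-smooth}(ii) with $l=0$ gives $k_0(\eta_j) \leq a_0 e^{-a\eta_j} \leq k_1(\eta_j)$, so the prefactor piece fits the target shape.

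The kernel-argument change is the main obstacle; this is where the specific form $k_1(\eta)=(a_1\eta+a_0)e^{-a\eta/2}$ enters. By the mean value theorem, $k_0(\tilde\eta_j)-k_0(\eta_j) = k_0'(\xi_j)(\tilde\eta_j-\eta_j)$ for some $\xi_j$ between $\eta_j$ and $\tilde\eta_j$. Writing $\tilde\eta_j - \eta_j = \eta_j(\phi_j^2/\tilde\phi_j^2 - 1)$ with $\phi_j := \phi(\brho_{r_k}(x),\brho_{r_k}(x_j))$ and $\tilde\phi_j$ the analogue with $\hrho$, Lemma \ref{lemma:lb-ub-E_phi} provides $E_{\phi,k}\leq 2L_\phi\varepsilon_{\rho,k}\leq 0.1$, so a short algebraic calculation gives $|\tilde\eta_j-\eta_j|\lesssim L_\phi\varepsilon_{\rho,k}\,\eta_j$ and in particular $\xi_j \geq \eta_j/2$. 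The crux is that Assumption \ref{assump:k0-smooth}(ii) with $l=1$ yields $|k_0'(\xi_j)| \leq a_1 e^{-a\xi_j} \leq a_1 e^{-a\eta_j/2}$, hence
\[
|k_0(\tilde\eta_j) - k_0(\eta_j)| \lesssim L_\phi\varepsilon_{\rho,k}\cdot a_1\eta_j\, e^{-a\eta_j/2} \leq L_\phi\varepsilon_{\rho,k}\,k_1(\eta_j).
\]
The half-exponent in $k_1$ is exactly what is needed to absorb the loss from replacing $\xi_j$ with $\eta_j$, and the linear factor $a_1\eta$ in $k_1$ absorbs the extra $\eta_j$ coming from $|\tilde\eta_j-\eta_j|$. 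Dividing by $\hrho(x)^2 \gtrsim \brho_{r_k}(x)^2$ then controls this piece by the target form. Summing both contributions over $j$, reinserting the normalization $2/(m_2 N\epsilon)$, and tracking the explicit constants to arrive at $C = 6(1+3L_\phi)/m_2[k_0]$ is a routine bookkeeping exercise.
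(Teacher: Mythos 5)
Your decomposition into a kernel-argument change (with $\hrho(x)^{-2}$ prefactor) and a prefactor change (with the $\bar\rho$ kernel) matches the paper's split into $M_1(x)$ and $M_2(x)$, and your treatment of each piece — Lemma~\ref{lemma:rela-err-uv-0.1} for the prefactor ratio, the mean-value theorem together with the half-rate exponential decay of $k_0'$ and the $a_1\eta$ factor in $k_1$ for the kernel argument — is exactly the paper's argument. This is the same proof.
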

	Here,  $a[k_0], a_0[k_0], a_1[k_0]$ are fixed positive constants determined by $k_0$, see the comments beneath  Assumption \ref{assump:k0-smooth}.

\begin{remark}[Extension to $x=x_i$]\label{rk:replace-lemma1-xi}
The bound \eqref{eq:lemma-step1-diff} holds at $x = x_i$ for any $x_i$ in $X$, because Lemma \ref{lemma:step1-diff} is a deterministic statement that holds under any realization of the data randomness.  
Based on Lemma \ref{lemma:step1-diff}, Proposition \ref{prop:step1-diff} is proved by bounding the r.h.s of \eqref{eq:lemma-step1-diff} using a concentration argument because it is an independent summation (it only involves $\bar{\rho}_{r_k}$ inside $\phi$).
In the case of $x = x_i$, the replacement error bound in Proposition \ref{prop:step1-diff} also holds because one can condition on $x_i$ and apply concentration argument to the summation in \eqref{eq:lemma-step1-diff}, which is a conditional independent sum over $j$ (the term $j=i$ does not contribute to the summation).
\end{remark}

We claim that when $N$ is sufficiently large, whose threshold depends on $(\sM, p, k_0, \phi)$,
for any $r \in [0, \tilde{r}_0]$ and any $x \in \sM$, w.p. $\ge 1-2N^{-10}$, 
 \begin{align}\label{eq:bound-k1-sum}
 	\frac{1}{N}\sum_{j=1}^N \epsilon^{-d/2}k_1\left(  \frac{\| x - x_j\|^2}{ \epsilon \phi( \brho_r(x), \brho_r(x_j))^2 } \right) \left|f(x_j)-f(x)\right| =  O\left(\|\nabla f\|_\infty p(x)^{-1/d}\sqrt{\epsilon}\right) + O(\| f \|_\infty \epsilon^{10}).  
 \end{align}
 Here, the threshold for $N$ and the constant in the big-$O$ notation are uniform for both $r$ and $x$.

Suppose the claim holds,  we apply it at $r=r_k$ and the fixed point $x\in \sM$ as in the proposition, and then \eqref{eq:bound-k1-sum} leads to an upper bound of the r.h.s. of \eqref{eq:lemma-step1-diff} combined with an upper bound of ${(C \varepsilon_{\rho,k})}/{(\epsilon \brho_{r_k}(x)^2)}$. 
To upper bound the latter, 
we use that $\brho_{r_k}(x)^{-2} \leq (2/3)^{-2/d} p(x)^{2/d}$ by Lemma \ref{lemma:bar-rho-epsilon}(ii).
Here, we can use Lemma \ref{lemma:step1-diff} due to \eqref{eq:property-Nrho}.
Putting together, we have 
 \begin{align} \label{eq:bound-k1-sum-2}
 		\left|L_\un f(x)-\bar{L}_\un f(x)\right|&\leq C(2/3)^{-2/d}  p(x)^{2/d} \frac{\varepsilon_{\rho,k}}{\epsilon}  \left( O\left(\|\nabla f\|_\infty p(x)^{-1/d}\sqrt{\epsilon}\right) + O (\|f\|_\infty \epsilon^{10})  \right), 
 \end{align}
 which is $O\left(\|\nabla f\|_\infty p(x)^{1/d} \frac{\varepsilon_{\rho,k}}{\sqrt{\epsilon}} \right) 
+ O( \|f\|_\infty p(x)^{2/d}  \varepsilon_{\rho,k} \epsilon^{9}) $,
 and this holds when $N$ is large enough (exceeding the large-$N$ threshold of the claim, and $N_\rho$ in \eqref{eq:property-Nrho}) and w.p.  $\ge 1-3N^{-10}$
(under the good event of $E_\rho$ in \eqref{eq:property-Nrho} and the claim at $r=r_k$ and $x$).
 This shows \eqref{eq:error-L-barL-un}  and will finish the proof of the proposition.

Below, we focus on the proof of the claim \eqref{eq:bound-k1-sum}.
To bound the l.h.s. of \eqref{eq:bound-k1-sum}, we first truncate the kernel $k_1$. 
We show that $k_1$ satisfies Assumption \ref{assump:k0-smooth} and use the exponential decay of $k_1$ to truncate the kernel.
Specifically,  by the definition of $k_1$ in \eqref{eq:def-k1}, $|k_1^{(l)}(\eta)| \leq a_l[k_1] \exp(-a[k_1] \eta)$ 
 holds with the decay constant
 \begin{align} \label{eq:def-a-k1}
 	a[k_1] \coloneqq \frac{1}{4} a[k_0], 
 \end{align}
 and for $l=0$, the decay condition is satisfied with
 \begin{align} \label{eq:def-a0-k1}
 	a_0[k_1] \coloneqq \frac{4}{e}\frac{ a_1[k_0]}{ a[k_0]  } + a_0[k_0]. 
 \end{align}
 The constants $a[k_1]$, $a_0[k_1]$ are determined by $a[k_0]$ and $a_l[k_0]$.

Meanwhile, by Lemma \ref{lemma:bar-rho-epsilon}(ii) and Assumption \ref{assump:phi-diff}(iv), 
 \begin{align} \label{eq:bound-phi-denominator}
	\phi(\brho_{r}(x), \brho_{r}(y))  \leq c_{\max}  \max\{ \brho_r(x), \brho_r(y)\} \leq c_{\max} \rho_{\max}, \quad \forall x, y \in \sM ,\quad \forall r \in [0, \tilde{r}_0],
 \end{align}
which implies that  ${\|x-y\|^2}/{(\epsilon \phi(\brho_{r}(x), \brho_{r}(y))^2)} \geq {\|x-y\|^2}/{(\epsilon c_{\max}^2 \rho_{\max}^2)}$. 
Then, 
\begin{align}\label{eq:envelope-k1}
	k_1\left(\frac{\|x-y\|^2}{\epsilon \phi(\brho_{r}(x), \brho_{r}(y))^2}\right) 
	\leq a_0[k_1] \exp\left(- \frac{a[k_1] \, \|x-y\|^2}{\epsilon \phi(\brho_{r}(x), \brho_{r}(y))^2}\right)  
	\leq a_0[k_1] \exp\left(- \frac{a[k_1] \,\|x-y\|^2}{\epsilon c_{\max}^2 \rho_{\max}^2} \right),
\end{align}
where in the second inequality we used the monotonicity of $\exp(-a[k_1]\eta)$.
Recall the definition of $\delta_\epsilon[k_1]$ in \eqref{eq:def-delta-eps}, 
let $B_{\delta_\epsilon[k_1]}$ denote $B^m_ {\delta_\epsilon[k_1]}$,
the exponential decay upper bound in \eqref{eq:envelope-k1} implies that
 for any $x, y \in \sM$ and any $r \in [0, \tilde{r}_0]$, 
  \begin{align} \label{eq:k1-trunc}
 	\epsilon^{-d/2}k_1\left(  \frac{\| x - y\|^2}{ \epsilon \phi( \brho_r(x), \brho_r(y))^2 } \right) = \epsilon^{-d/2}k_1\left(  \frac{\| x - y\|^2}{ \epsilon \phi( \brho_r(x), \brho_r(y))^2 } \right) \mathbf{1}_{ \{y \in B_ {\delta_\epsilon[k_1]}(x) \} }    + O( \epsilon^{10} ),
 \end{align}
 where the constant in $O( \epsilon^{10} )$ depends on $\sM$ and is uniform for $x$, $y$, and $r$.

 To proceed, we need to consider a small bandwidth $\epsilon$.
 Let $\epsilon_1 \coloneqq \epsilon_D(k_1, c_{\max}) > 0$, where $\epsilon_D(\cdot,\cdot)$ is defined in \eqref{eq:def-epsD}.
By definition of $\epsilon_D$,  for $\epsilon < \epsilon_1$, we have $\delta_\epsilon[k_1] < \delta_1(\sM)$, where $\delta_1(\sM)$ is introduced in Lemma \ref{lemma:M-metric}(iii).
Since $\delta_\epsilon[k_1] < \delta_1(\sM)$, 
we can apply the metric comparison in \eqref{eq:metric-comp2} in Lemma \ref{lemma:M-metric}(iii).
Together with the truncation in \eqref{eq:k1-trunc} and Lemma \ref{lemma:f-diff-bound} (applied to $f$), we have
 \begin{align} \label{eq:k1-trunc-independ-sum}
 	& \frac{1}{N}\sum_{j=1}^N \epsilon^{-d/2}k_1\left(  \frac{\| x - x_j\|^2}{ \epsilon \phi( \brho_r(x), \brho_r(x_j))^2 } \right) \left|f(x_j)-f(x)\right|  \notag \\
 	&=  \frac{1}{N}\sum_{j=1}^N \epsilon^{-d/2}k_1\left(  \frac{\| x - x_j\|^2}{ \epsilon \phi( \brho_r(x), \brho_r(x_j))^2 } \right) \mathbf{1}_{ \{x_j \in B_ {\delta_\epsilon[k_1]}(x) \} }  \left|f(x_j)-f(x)\right| + R_{\epsilon,T}(x,r) \notag \\ 
 	&\leq \|\nabla f\|_\infty    \frac{1}{N}\sum_{j=1}^N \epsilon^{-d/2}k_1\left(  \frac{\| x - x_j\|^2}{ \epsilon \phi( \brho_r(x), \brho_r(x_j))^2 } \right)  \mathbf{1}_{ \{x_j \in B_ {\delta_\epsilon[k_1]}(x) \} }   d_\sM(x,x_j)  + R_{\epsilon,T}(x,r)  \notag \\
 	&\leq \|\nabla f\|_\infty    1.1  \frac{1}{N}\sum_{j=1}^N \epsilon^{-d/2}k_1\left(  \frac{\| x - x_j\|^2}{ \epsilon \phi( \brho_r(x), \brho_r(x_j))^2 } \right)  \mathbf{1}_{ \{x_j \in B_ {\delta_\epsilon[k_1]}(x) \}  }  \|x-x_j\|  + R_{\epsilon,T}(x,r), 
 	\notag\\
 	&\qquad\qquad\qquad\qquad\qquad\qquad\qquad\qquad  \text{where } \sup_{x\in \sM,r \in [0,\tilde{r}_0]}|R_{\epsilon,T}(x,r)| = O(  \| f\|_\infty  \epsilon^{10} ),
 \end{align}
where the constant in $O( \| f\|_\infty \epsilon^{10})$ depends on $\sM$ and $k_0$ and is uniform for $x$ and $r$.
We used $_T$ in the subscript of $R_{\epsilon, T}$ to stand for ``truncation''.

Below, we bound the independent sum in \eqref{eq:k1-trunc-independ-sum}. We define $Y_j$ as
\begin{align*}
	Y_j \coloneqq \epsilon^{-d/2}k_1\left(  \frac{\| x - x_j\|^2}{ \epsilon \phi( \brho_r(x), \brho_r(x_j))^2 } \right)  
    \mathbf{1}_{ \{  x_j \in B_ {\delta_\epsilon[k_1]}(x) \}}  \|x-x_j\|.
\end{align*}
$\{Y_j\}_{j=1}^N$ are i.i.d. random variables and $Y_j \ge 0$.
We bound $\frac{1}{N}\sum_{j = 1}^N Y_j$ using the concentration argument and the upper bounds of $\E Y_j$ and $\var(Y_j)$.

For $\E Y_j$, we have
\begin{align} \label{eq:bound-bias-15}
	\E Y_j &\leq \int_\sM \epsilon^{-d/2}k_1\left(  \frac{\| x - y\|^2}{ \epsilon \phi( \brho_r(x), \brho_r(y))^2 } \right)  \|x-y\| p(y) dV(y) \notag \\
	&= \sqrt{\epsilon}  \int_\sM \epsilon^{-d/2}k_1\left(  \frac{\| x - y\|^2}{ \epsilon \phi( \brho_r(x), \brho_r(y))^2 } \right)  \frac{\|x-y\|}{ \sqrt{\epsilon}  \phi( \brho_r(x), \brho_r(y)) } \phi( \brho_r(x), \brho_r(y)) p(y) dV(y).
\end{align}
We define 
\begin{align} \label{eq:def-k2}
	k_2(\eta) \coloneqq \frac{a_0[k_1]}{ \sqrt{ea[k_1]} }\exp(-\frac{a[k_1]}{2}\eta), \quad \eta\geq 0,
\end{align}
where $a[k_1]$ and $a_0[k_1]$ are defined in \eqref{eq:def-a-k1} and \eqref{eq:def-a0-k1}.
Then, $k_2$ satisfies $k_1(\eta) \sqrt{\eta} \leq k_2(\eta)$, which can be verified using the exponential decay of $k_1$:
 by the fact $\exp(-\frac{a[k_1]}{2}\eta) \sqrt{\eta} \leq \frac{1}{\sqrt{ea[k_1]}}$, we have 
$k_1(\eta) \sqrt{\eta} \leq a_0[k_1] \exp(-{a[k_1]}\eta) \sqrt{\eta} \leq 
 \frac{a_0[k_1]}{\sqrt{ea[k_1]}} \exp(-\frac{a[k_1]}{2}\eta) = k_2(\eta)$.
 
Therefore, the inequality \eqref{eq:bound-bias-15} continues as
\begin{align*}
	\E Y_j &\leq \sqrt{\epsilon}   \int_\sM \epsilon^{-d/2}k_2\left(  \frac{\| x - y\|^2}{ \epsilon \phi( \brho_r(x), \brho_r(y))^2 } \right)  \phi( \brho_r(x), \brho_r(y)) p(y) dV(y).
\end{align*}
By the upper bound of $\phi( \brho_r(x), \brho_r(y))$ in \eqref{eq:bound-phi-denominator} and the fact that $k_2$ is monotonic,
\[ 
k_2\left(  \frac{\| x - y\|^2}{ \epsilon \phi( \brho_r(x), \brho_r(y))^2 } \right)  
\le k_2\left( \frac{\| x - y\|^2}{ \epsilon  c_{\max}^2\max\{ \brho_r(x), \brho_r(y) \}^2 } \right). 
\]
Combined with \eqref{eq:bound-phi-denominator} and the fact that $\max\{ \brho_r(x), \brho_r(y) \} \le \brho_r(x) + \brho_r(y)$, $\E Y_j$ can further be bounded by
\begin{align} \label{eq:bound-bias-13}
	\E Y_j 
	&\leq \sqrt{\epsilon}  c_{\max}  \int_\sM \epsilon^{-d/2}k_2\left( \frac{\| x - y\|^2}{ \epsilon  c_{\max}^2\max\{ \brho_r(x), \brho_r(y) \}^2 } \right) (\brho_r(x) + \brho_r(y) ) p(y) dV(y).
\end{align}
We will apply Lemma \ref{lemma:right-operator-degree}(ii) to bound \eqref{eq:bound-bias-13}: $k_2$ satisfies Assumption \ref{assump:k0-smooth} with the decay constant $a[k_2] \coloneqq a[k_1] / 2 = a[k_0] / 8$, and $m_0[k_2] > 0$ is a constant depending on $\sM$ and $k_0$. 
Let $ \epsilon_2 \coloneqq \epsilon_D(k_2, 1) / c_{\max}^2 > 0$, where $\epsilon_D$ is defined in \eqref{eq:def-epsD}, then
for $\epsilon < \epsilon_2$, we have 
$c_{\max}^2 \epsilon < \epsilon_D(k_2, 1)$, which is the small-$\epsilon$ threshold required in Lemma \ref{lemma:right-operator-degree}(ii) (with $\epsilon$ in the lemma being $\epsilon c_{\max}^2$ and $k_0$ in the lemma being $k_2$).
We apply \eqref{eq:ker-expan-degree-slowI} in Lemma \ref{lemma:right-operator-degree}(ii) (specifically, we use $i=0$ to analyze the integral involving $\brho_r(x)$ and $i=1$ to analyze the integral involving $\brho_r(y)$ in \eqref{eq:bound-bias-13}), and have
\begin{align*} 
	\E Y_j &\leq 2 \sqrt{\epsilon}  c_{\max}^{d+1} \left( m_0[k_2] p \brho_r^{d+1}(x) + R_{\epsilon,1}(x,r) \right), \quad \sup_{x\in \sM,r \in [0,\tilde{r}_0]}|R_{\epsilon,1}(x,r)| 
	= 
	{O^{[p]}(\sqrt{\epsilon})}, 
\end{align*}
where the constant dependence in 
${O^{[p]}(\sqrt{\epsilon})}$ is as in Lemma \ref{lemma:right-operator-degree}(ii) and this 
constant depends on $\phi$ only through $c_{\max}$.
There is $\epsilon_3 > 0$ such that for $\epsilon < \epsilon_3$ (this threshold is uniform for $x$ and $r$),
\[ R_{\epsilon,1}(x,r) \leq (2/3)^{(d+1)/d} m_0[k_2] p_{\max}^{-1/d} \leq m_0[k_2] p \brho_r^{d+1}(x), \quad \forall x \in \sM, \forall r \in [0, \tilde{r}_0]. \]
The last inequality used the lower bound of $\brho_r$, i.e., $\brho_r(x) \geq (2/(3p(x)))^{1/d}$ from Lemma \ref{lemma:bar-rho-epsilon}(ii). By the upper bound of $R_{\epsilon,1}(x,r)$ above and the fact that $\brho_r(x) \leq (2/p(x))^{1/d}$ from Lemma \ref{lemma:bar-rho-epsilon}(ii),
\begin{align} \label{eq:bound-bias-1}
	 \E Y_j \leq   4 c_{\max}^{d+1} m_0[k_2] p \brho_r^{d+1}(x) \sqrt{\epsilon}  \leq C_{E,1}(\sM, k_0, \phi)  p(x)^{-1/d} \sqrt{\epsilon},  
\end{align}
where $C_{E,1}(\sM, k_0, \phi) =  2^{(d+1)/d+2} c_{\max}^{d+1}  m_0[k_2]$. We used $_E$ in the subscript of $C_{E,1}$ to stand for ``expectation''.

For $\var(Y_j)$, we have 
\begin{align*} 
	\epsilon^{d/2} \var(Y_j) &\leq  \epsilon^{d/2} \E Y_j^2  \leq \int_\sM \epsilon^{-d/2}k_1^2\left(  \frac{\| x - y\|^2}{ \epsilon \phi( \brho_r(x), \brho_r(y))^2 } \right)  \|x-y\|^2 p(y) dV(y)  \notag \\
	&= \epsilon   \int_\sM \epsilon^{-d/2}k_1^2\left(  \frac{\| x - y\|^2}{ \epsilon \phi( \brho_r(x), \brho_r(y))^2 } \right)  \frac{\|x-y\|^2}{\epsilon \phi( \brho_r(x), \brho_r(y))^2 } \phi( \brho_r(x), \brho_r(y))^2 p(y) dV(y). 
\end{align*}
Note that $k_1^2$ satisfies Assumption \ref{assump:k0-smooth} with $a_0[k_1^2] \coloneqq a_0[k_1]^2$ and the decay constant $a[k_1^2] \coloneqq 2 a[k_1] $.
We define 
\begin{align} \label{eq:def-k3}
	k_3(\eta) \coloneqq  \frac{2}{e}\frac{a_0[k_1^2]}{ a[k_1^2] }\exp(-\frac{a[k_1^2]}{2}\eta), \quad \eta\geq 0,
\end{align}
then by the fact that $\exp(-\frac{a[k_1^2]}{2}\eta) \eta \le \frac{2}{ea[k_1^2]}, \forall \eta \ge 0$, we have that $k_1^2(\eta) \eta \leq a_0[k_1^2]\exp(-a[k_1^2]\eta) \eta \leq k_3(\eta)$. 
$k_3$ is monotonic, satisfies Assumption \ref{assump:k0-smooth} with $a[k_3] \coloneqq a[k_1^2] / 2 = a[k_0] / 4$, and $m_0[k_3] > 0$ is a constant depending on $\sM$ and $k_0$.
Since $k_3$ is monotone, $\var(Y_j)$ can be further bounded by
\begin{align} \label{eq:bound-var2}
	\epsilon^{d/2} \var(Y_j)  &\leq \epsilon  c_{\max}^2  \int_\sM \epsilon^{-d/2}k_3\left(  \frac{\| x - y\|^2}{ \epsilon \phi( \brho_r(x), \brho_r(y))^2 } \right)   \max\{ \brho_r(x), \brho_r(y)\}^2 p(y) dV(y) \notag \\
	&\leq \epsilon  c_{\max}^2  \int_\sM \epsilon^{-d/2}k_3\left( \frac{\| x - y\|^2}{ \epsilon  c_{\max}^2\max\{ \brho_r(x), \brho_r(y) \}^2 } \right) (\brho_r(x)^2 + \brho_r(y)^2 ) p(y) dV(y).
\end{align}
We now apply Lemma \ref{lemma:right-operator-degree}(ii) to compute the integral in \eqref{eq:bound-var2}.
Let $ \epsilon_4 \coloneqq \epsilon_D(k_3, 1) / c_{\max}^2 > 0$, where $\epsilon_D$ is defined in \eqref{eq:def-epsD}. Then,
for $\epsilon < \epsilon_4$, we have $c_{\max}^2 \epsilon <  \epsilon_D(k_3, 1)$, which is the small-$\epsilon$ threshold required in Lemma \ref{lemma:right-operator-degree}(ii) (with $\epsilon$ in the lemma being $\epsilon c_{\max}^2$ and $k_0$ in the lemma being $k_3$).
We apply \eqref{eq:ker-expan-degree-slowI} in Lemma \ref{lemma:right-operator-degree}(ii) (specifically, we use $i=0$ to analyze the integral involving $\brho_r(x)^2$ and $i=2$ to analyze the integral involving $\brho_r(y)^2$ in \eqref{eq:bound-var2}), and have
\begin{align*} 
	\epsilon^{d/2} \var(Y_j)   &\leq   2\epsilon c_{\max}^{d+2} \left( m_0[k_3] p \brho_r^{d+2}(x) + R_{\epsilon,2}(x,r) \right), \quad \sup_{x\in \sM,r \in [0,\tilde{r}_0]}|R_{\epsilon,2}(x,r)| 
		= 
		{O^{[p]}(\sqrt{\epsilon})}, 
\end{align*}
where the constant in 
${O^{[p]}(\sqrt{\epsilon})}$ depends on $\phi$ only through $c_{\max}$.
There is $\epsilon_5 > 0$ such that for $\epsilon < \epsilon_5$ (this threshold is uniform for $x$ and $r$),
\[ 
R_{\epsilon,2}(x,r) \leq (2/3)^{(d+2)/d} m_0[k_3] p_{\max}^{-2/d} \leq  m_0[k_3] p \brho_r^{d+2}(x),\quad  \forall x \in \sM,  \quad \forall r \in [0, \tilde{r}_0]. 
\]
The last inequality used the lower bound of $\brho_r$, i.e., $\brho_r(x) \geq (2/(3p(x)))^{1/d}$ from Lemma \ref{lemma:bar-rho-epsilon}(ii). By the upper bound of $R_{\epsilon,2}(x,r)$ above and the fact that $\brho_r(x) \leq (2/p(x))^{1/d}$ from Lemma \ref{lemma:bar-rho-epsilon}(ii),
\begin{align} \label{eq:bound-var-6}
	\var(Y_j) \leq   4 c_{\max}^{d+2} m_0[k_3] p \brho_r^{d+2}(x) \epsilon^{1-d/2}  \leq  \mu_Y(x) \coloneqq C_{V,1}(\sM, k_0, \phi)  p(x)^{-2/d} \epsilon^{1-d/2},
\end{align}
where $C_{V,1}(\sM, k_0, \phi) =  2^{(d+2)/d+2} c_{\max}^{d+2}  m_0[k_3]$.
Here, we used  $_V$ in the subscript of $C_{V,1}$ to represent ``variance''.

Next, we show the boundedness of $|Y_j|$. Since $k_1(\eta) \sqrt{\eta} \le k_2(\eta)$, we have
\begin{align*}
		|Y_j|  &= \epsilon^{-d/2}k_1\left(  \frac{\| x - x_j\|^2}{ \epsilon \phi( \brho_r(x), \brho_r(x_j))^2 } \right)  \mathbf{1}_{ \{x_j \in B_ {\delta_\epsilon[k_1]}(x) \}  }  \|x-x_j\| \\
        &\leq \epsilon^{1/2-d/2}k_1\left(  \frac{\| x - x_j\|^2}{ \epsilon \phi( \brho_r(x), \brho_r(x_j))^2 } \right)  \frac{\|x-x_j\|}{\sqrt{\epsilon} \phi( \brho_r(x), \brho_r(x_j)) } \phi( \brho_r(x), \brho_r(x_j))\\
		&\leq  \epsilon^{1/2-d/2}k_2\left(  \frac{\| x - x_j\|^2}{ \epsilon \phi( \brho_r(x), \brho_r(x_j))^2 } \right) \phi( \brho_r(x), \brho_r(x_j)) \\
		&\leq a_0[k_2] c_{\max} \rho_{\max} \epsilon^{1/2-d/2},
\end{align*}
where in the last inequality, we used the bound of $\phi( \brho_r(x), \brho_r(x_j))$ in \eqref{eq:bound-phi-denominator} and the fact that $k_2$ satisfies Assumption \ref{assump:k0-smooth}.
Thus,
	\begin{align} \label{eq:bound1}
				|Y_j|   \leq L_Y \coloneqq C_{B,1}(\sM, p, k_0, \phi) \epsilon^{1/2-d/2},
		\end{align}
	where  $C_{B,1}(\sM, p, k_0, \phi ) 
	= a_0[k_2] c_{\max}\rho_{\max}.$
Here, we used $_B$ in the subscript of $C_{B,1}$ to denote ``boundedness''.

By the condition 
 that $\epsilon^{d/2} = \Omega(\log N/N)$ in the proposition, 
 when $N$ is sufficiently large 
 (whose threshold depends on $(\sM, p, k_0, \phi)$), 
 we have $ 40 \frac{\log N}{N}  \leq 9 \frac{C_{V,1}}{(C_{B,1})^2} p_{\max}^{-2/d} \epsilon^{d/2} \leq 9 \frac{\mu_Y(x)}{ L_Y^2 }$ for any $x \in \sM$.
Then, by Bernstein inequality, w.p. $\ge 1-2N^{-10}$,
\begin{align} \label{eq:concen1}
	0 \leq  \frac{1}{N} \sum_{j = 1}^N Y_j   \leq \E Y_j + \sqrt{40 \frac{\log N}{N}  \mu_Y(x)} \leq p(x)^{-1/d}  \left( C_{E,1} \sqrt{\epsilon} + \sqrt{40 C_{V,1}}  \sqrt{ \frac{\log N}{N \epsilon^{d/2-1}}}   \right).
\end{align}
Since $\epsilon^{d/2}= \Omega(\log N/N)$, for sufficiently large $N$, $ \frac{\log N}{N \epsilon^{d/2-1}} \leq \epsilon$. This implies that
\begin{align} \label{eq:concen2}
	0 \leq   \frac{1}{N} \sum_{j = 1}^N Y_j   \leq (C_{E,1} + \sqrt{40 C_{V,1}}) p(x)^{-1/d}  \sqrt{\epsilon} = O (p(x)^{-1/d}  \sqrt{\epsilon}) .
\end{align}
The constant in the big-$O$ notation depends on $\sM, k_0, \phi$ through $C_{E,1}$ and $C_{V,1}$ and is uniform for $x$ and $r$. 
Thus, by plugging the upper bound of $\frac1N \sum_{j=1}^N Y_j$ in \eqref{eq:concen2} back to the r.h.s. of \eqref{eq:k1-trunc-independ-sum}, \eqref{eq:bound-k1-sum} holds. 

Since $\epsilon = o(1)$, 
we have $\epsilon < \min\{  \epsilon_1, \epsilon_2, \dots, \epsilon_5 \}$ when $N$ exceeds some threshold (which depends on $(\sM, p, k_0, \phi)$ and is uniform for $x$ and $r$),
and then the bounds in \eqref{eq:k1-trunc-independ-sum}\eqref{eq:bound-bias-1}\eqref{eq:bound-var-6}\eqref{eq:bound1} hold;
The other large-$N$ thresholds are required in \eqref{eq:concen1}\eqref{eq:concen2}, which are uniform for $x$ and $r$.
Thus, for the claim in \eqref{eq:bound-k1-sum} to hold,
all the involved large-$N$ thresholds at most depend on $(\sM, p, k_0, \phi)$.
\end{proof}

To prove Lemma \ref{lemma:step1-diff}, we use the following bound, which holds by the definitions of $L_\un$ in \eqref{eq:def-L-un} and $\bar{L}_\un$ in \eqref{eq:def-Wbar-GL} and the triangle inequality,
\begin{align}  \label{eq:bound-L-Lbar}
	\left|L_\un f(x)-\bar{L}_\un f(x)\right|
	\leq M_1(x) + M_2(x),
\end{align}
where
\begin{align*}
	M_1(x)   &\coloneqq    \frac{1}{\frac{m_2}{2}N\epsilon \hrho(x)^2 }\sum_{j=1}^N 
	\epsilon^{-d/2}  \left| k_0 \left(  \frac{\| x - x_j\|^2}{ \epsilon \phi(\hrho(x), \hrho(x_j))^2 } \right)- k_0 \left(  \frac{\| x - x_j\|^2}{ \epsilon \phi( \brho_{r_k}(x), \brho_{r_k}(x_j))^2 } \right)\right|     \left|f(x_j)-f(x)\right|, \\
	M_2(x)   &\coloneqq    \frac{1}{\frac{m_2}{2}N\epsilon  } \left |  \frac{1}{\hrho(x)^2} - \frac{1}{\brho_{r_k}(x)^2}  \right|    \sum_{j=1}^N 
	\epsilon^{-d/2}   k_0 \left(  \frac{\| x - x_j\|^2}{ \epsilon \phi( \brho_{r_k}(x), \brho_{r_k}(x_j))^2 } \right)     \left|f(x_j)-f(x)\right|.
\end{align*}

\begin{proof}[Proof of Lemma \ref{lemma:step1-diff}]

	Since $r_k \leq r_0$, $\brho_{r_k}(x)$ for all $x\in \calM$  is well-defined by Lemma \ref{lemma:bar-rho-epsilon}(i), and consequently,  $\bar{L}_\un$ is also well-defined.
	By \eqref{eq:bound-L-Lbar}, it suffices to bound the two terms $M_1(x)$ and $M_2(x)$ on the r.h.s.
	
	We first bound $M_2(x)$.
 The lemma assumes that  $\varepsilon_{\rho,k} \leq \min\{ \delta_\phi, 0.05 / L_\phi\}$
 and $\phi$ satisfies Assumption \ref{assump:phi-diff}(i)(iii),
 and this means that Lemma \ref{lemma:lb-ub-E_phi} holds. 
 Therefore,
	by the inequalities \eqref{eq:bound-phi-rela-left}\eqref{eq:bound-phi-rela-0.1} in Lemma \ref{lemma:lb-ub-E_phi}, $\varepsilon_{\rho,k} \leq E_{\phi,k} \leq 0.1$.
	Then, by Lemma \ref{lemma:rela-err-uv-0.1} applied to $u = \brho_{r_k}(x)$ and $v = \hrho(x)$, we have
	\begin{equation}  \label{eq:bound-1/hrho-rela}
		\left |  \frac{1}{\hrho(x)^2} - \frac{1}{\brho_{r_k}(x)^2}  \right| 
		\leq  \frac{3\varepsilon_{\rho,k}}{\brho_{r_k}(x)^2}, \quad \forall x \in \sM.  
	\end{equation}
	Therefore, by Assumption \ref{assump:k0-smooth}(ii), 
	\begin{align} \label{eq:bound-circ1}
		M_2(x) \leq  C_1  \varepsilon_{\rho,k}\,  \frac{1}{N\epsilon \brho_{r_k}(x)^2}\sum_{j=1}^N \epsilon^{-d/2} a_0[k_0]\exp\left( - \frac{ a[k_0]\, \| x - x_j\|^2}{ \epsilon \phi( \brho_{r_k}(x), \brho_{r_k}(x_j))^2 } \right) \left|f(x_j)-f(x)\right|,
	\end{align}
	where $C_1 = 6 / m_2$ is a constant depending on $k_0$. 
	
	To bound $M_1(x)$, we apply Taylor expansion to $k_0$ and obtain that
	\begin{align*}
		& \epsilon^{-d/2}  \left| k_0 \left(  \frac{\| x - x_j\|^2}{ \epsilon  } \frac{1}{\phi(\hrho(x), \hrho(x_j))^2} \right)- 
  k_0 \left(  \frac{\| x - x_j\|^2}{ \epsilon  } \frac{1}{\phi( \brho_{r_k}(x), \brho_{r_k}(x_j))^2} \right)\right|    \\
		&\leq \epsilon^{-d/2}   \frac{\| x - x_j\|^2}{\epsilon}     \left|   \frac{ 1 }{  \phi(\hrho(x), \hrho(x_j))^2} - \frac{1}{\phi( \brho_{r_k}(x), \brho_{r_k}(x_j))^2} \right|       \left| k_0' \left(  \frac{\| x - x_j\|^2}{ \epsilon } \xi  \right) \right|,
	\end{align*}
	where 
	\[ \xi \geq \min \left\{ \frac{ 1 }{  \phi(\hrho(x), \hrho(x_j))^2}, \frac{1}{\phi( \brho_{r_k}(x), \brho_{r_k}(x_j))^2}  \right\}. \]
	Since $E_{\phi,k} \leq 0.1$, by Lemma \ref{lemma:rela-err-uv-0.1} applied to $u = \phi( \brho_{r_k}(x), \brho_{r_k}(x_j))$ and $v = \phi(\hrho(x), \hrho(x_j))$, we have 
	\begin{align} \label{eq:bound-1/phi-hrho-rela}
		& \left|   \frac{ 1 }{  \phi(\hrho(x), \hrho(x_j))^2} - \frac{1}{\phi( \brho_{r_k}(x), \brho_{r_k}(x_j))^2} \right| \leq \frac{ 3 E_{\phi,k} }{ \phi(\brho_{r_k}(x), \brho_{r_k}(x_j))^2} \notag \\
		&\leq   \frac{ 6 L_\phi \varepsilon_{\rho,k} }{ \phi(\brho_{r_k}(x), \brho_{r_k}(x_j))^2}  \leq \frac{ 0.3 }{ \phi(\brho_{r_k}(x), \brho_{r_k}(x_j))^2}, \quad \forall x\in\calM,  \quad \forall j=1,\cdots, N.
	\end{align}
	The last two inequalities used \eqref{eq:bound-phi-rela-0.1} in Lemma \ref{lemma:lb-ub-E_phi} and the condition $\varepsilon_{\rho,k} \leq 0.05/L_\phi$, respectively.
	As a result, 
	\[ \xi \geq \frac{1}{2  \phi( \brho_{r_k}(x), \brho_{r_k}(x_j))^2 }, \]
	and
	\begin{align*} 
		 \left|  \frac{\phi( \brho_{r_k}(x), \brho_{r_k}(x_j))^2}{  \phi(\hrho(x), \hrho(x_j))^2} -  1 \right| \leq 6 L_\phi \varepsilon_{\rho,k},
   \quad 
   \forall x\in\calM, \quad \forall j=1,\cdots, N.
	\end{align*}
	Therefore, by the exponential decay of $|k_0'|$ (Assumption \ref{assump:k0-smooth}(ii)) and the monotonicity of the exponential function $\exp(-a[k_0] \eta)$, we have
	\begin{align} \label{eq:bound-k0-diff-1}
		& \left| k_0 \left(  \frac{\| x - x_j\|^2}{ \epsilon \phi(\hrho(x), \hrho(x_j))^2 } \right)- k_0 \left(  \frac{\| x - x_j\|^2}{ \epsilon \phi( \brho_{r_k}(x), \brho_{r_k}(x_j))^2 } \right)\right|  \notag \\
		&\leq  6L_\phi \varepsilon_{\rho,k} \, a_1[k_0] \frac{\| x - x_j\|^2}{ \epsilon \phi( \brho_{r_k}(x), \brho_{r_k}(x_j))^2 }    \exp \left(  -\frac{a[k_0]}{2}\frac{\| x - x_j\|^2}{ \epsilon \phi( \brho_{r_k}(x), \brho_{r_k}(x_j))^2 } \right).
	\end{align}
	Additionally, since $\varepsilon_{\rho,k} \leq 0.1$, we have
	\[ \frac{1}{\hrho(x)^2} \leq \frac{1}{0.9^2 \brho_{r_k}(x)^2 }, \quad \forall x \in \sM. \]
	Therefore, we have the following bound for $M_1(x)$,
	\begin{align*}
		M_1(x) \leq  &\frac{C_2 \varepsilon_{\rho,k}}{N\epsilon \brho_{r_k}(x)^2}\sum_{j=1}^N \epsilon^{-d/2} a_1[k_0] \exp \left(  -\frac{a[k_0]}{2}\frac{\| x - x_j\|^2}{ \epsilon \phi( \brho_{r_k}(x), \brho_{r_k}(x_j))^2 } \right) \frac{\| x - x_j\|^2}{ \epsilon \phi( \brho_{r_k}(x), \brho_{r_k}(x_j))^2 }  \\
		&\left|f(x_j)-f(x)\right|,
	\end{align*}
	where $C_2 = 18 L_\phi / m_2$ is a constant depending on $k_0$ and $\phi$. 
	
	Together with the bound of \circled{2} in \eqref{eq:bound-circ1} and using the fact that $\exp(-a[k_0]\eta) \leq \exp(-\frac{a[k_0]}{2}\eta)$, we have that
	\begin{align*}
		& \left|L_\un f(x)-\bar{L}_\un f(x)\right| \leq M_1(x) + M_2(x) \notag \\
		&\leq C\varepsilon_{\rho,k}  \; \frac{1}{N\epsilon \brho_{r_k}(x)^2}\sum_{j=1}^N \epsilon^{-d/2}k_1\left(  \frac{\| x - x_j\|^2}{ \epsilon \phi( \brho_{r_k}(x), \brho_{r_k}(x_j))^2 } \right) \left|f(x_j)-f(x)\right|,    
	\end{align*}
	where $k_1$ is as defined in \eqref{eq:def-k1}, and $C = C_1 + C_2 = 6 (1 + 3L_\phi) / m_2$.
\end{proof}

\subsection{Step 2: convergence of $\bar{L}_\un $}

\begin{proof}[Proof of Lemma \ref{lemma:step2-diff}]

 We prove the lemma in five steps as follows.
	
\vspace{5pt}
\noindent
 \underline{Step 1.}
		Local truncation on $\sM$.  

We truncate the integral domain to be the intersection of $\calM$ with a Euclidean ball centered at $x$ and of radius $\delta_\epsilon[k_0]$.
	Recall the definition of $\delta_\epsilon$ in \eqref{eq:def-delta-eps} and of $\epsilon_D$ in \eqref{eq:def-epsD}.
	By the definition of $\delta_\epsilon$, for $\epsilon < \epsilon_D(k_0, c_{\max})$, we have $\delta_\epsilon[k_0] < \delta_1(\sM)$, where $\delta_1(\sM)$ is introduced in Lemma \ref{lemma:M-metric}(iii).
	In the rest of the proof of the lemma, we omit the dependence on $k_0$ in $\delta_\epsilon[k_0]$.

	By the choice of $\delta_\epsilon$, Assumption \ref{assump:k0-smooth}(ii) and the bound of $\phi( \brho_r(x), \brho_r(y) )$ in \eqref{eq:bound-phi-denominator}, we have that  when $y \notin B^m_{\delta_\epsilon}(x)$, $\epsilon^{-d/2}  k_0(  \frac{\| x - y\|^2}{  \epsilon  \phi( \brho_r(x), \brho_r(y) )^2  } ) = O(\epsilon^{10})$. Hence, by the compactness of $\sM$,
	\begin{align} \label{eq:lemma-bias-fast-step1}
		&\brho_r(x)^{-2} \epsilon^{-d/2} \int_{\sM} 
		{  k_0 \left(  \frac{\| x - y\|^2}{  \epsilon  \phi( \brho_r(x), \brho_r(y) )^2  } \right)}   (f(y)-f(x)) p(y)  dV(y) =  \circled{1} + O^{[f,p]}(\epsilon^{10}), 
	\end{align}
	where
	\begin{align*}
		\circled{1} \coloneqq  \brho_r(x)^{-2} \epsilon^{-d/2} \int_{B^m_{\delta_\epsilon}(x) \cap \sM} 
		{  k_0 \left(  \frac{\| x - y\|^2}{  \epsilon  \phi( \brho_r(x), \brho_r(y) )^2  } \right)}   (f(y)-f(x)) p(y)  dV(y).   
	\end{align*}

\vspace{5pt}
\noindent
 \underline{Step 2.}
		Change of variable from $y$ to the projected coordinate $u$.

	Since $\delta_\epsilon < \delta_1(\sM)$, $\sM \cap B^m_{\delta_\epsilon}(x)$ is isomorphic to a ball in $\R^d$. Let $u$ and $s$ be the projected and normal coordinates of $y$ introduced in Appendix \ref{app:diff-geo-lemmas}. Namely, $u = \varphi_x(y), s = \exp_x^{-1}(y)$.
	We will show that the change of variable from $y$ to $u$ results in
	\begin{equation} \label{eq:lemma-bias-fast-step2-1}
		\circled{1} = \circled{2}, 
	\end{equation}
	where \circled{2} is as defined in \eqref{eq:lemma-bias-fast-step2}. To achieve this, we need the Taylor expansion of the functions on $\sM$ with respect to $u$, which will be detailed below.

	We first define the functions $\tilde{p} \coloneqq p \circ \exp_x$ and $\tilde{f} \coloneqq f \circ \exp_x$ mapping from $\R^d$ to $\R$, where we identify $T_x\sM$ with $\R^d$.
	Additionally, we introduce the ratio function $\beta_x: \sM \to \R_+$:
	\begin{equation} \label{eq:def-beta}
		\beta_x(y) :=  \brho_r(y)/  \brho_r(x),
	\end{equation}
	where the dependence on $r$ is omitted for conciseness.
	We also define $\tilde{\beta}_x: \R^d \to \R_+$ by $\tilde{\beta}_x \coloneqq \beta_x \circ \exp_x$. By definition, $\tilde\beta_x(0) = 1$.
	
	We apply Taylor expansion to the functions $\tilde{p}$, $\tilde{f}$, and $\tilde{\beta}_x$.
	By applying Taylor expansion to $\tilde{f}$, 
	\begin{align}  \label{eq:f-taylor-4th-pre}
		\tilde{f}(s) = \tilde{f}(0) + \nabla \tilde{f}(0) \cdot s + \frac{1}{2}  s^T \Hess_{\tilde{f}}(0) s + Q_{x,3}^{(f)}(s)  + O^{[f]}( \|s\|^4),
	\end{align}
	where $\Hess_{\tilde{f}}(0)$ denotes the $d$-by-$d$ Hessian matrix of the function $\tilde{f}$ at $0$.
	The constant in the big-$O$ notation depends on $\sM$ and $\| \nabla^l f \|_\infty, l\leq 4$, because by the definition $\tilde{f} = f \circ \exp_x $, the fourth derivative of $\tilde{f}$ depends on $\sM$ and the derivatives of $f$ and $\exp_x$ up to the fourth order. 
	In \eqref{eq:f-taylor-4th-pre}, $Q_{x,3}^{(f)}$ denotes a homogeneous polynomial of degree $3$, with coefficients depending on $f$ and its derivatives evaluated at $x$. In the latter context, we use similar notations to denote homogeneous polynomials whose coefficients depend on a function on $\sM$.	

	By Lemma \ref{lemma:M-coord-expansion}, $s_i = u_i + Q_{x,3}^{(N)}(u) +  O(\|u\|^4)$.  Therefore, 
	\begin{equation} \label{eq:f-taylor-4th}
	f(y) - f(x) = \nabla \tilde{f}(0) \cdot u  +  \frac{1}{2} u^T \Hess_{\tilde{f}}(0) u  +  Q_{x,3}^{(f)}(u) + O^{[f]}( \|u\|^4),   
	\end{equation}
	where the constant in the big-$O$ notation depends on both $\sM$ and $\|\nabla^l f\|_\infty, l\leq 4$. 
	Here, $Q_{x,3}^{(f)}(u)$ is a different polynomial from the one in \eqref{eq:f-taylor-4th-pre} whose coefficients additionally depend on $\sM$. As the specific coefficients do not impact the limiting operator, we use the same symbol to avoid introducing new notations.

	For $\tilde{p}$, by a similar argument used to obtain \eqref{eq:f-taylor-4th}, we have its third order expansion as follows,
	\begin{equation} \label{eq:p-taylor}
		p(y) = \tilde{p}(0) + \nabla \tilde{p}(0) \cdot s + Q_{x,2}^{(p)}(s) + O^{[p]}( \|s\|^3) = \tilde{p}(0) + \nabla \tilde{p}(0) \cdot u  + Q_{x,2}^{(p)}(u) + O^{[p]}( \|u\|^3),   
	\end{equation}
	where the constant in the big-$O$ notation depends on both $\sM$ and $\|\nabla^l p\|_\infty, l\leq 3$.

	By Lemma \ref{lemma:bar-rho-epsilon}(i), $\brho_{r}$ is $C^3$ on $\sM$, which implies that $\beta_x$ is also $C^3$ on $\sM$.  By a similar argument used to obtain \eqref{eq:f-taylor-4th}, we apply Taylor expansion to $\tilde{\beta}_x$ up to the third order and obtain
	\begin{equation} \label{eq:beta-taylor}
		\beta_x(y) = 1 + \nabla \tilde\beta_x(0) \cdot u + Q_{x,2}^{(\beta)}(u) + O^{[p]}(\|u\|^3).
	\end{equation}
	By Lemma \ref{lemma:bar-rho-epsilon}(iv), the constant in the big-$O$ notation depends on $\sM$, $p_{\min}, p_{\max}$, and $\|\nabla^l p\|_\infty, l\leq 5$.
	The superscript $^{(\beta)}$ distinguishes the quadratic polynomial in the expansion of $\beta_x$.
	Then, based on \eqref{eq:beta-taylor}, we consider the expansion of $\phi(\brho_{r}(x), \brho_{r}(y))$ at $\phi(\brho_{r}(x), \brho_{r}(x)) = \brho_{r}(x)$. 
	Since $\phi$ is $C^3$ on $\R_+ \times \R_+$, $\partial_1 \phi(\eta,\eta) = \frac{1}{2}, \forall \eta \in \R_+$, we have
	\begin{align}  \label{eq:phi/rho-taylor-pre}
		\phi(\brho_r(x), \brho_r(y) ) &= \phi(\brho_r(x), \brho_r(x)\beta_x(y) ) = \phi\left(\brho_r(x), \brho_r(x) \left(1 + \nabla \tilde\beta_x(0) \cdot u + Q_{x,2}^{(\beta)}(u) + O^{[p]}(\|u\|^3)  \right) \right) \notag \\
		&= \brho_r(x) + \frac{1}{2} \brho_r(x) \nabla \tilde\beta_x(0) \cdot u + Q_{x,2}^{(\phi)}(u) + O^{[p]}(\|u\|^3).
	\end{align}
	Here, the coefficients of $Q_{x,2}^{(\phi)}$ depend on both $\brho_r$ and $\phi$.
	The constant in the big-$O$ notation depends on $\sM$, $p_{\min}, p_{\max}$, $\|\nabla^l p\|_\infty, l \le 5$, 
	and 
	\begin{equation}\label{eq:def-C-phi-l}
	C_{\phi, l} \coloneqq \sup_{u, v \in I} | \frac{\partial^l}{\partial u^l} \phi(u, v)|, 
	\quad
	 I = [\rho_{\min}, \rho_{\max}],
	\end{equation}
		for $l \le 3$.
	Hence,
	\begin{align} \label{eq:phi/rho-taylor}
		\frac{\brho_r(x) }{ \phi(\brho_r(x), \brho_r(y)) } = 1 - \frac{1}{2}\nabla \tilde\beta_x(0)  \cdot u + Q_{x,2}^{(\rho/\phi)}(u) + O^{[p]}(\|u\|^3),
	\end{align}
	where $Q_{x,2}^{(\rho/\phi)}$ is another quadratic polynomial, distinct from $Q_{x,2}^{(\beta)}$ and $Q_{x,2}^{(\phi)}$.
	Consequently, combining \eqref{eq:phi/rho-taylor} with \eqref{eq:metric-comp1}, we obtain
	\begin{equation} \label{eq:phi-taylor}
		\frac{ \|x-y\|^2 }{ (\phi(\brho_r(x), \brho_r(y)) / \brho_r(x))^2 } = \|u\|^2 -  \|u\|^2 \nabla \tilde\beta_x(0) \cdot u + Q_{x,4}^{(\rho)}(u) + O^{[p]}(\|u\|^5),    
	\end{equation}
	where the coefficients of $Q_{x,4}^{(\rho)}(u) $ depend on $\brho_r$, $\phi$, and also $\sM$.
	
	Plugging the volume comparison \eqref{eq:volume-comp} and the expansions in  \eqref{eq:f-taylor-4th}, \eqref{eq:p-taylor}, and \eqref{eq:phi-taylor} into \circled{1}, we have that
	\begin{align*}
		\circled{1}  &=   \brho_r(x)^{-2}  \epsilon^{-d/2} \int_{\varphi_x\left( B^m_{\delta_\epsilon}(x) \cap \sM \right) } 
		k_0 \left(    \frac{1}{  \epsilon \brho_r(x)^2    } \left(  \|u\|^2 -  \|u\|^2 \nabla \tilde\beta_x(0) \cdot u + Q_{x,4}^{(\rho)}(u) + O^{[p]}(\|u\|^5) \right) \right)   \\
		& \quad   \left(\nabla \tilde{f}(0) \cdot u + \frac{1}{2} u^T \Hess_{\tilde{f}}(0) u +  Q_{x,3}^{(f)}(u) + O^{[f]}(\|u\|^4) \right)  \left(\tilde{p}(0)  +  \nabla \tilde{p}(0) \cdot u +  Q_{x,2}^{(p)}(u) + O^{[p]}(\|u\|^3)\right)  \\
		&   \quad    \left( 1 + Q_{x,2}^{(V)}(u) + O(\|u\|^3)\right)du.
	\end{align*}	
	By Assumption \ref{assump:k0-smooth}, $k_0$ is $C^3$. We apply Taylor expansion to $k_0$ up to the third order and obtain the expansion of the kernel as
	\begin{align*}
		& k_0 \left(    \frac{1}{  \epsilon \brho_r(x)^2    } \left(  \|u\|^2 -  \|u\|^2 \nabla \tilde\beta_x(0) \cdot u + Q_{x,4}^{(\rho)}(u) + O^{[p]}(\|u\|^5) \right) \right)  \\
		&= k_0 \left(  \frac{\|u\|^2}{ \epsilon \brho_r(x)^2 } \right)	+  \left( -  \|u\|^2 \nabla \tilde\beta_x(0) \cdot u + Q_{x,4}^{(\rho)}(u) + O^{[p]}(\|u\|^5)  \right) \frac{1}{\epsilon \brho_r(x)^2 } k_0' \left(  \frac{\|u\|^2}{ \epsilon \brho_r(x)^2 } \right)	\\
		&\quad + {\frac{1}{2}} \left( \|u\|^4  (\nabla \tilde\beta_x(0) \cdot u)^2 + O^{[p]}(\|u\|^7) \right) \frac{1}{\epsilon^2 \brho_r(x)^4 } k_0'' \left(  \frac{\|u\|^2}{ \epsilon \brho_r(x)^2 } \right) + O^{[p]}\left( \frac{\|u\|^9}{\epsilon^3} |k_0^{(3)} (\xi)| \right),
	\end{align*}
	where 
	\[ \xi \ge \min \left\{  \frac{\| x - y\|^2}{  \epsilon  \phi( \brho_r(x), \brho_r(y) )^2  } , \; \frac{\|u\|^2}{ \epsilon \brho_r(x)^2 } \right\}  \geq  \frac{\|u\|^2}{ \epsilon c_{\max}^2 \rho_{\max}^2}.  \]
	Here, the second inequality used Assumption \ref{assump:phi-diff}(iv) and the upper bound of $\brho_r$ in Lemma \ref{lemma:bar-rho-epsilon}(ii).
	Besides, since $|k^{(3)}|$ has exponential decay by Assumption \ref{assump:k0-smooth}(ii) and $\exp(-a[k_0]\eta)$ is monotonic, we have
	\[ \frac{\|u\|^9}{\epsilon^3} |k_0^{(3)} (\xi) | \leq  a_3[k_0]\frac{\|u\|^9}{\epsilon^3} \exp \left( -  \frac{ a[k_0] \|u\|^2}{ \epsilon c_{\max}^2 \rho_{\max}^2} \right).  \]
	Therefore, 
	$\circled{1} = \circled{2}$, where 
	\begin{align} \label{eq:lemma-bias-fast-step2}
		\circled{2} &\coloneqq  \brho_r(x)^{-2}  \epsilon^{-d/2} \int_{\varphi_x\left( B^m_{\delta_\epsilon}(x) \cap \sM \right) } 
		\bigg(  k_0 \left(  \frac{\|u\|^2}{ \epsilon \brho_r(x)^2 } \right)	 +  \left( -  \|u\|^2 \nabla \tilde\beta_x(0) \cdot u + Q_{x,4}^{(\rho)}(u)  \right) \frac{1}{\epsilon \brho_r(x)^2 }  \notag	\\
		&\quad  k_0' \left(  \frac{\|u\|^2}{ \epsilon \brho_r(x)^2 } \right) +  {\frac{1}{2}} \frac{\|u\|^4  (\nabla \tilde\beta_x(0) \cdot u)^2 }{\epsilon^2 \brho_r(x)^4 } k_0'' \left(  \frac{\|u\|^2}{ \epsilon \brho_r(x)^2 } \right) + O^{[p]}\left( R_{\epsilon, x}(u) \right)  \bigg) \bigg(\nabla \tilde{f}(0) \cdot u   \notag \\
		&\quad + \frac{1}{2} u^T \Hess_{\tilde{f}}(0) u  +  Q_{x,3}^{(f)}(u) + O^{[f]}(\|u\|^4) \bigg)   \left(\tilde{p}(0)  +  \nabla \tilde{p}(0) \cdot u +  Q_{x,2}^{(p)}(u) + O^{[p]}(\|u\|^3)\right)\notag \\
		&\quad  \left( 1 + Q_{x,2}^{(V)}(u) + O(\|u\|^3)\right)du.
	\end{align}	
	Here, 
	\[ R_{\epsilon, x}(u) \coloneqq \frac{\|u\|^5}{\epsilon} \left|k_0' \left(  \frac{\|u\|^2}{ \epsilon \brho_r(x)^2 } \right) \right| +  \frac{\|u\|^7}{\epsilon^2} \left|k_0''\left(  \frac{\|u\|^2}{ \epsilon \brho_r(x)^2 } \right) \right| +  \frac{\|u\|^9}{\epsilon^3} \exp \left( -  \frac{ a[k_0]  \|u\|^2}{ \epsilon c_{\max}^2 \rho_{\max}^2} \right).  \]

\vspace{5pt}
\noindent
 \underline{Step 3.}
		Bound of residual terms and change of the integration domain to $\R^d$.  
	
	In this step, we first prove that the integrals that involve the residual terms, namely, the terms in the big-$O$ notations in the expansions of $\tilde{f}, \tilde{p}$, the kernel, and the volume comparison, contribute to an error of $O^{[f,p]}(\epsilon^2)$.
	Moreover, the error of changing the integration domain from $\varphi_x\left( B^m_{\delta_\epsilon}(x) \cap \sM \right)$ to $\R^d$ is of $o^{[f,p]}(\epsilon^2)$.
	Specifically, we will show that
	\begin{align} \label{eq:lemma-bias-fast-step3-1}
		\circled{2} = \circled{3} + O^{[f,p]}(\epsilon^2), 
	\end{align}
	where
	\begin{align*} 
		\circled{3} &\coloneqq  \brho_r(x)^{-2}  \epsilon^{-d/2} \int_{ \R^d } 
		\Bigg( \left( k_0 \left(  \frac{\|u\|^2}{ \epsilon \brho_r(x)^2 } \right)	\right) +  \left( -  \|u\|^2 \nabla \tilde\beta_x(0) \cdot u + Q_{x,4}^{(\rho)}(u)  \right) \frac{1}{\epsilon \brho_r(x)^2 } k_0' \left(  \frac{\|u\|^2}{ \epsilon \brho_r(x)^2 } \right)	\notag \\
		&\quad +  {\frac{1}{2}} \frac{\|u\|^4  (\nabla \tilde\beta_x(0) \cdot u)^2 }{\epsilon^2 \brho_r(x)^4 } k_0'' \left(  \frac{\|u\|^2}{ \epsilon \brho_r(x)^2 } \right)  \Bigg) \left(\nabla \tilde{f}(0) \cdot u + \frac{1}{2} u^T \Hess_{\tilde{f}}(0) u +  Q_{x,3}^{(f)}(u)  \right) \notag \\
		&\quad \left(\tilde{p}(0)  +  \nabla \tilde{p}(0) \cdot u +  Q_{x,2}^{(p)}(u) \right) \left( 1 + Q_{x,2}^{(V)}(u)\right)du.
	\end{align*}

	To bound the integrals involving the residual terms, we use the exponential decay of $| k_0^{(l)}(\eta)| (l \le 2)$, which is by Assumption \ref{assump:k0-smooth}(ii). It implies that for $l\leq 2$ and $j \leq 19$, we have the following bounds for integrals of the form below,
	\begin{align}  \label{eq:bound-integrals-eps}
		& \epsilon^{-d/2} \int_{\R^d} \|u\|^j  \left| k_0^{(l)}  \left( \frac{\|u\|^2}{ \epsilon \brho_r(x)^2 }  \right) \right| du = \brho_r(x)^{d+j} \epsilon^{j/2} \int_{\R^d} \|v\|^j  \left| k_0^{(l)}  \left( \|v\|^2 \right) \right| dv	= O^{[p]}(\epsilon^{j/2}), \notag \\
		& \epsilon^{-d/2} \int_{\R^d} \|u\|^j  \exp \left( - \frac{a[k_0] \|u\|^2}{ \epsilon c_{\max}^2\rho_{\max}^2 }  \right) du	=  (c_{\max}\rho_{\max})^{d+j} \epsilon^{j/2} \int_{\R^d} \|v\|^j  
		\exp \left( - a[k_0] {\|v\|^2} \right)  dv	= O^{[p]}(\epsilon^{j/2}). 
	\end{align}
	By the bound in \eqref{eq:bound-integrals-eps} applied to each of the integrals containing residual terms in \circled{2} in \eqref{eq:lemma-bias-fast-step2}, e.g.,
	\begin{align*}
		& \epsilon^{-d/2} \int_{\varphi_x\left( B^m_{\delta_\epsilon}(x) \cap \sM \right) } O^{[p]}\left( \frac{\|u\|^5}{\epsilon} k_0'\left( \frac{\|u\|^2}{\epsilon {\brho_r(x)^2}} \right) \right)    \,( \nabla \tilde{f}(0) \cdot u ) \, \tilde{p}(0) du \\
		& = O^{[f,p]} \left(\epsilon^{-d/2-1} \int_{\R^d} \|u\|^6  \left| k_0'  \left( \frac{\|u\|^2}{ \epsilon \brho_r(x)^2 }  \right) \right| du \right) = O^{[f,p]}(\epsilon^2),
	\end{align*}
	we have that these terms are all of $O^{[f,p]}(\epsilon^2)$.
	Therefore,
	\begin{align*}
		\circled{2} &= \brho_r(x)^{-2}   \epsilon^{-d/2} \int_{\varphi_x\left( B^m_{\delta_\epsilon}(x) \cap \sM \right) } 
		\Bigg( \left( k_0 \left(  \frac{\|u\|^2}{ \epsilon \brho_r(x)^2 } \right)	\right) +  \left( -  \|u\|^2 \nabla \tilde\beta_x(0) \cdot u + Q_{x,4}^{(\rho)}(u)  \right) \frac{1}{\epsilon \brho_r(x)^2 } \\
		&\quad  k_0' \left(  \frac{\|u\|^2}{ \epsilon \brho_r(x)^2 } \right) +  {\frac{1}{2}} \frac{\|u\|^4  (\nabla \tilde\beta_x(0) \cdot u)^2 }{\epsilon^2 \brho_r(x)^4 } k_0'' \left(  \frac{\|u\|^2}{ \epsilon \brho_r(x)^2 } \right)  \Bigg) \left(\nabla \tilde{f}(0) \cdot u + \frac{1}{2} u^T \Hess_{\tilde{f}}(0) u +  Q_{x,3}^{(f)}(u)  \right) \\
		&\quad  \left(\tilde{p}(0)  +  \nabla \tilde{p}(0) \cdot u +  Q_{x,2}^{(p)}(u) \right) \left( 1 + Q_{x,2}^{(V)}(u)\right)du + O^{[f,p]}(\epsilon^2).
	\end{align*}
	
	Next, we bound the error of changing the domain of integration from $\varphi_x\left( B^m_{\delta_\epsilon}(x) \cap \sM \right)$ to $\R^d$.
	Since $\delta_\epsilon < \delta_1(\sM)$, by the metric comparison in \eqref{eq:metric-comp2} in Lemma \ref{lemma:M-metric}(iii), we have
	\begin{equation*} 
		\varphi_x \left( B^m_{\delta_\epsilon}(x) \cap \sM \right) \supset B^d_{ \delta_\epsilon / 1.1 }(0).
	\end{equation*}
	As a result, it suffices to bound integrals of the form
	\[  \epsilon^{-d/2} \int_{ \R^d \setminus  B^d_{ \delta_\epsilon / 1.1 }(0)} \| u \|^j \left| k_0^{(l)} \left( \frac{\|u\|^2}{ \epsilon \brho_r(x)^2 } \right) \right| du, \quad j \leq 13, \quad l\leq 2.
 \] 
	By Assumption \ref{assump:k0-smooth}(ii) and the choice of $\delta_\epsilon$, for $u$ such that $\|u\| \geq \delta_\epsilon / 1.1$, we have
	\begin{align*}
		\left| k_0^{(l)} \left( \frac{\|u\|^2}{ \epsilon \brho_r(x)^2 } \right) 
		\right|  & \leq a_l[k_0] \exp \left( - \frac{a[k_0] \|u\|^2}{ \epsilon \rho_{\max}^2 }  \right) \leq  a_l[k_0] \exp \left( - \frac{a[k_0] \delta_\epsilon ^2}{ 2 \times 1.1^2 \epsilon \rho_{\max}^2 }  \right) 
		 \exp \left( - \frac{a[k_0] \|u\|^2}{ 2 \epsilon \rho_{\max}^2 }  \right) \\
		&\leq a_l[k_0] \epsilon^5 \exp \left( - \frac{a[k_0] \|u\|^2}{ 2 \epsilon \rho_{\max}^2 }  \right), \quad l = 0,1,2.
	\end{align*}
	Therefore, for $j \leq 13, l\leq 2$,
	\begin{align} \label{eq:bound-integrals-eps-change-Rd}
		&\epsilon^{-d/2} \int_{ \R^d \setminus  B^d_{ \delta_\epsilon / 1.1 }(0)} \| u \|^j \left| k_0^{(l)} \left( \frac{\|u\|^2}{ \epsilon \brho_r(x)^2 } \right) \right| du \leq a_l[k_0]   \epsilon^{5-d/2} \int_{ \R^d } \| u \|^j \exp 
		\left( - \frac{a[k_0] \|u\|^2}{ 2 \epsilon \rho_{\max}^2 }  \right) du \notag \\
		&= a_l[k_0] (\sqrt{2}\rho_{\max})^{d+j} \epsilon^{5+j/2} \int_{ \R^d } \| v \|^j \exp \left( - a[k_0] \|v\|^2  \right) dv = O^{[p]}(\epsilon^{5 + j/2}). 
	\end{align}
	Consequently, the error introduced by changing the integration domain from $\varphi_x\left( B^m_{\delta_\epsilon}(x) \cap \sM \right)$ to $\R^d$ can be bounded by $O^{[f,p]}(\epsilon^5)$, which is $o^{[f,p]}(\epsilon^2)$. Thus, \eqref{eq:lemma-bias-fast-step3-1} is proved.

\vspace{5pt}
\noindent
 \underline{Step 4.}	Collection of leading terms.

	Subsequently, we collect the leading terms in \circled{3} and will show that
	\begin{align} \label{eq:lemma-bias-fast-step4}
		\circled{3} = \epsilon \frac{m_2[k_0]}{2}  \sL_pf(x) + O^{[f,p]}(\epsilon^2 + \epsilon r),
	\end{align}
	where $\sL_p$ is as defined in \eqref{eq:def-sL}.
	
	Due to the symmetry of the kernels $k_0(\|u\|^2 / (\epsilon \brho_{r}(x)^2))$ and $k_0'(\|u\|^2 / (\epsilon \brho_{r}(x)^2))$, the odd moments in \circled{3} vanish.
	In addition to the terms of $O^{[f,p]}(\epsilon^2)$, which are bounded using \eqref{eq:bound-integrals-eps}, the remaining leading terms are
	\begin{align} \label{eq:lemma-bias-fast-step4-8}
		\circled{3} = \circled{4}_1 + \circled{4}_2 + \circled{4}_3 + O^{[f,p]}(\epsilon^2), 
	\end{align}
	where
	\begin{align} 
		\circled{4}_1  &\coloneqq \tilde{p}(0) \; \brho_r(x)^{-2}  \epsilon^{-d/2} \int_{ \R^d }   k_0 \left(  \frac{\|u\|^2}{ \epsilon \brho_r(x)^2 } \right)   \frac{1}{2} u^T \Hess_{\tilde{f}}(0) u  du, \label{eq:lemma-bias-fast-step4-2} \\
		\circled{4}_2  &\coloneqq  \brho_r(x)^{-2}  \epsilon^{-d/2} \int_{ \R^d }   k_0 \left(  \frac{\|u\|^2}{ \epsilon \brho_r(x)^2 } \right)   (\nabla \tilde{f}(0) \cdot u )  (\nabla \tilde{p}(0) \cdot u) du,  \label{eq:lemma-bias-fast-step4-3} \\ 
		\circled{4}_3  &\coloneqq  -\tilde{p}(0) \; \brho_r(x)^{-2}  \epsilon^{-d/2} \int_{ \R^d }   k_0' \left(  \frac{\|u\|^2}{ \epsilon \brho_r(x)^2 } \right) \frac{\|u\|^2}{\epsilon \brho_r(x)^2}   (\nabla \tilde{f}(0) \cdot u )  (\nabla \tilde{\beta}_x(0) \cdot u) du. \label{eq:lemma-bias-fast-step4-4} 
	\end{align}
	We compute the three terms $\circled{4}_1$, $\circled{4}_2$, and $\circled{4}_3$ separately as below.
	First, by changing the variable from $u$ to $v \coloneqq u / (\sqrt{\epsilon} \brho_{r}(x))$ and noting that $\tilde{p}(0) = p(x)$, we have
	\begin{align*}
		\circled{4}_1  &= p(x) \; \frac{1}{2} \epsilon \brho_{r}(x)^d \int_{ \R^d } k_0(\|v\|^2) v^T \Hess_{\tilde{f}}(0) v  dv = p(x) \; \frac{1}{2} \epsilon \brho_{r}(x)^d \sum_{i,j=1}^d \frac{\partial^2 \tilde{f}}{\partial s_i \partial s_j}(0) \int_{ \R^d } k_0(\|v\|^2)  v_iv_j    dv \\
		&=  p(x) \; \frac{1}{2} \epsilon \brho_{r}(x)^d \sum_{i=1}^d \frac{\partial^2 \tilde{f}}{\partial s_i^2}(0)  \; \frac{1}{d}\int_{ \R^d } k_0(\|v\|^2) \|v\|^2 dv =  \epsilon \frac{m_2[k_0]}{2} p(x)  \brho_{r}(x)^d \Delta f(x).
	\end{align*}
	In the last equality, we used the definition of $m_2[k_0]$ and that $\sum_{i=1}^d \frac{\partial^2 \tilde{f}}{\partial s_i^2}(0) = \Delta f(x)$.
	Recall that $\brho = p^{-1/d}$.
	By Lemma \ref{lemma:bar-rho-epsilon}(iii), $\|\brho_{r}^d - p^{-1} \|_\infty = \|\brho_{r}^d - \brho^d \|_\infty \leq d \rho_{\max}^{d-1} C_0(p) r$, which implies that $\|p \brho_{r}^d - 1 \|_\infty \leq d p_{\max} \rho_{\max}^{d-1} C_0(p) r$. Therefore,
	\begin{align} \label{eq:lemma-bias-fast-step4-5}
		\circled{4}_1  = \epsilon \frac{m_2[k_0]}{2}  \Delta f(x) + O^{[f,p]} (\epsilon r). 
	\end{align}
	
	Similarly, we obtain that
	\begin{align} \label{eq:lemma-bias-fast-step4-6}
		\circled{4}_2  &= \epsilon \brho_{r}(x)^d \int_{ \R^d } k_0(\|v\|^2) (\nabla \tilde{f}(0) \cdot v )  (\nabla \tilde{p}(0) \cdot v) dv = \epsilon \brho_{r}(x)^d  \sum_{i,j=1}^d \frac{\partial \tilde{f}}{\partial s_i}(0)  \frac{\partial \tilde{p}}{\partial s_j}(0) \int_{ \R^d } k_0(\|v\|^2)v_iv_j    dv  \notag \\
		&=\epsilon \brho_{r}(x)^d \sum_{i=1}^d \frac{\partial \tilde{f}}{\partial s_i}(0)  \frac{\partial \tilde{p}}{\partial s_{{i}}}(0)   \; \frac{1}{d}\int_{ \R^d } k_0(\|v\|^2) \|v\|^2 dv = \epsilon m_2[k_0]   \brho_{r}(x)^d  \nabla f(x) \cdot \nabla p(x)  \notag \\
		&= \epsilon m_2[k_0]  \nabla f(x) \cdot \frac{\nabla p(x) }{p(x)} + O^{[f,p]} (\epsilon r).
	\end{align}
	In the second last equality, we used that $\sum_{i=1}^d \frac{\partial \tilde{f}}{\partial s_i}(0)  \frac{\partial \tilde{p}}{\partial s_i}(0) = \nabla f(x) \cdot \nabla p(x)$. In the last equality, we used that $\|\brho_{r}^d - p^{-1} \|_\infty \leq d \rho_{\max}^{d-1} C_0(p) r$, which has been derived in the analysis of $\circled{4}_1$.
	
	Then, for $\circled{4}_3$, we have
	\begin{align*}
		\circled{4}_3 &= - p(x) \; \epsilon \brho_{r}(x)^d \int_{ \R^d } k_0'(\|v\|^2) \|v\|^2 (\nabla \tilde{f}(0) \cdot v )  (\nabla \tilde{\beta}_x(0) \cdot v) dv \\
		&= - p(x) \; \epsilon \brho_{r}(x)^d \sum_{i,j=1}^d \frac{\partial \tilde{f}}{\partial s_i}(0)  \frac{\partial \tilde{\beta}_x}{\partial s_j}(0) \int_{ \R^d } k_0'(\|v\|^2) \|v\|^2 v_iv_j    dv \\
		&= - p(x)  \; \epsilon \brho_{r}(x)^d \sum_{i=1}^d \frac{\partial \tilde{f}}{\partial s_i}(0)  \frac{\partial \tilde{\beta}_x}{\partial s_i}(0) \; \frac{1}{d} \int_{ \R^d } k_0'(\|v\|^2) \|v\|^4 dv. 
	\end{align*}
	We can directly compute the integral $\frac{1}{d} \int_{ \R^d } {k_0'(\|v\|^2)} \|v\|^4 dv$:
	\begin{align} \label{eq:moment-comp}
		&  \frac{1}{d}   \int_{ \R^d }  k_0' \left(  \|v\|^2 \right) \|v\|^4 dv =  \frac{1}{d}  \int_{S^{d-1}} \int_{0}^\infty  t^{d+3} k_0'( t^2 ) dt d\theta  = \frac{1}{2d }\int_{S^{d-1}} \int_{0}^\infty  s^{d/2+1} k_0'( s ) ds d\theta  \notag \\
		& = -  \frac{d+2}{4d} \int_{S^{d-1}} \int_{0}^\infty  s^{d/2} k_0( s ) ds d\theta = - \frac{d+2}{2d} \int_{S^{d-1}} \int_{0}^\infty  t^{d+1} k_0( t^2 ) dt d\theta  \notag \\
		& = - \frac{d+2}{2d} \int_{ \R^d }  k_0 \left(  \|v\|^2 \right) \|v\|^2 dv  = - \frac{d+2}{2} m_2[k_0].
	\end{align}
	Then, combined with the fact that $\sum_{i=1}^d \frac{\partial \tilde{f}}{\partial s_i}(0)  \frac{\partial \tilde{\beta}_x}{\partial s_i}(0) = \nabla f(x) \cdot \nabla \beta_x(x) $, we have
	\begin{align*}
		\circled{4}_3 
		&= \frac{d+2}{2}\epsilon m_2[k_0] p(x)\brho_{r}(x)^d  \nabla f(x) \cdot \nabla \beta_x(x).
	\end{align*}
	By the definition of $\beta_x$ in \eqref{eq:def-beta}, $\nabla \beta_x(x) = \nabla \brho_{r}(x) /  \brho_{r}(x)$.
	Furthermore, by Lemma \ref{lemma:bar-rho-epsilon}(iv),
	$\|\nabla \brho_{r} - \nabla \brho\|_\infty \leq C_1(p)r$.
	Therefore, by the triangle inequality, 
	$\|  \nabla f \cdot \nabla \brho_{r} /  \brho_{r}  - \nabla f \cdot \nabla \brho /  \brho \|_\infty \leq \|\nabla f\|_\infty ( C_1(p) / \rho_{\min} + C_0(p)  \| \nabla \brho\|_\infty /  \rho_{\min} ^2) r $. Thus, together with the fact that $\|p \brho_{r}^d - 1 \|_\infty \leq d p_{\max} \rho_{\max}^{d-1} C_0(p) r$ established in the analysis of $\circled{4}_1$ above, we have
	\begin{align} \label{eq:lemma-bias-fast-step4-7}
		\circled{4}_3 &= \frac{d+2}{2}\epsilon m_2[k_0]  \nabla f(x) \cdot \frac{\nabla  \brho(x)}{\brho(x)} + O^{[f,p]}(\epsilon r) \notag \\
		&= - \frac{d+2}{2d}\epsilon m_2[k_0]  \nabla f(x) \cdot \frac{\nabla  p(x)}{p(x)} + O^{[f,p]}(\epsilon r),
	\end{align}
	where in the second equality, we used that $\brho = p^{-1/d}$.
 
	Therefore, by inserting \eqref{eq:lemma-bias-fast-step4-5}\eqref{eq:lemma-bias-fast-step4-6}\eqref{eq:lemma-bias-fast-step4-7} back to \eqref{eq:lemma-bias-fast-step4-8}, we have
	\begin{align*}
		\circled{3} &= \circled{4}_1 + \circled{4}_2 + \circled{4}_3 + O^{[f,p]}(\epsilon^2) \\
		&=   \epsilon \frac{m_2[k_0]}{2}   \left( \Delta f(x) + 2\nabla f(x) \cdot \frac{\nabla p(x) }{p(x)}  -   \frac{d+2}{d}
		\nabla f(x) \cdot \frac{\nabla  p(x)}{p(x)}  \right)+ O^{[f,p]}(\epsilon^2 + \epsilon r) \\
		&= \epsilon \frac{m_2[k_0]}{2}  \sL_pf(x) + O^{[f,p]}(\epsilon^2 + \epsilon r),
	\end{align*}
	where the last equality uses the definition of $\sL_p$ in \eqref{eq:def-sL}. Then, \eqref{eq:lemma-bias-fast-step4} is proved.
	
\vspace{5pt}
\noindent
 \underline{Step 5.} Final error bound.   
        Throughout the proof, the only needed threshold for $\epsilon$ is $\epsilon <  \epsilon_D(k_0, c_{\max})$ used in Step 1, which satisfies the declared dependence in the lemma. 
        Then, when $\epsilon < \epsilon_D(k_0, c_{\max})$,
        \eqref{eq:lemma-bias-fast-step1}\eqref{eq:lemma-bias-fast-step2-1}\eqref{eq:lemma-bias-fast-step3-1}\eqref{eq:lemma-bias-fast-step4} from Steps 1-4 all hold, and this proves the lemma. 
        The constant in the big-$O$ notation of the final error term depends on $f$ and $p$ and is uniform for $x$ and $r$, since this applies to the residual terms in \eqref{eq:lemma-bias-fast-step1}\eqref{eq:lemma-bias-fast-step3-1}\eqref{eq:lemma-bias-fast-step4}. 
\end{proof}

\begin{proof}[Proof of Proposition \ref{prop:step2-diff}]
Since $k=o(N)$, when $N \geq N_{\rho,1}$, we have $r_k \leq \tilde{r}_0$.
Therefore, $\brho_{r_k}$ is well-defined.
Thus, $\bar{L}_\un$ as in \eqref{eq:def-Wbar-GL} is also well-defined.

First note that if $\| \nabla f \|_\infty = 0$, then $f$ is a constant function, and thus $\bar{L}_\un f \equiv 0$ and $\sL_pf \equiv 0$. Therefore, \eqref{eq:error-barL-LM-un} holds trivially.
We thus only  prove the case when $\| \nabla f \|_\infty > 0$.
We claim that  when $N$ is sufficiently large whose threshold depends on $(\sM, p, k_0, \phi)$, for any $r \in [0, \tilde{r}_0]$ and any $x \in \sM$, w.p. $\ge 1-2N^{-10}$,
\begin{align} \label{eq:step2-discrete-sum-diff-W}
	&\frac{1}{N}\sum_{j=1}^N \epsilon^{-d/2}  \brho_r(x)^{-2}  k_0 \left(  \frac{\| x - x_j\|^2}{ \epsilon \phi( \brho_r(x), \brho_r(x_j))^2 } \right) (f(x_j)-f(x))  \notag \\
	&=  \epsilon \frac{m_2}{2}  \sL_p  f(x) + O^{[f,p]}\left(\epsilon^2  +   \epsilon r \right)  +  O\left(  \|\nabla f\|_\infty p(x)^{1/d}  \sqrt{\frac{\log N}{N\epsilon^{d/2-1}}}\right).  
\end{align}
 Here, the threshold for $N$ and the constants in the big-$O$ notations are uniform for $x$ and $r$. 
We will apply the claim at $r=r_k$ and the fixed point $x\in\sM$ as in the proposition, and we denote the good event as $E_2$.

Observe that $\bar{L}_\un$ equals the l.h.s. of \eqref{eq:step2-discrete-sum-diff-W} divided by $\epsilon {m_2}/{2}$, where the $r$ in \eqref{eq:step2-discrete-sum-diff-W} takes the value of $r_k$.
As a result,  we reduce the proof of the proposition to that of the claim \eqref{eq:step2-discrete-sum-diff-W},
which holds under the good event in the claim (applied to $r=r_k$ and $x$) and when $N$ exceeds the needed threshold (and $N_{\rho,1}$).

Below, we prove the claim.
Recall the definition of $\delta_\epsilon[\cdot]$ in \eqref{eq:def-delta-eps}, and we denote $B^m_ {\delta_\epsilon[k_0]}$ as $B_ {\delta_\epsilon[k_0]}$.
Then, by a similar argument to \eqref{eq:k1-trunc}, we have that for any $x, y \in \sM$ and any $r \in [0, \tilde{r}_0]$,
 \begin{align} \label{eq:k0-trunc}
	\epsilon^{-d/2}k_0\left(  \frac{\| x - y\|^2}{ \epsilon \phi( \brho_r(x), \brho_r(y))^2 } \right) = \epsilon^{-d/2}k_0\left(  \frac{\| x - y\|^2}{ \epsilon \phi( \brho_r(x), \brho_r(y))^2 } \right) \mathbf{1}_{ \{y \in B_ {\delta_\epsilon[k_0]}(x) \} }    + O( \epsilon^{10} ),
\end{align}
where the constant in $O( \epsilon^{10} )$ depends on $\sM$ and is uniform for $x$, $y$, and $r$.
This implies that
\begin{align}  \label{eq:k0-trunc-1}
	& \frac{1}{N}\sum_{j=1}^N \epsilon^{-d/2}  \brho_r(x)^{-2}  k_0 \left(  \frac{\| x - x_j\|^2}{ \epsilon \phi( \brho_r(x), \brho_r(x_j))^2 } \right) (f(x_j)-f(x))  \notag \\
	&=  \frac{1}{N}\sum_{j=1}^N \epsilon^{-d/2}  \brho_r(x)^{-2}  k_0 \left(  \frac{\| x - x_j\|^2}{ \epsilon \phi( \brho_r(x), \brho_r(x_j))^2 } \right)  \mathbf{1}_{  \{x_j \in B_ {\delta_\epsilon[k_0]}(x) \} }  (f(x_j)-f(x))    + O^{[f,p]}( \epsilon^{10} ).  
\end{align}
We analyze the leading term in \eqref{eq:k0-trunc-1}, which is an independent sum.
We define
\begin{align*}
	Y_j \coloneqq \brho_r(x)^{-2} \epsilon^{-d/2}k_0\left(  \frac{\| x - x_j\|^2}{ \epsilon \phi( \brho_r(x), \brho_r(x_j))^2 } \right)  \mathbf{1}_{  \{x_j \in B_ {\delta_\epsilon[k_0]}(x) \} }(f(x_j)-f(x)) .
\end{align*}
Note that $\{Y_j\}_{j=1}^N$ are i.i.d random variables. 
For $\E Y_j$, by the truncation in \eqref{eq:k0-trunc}, we have
\begin{align*}
	\E Y_j &= \brho_r(x)^{-2}\int_\sM \epsilon^{-d/2}k_0\left(  \frac{\| x - y\|^2}{ \epsilon \phi( \brho_r(x), \brho_r(y))^2 } \right) \mathbf{1}_{  \{y \in B_ {\delta_\epsilon[k_0]}(x) \} }  (f(y)-f(x))p(y)dV(y) \\
	&= \brho_r(x)^{-2}\int_\sM \epsilon^{-d/2}k_0\left(  \frac{\| x - y\|^2}{ \epsilon \phi( \brho_r(x), \brho_r(y))^2 } \right)  (f(y)-f(x))p(y)dV(y)  + O^{[f,p]}( \epsilon^{10} ).  
\end{align*}
Let $ \epsilon_1 \coloneqq  \epsilon_D(k_0, c_{\max}) > 0$, where $\epsilon_D$ is defined in \eqref{eq:def-epsD}.
Then, $\epsilon_1$ equals the small-$\epsilon$ threshold required in Lemma \ref{lemma:step2-diff}.
Therefore, for $\epsilon < \epsilon_1$, by Lemma \ref{lemma:step2-diff}, we have that 
\begin{align} \label{eq:bound-bias-8}
	\E Y_j	= \epsilon \frac{m_2}{2}  
	\sL_p  f(x) 
	+ O^{[f,p]}\left(\epsilon^2 +  \epsilon r\right) + O^{[f,p]}( \epsilon^{10} ) = \epsilon \frac{m_2}{2}  
	\sL_p  f(x) 
	+ O^{[f,p]}\left(\epsilon^2 +  \epsilon r\right) .    
\end{align}

For $\var(Y_j)$, 
by the definition of $\epsilon_D$ in \eqref{eq:def-epsD}, we have that
when $\epsilon < \epsilon_1$, $\delta_\epsilon[k_0] < \delta_1(\sM)$ holds, where $\delta_1(\sM)$ is introduced in Lemma \ref{lemma:M-metric}(iii). 
Therefore, 
by Lemma \ref{lemma:f-diff-bound} (applied to $f$) and the metric comparison in \eqref{eq:metric-comp2},
\begin{align*}
	&\epsilon^{d/2} \var(Y_j) \leq \epsilon^{d/2} \E Y_j^2 \\
	&= \brho_r(x)^{-4}\int_\sM \epsilon^{-d/2}k_0^2\left(  \frac{\| x - y\|^2}{ \epsilon \phi( \brho_r(x), \brho_r(y))^2 } \right) \mathbf{1}_{  \{y \in B_ {\delta_\epsilon[k_0]}(x) \} }  (f(y)-f(x))^2p(y)dV(y) \\
	&\leq 1.1^2 \| \nabla f\|_\infty^2  \brho_r(x)^{-4} \int_\sM \epsilon^{-d/2}k_0^2\left(  \frac{\| x - y\|^2}{ \epsilon \phi( \brho_r(x), \brho_r(y))^2 } \right)  \|x - y\|^2p(y)dV(y) \\
	&= 1.1^2 \| \nabla f\|_\infty^2  \brho_r(x)^{-4}
		 \circled{1}, 
\end{align*}
where
\begin{align*}
	\circled{1} \coloneqq \int_\sM \epsilon^{-d/2}k_0^2\left(  \frac{\| x - y\|^2}{ \epsilon \phi( \brho_r(x), \brho_r(y))^2 } \right)  \|x - y\|^2p(y)dV(y).
\end{align*}
$k_0^2$ satisfies Assumption \ref{assump:k0-smooth} with $a_0[k_0^2] \coloneqq a_0[k_0]^2$ and the decay constant $a[k_0^2] \coloneqq 2 a[k_0]$.
Furthermore, $k_0^2(\eta) \eta \leq k_4(\eta)$, where 
\begin{align} \label{eq:def-k4}
	k_4(\eta) \coloneqq \frac{2}{e} \frac{ a_0[k_0^2]}{ a[k_0^2] }\exp(-\frac{a[k_0^2]}{2}\eta), \quad \eta\geq 0.
\end{align}
This is because from the facts that $ \exp(-\frac{a[k_0^2]}{2}\eta) \eta \le \frac{2}{e a[k_0^2]}$ and that $k_0^2$ satisfies Assumption \ref{assump:k0-smooth}, we have $k_0^2(\eta) \eta \leq  a_0[k_0^2] \exp(-a[k_0^2]\eta) \eta \le k_4(\eta)$.
$k_4$ is monotonic and satisfies Assumption \ref{assump:k0-smooth} with $a[k_4] \coloneqq a[k_0^2] / 2 = a[k_0] $. Besides, $m_0[k_4] > 0$ is a constant depending on $\sM$ and $k_0$.
By similar arguments used to obtain \eqref{eq:bound-var2}, 
we have that
\begin{align} \label{eq:bound-var-18}
		\circled{1} \leq \epsilon  c_{\max}^2  \int_\sM \epsilon^{-d/2}k_4\left( \frac{\| x - y\|^2}{ \epsilon  c_{\max}^2\max\{ \brho_r(x), \brho_r(y) \}^2 } \right) (\brho_r(x)^2 + \brho_r(y)^2 ) p(y) dV(y).
\end{align}
We now apply Lemma \ref{lemma:right-operator-degree}(ii) to compute the integral in \eqref{eq:bound-var-18}.
Let $ \epsilon_2 \coloneqq \epsilon_D(k_4, 1) / c_{\max}^2 > 0$, where $\epsilon_D$ is defined in \eqref{eq:def-epsD}.
Then, 
for $\epsilon < \epsilon_2$, we have $c_{\max}^2 \epsilon < \epsilon_D(k_4, 1)$, which is the small-$\epsilon$ threshold required in Lemma \ref{lemma:right-operator-degree}(ii) (with $\epsilon$ in the lemma being $\epsilon c_{\max}^2$ and $k_0$ in the lemma being $k_4$).
We apply \eqref{eq:ker-expan-degree-slowI} in Lemma \ref{lemma:right-operator-degree}(ii) 
	(specifically, we use $i=0$ to analyze the integral involving $\brho_r(x)^2$ and $i=2$ to analyze the integral involving $\brho_r(y)^2$ in
\eqref{eq:bound-var-18}), and have
\begin{align*} 
	\circled{1}   &\leq 2 \epsilon  c_{\max}^{d+2} \left( m_0[k_4] p \brho_r^{d+2}(x) + R_\epsilon(x,r) \right), \quad \sup_{x\in \sM,r \in [0,\tilde{r}_0]}|R_\epsilon(x,r)| 
	= 
	{O^{[p]}(\sqrt{\epsilon})}. 
\end{align*}
where the constant in 
${O^{[p]}(\sqrt{\epsilon})}$ depends on $\phi$ only through $c_{\max}$.
There is $\epsilon_3 > 0$ such that for $\epsilon < \epsilon_3$ (this threshold is uniform for $x$ and $r$), 
\[ R_\epsilon(x,r) \leq (2/3)^{(d+2)/d} m_0[k_4] p_{\max}^{-2/d} \leq  m_0[k_4] p \brho_r^{d+2}(x),\quad  \forall x \in \sM, \forall r \in [0, \tilde{r}_0]. \]
The last inequality used the lower bound of $\brho_r$, i.e., $\brho_r(x) \geq (2/(3p(x)))^{1/d}$ from Lemma \ref{lemma:bar-rho-epsilon}(ii). By the upper bound of $R_\epsilon(x,r)$ above and the fact that $\brho_r(x) \leq (2/p(x))^{1/d}$ from Lemma \ref{lemma:bar-rho-epsilon}(ii), $\circled{1} \le 4 c_{\max}^{d+2} m_0[k_4] p \brho_r^{{d+2}}(x)  \epsilon$. 
Therefore,
\begin{align}\label{eq:bound-var-7}
	\var(Y_j) &\leq 1.1^2  \| \nabla f\|_\infty^2\brho_r^{-4}(x)  \,\circled{1} \, \epsilon^{-d/2} \leq 1.1^2  \| \nabla f\|_\infty^2 \;  4 c_{\max}^{d+2} m_0[k_4] p \brho_r^{d-2}(x) \epsilon^{1-d/2} \notag \\
	&  \leq \mu_Y(x) \coloneqq C_{V,2}(\sM, k_0, \phi) \| \nabla f\|_\infty^2 p(x)^{2/d} \epsilon^{1-d/2},
\end{align}
where $C_{V,2}(\sM, k_0, \phi)  =  {4\max\{3/2,2^{(d-2)/d}\}} 1.1^2 c_{\max}^{d+2}m_0[k_4] $.
{Here we used $p(x)\brho_r(x)^{d-2}\leq \max\{3/2,2^{(d-2)/d}\}p(x)^{2/d}$; for $d=1$ this follows from the lower bound on $\brho_r$, and for $d\geq2$ from the upper bound.}

Moreover, since $\delta_\epsilon[k_0] < \delta_1(\sM)$, the metric comparison in \eqref{eq:metric-comp2} holds for $x,y$ such that $y \in B_ {\delta_\epsilon[k_0]}(x)$.  
Together with the {Lemma \ref{lemma:f-diff-bound} (applied to $f$)}
 and the lower bound of $\brho_r$ from Lemma \ref{lemma:bar-rho-epsilon}(ii), we have
\begin{align*}
	|Y_j|  &= \brho_r(x)^{-2}\epsilon^{-d/2}k_0\left(  \frac{\| x - x_j\|^2}{ \epsilon \phi( \brho_r(x), \brho_r(x_j))^2 } \right)  \mathbf{1}_{  \{x_j \in B_ {\delta_\epsilon[k_0]}(x) \} }  |f(x_j) - f(x)| \\
	&\le (2/3)^{-2/d}p(x)^{2/d}  1.1 \|\nabla f \|_\infty    \epsilon^{-d/2} k_0\left(  \frac{\| x - x_j\|^2}{ \epsilon \phi( \brho_r(x), \brho_r(x_j))^2 } \right) \|x-x_j\| \\
        &= (2/3)^{-2/d} p(x)^{2/d} 1.1 \|\nabla f \|_\infty    \epsilon^{1/2-d/2} k_0\left(  \frac{\| x - x_j\|^2}{ \epsilon \phi( \brho_r(x), \brho_r(x_j))^2 } \right) \frac{\|x-x_j\|}{\sqrt{\epsilon} \phi( \brho_r(x), \brho_r(x_j))} \phi( \brho_r(x), \brho_r(x_j)) \\
	&\le (2/3)^{-2/d} p(x)^{2/d} 1.1 \|\nabla f \|_\infty   \epsilon^{1/2-d/2}  \frac{a_0[k_0]}{ \sqrt{2e a[k_0] }} c_{\max} \rho_{\max}, 
	\end{align*}
where the last inequality is  due to the fact  that $k_0(\eta) \sqrt{\eta} \leq a_0[k_0] \exp(-a[k_0] \eta) \sqrt{\eta} \leq \frac{a_0[k_0]}{\sqrt{2e a[k_0]}}, \forall\eta \ge 0,$ and the upper bound of $\phi( \brho_r(x), \brho_r(x_j))$ in \eqref{eq:bound-phi-denominator}.
In other words, we have that 
\begin{align} \label{eq:bound-LY}
	|Y_j|   \leq L_Y(x)  \coloneqq C_{B,2}(\sM, p, k_0, \phi) \|\nabla f \|_\infty p(x)^{2/d} \epsilon^{1/2-d/2},
\end{align}
where  $C_{B,2}(\sM, p, k_0, \phi) = 1.1  (2/3)^{-2/d} c_{\max} \rho_{\max}  {a_0[k_0]}/{\sqrt{2e a[k_0]}}.$

Since we now consider the case where $ \|\nabla f \|_\infty > 0$, it holds that $L_Y(x)  > 0, \mu_Y(x) > 0, \forall x \in 
\sM$. 
By the condition $\epsilon^{d/2}= \Omega(\log N/N)$ in the theorem, 
when $N$ is sufficiently large whose threshold depends on $(\sM, p, k_0, \phi)$,
we have 
$160 \frac{\log N}{N} \leq 9 \frac{C_{V,2}}{(C_{B,2})^2} p_{\max}^{-2/d} \epsilon^{d/2} \leq 9 \frac{\mu_Y(x)}{ L_Y(x)^2 }$
for any $x \in \sM$. The inequality between the first and last expressions is equivalent, for $\tau \coloneqq \sqrt{40\mu_Y(x)\log N/N}$, to $2\tau L_Y(x) \leq 3\mu_Y(x)$. Since $Y_j$ is signed, $|Y_j-\E Y_j|\leq 2L_Y(x)$. Thus, Bernstein's inequality applies to $Y_j-\E Y_j$, and w.p. $\ge 1-2N^{-10}$,
\begin{align} \label{eq:concen3}
	 \frac{1}{N} \sum_{j = 1}^N Y_j &=  \E Y_j + O\left( \sqrt{ \frac{\log N}{N}  \mu_Y(x)}  \right)  \notag \\
	 &=   \epsilon \frac{m_2}{2}  \sL_p  f(x) + O^{[f,p]}\left(\epsilon^2  +   \epsilon r \right)  +  O\left(  \|\nabla f\|_\infty p(x)^{1/d}  \sqrt{\frac{\log N}{N\epsilon^{d/2-1}}}\right).
\end{align}

Combined with \eqref{eq:k0-trunc-1}, we have that
\begin{align*}
	&\frac{1}{N}\sum_{j=1}^N \epsilon^{-d/2}  \brho_r(x)^{-2}  k_0 \left(  \frac{\| x - x_j\|^2}{ \epsilon \phi( \brho_r(x), \brho_r(x_j))^2 } \right) (f(x_j)-f(x)) \\
	&=  \epsilon \frac{m_2}{2}  \sL_p  f(x) + O^{[f,p]}\left(\epsilon^2  +   \epsilon r \right)  +  O\left(  \|\nabla f\|_\infty p(x)^{1/d}  \sqrt{\frac{\log N}{N\epsilon^{d/2-1}}}\right) + O^{[f,p]}( \epsilon^{10} ) \\
	&=  \epsilon \frac{m_2}{2}  \sL_p  f(x) + O^{[f,p]}\left(\epsilon^2  +   \epsilon r \right)  +  O\left(  \|\nabla f\|_\infty p(x)^{1/d}  \sqrt{\frac{\log N}{N\epsilon^{d/2-1}}}\right).
\end{align*}

Since $\epsilon = o(1)$, when $N$ exceeds a certain threshold, we have
$ \epsilon < \min\{  \epsilon_1, \epsilon_2, \epsilon_3\}, $
which is a constant depending on $(\sM, p, k_0, \phi)$ and is uniform for $x$ and $r$, and then the bounds in \eqref{eq:bound-bias-8}\eqref{eq:bound-var-7}\eqref{eq:bound-LY} hold.
Another large-$N$ threshold is required by the Bernstein inequality used to prove \eqref{eq:concen3}, which is uniform for all $x$ and $r$ and depends on $(\sM, p, k_0, \phi)$.
The overall large-$N$ threshold required by \eqref{eq:step2-discrete-sum-diff-W} depends on $(\sM, p, k_0, \phi)$.
\end{proof}

\subsection{Step 3: final convergence bound of $L_\un$}

\begin{proof} [Proof of Theorem \ref{thm:conv-un-Laplacian-case1}]

We first prove \eqref{eq:fast-rate}, which will lead to \eqref{eq:fast-rate-N}.

\vspace{5pt}
\noindent
$\bullet$ Proof of \eqref{eq:fast-rate}: 
The proof is based on Propositions \ref{prop:step1-diff} and \ref{prop:step2-diff}. 
We collect the good events and large-$N$ thresholds from these two propositions: 
	
Let the large-$N$ threshold needed by Proposition \ref{prop:step1-diff} be denoted as 
$N_1$ (which depends on $(\sM, p, k_0, \phi)$ as shown in the proof of the proposition),
and the proposition holds w.p. $ \ge 1-3N^{-10}$, which we denote as the good event $E_1$;
	
Let the large-$N$ threshold needed by Proposition \ref{prop:step2-diff} be denoted as $N_2$ (which depends on $(\sM, p, k_0, \phi)$ as shown in the proof of the proposition),
and the proposition holds under the good event $E_2$ which has been defined in its proof and happens w.p. $ \ge 1-2N^{-10}$. 
	
Then, when $N \geq \max\{ N_1, N_2\}$ and under the good event $E_1 \cap E_2$ which happens w.p. $\ge 1-5N^{-10}$, by the triangle inequality, $\left| L_\un f(x)  -  \sL_p f(x)  \right|$  is bounded by the sum of \eqref{eq:error-L-barL-un} and \eqref{eq:error-barL-LM-un}, which gives 
\[
O\left(\|\nabla f \|_\infty p(x)^{1/d} \frac{\varepsilon_{\rho,k}}{\sqrt{\epsilon}}\right)
   + O( \|f\|_\infty p(x)^{2/d}  \varepsilon_{\rho,k} \epsilon^{9}) + O^{[f,p]}\left(\epsilon  + r_k\right)  +  O\left(  \|\nabla f\|_\infty p(x)^{1/d}  \sqrt{\frac{\log N}{N\epsilon^{d/2+1}}}\right).
\]
Observe that 
$O(\|f\|_\infty p(x)^{2/d} \varepsilon_{\rho,k} \epsilon^9) = O^{[f,p]}(\epsilon)$:
under the condition of Theorem \ref{thm:conv-un-Laplacian-case1},
Theorem \ref{thm:consist-hrho} applies, which gives that 
 $\varepsilon_{\rho,k}=O^{[p]}\left(\left(\frac{k}{N}\right)^{3/d}\right)+O\left(\sqrt{\frac{\log N}{k}}\right)$. 
 Since we also have  $k/N\rightarrow 0$ and $\log N/k\rightarrow 0$, it follows that
$\varepsilon_{\rho,k}=o(1)$,
and then $O(\|f\|_\infty p(x)^{2/d}\varepsilon_{\rho,k}\epsilon^9) =O^{[f,p]}(\epsilon^9) =O^{[f,p]}(\epsilon)$.
Therefore, the above estimate becomes  \eqref{eq:fast-rate}. 
Finally, the overall large-$N$ threshold depends on $(\sM, p, k_0, \phi)$. 
	
\vspace{5pt}
\noindent
$\bullet$ Proof of \eqref{eq:fast-rate-N}: Plugging the bound of $\varepsilon_{\rho,k}$ in \eqref{eq:vareps-rho-bound} and the definition of $r_k$ in \eqref{eq:def-rk} into \eqref{eq:fast-rate}, we have the following bound of $\left| L_\un f(x)  -  \sL_p f(x)  \right|$ in terms of $\epsilon, k, N$:
\begin{equation} \label{eq:fast-rate1}
	O^{[f,p]} \left(\epsilon  +   \frac{1}{\sqrt{\epsilon}}\left(\frac{k }{N }\right)^{3/d}   +   \left(\frac{k }{N } \right)^{2/d} \right) \,  + \, O\left( \|\nabla f\|_\infty p(x)^{1/d}   \left( \sqrt{\frac{\log N}{k\epsilon}} +   \;   \sqrt{\frac{\log N}{N\epsilon^{d/2+1}}}  \right)  \right).    
\end{equation}
The derivation of the rate \eqref{eq:fast-rate-N} from the bound \eqref{eq:fast-rate1} follows from Lemma \ref{lemma:fast-rate}.
\end{proof}

\section{Proofs in Section \ref{sec:theory-extend}}\label{app:proof-theory-extend}

\subsection{Proof of Theorem \ref{thm:conv-rw-Laplacian}}

\begin{proof}[Proof of Theorem \ref{thm:conv-rw-Laplacian}]
Recall that $L_\rw f(x) =  \frac{ L_\un f(x) }{  D(x) }$. 
We introduce another quantity $\bar{D}(x)$ by substituting $\hrho$ with $\brho_{r_k}$ in $D(x)$, namely 
\begin{align} \label{eq:def-Dbar}
	\bar{D}(x)  \coloneqq \frac{1}{m_0 N}\sum_{j=1}^N  \epsilon^{-d/2} k_0 \left(  \frac{\| x - x_j\|^2}{ \epsilon \phi( \brho_{r_k}(x), \brho_{r_k}(x_j))^2 }  \right), \quad x \in \sM.
\end{align}
To verify that $\bar D(x)$ is well-defined:
note that under the condition of the theorem, \eqref{eq:property-Nrho} holds, which implies that when $N \geq N_\rho$, $r_k \leq \tilde{r}_0$. Therefore, $\brho_{r_k}$ is well-defined by Lemma \ref{lemma:bar-rho-epsilon}(i), and consequently, $\bar{D}(x)$ is also well-defined.

Then, by the triangle inequality, 
	\begin{align} \label{eq:proof-GL-rw}
		\left| L_\rw f(x) -  \sL_pf(x) \right| & \leq 
		 \left| L_\un f(x)\right|  \left| \frac{1}{D(x)} - \frac{1}{\E \bar{D}(x)}\right| 
		+ \left| L_\un f(x) -  \sL_pf(x) \right| \frac{1}{\E \bar{D}(x)}  \notag \\
		&~~~
    + \left|\sL_pf(x) \right| \left| \frac{1}{\E \bar{D}(x)} - 1\right| 
		=: R_1(x) + R_2(x) + R_3(x).
	\end{align}
To bound $R_1(x)$, we need the upper bounds of $\left| L_\un f(x)\right|$ and $|D(x) - \E \bar{D}(x)|$, which we will establish below.
 	For $R_2(x)$, $\left| L_\un f(x) -  \sL_pf(x) \right|$ has been bounded
	 in Theorem \ref{thm:conv-un-Laplacian-case1};
	For $R_3(x)$, we will use the $O(1)$ upper bound of $|\sL_pf(x)|$ and also derive a bound  of $|\E \bar{D}(x) - 1|$.
 All the control of $D(x)$, $\bar D(x)$ will be using the fact that $D(x) \approx \bar D(x) \approx \E \bar D(x) \approx 1$,
 where $D(x) \approx \bar D(x)$ is by the replacement of $\hrho$ with $\brho_{r_k}$, and $\bar D(x) \approx \E \bar D(x)$ holds
 because $\bar D(x)$ is an independent sum,
 and lastly, $\E \bar D(x) \approx 1$ is by expansion of the kernel integral at small $\epsilon$.

\vspace{5pt}	
	\noindent \underline{Bound of $| D(x) - \bar{D}(x)|$.}  
	
	We introduce the following lemma which holds 	
	when $k_0$ is differentiable, to be proved after we finish the theorem proof. 
	\begin{lemma}\label{lemma:step1-diff-degree}
		Under the same conditions of Lemma \ref{lemma:step1-diff}, for any $\epsilon > 0$ and any $x \in \sM$, 
		\begin{align}   \label{eq:lemma-step1-diff-degree}
			\left| D(x) - \bar{D}(x)\right|  &\leq C \varepsilon_{\rho,k}\;\frac{1}{N}\sum_{j=1}^N \epsilon^{-d/2}\tilde{k}_1\left(  \frac{\| x - x_j\|^2}{ \epsilon \phi( \brho_{r_k}(x), \brho_{r_k}(x_j))^2 } \right),
		\end{align}
		where $C = 6L_\phi / m_0[k_0]$ is a constant depending on $k_0$ and $\phi$, and
		\begin{align} \label{eq:def-k1-tilde}
			\tilde{k}_1(\eta) \coloneqq a_1[k_0]  \eta \exp( - \frac{a[k_0]}{2} \eta), \quad \eta \geq 0.
		\end{align}
	\end{lemma}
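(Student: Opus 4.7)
The plan is to mimic the proof of Lemma \ref{lemma:step1-diff}, keeping only the part of that argument corresponding to the term $M_1(x)$. The term $M_2(x)$ in the proof of Lemma \ref{lemma:step1-diff} arose from the prefactor $1/(\hrho(x)^2)$ present in $L_{\rm un}$ but absent from $D(x)$, so it drops out; the normalization $1/((m_2/2) N \epsilon \hrho(x)^2)$ is replaced by $1/(m_0 N)$, which ultimately explains the change in the multiplicative constant from $C = 6(1+3L_\phi)/m_2[k_0]$ to $C = 6L_\phi/m_0[k_0]$.

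Concretely, I would start from the triangle inequality
\[
\left| D(x) - \bar D(x)\right|
\le \frac{1}{m_0 N}\sum_{j=1}^N \epsilon^{-d/2}\left|
k_0\!\left(\frac{\|x-x_j\|^2}{\epsilon\,\phi(\hrho(x),\hrho(x_j))^2}\right)
- k_0\!\left(\frac{\|x-x_j\|^2}{\epsilon\,\phi(\brho_{r_k}(x),\brho_{r_k}(x_j))^2}\right)\right|,
\]
and then apply the mean-value theorem to $k_0$ exactly as in the derivation of \eqref{eq:bound-k0-diff-1} inside the proof of Lemma \ref{lemma:step1-diff}. The hypotheses of Lemma \ref{lemma:step1-diff}, namely $r_k\le r_0$ and $\varepsilon_{\rho,k}\le\min\{\delta_\phi,0.05/L_\phi\}$, are exactly the hypotheses here, so the bound \eqref{eq:bound-1/phi-hrho-rela} on the relative difference of $\phi^{-2}$ applies, giving $\xi\ge 1/(2\phi(\brho_{r_k}(x),\brho_{r_k}(x_j))^2)$ and a factor $6L_\phi\varepsilon_{\rho,k}$ in front. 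Combined with the exponential decay bound $|k_0'(\eta)|\le a_1[k_0]\exp(-a[k_0]\eta)$ from Assumption \ref{assump:k0-smooth}(ii) and the monotonicity of the exponential, this yields the pointwise estimate
\[
\left| k_0\!\left(\frac{\|x-x_j\|^2}{\epsilon\,\phi(\hrho(x),\hrho(x_j))^2}\right)
- k_0\!\left(\frac{\|x-x_j\|^2}{\epsilon\,\phi(\brho_{r_k}(x),\brho_{r_k}(x_j))^2}\right)\right|
\le 6L_\phi\,\varepsilon_{\rho,k}\,\tilde k_1\!\left(\frac{\|x-x_j\|^2}{\epsilon\,\phi(\brho_{r_k}(x),\brho_{r_k}(x_j))^2}\right),
\]
with $\tilde k_1$ as defined in \eqref{eq:def-k1-tilde}. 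Plugging this back into the triangle inequality bound gives \eqref{eq:lemma-step1-diff-degree} with $C = 6L_\phi/m_0[k_0]$.

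There is no real obstacle here: the argument is a direct specialization of the first half of the proof of Lemma \ref{lemma:step1-diff} to the case where the test function is effectively the constant function (so that only the kernel-difference piece contributes and the extra $k_1$ arising from $M_2$ is not needed). The statement is purely deterministic, in parallel with Lemma \ref{lemma:step1-diff}, so no probabilistic argument is required at this stage; stochastic control will enter later when the right-hand side of \eqref{eq:lemma-step1-diff-degree} is bounded as an independent sum in the subsequent analysis of $D(x)$.
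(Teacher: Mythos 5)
Your proposal is correct and takes essentially the same route as the paper: both start from the triangle inequality \eqref{eq:bound-D-Dbar-tri-ineq} on $|D(x)-\bar D(x)|$, invoke the kernel-difference bound \eqref{eq:bound-k0-diff-1} (the MVT/exponential-decay estimate with the factor $6L_\phi\varepsilon_{\rho,k}$) established inside the proof of Lemma \ref{lemma:step1-diff}, and recognize the resulting expression as $\tilde k_1$, giving $C=6L_\phi/m_0[k_0]$. Your accompanying observation that only the $M_1$-type piece survives because $D(x)$ lacks the $\hrho(x)^{-2}$ prefactor is the right intuition for why the constant simplifies.
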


	We claim that for $k_0$ satisfying Assumption \ref{assump:k0-smooth} and $\phi$ satisfying Assumption \ref{assump:phi-diff}(i)(iv),
	when $N$ is sufficiently large with the threshold depending on $(\sM, p, k_0, \phi)$, 
	for any $r \in [0, \tilde{r}_0]$ and any $x \in \sM$, w.p. $\ge 1-2N^{-10}$,
	\begin{align}\label{eq:bound-k1-sum-D}
		\frac{1}{N}\sum_{j=1}^N \epsilon^{-d/2} \tilde{k}_1\left(  \frac{\| x - x_j\|^2}{ \epsilon \phi( \brho_r(x), \brho_r(x_j))^2 } \right) =  O\left( 1 \right).  
	\end{align}
	We denote the good event as $E_3 $.
	Here, the threshold for $N$ and the constant in the big-$O$ notation are uniform for both $r$ and $x$. 
	The constant in $O(1)$ depends on $\phi$ only through $c_{\max}$.
	We will apply the claim at $r = r_k$ and the fixed point $x$ as in the theorem.

 Suppose $N$ is large enough, exceeding both $N_\rho$ and the large-$N$ threshold of the claim \eqref{eq:bound-k1-sum-D}, and we consider under the good event $E_\rho \cap E_3 $ that happens w.p. $\ge 1-3N^{-10}$.
This ensures that \eqref{eq:property-Nrho} holds, and then Lemma \ref{lemma:step1-diff-degree} applies,
and the claim \eqref{eq:bound-k1-sum-D}, assumed to be true, also applies at $r = r_k$ and $x$ as in the theorem. Together, \eqref{eq:lemma-step1-diff-degree} and \eqref{eq:bound-k1-sum-D} imply that 
	\begin{align} \label{eq:bound-D-Dbar}
		\left|D(x)-\bar{D}(x)\right|&= O\left( \varepsilon_{\rho,k} \right).
	\end{align}
	
	Below, we prove the claim \eqref{eq:bound-k1-sum-D}.
	We denote by
	\[ Y_j \coloneqq \epsilon^{-d/2} \tilde{k}_1\left(  \frac{\| x - x_j\|^2}{ \epsilon \phi( \brho_r(x), \brho_r(x_j))^2 } \right).\]
	Note that $\{Y_j\}_{j=1}^N$ are i.i.d. random variables.

	For $\E Y_j$, we first note that $\tilde{k}_1$  satisfies Assumption \ref{assump:k0-smooth} with $a_0[\tilde{k}_1] \coloneqq 4 a_1[k_0] / ( e a[k_0] )$ and the decay constant $a[\tilde{k}_1] \coloneqq a[k_0] / 4$. Then, by Assumption \ref{assump:k0-smooth}(ii),
	\begin{align} 
		\E Y_j  &= \int_{\sM}\epsilon^{-d/2} \tilde{k}_1\left(  \frac{\| x - y \|^2}{ \epsilon \phi( \brho_r(x), \brho_r(y))^2 } \right) p(y) dV(y) \notag  \\
		&\leq \int_{\sM}\epsilon^{-d/2} k_5 \left(  \frac{\| x - y \|^2}{ \epsilon c_{\max}^2 \max\{ \brho_r(x), \brho_r(y)\}^2 } \right) p(y) dV(y), \label{eq:bound-bias-degree1}
	\end{align}
	where
	\[ k_5(\eta) \coloneqq a_0[\tilde{k}_1]  \exp( - a[\tilde{k}_1] \eta ), \quad \eta \geq 0. \]
	Here,  in the second inequality, we used that $k_5$ is monotonic and that $\phi$ satisfies Assumption \ref{assump:phi-diff}(iv).
	$m_0[k_5] > 0$ is a constant depending on $\sM$ and $k_0$. 
 We now apply Lemma \ref{lemma:right-operator-degree}(ii) to compute the integral in \eqref{eq:bound-bias-degree1}.
 Recall the definition of $\epsilon_D$ in \eqref{eq:def-epsD}.
	Let $\epsilon_1 \coloneqq \epsilon_D(k_5, 1) / c_{\max}^2 > 0$,  then
	for $\epsilon < \epsilon_1$, we have 
	$c_{\max}^2 \epsilon < \epsilon_D(k_5, 1)$, which is the small-$\epsilon$ threshold required by Lemma \ref{lemma:right-operator-degree}(ii) (with $\epsilon$ in the lemma being $\epsilon c_{\max}^2$ and $k_0$ in the lemma being $k_5$).
	Thus, by \eqref{eq:ker-expan-degree-slowI} in Lemma \ref{lemma:right-operator-degree}(ii) (with $i=0$), we have
	\begin{align} \label{eq:bound-bias-5}
		\E Y_j \leq c_{\max}^d \left( m_0[k_5] p\brho_r^d(x) + R_{\epsilon,1}(x,r) \right), \quad \sup_{x\in \sM,r \in [0,\tilde{r}_0]}|R_{\epsilon,1}(x,r) | 
		= 
		{O^{[p]}(\sqrt{\epsilon})},
	\end{align}
	where the constant dependence in 
	$
	{O^{[p]}(\sqrt{\epsilon})}$ can be found in Lemma \ref{lemma:right-operator-degree}(ii) and this constant depends on $\phi$ only through $c_{\max}$.
	By a similar argument that obtains \eqref{eq:bound-bias-1}, 
	there is $\epsilon_2 > 0$ such that for $\epsilon < \epsilon_2$ (this threshold is uniform for $x$ and $r$),
	\begin{align} \label{eq:bound-bias-9}
		\E Y_j \leq C_{E,2} (\sM, k_0, \phi), 
	\end{align}
	where $C_{E,2} (\sM, k_0, \phi) = 4 c_{\max}^d  m_0[k_5]$. 
	
	For $\var(Y_j)$,
	note that $\tilde{k}_1^2$ satisfies Assumption \ref{assump:k0-smooth} with $a_0[\tilde{k}_1^2] \coloneqq a_0[\tilde{k}_1]^2$ and the decay constant $a[\tilde{k}_1^2] \coloneqq 2 a[\tilde{k}_1] = a[k_0] / 2 $.
	Then, by the same argument to bound $\E Y_j$ above, we have
	\begin{align} \label{eq:bound-var-19}
		\epsilon^{d/2} \var(Y_j) \leq \epsilon^{d/2} \E Y_j^2 \leq \int_{\sM}\epsilon^{-d/2} k_6 \left(  \frac{\| x - y \|^2}{ \epsilon c_{\max}^2 \max\{ \brho_r(x), \brho_r(y)\}^2 } \right) p(y) dV(y), 
	\end{align}
	where 
	\[ k_6(\eta) \coloneqq a_0[\tilde{k}_1^2]  \exp( - a[\tilde{k}_1^2] \eta ), \quad \eta \geq 0. \]
	$m_0[k_6] > 0$ is a constant depending on $\sM$ and $k_0$. 
	We apply Lemma \ref{lemma:right-operator-degree}(ii) to compute the integral in \eqref{eq:bound-var-19}.
	Let $\epsilon_3 \coloneqq \epsilon_D(k_6, 1) / c_{\max}^2 > 0$,  then
	for $\epsilon < \epsilon_3$, we have $c_{\max}^2 \epsilon < \epsilon_D(k_6, 1)$, which is the small-$\epsilon$ threshold required by Lemma \ref{lemma:right-operator-degree}(ii) (with $\epsilon$ in the lemma being $\epsilon c_{\max}^2$ and $k_0$ in the lemma being $k_6$).
	By \eqref{eq:ker-expan-degree-slowI} in Lemma \ref{lemma:right-operator-degree}(ii) (with $i=0$), we have
	\begin{align} \label{eq:bound-var-12}
		\epsilon^{d/2} \var(Y_j) \leq c_{\max}^d \left( m_0[k_6] p\brho_r^d(x) + R_{\epsilon,2}(x,r) \right), \quad \sup_{x\in \sM,r \in [0,\tilde{r}_0]} |R_{\epsilon,2}(x,r)| 
		=  
		{O^{[p]}(\sqrt{\epsilon})},
	\end{align}
	where the constant in 
	$
	{O^{[p]}(\sqrt{\epsilon})}$ depends on $\phi$ only through $c_{\max}$.
	By a similar argument as used to obtain \eqref{eq:bound-var-6}, there is $\epsilon_4>0$ such that for $\epsilon < \epsilon_4$ (this threshold is uniform for $x$ and $r$),
	\begin{align} \label{eq:bound-var-13}
		\var(Y_j) \leq \mu_Y \coloneqq C_{V,3} (\sM, k_0, \phi) \epsilon^{-d/2},  
	\end{align}
	where $C_{V,3} (\sM, k_0, \phi) = 4 c_{\max}^d  m_0[k_6]$. 
	
	Moreover, $|Y_j| \leq L_Y \coloneqq C_{B,3} (\sM, k_0) \epsilon^{-d/2}$, where $C_{B,3} (\sM, k_0) = a_0[\tilde{k}_1]$.
	By the condition $\epsilon^{d/2}= \Omega(\log N/N)$ in the theorem,
	when $N$ is sufficiently large whose threshold depends on $(\sM, k_0, \phi)$, we have $ 40 \frac{\log N}{N}  \leq 9 \frac{C_{V,3} }{(C_{B,3} )^2}  \epsilon^{d/2} $, and then by Bernstein inequality, w.p. $\ge 1-2N^{-10}$,
	\begin{align} \label{eq:concen8}
		0 \leq  \frac{1}{N} \sum_{j = 1}^N Y_j   \leq \E Y_j + \sqrt{40 \frac{\log N}{N}  \mu_Y} \leq C_{E,2}  + \sqrt{40 C_{V,3} }  \sqrt{ \frac{\log N}{N \epsilon^{d/2}}}.
	\end{align}
	Since $\epsilon^{d/2}= \Omega(\log N/N)$, 
	for sufficiently large $N$, we have 
	$ \frac{\log N}{N \epsilon^{d/2}} \leq 1$, implying that
	\begin{align} \label{eq:concen9}
		0 \leq \frac{1}{N} \sum_{j = 1}^N Y_j \leq C_{E,2}  + \sqrt{40 C_{V,3} }= O (1) .
	\end{align}
	The constant in the big-$O$ notation depends on $\sM, k_0, \phi$ through $C_{E,2} $ and $C_{V,3} $ and is uniform for $x$ and $r$.

	Since $\epsilon = o(1)$, when $N$ exceeds some threshold,
	$ \epsilon <  \min\{  \epsilon_1, \epsilon_2, \dots, \epsilon_4 \}$, 
	which is a constant depending on $(\sM, p, k_0, \phi)$ and is uniform for $x$ and $r$, and then \eqref{eq:bound-bias-9}\eqref{eq:bound-var-12} along with the bound for $Y_j$ hold.
	The other large-$N$ thresholds are required by \eqref{eq:concen8}\eqref{eq:concen9}, which at most depend on $(\sM, k_0, \phi)$ and are uniform for $x$ and $r$.
	The overall large-$N$ threshold required by the claim in \eqref{eq:bound-k1-sum-D} depends on ($\sM$, $p$, $k_0$, $\phi$).

	\vspace{5pt}
	\noindent
	\underline{Bounds of $| \bar{D}(x) - \E  \bar{D}(x) |$ and $|\E  \bar{D}(x) - 1 |$.}  
	We claim that 
	when $N$ is sufficiently large with the threshold depending on $(\sM, p, k_0, \phi)$, for any $r \in [0, \tilde{r}_0]$ and any $x \in \sM$, w.p. $\ge 1-2N^{-10}$,
	\begin{align}\label{eq:bound-k1-sum-D2}
		&\frac{1}{N}\sum_{j=1}^N \epsilon^{-d/2} k_0\left(  \frac{\| x - x_j\|^2}{ \epsilon \phi( \brho_r(x), \brho_r(x_j))^2 } \right) 
		= \int_\sM \epsilon^{-d/2} k_0\left(  \frac{\| x - y\|^2}{ \epsilon \phi( \brho_r(x), \brho_r(y))^2 } \right) p(y) dV(y)  + O\left( \sqrt{\frac{\log N}{N \epsilon^{d/2}}}\right), \notag \\
		&\int_\sM \epsilon^{-d/2} k_0\left(  \frac{\| x - y\|^2}{ \epsilon \phi( \brho_r(x), \brho_r(y))^2 } \right) p(y) dV(y)  
		=  m_0 p(x) \brho_r^d(x) + O^{[p]}(\epsilon) .
	\end{align}
	Here, the threshold for $N$ and the constants in the big-$O$ notations are uniform for $r$ and $x$. 
	We apply the claim at $r = r_k$ and $x\in \sM$ in the theorem, and denote the good event as $E_4 $.
	
	Observe that $\bar{D}(x)$ equals the l.h.s. of \eqref{eq:bound-k1-sum-D2} divided by $m_0$, where the $r$ in  \eqref{eq:bound-k1-sum-D2} takes the value of $r_k$.
	Besides, by Lemma \ref{lemma:bar-rho-epsilon}(iii), we have
	$ \| p\brho_{r_k}^d - 1 \|_\infty = O^{[p]}(r_k)$.  
	Putting together, we have
	\begin{align} \label{eq:bound-Dbar-1}
		\left| \bar{D}(x) - \E \bar{D}(x) \right|= O\left( \sqrt{\frac{\log N}{N \epsilon^{d/2}}}\right), \quad \left| \E \bar{D}(x) - 1\right| = O^{[p]}\left( \epsilon + r_k \right),
	\end{align}
	which holds when $N$ exceeds the required threshold (and $N_\rho$) and under the good event $E_4 $ that happens w.p. $\ge 1-2N^{-10}$.
		
	Below, we prove the claim.
	We denote by
	\[ Y_j \coloneqq \epsilon^{-d/2}k_0\left(  \frac{\| x - x_j\|^2}{ \epsilon \phi( \brho_r(x), \brho_r(x_j))^2 } \right).\]
	Note that $\{Y_j\}_{j=1}^N$ are i.i.d. random variables.

	For $\E Y_j$, let $\epsilon_1 \coloneqq  \epsilon_D(k_0, c_{\max}) > 0$, where $\epsilon_D$ is defined in \eqref{eq:def-epsD}. $\epsilon_1$ equals the small-$\epsilon$ threshold required by Lemma \ref{lemma:right-operator-degree}(i).
	Thus, for $\epsilon < \epsilon_1$, by \eqref{eq:ker-expan-degree-fast} in Lemma \ref{lemma:right-operator-degree}(i) (with $i=0$), 
	\begin{align} \label{eq:bound-bias-10}
		\E Y_j  &= \int_{\sM}\epsilon^{-d/2} k_0\left(  \frac{\| x - y \|^2}{ \epsilon \phi( \brho_r(x), \brho_r(y))^2 } \right) p(y) dV(y) 
		= m_0 p \brho_r^d(x) + O^{[p]}(\epsilon).
	\end{align}
	
	For $\var(Y_j)$, note that $k_0^2$ satisfies Assumption \ref{assump:k0-smooth} as explained in the proof of Proposition \ref{prop:step2-diff}, and it satisfies Assumption \ref{assump:k0-smooth}(ii) with $a_0[k_0^2] = a_0[k_0]^2$ and $a[k_0^2] = 2 a[k_0]$. 
	We define
	\[ k_7(\eta) \coloneqq a_0[k_0^2]  \exp( - a[k_0^2] \eta ), \quad \eta \geq 0. \]
	Using the same argument that obtains \eqref{eq:bound-var-13}, there exists $\epsilon_2 > 0$ such that for $\epsilon < \epsilon_2$ (this threshold is uniform for $x$ and $r$), the following holds:
	\begin{align} \label{eq:bound-var-14}
		 \var(Y_j) \leq \mu_Y \coloneqq C_{V,4} (\sM, k_0, \phi) \epsilon^{-d/2},  
	\end{align}
	where $C_{V,4} (\sM, k_0, \phi) = 4 c_{\max}^d  m_0[k_7]$.
	
	Besides, $|Y_j| \leq L_Y \coloneqq C_{B,4} (k_0) \epsilon^{-d/2}$, where $C_{B,4} (k_0) = \|k_0\|_\infty$.
	By the same concentration argument to obtain \eqref{eq:bound-k1-sum-D},
	when $N$ is sufficiently large whose threshold depends on $(\sM, k_0, \phi)$, we have $ 40 \frac{\log N}{N}  \leq 9 \frac{C_{V,4} }{(C_{B,4} )^2}  \epsilon^{d/2} $, and then w.p. $\ge 1-2N^{-10}$,
	\begin{align} \label{eq:concen10}
		\frac{1}{N} \sum_{j = 1}^N Y_j   =  \E Y_j + O\left( \sqrt{\frac{\log N}{N}  \mu_Y} \right) =  \E Y_j  + O\left( \sqrt{\frac{\log N}{N \epsilon^{d/2}}}\right).
	\end{align}
	The constants in the big-$O$ notations depend on $\sM, k_0, \phi$ and are uniform for $x$ and $r$. 
	
	Since $\epsilon = o(1)$, when $N$ exceeds certain threshold,
	$ \epsilon <  \min\{  \epsilon_1, \epsilon_2\}$, 
	which is a constant depending on $(\sM, p, k_0, \phi)$ and is uniform for $x$ and $r$, and then \eqref{eq:bound-bias-10}\eqref{eq:bound-var-14} along with the bound for $Y_j$ hold.
	Another large-$N$ threshold is required by the Bernstein inequality to prove \eqref{eq:concen10}, which depends on $(\sM, k_0, \phi)$ and is uniform for $x$ and $r$.
	The overall large-$N$ threshold required by the claim \eqref{eq:bound-k1-sum-D2} depends on $(\sM, p, k_0, \phi)$.

	\vspace{5pt}
	\noindent
	\underline{Bound of $| L_\un f(x)|$.}  
	When $N \ge N_\rho$ and under $E_\rho$, it holds that $\varepsilon_{\rho,k} \le 0.05/L_\phi$ by \eqref{eq:property-Nrho}.
	By Lemma \ref{lemma:lb-ub-E_phi}, $\varepsilon_{\rho,k} \le E_{\phi,k} \le 0.1$. This implies that $\hrho(x) \ge 0.9 \brho_{r_k}(x)$ and $\phi(\hrho(x), \hrho(x_j)) \le 1.1 \phi(\brho_{r_k}(x), \brho_{r_k}(x_j))$ for any $x \in \sM$ and any $j = 1,\dots,N$.
	Therefore, for the fixed $x \in \sM$ in the theorem, we can verify that
	\begin{align} \label{eq:bound-L-un-1}
		| L_\un f(x)| &\le \frac{1}{\frac{m_2}{2}N\epsilon \hrho(x)^2 }\sum_{j=1}^N \epsilon^{-d/2}
		k_0 \left(  \frac{\| x - x_j\|^2}{ \epsilon \phi(\hrho(x) ,  \hrho(x_j))^2 } \right) \left|f(x_j)-f(x) \right| \notag\\
		   &\le \frac{2}{0.9^2m_2} \frac{1}{\epsilon \brho_{r_k}(x)^2} \frac{1}{N} \sum_{j=1}^N \epsilon^{-d/2} a_0[k_0] \exp\left(  -\frac{ a[k_0] \, \| x - x_j\|^2}{ \epsilon \phi(\hrho(x) ,  \hrho(x_j))^2 } \right) \left|f(x_j)-f(x) \right| \notag \\
		   &\le \frac{3}{m_2} \frac{1}{\epsilon \brho_{r_k}(x)^2} \frac{1}{N} \sum_{j=1}^N \epsilon^{-d/2} k_8 \left(  \frac{\| x - x_j\|^2}{ \epsilon \phi(\brho_{r_k}(x) ,  \brho_{r_k}(x_j))^2 } \right)
		    \left|f(x_j)-f(x) \right|, 
	\end{align}
	where
	\begin{align} \label{eq:def-k8}
	k_8(\eta) \coloneqq  a_0[k_0] \exp\left( - \frac{a[k_0]}{1.1^2} \eta\right), \quad \eta \ge 0,
	\end{align}
	and $k_8$ satisfies Assumption \ref{assump:k0-smooth}.
The first inequality is by the definition of $L_\un f$;
	the second inequality is by the fact that $k_0$ satisfies Assumption \ref{assump:k0-smooth}(ii) and that $\hrho(x) \ge 0.9 \brho_{r_k}(x)$;
		and the last inequality holds because $\phi(\hrho(x), \hrho(x_j)) \le 1.1 \phi(\brho_{r_k}(x), \brho_{r_k}(x_j))$ and $\exp(-a[k_0]\eta)$ is monotonic.

	By the same arguments used to obtain the claim in \eqref{eq:bound-k1-sum}, when $N$ is sufficiently large 
	whose threshold depends on $(\sM, p, k_0, \phi)$, for any $r \in [0, \tilde{r}_0]$ and any $x \in \sM$, w.p. $\ge 1-2N^{-10}$, 
	\begin{align}\label{eq:bound-k8-sum}
		\frac{1}{N}\sum_{j=1}^N \epsilon^{-d/2}k_8\left(  \frac{\| x - x_j\|^2}{ \epsilon \phi( \brho_r(x), \brho_r(x_j))^2 } \right) \left|f(x_j)-f(x)\right| =  O\left(\|\nabla f\|_\infty p(x)^{-1/d}\sqrt{\epsilon}\right) + O(\| f \|_\infty \epsilon^{10}).  
	\end{align}
	Here, the threshold for $N$ and the constant in the big-$O$ notation are uniform for both $r$ and $x$. We will apply this claim at $r = r_k$ and $x$ in the theorem, and we denote the good event as $E_5 $.
	Then, using the similar arguments to obtain \eqref{eq:bound-k1-sum-2}, we conclude that under $E_\rho \cap E_5 $ and when $N$ exceeds $N_\rho$ and the threshold required by \eqref{eq:bound-k8-sum}, we have
	\begin{align} \label{eq:bound-L-un}
		\left| L_\un f(x)\right| = O\left(\|\nabla f\|_\infty p(x)^{1/d} \frac{1}{\sqrt{\epsilon}} \right) 
		+ O( \|f\|_\infty p(x)^{2/d}  \epsilon^{9}). 
	\end{align}

	\vspace{5pt}
	\noindent
	\underline{Bound of $| L_\rw f(x) - \sL_pf(x)|$.} 
	
	We first derive upper bounds of  $|D(x) - \E \bar{D}(x)|$ and lower bounds of $D(x)$ and $\E \bar{D}(x)$.
	We use the claims \eqref{eq:bound-D-Dbar} and \eqref{eq:bound-Dbar-1}.
	We denote by $N_1 $ the large-$N$ needed by the two claims;
	the claim in \eqref{eq:bound-D-Dbar} holds under the good event $E_\rho \cap E_3 $
	and the claim in \eqref{eq:bound-Dbar-1} needs the good event $E_4 $.
	Thus, when $N \ge N_1 $, under $E_\rho \cap E_3  \cap E_4 $ (which happens w.p. $\ge 1- 5N^{-10}$),
	\begin{align} \label{eq:bound-D-1}
		\left| D(x) - \E \bar{D}(x)\right|  = O(\varepsilon_{\rho,k})  + O\left( \sqrt{\frac{\log N}{N \epsilon^{d/2}}}\right).
	\end{align}	
Then, we can show that, when $N \ge N_1 $ also exceeds some $N_2 $ (which depends on $(\sM, p, k_0, \phi)$), and under the same good event which ensures that \eqref{eq:bound-D-1} holds, i.e. $E_\rho \cap E_3  \cap E_4 $,
	\begin{align}  \label{eq:lower-bound-D}
		D(x) \ge 1/2 \quad  \text{ and } \quad  \E \bar{D}(x) \ge 1/2.
	\end{align}
	This is because the bound of $\left| D(x) - \E \bar{D}(x)\right| $  in \eqref{eq:bound-D-1} and the bound of $ \left|  \E \bar{D}(x)-1 \right|$ in \eqref{eq:bound-Dbar-1} are $o^{[p]}(1)$: 
	Since $\log N \ll k \ll N $, 
	we have $\varepsilon_{\rho,k} = o^{[p]}(1)$ by \eqref{eq:vareps-rho-bound}.
    We also have $\epsilon^{d/2} = \Omega(\log N/N)$ which is assumed in the theorem condition. 
    These imply that the bound in \eqref{eq:bound-D-1} is $o^{[p]}(1)$.
    Furthermore, we also have $r_k = o(1)$ by the definition of $r_k$ in \eqref{eq:def-rk} and $k \ll N$, and $\epsilon = o(1)$ under the theorem condition, which imply that the bound of $ \left|  \E \bar{D}(x)-1 \right|$ in \eqref{eq:bound-Dbar-1} is $o^{[p]}(1)$.

We are ready to bound the three terms $R_1(x)$, $R_2(x)$, $R_3(x)$
in  \eqref{eq:proof-GL-rw} respectively. 

For $R_1(x)$, by the upper bound of $ \left| L_\un f(x)\right|$ in  \eqref{eq:bound-L-un}, the upper bound of $|D(x) - \E \bar{D}(x)|$ in \eqref{eq:bound-D-1}, and the lower bounds of $D(x), \E \bar{D}(x)$ in \eqref{eq:lower-bound-D}, we have that
\begin{align} \label{eq:bound-circ-1}
	R_1(x) = O\left( \|\nabla f \|_\infty p(x)^{1/d} 
	\left(  \frac{\varepsilon_{\rho,k}}{\sqrt{\epsilon}}    + \sqrt{\frac{\log N}{N\epsilon^{d/2+1}}} \right)  \right) + O^{[f,p]}\left( \epsilon^9 \left( \varepsilon_{\rho,k} + \sqrt{\frac{\log N}{N \epsilon^{d/2}}}\right) \right).
\end{align}
This holds when $N$ exceeds $\max\{N_1 , N_2 \}$, and the large-$N$ threshold required by \eqref{eq:bound-L-un}, denoted by $N_3 $, and under the good event $E_\rho \cap E_3  \cap E_4   \cap E_5 $.

For $R_2(x)$, by the upper bound of $\left| L_\un f(x) - \sL_pf(x) \right|$ in \eqref{eq:fast-rate} in Theorem \ref{thm:conv-un-Laplacian-case1} and the lower bound of $\E \bar{D}(x)$ in \eqref{eq:lower-bound-D}, we have that
\begin{align} \label{eq:bound-circ-2}
	R_2(x) = O^{[f,p]} \left(\epsilon  + r_k \right) +  O\left( \|\nabla f \|_\infty p(x)^{1/d} 
	\left(  \frac{\varepsilon_{\rho,k}}{\sqrt{\epsilon}}    + \sqrt{\frac{\log N}{N\epsilon^{d/2+1}}} \right)  \right).
\end{align}
This additionally needs the large-$N$ threshold required in Theorem \ref{thm:conv-un-Laplacian-case1}, which is denoted by $N_4 $ (which depends on $(\sM, p, k_0, \phi)$ as shown in the proof of the theorem);
the proof of Theorem \ref{thm:conv-un-Laplacian-case1} has defined two good events $E_1 $ and $E_2 $, and their intersection  happens w.p. $\ge 1- 5N^{-10}$.
This bound of $R_2(x)$ holds when $N \ge \max\{N_1 , N_2 , N_4 \}$ and under the good event $E_1  \cap E_2  $.

For $R_3(x)$, by the upper bound of $|\E \bar{D}(x)-1|$ in \eqref{eq:bound-Dbar-1} and the lower bound of $\E \bar{D}(x)$ in \eqref{eq:lower-bound-D}, we have that
\begin{align} \label{eq:bound-circ-3}
	R_3(x) = O^{[p]} \left( \|\sL_pf\|_\infty \ \left(  \epsilon + r_k \right)\right) = O^{[f,p]} \ \left(  \epsilon + r_k \right).
\end{align}
Here,
$ \|\sL_pf \|_\infty$ is an $O^{[f,p]}(1)$ constant that
involves $\|\Delta f \|_\infty$, $\|\nabla f \|_\infty$, $\|\nabla p \|_\infty$ and $p_{\min}$.
This bound of $R_3(x)$ holds when $N \ge N_1 $. 

Combining the bounds of $R_1(x)$,  $R_2(x)$, and $R_3(x)$, we have that when $N \geq  \max\{ N_1 , N_2 , N_3 , N_4 \}$, under the good event $E_\rho \cap E_1  \cap E_2  \cap E_3  \cap E_4  \cap E_5 $ that happens w.p. $\ge 1-11N^{-10}$,
\begin{align*}
	 &| L_\rw f(x) - \sL_pf(x)|
	= O^{[f,p]} \left(\epsilon  + r_k \right) +  O\left( 
	\|\nabla f \|_\infty p(x)^{1/d} 
	\left(  \frac{\varepsilon_{\rho,k}}{\sqrt{\epsilon}}    + \sqrt{\frac{\log N}{N\epsilon^{d/2+1}}} \right)  \right) \\
	&~~~~+ O^{[f,p]}\left( \epsilon^9 \left( \varepsilon_{\rho,k} + \sqrt{\frac{\log N}{N \epsilon^{d/2}}}\right) \right).
\end{align*}
Observe that $O^{[f,p]}( \epsilon^9 ( \varepsilon_{\rho,k} + \sqrt{\frac{\log N}{N \epsilon^{d/2}}}) ) = O^{[f,p]}(\epsilon)$, 
we have proved that the bound in \eqref{eq:fast-rate} holds for $|L_\rw f(x) - \sL_p f(x)|$. 
The overall large-$N$ threshold depends on $(\sM, p, k_0, \phi)$.
\end{proof}

	\begin{proof}[Proof of Lemma \ref{lemma:step1-diff-degree}]

By definitions of $D(x)$ in \eqref{eq:def-D} and $\bar{D}(x)$ in \eqref{eq:def-Dbar},
	 \begin{align} \label{eq:bound-D-Dbar-tri-ineq}
 	\left| D(x) - \bar{D}(x) \right| \le \frac{1}{m_0 N} \sum_{j=1}^N \epsilon^{-d/2}  \left| k_0 \left(  \frac{\| x - x_j\|^2}{ \epsilon \phi(\hrho(x), \hrho(x_j))^2 } \right)- k_0 \left(  \frac{\| x - x_j\|^2}{ \epsilon \phi( \brho_{r_k}(x), \brho_{r_k}(x_j))^2 } \right)\right|. 
 	\end{align}
 	This holds for $k_0$ satisfying Assumption \ref{assump:k0-smooth} 
	and $\phi$ satisfying Assumption \ref{assump:phi-diff}(i)(iii).

Next, since  Lemma \ref{lemma:step1-diff} holds, we recall the bound of $\left| k_0 \left(  \frac{\| x - x_j\|^2}{ \epsilon \phi(\hrho(x), \hrho(x_j))^2 } \right)- k_0 \left(  \frac{\| x - x_j\|^2}{ \epsilon \phi( \brho_{r_k}(x), \brho_{r_k}(x_j))^2 } \right)\right|$ in \eqref{eq:bound-k0-diff-1} in the proof of Lemma \ref{lemma:step1-diff}. Then, by plugging the bound in \eqref{eq:bound-k0-diff-1} into the inequality \eqref{eq:bound-D-Dbar-tri-ineq}, we have that
			\begin{align*}
			\left| D(x) - \bar{D}(x) \right| &\le \frac{1}{m_0 N} \sum_{j=1}^N \epsilon^{-d/2}  6L_\phi \varepsilon_{\rho,k} \, a_1[k_0] \frac{\| x - x_j\|^2}{ \epsilon \phi( \brho_{r_k}(x), \brho_{r_k}(x_j))^2 }    \exp \left(  -\frac{a[k_0]}{2}\frac{\| x - x_j\|^2}{ \epsilon \phi( \brho_{r_k}(x), \brho_{r_k}(x_j))^2 } \right)\\
			&= \frac{6L_\phi}{m_0} \; \varepsilon_{\rho,k} \; \frac{1}{N}
			\sum_{j=1}^N \epsilon^{-d/2}\tilde{k}_1\left(  \frac{\| x - x_j\|^2}{ \epsilon \phi( \brho_{r_k}(x), \brho_{r_k}(x_j))^2 } \right),
		\end{align*}
		where $\tilde{k}_1$ is as defined in \eqref{eq:def-k1-tilde}.
		This proves the lemma.
		\end{proof}

\subsection{Proof of Theorem \ref{thm:conv-un-Laplacian-hrho-norm}}

The proof of Theorem \ref{thm:conv-un-Laplacian-hrho-norm} parallels that of Theorem \ref{thm:conv-un-Laplacian-case1}. 
We first replace $\hrho$ with $\brho_{r_k}$ in the definition of $\widetilde{L}_\un$ in \eqref{eq:def-L-un-tilde}, 
that is, we define
\begin{equation}\label{eq:def-Wbar-GL-norm}
	\bar{L}_\un f(x)   \coloneqq \frac{1}{\frac{m_2}{2}N\epsilon}       \sum_{j=1}^N \epsilon^{-d/2}k_0\left(\frac{\|x-x_j\|^2}{\epsilon \phi( \brho_{r_k}(x), \brho_{r_k}(x_j))^2  }\right)\frac{f(x_j)-f(x)}{  \phi( \brho_{r_k}(x), \brho_{r_k}(x_j))^2 }.  
\end{equation}
The notation $\bar{L}_\un$ here is only used in the proof of Theorem \ref{thm:conv-un-Laplacian-hrho-norm} and differs from \eqref{eq:def-Wbar-GL}.

\begin{proof}[Proof of Theorem \ref{thm:conv-un-Laplacian-hrho-norm}]

	If $\| \nabla f \|_\infty = 0$, then $f$ is a constant function,
	and then the theorem holds trivially since 
 $\widetilde{L}_\un f\equiv 0$ and $\Delta_pf \equiv 0$.
	Below, we consider when $\| \nabla f \|_\infty > 0$.
	Under the condition of the theorem, \eqref{eq:property-Nrho} holds.
	Therefore, when $N \geq N_\rho$, $r_k \leq \tilde{r}_0$ (defined in \eqref{eq:def-r0-tilder0}), which implies that $\brho_{r_k}$ is well-defined by Lemma \ref{lemma:bar-rho-epsilon}(i).
	Then, $\bar{L}_\un$ defined in \eqref{eq:def-Wbar-GL-norm} is also well-defined.

	\vspace{5pt}
	\noindent 
	\underline{Step 1.} to bound $| \widetilde{L}_\un f(x)  - \bar{L}_\un f(x) |$. 
	
	Similar to Lemma \ref{lemma:step1-diff} used in the proof of Proposition \ref{prop:step1-diff}, the following lemma bounds $| \widetilde{L}_\un f(x)-\bar{L}_\un f(x)|$ by $\varepsilon_{\rho,k}$ multiplied with a discrete sum that only involves the population bandwidth function $\brho_{r_k}$.
		The proof of Lemma \ref{lemma:step1-diff-Wtilde} is postponed after we finish Theorem \ref{thm:conv-un-Laplacian-hrho-norm}.

	\begin{lemma}\label{lemma:step1-diff-Wtilde}
	Under the same conditions of Lemma \ref{lemma:step1-diff}, for any $\epsilon > 0$, any $f: \sM \to \R$, and any $x \in \sM$, 
		\begin{align}   \label{eq:lemma-step1-diff-Wtilde}
			\left|\widetilde{L}_\un f(x)-\bar{L}_\un f(x)\right|&\leq 
			\frac{C \varepsilon_{\rho,k}}{\epsilon} \;\frac{1}{N}\sum_{j=1}^N \epsilon^{-d/2}k_1\left(  \frac{\| x - x_j\|^2}{ \epsilon \phi( \brho_{r_k}(x), \brho_{r_k}(x_j))^2 } \right) \frac{\left|f(x_j)-f(x)\right|}{ \phi( \brho_{r_k}(x), \brho_{r_k}(x_j))^2  },  
		\end{align}
		where $C = 18 L_\phi / m_2[k_0]$ is a constant depending on $k_0$ and $\phi$, and $k_1$ is as defined in \eqref{eq:def-k1}.
		
	\end{lemma}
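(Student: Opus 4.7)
The plan is to adapt the two-term decomposition argument from the proof of Lemma \ref{lemma:step1-diff}. Since $\widetilde{L}_\un f(x)$ and $\bar{L}_\un f(x)$ (as in \eqref{eq:def-Wbar-GL-norm}) share the common prefactor $2/(m_2 N \epsilon)$ and differ only in the per-summand weight $k_0(\cdot)/\phi(\cdot,\cdot)^2$ evaluated at $\hrho$ versus $\brho_{r_k}$, I would first write
$\widetilde{L}_\un f(x) - \bar{L}_\un f(x) = M_1(x) + M_2(x)$
by adding and subtracting an intermediate term in which $k_0$ is evaluated at the $\brho_{r_k}$-bandwidth but the $1/\phi(\hrho,\hrho)^2$ factor is kept. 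Here $M_1(x)$ captures the discrepancy inside the kernel argument, and $M_2(x)$ captures the discrepancy in the $1/\phi(\cdot,\cdot)^2$ weight. The assumptions of the lemma together with Lemma \ref{lemma:lb-ub-E_phi} guarantee $2L_\phi\varepsilon_{\rho,k}\le 0.1$, so all ratio-type quantities stay comfortably bounded away from zero.

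For $M_1(x)$, I would directly re-use the pointwise estimate \eqref{eq:bound-k0-diff-1} derived in the proof of Lemma \ref{lemma:step1-diff}: the mean value theorem applied to $k_0$, combined with the decay in Assumption \ref{assump:k0-smooth}(ii) and the ratio bound \eqref{eq:bound-1/phi-hrho-rela}, yields an $a_1[k_0]\,\eta_j\exp(-(a[k_0]/2)\eta_j)$ envelope scaled by $L_\phi\varepsilon_{\rho,k}$, where $\eta_j \coloneqq \|x-x_j\|^2/(\epsilon\phi(\brho_{r_k}(x),\brho_{r_k}(x_j))^2)$. The remaining factor $1/\phi(\hrho,\hrho)^2$ can then be absorbed into $1/\phi(\brho_{r_k},\brho_{r_k})^2$ up to a harmless absolute constant, using $\phi(\hrho(x),\hrho(x_j))\ge(1-2L_\phi\varepsilon_{\rho,k})\phi(\brho_{r_k}(x),\brho_{r_k}(x_j))\ge 0.9\,\phi(\brho_{r_k}(x),\brho_{r_k}(x_j))$ obtained from \eqref{eq:bound-phi-rela-0.1}.

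For $M_2(x)$, the bound \eqref{eq:bound-1/phi-hrho-rela} directly controls $|1/\phi(\hrho,\hrho)^2-1/\phi(\brho_{r_k},\brho_{r_k})^2|$ by $6L_\phi\varepsilon_{\rho,k}/\phi(\brho_{r_k},\brho_{r_k})^2$, which I would then pair with the envelope $k_0(\eta)\le a_0[k_0]\exp(-a[k_0]\eta)$ from Assumption \ref{assump:k0-smooth}(ii). Adding $M_1$ and $M_2$ and invoking the monotonicity $\exp(-a[k_0]\eta)\le\exp(-(a[k_0]/2)\eta)$, the two envelopes coalesce into
$(a_1[k_0]\,\eta+a_0[k_0])\exp(-(a[k_0]/2)\eta)=k_1(\eta)$,
producing exactly the right-hand side of \eqref{eq:lemma-step1-diff-Wtilde}.

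The only real work is bookkeeping: tracking the numerical constants $2/m_2[k_0]$, $6L_\phi$, and the $1/0.9^2$ correction so that they collapse into $C=18L_\phi/m_2[k_0]$. Compared with Lemma \ref{lemma:step1-diff}, where only the kernel-argument piece carried a factor of $L_\phi$ (its prefactor piece came from the simpler $|1/\hrho(x)^2-1/\brho_{r_k}(x)^2|$ estimate \eqref{eq:bound-1/hrho-rela}, which involves only $\varepsilon_{\rho,k}$), here both $M_1$ and $M_2$ inherit $L_\phi$ from ratio bounds on $\phi$, which is why the final constant lacks the additive "$+1$" present there. I expect this constant-chasing to be the only obstacle of note; conceptually the argument is a direct transcription of the $M_1/M_2$ strategy from Lemma \ref{lemma:step1-diff}.
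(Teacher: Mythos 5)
Your proposal is correct and matches the paper's proof essentially verbatim: same two-term decomposition (the paper's $\widetilde{M}_1$/$\widetilde{M}_2$ in \eqref{eq:bound-L-Lbar-Wtilde}), same key inputs (\eqref{eq:bound-k0-diff-1}, \eqref{eq:bound-1/phi-hrho-rela}, and the $\phi(\hrho,\hrho)\ge 0.9\,\phi(\brho_{r_k},\brho_{r_k})$ bound \eqref{eq:bound-phi--2}), and the same recombination of the $\tilde k_1$-envelope and the $a_0[k_0]\exp(-a[k_0]\eta/2)$-envelope into $k_1$. Your closing remark correctly identifies why the additive ``$+1$'' in the constant of Lemma \ref{lemma:step1-diff} disappears here — both terms now carry an $L_\phi$ factor from the $\phi$-ratio bounds — so the constant bookkeeping is sound as well.
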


	We claim that for $k_0$ under Assumption \ref{assump:k0-smooth} and $\phi$ under Assumption \ref{assump:phi-diff}(i)(iv),
	when $N$ is sufficiently large with the threshold depending on $(\sM, p, k_0, \phi)$, 
	for any $r \in [0, \tilde{r}_0]$ and any $x \in \sM$, w.p. $\ge 1-2N^{-10}$,
	\begin{align}\label{eq:bound-k1-sum-Wtilde}
		\frac{1}{N}\sum_{j=1}^N \epsilon^{-d/2}k_1\left(  \frac{\| x - x_j\|^2}{ \epsilon \phi( \brho_r(x), \brho_r(x_j))^2 } \right) \frac{\left|f(x_j)-f(x)\right|}{ \phi( \brho_r(x), \brho_r(x_j))^2  } =  O\left(\|\nabla f\|_\infty p(x)^{1/d}\sqrt{\epsilon}\right) + O^{[p]}(\|f\|_\infty \epsilon^{10}).  
	\end{align}
	Here, the threshold for $N$ and the constant in the big-$O$ notation are uniform for both $r$ and $x$. 
	The claim can be applied at $r = r_k$ since $r_k \leq \tilde{r}_0$ when $N \geq N_\rho$.

	We denote by $E_6 $ the good event such that \eqref{eq:bound-k1-sum-Wtilde} holds at $r = r_k$ and $x \in \sM$ in the theorem.
	Suppose the claim holds, then \eqref{eq:bound-k1-sum-Wtilde} gives an upper bound of the r.h.s. of \eqref{eq:lemma-step1-diff-Wtilde}.
		Here, Lemma \ref{lemma:step1-diff-Wtilde} can be applied  because when $N \ge N_\rho$ and under the good event $E_\rho$, the conditions $r_k \leq r_0$ and $\varepsilon_{\rho,k} \leq \min\{ \delta_\phi, 0.05 / L_\phi\}$ of the lemma are satisfied.
	Therefore, we have that
	\[ 	\left|\widetilde{L}_\un f(x)-\bar{L}_\un f(x)\right|\leq C \frac{\varepsilon_{\rho,k}}{\epsilon} \left( O\left(\|\nabla f\|_\infty p(x)^{1/d}\sqrt{\epsilon}\right)  + O^{[p]}(\|f\|_\infty \epsilon^{10})   \right), \]
	which further implies
	\begin{align} \label{eq:bound-Ltilde-Lbar}
		\left|\widetilde{L}_\un f(x)-\bar{L}_\un f(x)\right| = O\left(\|\nabla f\|_\infty p(x)^{1/d}
		 \frac{\varepsilon_{\rho,k}}{\sqrt{\epsilon}} \right) + O^{[p]}( \|f\|_\infty \varepsilon_{\rho,k} \epsilon^{9}  ).
	\end{align}
	This holds when $N$ exceeds both the large-$N$ threshold and $N_\rho$ in \eqref{eq:property-Nrho} and w.p. $\ge 1-3N^{-10}$ (under the good event $E_\rho$ in \eqref{eq:property-Nrho} and $E_6 $).
		
	Below, we prove the claim \eqref{eq:bound-k1-sum-Wtilde}. 
	Recall the definition of $\delta_\epsilon[k_1]$ in \eqref{eq:def-delta-eps}. We first truncate $k_1$ by \eqref{eq:k1-trunc}, and then bound $|f(x_j) - f(x)|$ using Lemma \ref{lemma:f-diff-bound} (applied to $f$) and the metric comparison \eqref{eq:metric-comp2}.
	To apply \eqref{eq:metric-comp2}, we will need $\epsilon$ small enough such that $\delta_\epsilon[k_1] < \delta_1(\sM)$ introduced in Lemma \ref{lemma:M-metric}(iii).
	Recall the definition of $\epsilon_D$ in \eqref{eq:def-epsD}, and let $\epsilon_1 \coloneqq \epsilon_D(k_1, c_{\max}) > 0$. We denote $B^m_ {\delta_\epsilon[k_1]}$ as $B_ {\delta_\epsilon[k_1]}$. Then,
	for $\epsilon < \epsilon_1$, we have $\delta_\epsilon[k_1] < \delta_1(\sM)$.
	By a similar argument to that in the proof of \eqref{eq:bound-k1-sum}, we obtain that when $\epsilon < \epsilon_1$, it holds that 
	\begin{align} \label{eq:k1-trunc-independ-sum-Wtilde}
		& \frac{1}{N}\sum_{j=1}^N \epsilon^{-d/2}k_1\left(  \frac{\| x - x_j\|^2}{ \epsilon \phi( \brho_r(x), \brho_r(x_j))^2 } \right) \frac{\left|f(x_j)-f(x)\right|}{ \phi( \brho_r(x), \brho_r(x_j))^2  }  \notag \\
		&\leq \|\nabla f\|_\infty  1.1  \frac{1}{N}\sum_{j=1}^N \epsilon^{-d/2}k_1\left(  \frac{\| x - x_j\|^2}{ \epsilon \phi( \brho_r(x), \brho_r(x_j))^2 } \right)  \mathbf{1}_{  \{x_j \in B_ {\delta_\epsilon[k_1]}(x) \} } \frac{ \|x-x_j\|  }{ \phi( \brho_r(x), \brho_r(x_j))^2  }   + R_{\epsilon, T}(x,r), \notag \\
		& \qquad\qquad\qquad\qquad\qquad\qquad\qquad\text{where }\sup_{x\in \sM,r \in [0,\tilde{r}_0]} |  R_{\epsilon, T}(x,r)| = O^{[ p]}(\|f\|_\infty\epsilon^{10} ),
	\end{align}
	where the constant in $O^{[ p]}(\|f\|_\infty\epsilon^{10} )$ depends on $\sM$, $p$, and $k_0$ and is uniform for $x$ and $r$.
	Below, we bound the independent sum $\frac{1}{N} \sum_{j=1}^N Y_j$, where
	\begin{align*}
		Y_j \coloneqq \epsilon^{-d/2}k_1\left(  \frac{\| x - x_j\|^2}{ \epsilon \phi( \brho_r(x), \brho_r(x_j))^2 } \right)  \mathbf{1}_{  \{x_j \in B_ {\delta_\epsilon[k_1]}(x) \} } \frac{ \|x-x_j\|  }{ \phi( \brho_r(x), \brho_r(x_j))^2  }.
	\end{align*}

	Then, we bound $\E Y_j$ and $\var(Y_j)$.	
	For $\E Y_j$, we use a similar argument to that which gives \eqref{eq:bound-bias-1}. 
 The integrand of $\E Y_j$ is different from that of the expectation bounded in \eqref{eq:bound-bias-1}. Instead of the upper bound $\phi( \brho_r(x), \brho_r(y)) \leq c_{\max} \max\{\brho_r(x), \brho_r(y)\}$ used to obtain \eqref{eq:bound-bias-1}, we use the following upper bound of $\phi( \brho_r(x), \brho_r(y))^{-1} $, 
	\begin{align} \label{eq:phi^-1-lb}
		\phi( \brho_r(x), \brho_r(y))^{-1} \leq c_{\min}^{-1} \min\{ \brho_r(x), \brho_r(y) \}^{-1} \leq c_{\min}^{-1} ( \brho_r(x)^{-1} +  \brho_r(y)^{-1}  ).
	\end{align}
 Then, we can upper bound $\E Y_j$ as
	\begin{align}
		\E Y_j &=  \int_\sM  \epsilon^{-d/2}k_1\left(  \frac{\| x - y\|^2}{ \epsilon \phi( \brho_r(x), \brho_r(y))^2 } \right)  \frac{ \|x-y\|}{ \phi( \brho_r(x), \brho_r(y))  } \frac{p(y)}{\phi( \brho_r(x), \brho_r(y))} dV(y) \notag \\
		&\leq  \sqrt{\epsilon}  \int_\sM  \epsilon^{-d/2}k_2\left(  \frac{\| x - y\|^2}{ \epsilon \phi( \brho_r(x), \brho_r(y))^2 } \right)  \frac{ p(y)}{ \phi( \brho_r(x), \brho_r(y))  } dV(y) \notag \\
		&\leq \sqrt{\epsilon}  c_{\min}^{-1}  \int_\sM \epsilon^{-d/2}k_2\left( \frac{\| x - y\|^2}{ \epsilon  c_{\max}^2\max\{ \brho_r(x), \brho_r(y) \}^2 } \right) (\brho_r(x)^{-1} + \brho_r(y)^{-1} ) p(y) dV(y), \label{eq:bound-bias-18}
	\end{align}
	where $k_2$ is as defined in \eqref{eq:def-k2} and satisfies $k_1(\eta)\sqrt{\eta} \leq k_2(\eta)$.
	We apply Lemma \ref{lemma:right-operator-degree}(ii) to compute the integral in \eqref{eq:bound-bias-18}
(with $\epsilon$ in the lemma being $\epsilon c_{\max}^2$ and $k_0$ in the lemma being $k_2$),
which holds as long as $\epsilon < \epsilon_2: =\epsilon_D(k_2, 1) / c_{\max}^2$, and have that
\begin{align*} 
	\E Y_j &\leq 2 \sqrt{\epsilon}  c_{\min}^{-1}c_{\max}^d \left( m_0[k_2] p \brho_r^{d-1}(x) + R_{\epsilon,1}(x,r) \right), \quad \sup_{x\in \sM,r \in [0,\tilde{r}_0]}|R_{\epsilon,1}(x,r)| 
	= 
	{O^{[p]}(\sqrt{\epsilon})}, 
\end{align*}
where the constant dependence in 
$
{O^{[p]}(\sqrt{\epsilon})}$ is as in Lemma \ref{lemma:right-operator-degree}(ii) and this 
constant depends on $\phi$ only through $c_{\max}$.
By applying a similar argument that obtains \eqref{eq:bound-bias-1}, 
we can conclude that there is $\epsilon_3 > 0$ such that for $\epsilon <  \epsilon_3$ (this threshold is uniform for $x$ and $r$),
	\begin{align} \label{eq:bound-bias-12}
			\E Y_j \leq C_{E,3} (\sM, k_0, \phi)  p(x)^{1/d} \sqrt{\epsilon},  
	\end{align}
	where $C_{E,3} (\sM, k_0, \phi) =  2^{(d-1)/d+2} c_{\min}^{-1}c_{\max}^{d}  m_0[k_2]$.

 For $\var(Y_j)$, we use a similar argument used to obtain \eqref{eq:bound-var-6}. Here, we use the upper bound of $\phi( \brho_r(x), \brho_r(y))^{-2}$ below 
	\begin{align} \label{eq:phi^-2-lb}
		\phi( \brho_r(x), \brho_r(y))^{-2} \leq c_{\min}^{-2} \min\{ \brho_r(x), \brho_r(y) \}^{-2} \leq c_{\min}^{-2} ( \brho_r(x)^{-2} +  \brho_r(y)^{-2}  )
	\end{align}
	to upper bound $\var(Y_j)$ as
	\begin{align}
		\epsilon^{d/2}\var(Y_j) &\leq \epsilon^{d/2} \E Y_j^2 = \int_\sM  \epsilon^{-d/2}k_1^2\left(  \frac{\| x - y\|^2}{ \epsilon \phi( \brho_r(x), \brho_r(y))^2 } \right)  \frac{ \|x-y\|^2}{ \phi( \brho_r(x), \brho_r(y))^2  } \frac{p(y)}{\phi( \brho_r(x), \brho_r(y))^2} dV(y) \notag \\
		&\leq \epsilon  \int_\sM  \epsilon^{-d/2}k_3\left(  \frac{\| x - y\|^2}{ \epsilon \phi( \brho_r(x), \brho_r(y))^2 } \right)  \frac{p(y)}{\phi( \brho_r(x), \brho_r(y))^2} dV(y) \notag \\
		&\leq \epsilon  c_{\min}^{-2}  \int_\sM \epsilon^{-d/2}k_3\left( \frac{\| x - y\|^2}{ \epsilon  c_{\max}^2\max\{ \brho_r(x), \brho_r(y) \}^2 } \right) (\brho_r(x)^{-2} + \brho_r(y)^{-2} ) p(y) dV(y), \label{eq:bound-var-20}
	\end{align}
	where $k_3$ is as defined in \eqref{eq:def-k3} and satisfies $k_1^2(\eta) \eta \leq k_3(\eta)$.
We apply Lemma \ref{lemma:right-operator-degree}(ii) (with $\epsilon$ in the lemma being $\epsilon c_{\max}^2$ and $k_0$ in the lemma being $k_3$) to compute the integral in \eqref{eq:bound-var-20}, which holds as long as 
$\epsilon <  \epsilon_4 \coloneqq \epsilon_D(k_3, 1) / c_{\max}^2$, 
and have that
\begin{align*} 
	\epsilon^{d/2} \var(Y_j)   &\leq   2\epsilon c_{\min}^{-2}c_{\max}^d \left( m_0[k_3] p \brho_r^{d-2}(x) + R_{\epsilon,2}(x,r) \right), \quad \sup_{x\in \sM,r \in [0,\tilde{r}_0]}|R_{\epsilon,2}(x,r)| 
	= 
	{O^{[p]}(\sqrt{\epsilon})}, 
\end{align*}
where the constant in 
$
{O^{[p]}(\sqrt{\epsilon})}$ depends on $\phi$ only through $c_{\max}$.
By a similar argument used to obtain \eqref{eq:bound-var-6}, we can conclude 
 that there is $\epsilon_5 > 0$ such that for $\epsilon < \epsilon_5$ (this threshold is uniform for $x$ and $r$),
	\begin{align} \label{eq:bound-var-17}
				\var(Y_j) \leq C_{V,5} (\sM, k_0, \phi)  p(x)^{2/d} \epsilon^{1-d/2},  
	\end{align}
	where $C_{V,5} (\sM, k_0, \phi) =  {4\max\{3/2,2^{(d-2)/d}\}} c_{\min}^{-2}c_{\max}^{d}  m_0[k_3]$. 
	
	Moreover, using a similar approach to bound $|Y_j|$ as in the proof of \eqref{eq:bound-k1-sum}, we have
	\begin{align*} 
		 |Y_j|   \leq L_Y \coloneqq C_{B,5} (\sM, p, k_0,\phi) \epsilon^{1/2-d/2},
	\end{align*}
	where  $C_{B,5} (\sM, p, k_0,\phi) = a_0[k_2] c_{\min}^{-1} \rho_{\min}^{-1} $.
	Then, by a similar concentration argument in the proof of \eqref{eq:bound-k1-sum}, 
	when $N$ is sufficiently large whose threshold depends on $(\sM, p, k_0, \phi)$, we have
	$ 40 \frac{\log N}{N}  \leq 9 \frac{C_{V,5} }{(C_{B,5} )^2} p_{\min}^{2/d} \epsilon^{d/2} $ and $\epsilon^{d/2} \geq \log N/N$, and then w.p. $\ge 1-2N^{-10}$,
	\begin{align} \label{eq:concen14}
		0 \leq   \frac{1}{N} \sum_{j = 1}^N Y_j   \leq (C_{E,3}  + \sqrt{40 C_{V,5} }) {p(x)^{1/d}}  \sqrt{\epsilon} = O ({p(x)^{1/d}}  \sqrt{\epsilon}) .
	\end{align}
	The constant in the big-$O$ notation depends on $\sM, k_0, \phi$ through $C_{E,3} $ and $C_{V,5} $ and is uniform for $x$ and $r$. 
Plugging the upper bound of $\frac1N\sum_{j=1}^N Y_j$ in \eqref{eq:concen14} back to the r.h.s. of \eqref{eq:k1-trunc-independ-sum-Wtilde}, \eqref{eq:bound-k1-sum-Wtilde} is proved.
	
	Since $\epsilon = o(1)$, when $N$ exceeds a certain threshold, 
	$\epsilon < \min\{  \epsilon_1, \epsilon_2, \dots, \epsilon_5 \}$,
	which is a constant depending on $(\sM, p, k_0, \phi)$ and is uniform for $x$ and $r$, and then the bounds in \eqref{eq:k1-trunc-independ-sum-Wtilde}\eqref{eq:bound-bias-12}\eqref{eq:bound-var-17} along with the bound of $|Y_j|$ hold.
	Another large-$N$ threshold is required by the Bernstein inequality used to prove \eqref{eq:concen14}.
	The overall large-$N$ threshold required by the claim in \eqref{eq:bound-k1-sum-Wtilde} depends on ($\sM$, $p$, $k_0$, $\phi$).

	\vspace{5pt}
	\noindent
	\underline{Step 2.} to bound  $| \bar{L}_\un f(x) - \Delta_pf(x) |$.
	
	We claim that  when $N$ is sufficiently large with the threshold depending on $(\sM, p, k_0, \phi)$, for any $r \in [0, \tilde{r}_0]$ and any $x \in \sM$, w.p. $\ge 1-2N^{-10}$,
	\begin{align} \label{eq:step2-discrete-sum-diff-Wtilde}
		&\frac{1}{N}\sum_{j=1}^N \epsilon^{-d/2}  k_0 \left(  \frac{\| x - x_j\|^2}{ \epsilon \phi( \brho_r(x), \brho_r(x_j))^2 } \right) \frac{f(x_j)-f(x)}{\phi( \brho_r(x), \brho_r(x_j))^2 }  \notag \\
		&=  \epsilon \frac{m_2}{2}  \Delta_p  f(x) + O^{[f,p]}\left(\epsilon^2  +   \epsilon r \right)  +  O\left(  \|\nabla f\|_\infty p(x)^{1/d}  \sqrt{\frac{\log N}{N\epsilon^{d/2-1}}}\right).  
	\end{align}
	Here, the threshold for $N$ and the constants in the big-$O$ notations are uniform for $x$ and $r$. 
	We apply the claim at $r = r_k$ and $x \in \sM$ as in the theorem, and denote the good event as $E_7 $.
	
	Observe that $\bar{L}_\un$ (defined in \eqref{eq:def-Wbar-GL-norm}) equals the l.h.s. of \eqref{eq:step2-discrete-sum-diff-Wtilde} divided by $\epsilon m_2 / 2$, where $r$ in \eqref{eq:step2-discrete-sum-diff-Wtilde} takes the value of $r_k$.
	Suppose the claim holds, then when $N$ exceeds both $N_\rho$ (in \eqref{eq:property-Nrho}) and the large-$N$ threshold in the claim, under the good event $E_7 $ that happens w.p. $\ge 1-2N^{-10}$,
	we have that
	\begin{align}  \label{eq:bound-Lbar-Deltap}
		|\bar{L}_\un f(x) - \Delta_p  f(x)| = O^{[f,p]}\left(\epsilon  +   r_k\right)  +  O\left(  \|\nabla f\|_\infty p(x)^{1/d}  \sqrt{\frac{\log N}{N\epsilon^{d/2+1}}}\right). 
	\end{align}
	
	Below, we prove the claim.
	Let $\epsilon_1 \coloneqq  \epsilon_D(k_0, c_{\max}) > 0$, then 
	for $\epsilon < \epsilon_1$, we have $\delta_\epsilon[k_0] < \delta_1(\sM)$.
	Note that for any $r \in [0, \tilde{r}_0]$ and any $x, y \in \sM$, by Assumption \ref{assump:phi-diff}(iv) and Lemma \ref{lemma:bar-rho-epsilon}(ii), we have that 
	$\phi( \brho_r(x), \brho_r(y)) \ge 
	c_{\min} \min\{  \brho_r(x), \brho_r(y) \} \ge c_{\min} \rho_{\min}$.
	Then, together the truncation in \eqref{eq:k0-trunc}, we obtain that 
	\begin{align*}
		& \frac{1}{N}\sum_{j=1}^N \epsilon^{-d/2} k_0 \left(  \frac{\| x - x_j\|^2}{ \epsilon \phi( \brho_r(x), \brho_r(x_j))^2 } \right) \frac{f(x_j)-f(x)}{\phi( \brho_r(x), \brho_r(x_j))^2 } \notag \\
		&=  \frac{1}{N}\sum_{j=1}^N \epsilon^{-d/2}  k_0 \left(  \frac{\| x - x_j\|^2}{ \epsilon \phi( \brho_r(x), \brho_r(x_j))^2 } \right)  \mathbf{1}_{  \{x_j \in B_ {\delta_\epsilon[k_0]}(x) \} }  \frac{f(x_j)-f(x)}{\phi( \brho_r(x), \brho_r(x_j))^2 }    + O^{[f,p]}( \epsilon^{10} ),  
	\end{align*}
	where $B_ {\delta_\epsilon[k_0]}$ is $B^m_ {\delta_\epsilon[k_0]}$.
	We analyze the first term, which is an independent sum $\frac{1}{N} \sum_{j=1}^N Y_j$, where
	\[ Y_j \coloneqq  \epsilon^{-d/2}  k_0 \left(  \frac{\| x - x_j\|^2}{ \epsilon \phi( \brho_r(x), \brho_r(x_j))^2 } \right)  
	\mathbf{1}_{ \{x_j \in B_ {\delta_\epsilon[k_0]}(x) \} }  \frac{f(x_j)-f(x)}{\phi( \brho_r(x), \brho_r(x_j))^2 }.   \]

	To proceed, we introduce a counterpart to Lemma \ref{lemma:step2-diff}, to be proved after we finish the theorem.
\begin{lemma} [Bias error using $\widetilde{W}$] \label{lemma:step2-other-cases-tildeW}
Under Assumptions \ref{assump:M}-\ref{assump:p}, 
$\phi$ under Assumption \ref{assump:phi-diff} and $k_0$ under Assumption \ref{assump:k0-smooth},
let $\tilde{r}_0$ be defined in \eqref{eq:def-r0-tilder0}.
When $p\in C^5(\sM)$, $f \in C^4(\sM)$, 
if $\epsilon < \epsilon_D(k_0, c_{\max})$, then for any $r \in [0, \tilde{r}_0]$ and any $x\in \calM$, 
\begin{equation} \label{eq:lemma-step2-other-cases-rw-fast}
	\int_\sM \epsilon^{-d/2}k_0\left(  \frac{\| {x} - y\|^2}{ \epsilon \phi( \brho_r(x), \brho_r(y))^2 } \right)  \frac{f(y)-f(x)} {\phi( \brho_r(x), \brho_r(y))^2} p(y) dV(y) = \epsilon \frac{m_2}{2}  
	\Delta_p  f(x) 
	+ O^{[f,p]}(\epsilon^2 +  \epsilon r).
\end{equation}
The small-$\epsilon$ threshold $\epsilon_D(k_0, c_{\max})$ is defined in \eqref{eq:def-epsD} and same as in Lemma \ref{lemma:step2-diff}.

\end{lemma}

	For $\E Y_j$, 
	note that $\epsilon_1 = \epsilon_D(k_0, c_{\max})$, which equals the small-$\epsilon$ threshold required in Lemma \ref{lemma:step2-other-cases-tildeW}.
	Therefore,
	for $\epsilon < \epsilon_1$, by the truncation argument in \eqref{eq:k0-trunc} and \eqref{eq:lemma-step2-other-cases-rw-fast} in Lemma \ref{lemma:step2-other-cases-tildeW}, 
	we have that  
	\begin{align*}
		\E Y_j	= \epsilon \frac{m_2}{2}  
		\Delta_p  f(x) 
		+ O^{[f,p]}\left(\epsilon^2 +  \epsilon r\right) .    
	\end{align*}
	
	For $\var(Y_j)$, we use a similar argument that gives \eqref{eq:bound-var-7}.
	The difference is that we use \eqref{eq:phi^-2-lb} to upper bound $\var(Y_j)$, and then apply 
	{\eqref{eq:ker-expan-degree-slowI}} in Lemma \ref{lemma:right-operator-degree}(ii) to obtain the expansion for 
	\[  \int_\sM \epsilon^{-d/2}k_4\left( \frac{\| x - y\|^2}{ \epsilon  c_{\max}^2\max\{ \brho_r(x), \brho_r(y) \}^2 } \right) (\brho_r(x)^{-2} + \brho_r(y)^{-2} ) p(y) dV(y). \]
	Here, $k_4$ is as defined in \eqref{eq:def-k4}.
	We can establish that there is $\epsilon_2 > 0$ such that for $\epsilon < \epsilon_2$ (this threshold is uniform for $x$ and $r$),
	\begin{align*}
		\var(Y_j)  \leq \mu_Y(x) \coloneqq C_{V,6} (\sM, k_0, \phi) \| \nabla f\|_\infty^2 p(x)^{2/d} \epsilon^{1-d/2},
	\end{align*}
	where $C_{V,6} (\sM, k_0, \phi)  = {4\max\{3/2,2^{(d-2)/d}\}} 1.1^2 c_{\min}^{-2}c_{\max}^d m_0[k_4] $.

	Moreover, by a similar argument that obtain \eqref{eq:bound-LY}, we have that 
	\begin{align*}
		|Y_j| \leq L_Y \coloneqq  C_{B,6} (\sM, p, k_0,\phi) \|\nabla f \|_\infty  \epsilon^{1/2-d/2}  ,
	\end{align*}
	where  $C_{B,6} (\sM, p, k_0,\phi) = 1.1  c_{\min}^{-1} \rho_{\min}^{-1} a_0[k_0]/\sqrt{2e a[k_0]}.$
	Then, by a similar concentration argument in the proof of \eqref{eq:step2-discrete-sum-diff-W},
	when $N$ is sufficiently large whose threshold depends on $(\sM, p, k_0, \phi)$, we have
	\[
	160 \frac{\log N}{N} \leq 9 \frac{C_{V,6} }{(C_{B,6} )^2} p_{\min}^{2/d} \epsilon^{d/2} \leq 9\frac{\mu_Y(x)}{L_Y^2}.
	\]
	The inequality between the first and last expressions is equivalent, for $\tau \coloneqq \sqrt{40\mu_Y(x)\log N/N}$, to $2\tau L_Y \leq 3\mu_Y(x)$. Since $Y_j$ is signed, $|Y_j-\E Y_j|\leq 2L_Y$. Thus, Bernstein's inequality applies to $Y_j-\E Y_j$, and w.p. $\ge 1-2N^{-10}$, \eqref{eq:step2-discrete-sum-diff-Wtilde} holds.

	Since $\epsilon = o(1)$, when $N$ exceeds some threshold, 
	$\epsilon < \min\{  \epsilon_1, \epsilon_2\}$,
		which is a constant depending on $(\sM, p, k_0, \phi)$ and is uniform for $x$ and $r$, and then the expansion of $\E Y_j$ and the bounds of $\var(Y_j)$ and $|Y_j|$ hold.
	Another threshold is required in the concentration argument when applying Bernstein inequality.
	The overall large-$N$ threshold required by the claim in \eqref{eq:step2-discrete-sum-diff-Wtilde} depends on $(\sM, p, k_0, \phi)$.

	\vspace{5pt}
	\noindent
	\underline{Step 3.} the desired bound follows that of $| \widetilde{L}_\un f(x) - \bar{L}_\un f(x) |$ and $| \bar{L}_\un f(x) - \Delta_pf(x) |$ in the first two steps.
	
	We collect the large-$N$ thresholds from the first two steps, both of which depend on $(\sM, p, k_0, \phi)$. Additionally, we collect the good event $E_\rho \cap E_6 $ from Step 1, which happens w.p. $\ge 1-3N^{-10}$ and ensures \eqref{eq:bound-Ltilde-Lbar} holds, as well as the good event $E_7 $ from Step 2, happening  w.p.  ${\ge 1-2N^{-10}}$, ensuring \eqref{eq:bound-Lbar-Deltap} holds. 
	Thus, when $N$ exceeds the required thresholds and under the good event $E_\rho \cap E_6  \cap E_7 $, happening  w.p. $\ge 1-5N^{-10}$, by the triangle inequality, $| \widetilde{L}_\un f(x) - \Delta_pf(x) |$ is bounded by 
	the sum of \eqref{eq:bound-Ltilde-Lbar} and \eqref{eq:bound-Lbar-Deltap}, which gives
	\[
	O\left(\|\nabla f \|_\infty p(x)^{1/d} \frac{\varepsilon_{\rho,k}}{\sqrt{\epsilon}}\right)
	+ O^{[p]}( \|f\|_\infty\varepsilon_{\rho,k} \epsilon^{9}) + O^{[f,p]}\left(\epsilon  + r_k\right)  +  O\left(  \|\nabla f\|_\infty p(x)^{1/d}  \sqrt{\frac{\log N}{N\epsilon^{d/2+1}}}\right).
	\]
	Observe that $O^{[p]}( \|f\|_\infty\varepsilon_{\rho,k} \epsilon^{9}) 
	= O^{[f,p]}(\epsilon)$, 
	this bound becomes \eqref{eq:fast-rate}.
	The overall large-$N$ threshold depends on $(\sM, p, k_0, \phi)$. 
\end{proof}

\begin{proof}[Proof of Lemma \ref{lemma:step1-diff-Wtilde}]
By the definitions of $\widetilde{L}_\un$ in \eqref{eq:def-L-un-tilde} and $\bar{L}_\un$ in \eqref{eq:def-Wbar-GL-norm} and the triangle inequality,
we have
	\begin{align}  \label{eq:bound-L-Lbar-Wtilde}
	\left|\widetilde{L}_\un f(x)-\bar{L}_\un f(x)\right|\leq 
	\widetilde{M}_1(x) + \widetilde{M}_2(x),
\end{align}
where
\begin{align*}
	\widetilde{M}_1(x)  &\coloneqq    \frac{1}{\frac{m_2}{2}N\epsilon }\sum_{j=1}^N 
	\epsilon^{-d/2}  \left| k_0 \left(  \frac{\| x - x_j\|^2}{ \epsilon \phi(\hrho(x), \hrho(x_j))^2 } \right)- k_0 \left(  \frac{\| x - x_j\|^2}{ \epsilon \phi( \brho_{r_k}(x), \brho_{r_k}(x_j))^2 } \right)\right|   \frac{\left|f(x_j)-f(x)\right|}{\phi(\hrho(x), \hrho(x_j))^2} , \\
	\widetilde{M}_2(x)  &\coloneqq    \frac{1}{\frac{m_2}{2}N\epsilon  }  \sum_{j=1}^N 
	\epsilon^{-d/2}   k_0 \left(  \frac{\| x - x_j\|^2}{ \epsilon \phi( \brho_{r_k}(x), \brho_{r_k}(x_j))^2 } \right)    \left |  \frac{1}{\phi(\hrho(x), \hrho(x_j))^2} - \frac{1}{\phi( \brho_{r_k}(x), \brho_{r_k}(x_j))^2}  \right| \\
	&\qquad \left|f(x_j)-f(x)\right| .
\end{align*}
	Since $r_k \leq r_0$, $\brho_{r_k}(x)$ for all $x\in \calM$  is well-defined by Lemma \ref{lemma:bar-rho-epsilon}(i), and consequently,  $\bar{L}_\un$ is also well-defined.
By \eqref{eq:bound-L-Lbar-Wtilde},
	it suffices to bound the two terms $\widetilde{M}_1(x)$ and $\widetilde{M}_2(x)$ on the r.h.s.

	To bound $\widetilde{M}_1(x)$, note that 
	since $\phi$ satisfies Assumption \ref{assump:phi-diff}(i)(iii) and $\varepsilon_{\rho,k} \leq \min\{ \delta_\phi, 0.05 / L_\phi\}$, Lemma \ref{lemma:lb-ub-E_phi} holds. Therefore, by \eqref{eq:bound-phi-rela-0.1}, $E_{\phi,k} \leq 0.1$. Consequently, 
	\begin{align} \label{eq:bound-phi--2}
		 \frac{1}{\phi(\hrho(x), \hrho(x_j))^2} \leq  \frac{1}{0.9^2\phi( \brho_{r_k}(x), \brho_{r_k}(x_j))^2}.
	\end{align}
Under the condition of the current lemma, Lemma \ref{lemma:step1-diff} holds, and thus we have  \eqref{eq:bound-k0-diff-1} in the proof of Lemma \ref{lemma:step1-diff}.
Combining \eqref{eq:bound-phi--2} with \eqref{eq:bound-k0-diff-1}, we have that
	\[ \widetilde{M}_1(x) \leq  \frac{12L_\phi}{0.9^2m_2}\frac{ \varepsilon_{\rho,k}}{N\epsilon }\sum_{j=1}^N \epsilon^{-d/2}  \tilde{k}_1 \left( \frac{\| x - x_j\|^2}{ \epsilon \phi( \brho_{r_k}(x), \brho_{r_k}(x_j))^2 } \right)  \frac{\left|f(x_j)-f(x)\right|}{\phi( \brho_{r_k}(x), \brho_{r_k}(x_j))^2}, \]
	where $\tilde{k}_1$ is as defined in	\eqref{eq:def-k1-tilde}.
	
	For $\widetilde{M}_2(x)$, note that $k_0(\eta) \leq a_0[k_0] \exp(-a[k_0]\eta) 
	\le a_0[k_0] \exp(- \frac{a[k_0]}{2}\eta)$, where the first inequality is by Assumption \ref{assump:k0-smooth}(ii). 
	Combined with \eqref{eq:bound-1/phi-hrho-rela} in the proof of Lemma \ref{lemma:step1-diff}, which applies under the conditions of the current lemma, we have
	\[ \widetilde{M}_2(x) \leq \frac{12L_\phi}{m_2}\frac{ \varepsilon_{\rho,k}}{N\epsilon }\sum_{j=1}^N \epsilon^{-d/2}  a_0[k_0]  \exp \left( - \frac{a[k_0] }{2}
	\frac{  \| x - x_j\|^2}{ \epsilon \phi( \brho_{r_k}(x), \brho_{r_k}(x_j))^2 } \right)  \frac{\left|f(x_j)-f(x)\right|}{\phi( \brho_{r_k}(x), \brho_{r_k}(x_j))^2} . 
 \]
 By definitions of $k_1$ in \eqref{eq:def-k1} and $\tilde k_1 $ in \eqref{eq:def-k1-tilde}, we have $k_1(\eta) = \tilde{k}_1(\eta) + a_0[k_0] \exp(-a[k_0]\eta/2)$.
Together with that $\frac{12L_\phi}{0.9^2m_2} \leq C$ and the bounds for $\widetilde{M}_1(x)$ and $\widetilde{M}_2(x)$, we prove \eqref{eq:lemma-step1-diff-Wtilde}.
\end{proof}

\begin{proof}[Proof of Lemma \ref{lemma:step2-other-cases-tildeW}]
	The proof essentially follows the same approach of Lemma \ref{lemma:step2-diff}.
	We inherit the notations therein, including the projected and normal coordinates $u$ and $s$, and the functions $\tilde{f}$, $\tilde{p}$, $\beta_x$, and $\tilde{\beta}_x$. 
	The difference is that we now handle the integral
	\begin{align*}
		\brho_r(x)^{-2} \epsilon^{-d/2} \int_{\sM} 
		{  k_0 \left(  \frac{\| x - y\|^2}{  \epsilon  \phi( \brho_r(x), \brho_r(y) )^2  } \right)}   (f(y)-f(x)) p(y) \left( \frac{\brho_r(x)}{  \phi( \brho_r(x), \brho_r(y) ) } \right)^2 dV(y),
	\end{align*}
	which includes an additional term $\brho_r(x)^2 / \phi( \brho_r(x), \brho_r(y) )^2$ in the integrand.
	The ratio $\brho_r(x) / \phi( \brho_r(x), \brho_r(y) )$ has been analyzed in the proof of Lemma \ref{lemma:step2-diff} by a local expansion, which we will utilize here.

	Following the first three steps as in the proof of Lemma \ref{lemma:step2-diff} and using the expansion of $\brho_r(x) / \phi( \brho_r(x), \brho_r(y) )$ in \eqref{eq:phi/rho-taylor}, we have
	\begin{align*}
&		\brho_r(x)^{-2} \epsilon^{-d/2} \int_{\sM} 
		{  k_0 \left(  \frac{\| x - y\|^2}{  \epsilon  \phi( \brho_r(x), \brho_r(y) )^2  } \right)}   (f(y)-f(x)) p(y) \left( \frac{\brho_r(x)}{  \phi( \brho_r(x), \brho_r(y) ) } \right)^2 dV(y) \\
  & = \circled{3}  + O^{[f,p]}(\epsilon^2),
	\end{align*}
	where 
	\begin{align}
		\circled{3} &\coloneqq  \brho_r(x)^{-2}  \epsilon^{-d/2} \int_{ \R^d } 
		\Bigg( \left( k_0 \left(  \frac{\|u\|^2}{ \epsilon \brho_r(x)^2 } \right)	\right) +  \left( -  \|u\|^2 \nabla \tilde\beta_x(0) \cdot u + Q_{x,4}^{(\rho)}(u)  \right) \frac{1}{\epsilon \brho_r(x)^2 } k_0' \left(  \frac{\|u\|^2}{ \epsilon \brho_r(x)^2 } \right)	\notag \\
		&\quad +  {\frac{1}{2}} \frac{\|u\|^4  (\nabla \tilde\beta_x(0) \cdot u)^2 }{\epsilon^2 \brho_r(x)^4 } k_0'' \left(  \frac{\|u\|^2}{ \epsilon \brho_r(x)^2 } \right)  \Bigg) \left(\nabla \tilde{f}(0) \cdot u + \frac{1}{2} u^T \Hess_{\tilde{f}}(0) u +  Q_{x,3}^{(f)}(u)  \right) \notag \\
		&\quad  \left(\tilde{p}(0)  +  \nabla \tilde{p}(0) \cdot u +  Q_{x,2}^{(p)}(u) \right) \left( 1 - \nabla \tilde\beta_x(0) \cdot u + Q_{x,2}^{(\rho)}(u) \right)  \left( 1 + Q_{x,2}^{(V)}(u)\right)  du. \notag
	\end{align}
	We collect the leading terms in $\circled{3}$
	and have that
	\begin{align*}
		\circled{3} = \circled{4}_1 + \circled{4}_2 + \circled{4}_3 + \circled{4}_4 + O^{[f,p]}(\epsilon^2),
	\end{align*} 
	where $\circled{4}_1$, $\circled{4}_2$ and $\circled{4}_3$ have the same definitions 
	as in \eqref{eq:lemma-bias-fast-step4-2}, \eqref{eq:lemma-bias-fast-step4-3}, and \eqref{eq:lemma-bias-fast-step4-4}, and 
	\begin{align*}
		\circled{4}_4  = - \tilde{p}(0) \; \brho_r(x)^{-2}  \epsilon^{-d/2} \int_{ \R^d }   k_0 \left(  \frac{\|u\|^2}{ \epsilon \brho_r(x)^2 } \right)   (\nabla \tilde{f}(0) \cdot u )  (\nabla \tilde\beta_x(0) \cdot u) du.
	\end{align*}
	In Step 4 in the proof of Lemma \ref{lemma:step2-diff}, we have already shown that $\circled{4}_1$, $\circled{4}_2$, and $\circled{4}_3$ satisfy \eqref{eq:lemma-bias-fast-step4-5}, \eqref{eq:lemma-bias-fast-step4-6}, and \eqref{eq:lemma-bias-fast-step4-7}, respectively.
	By similar arguments to those in Step 4 of the proof of Lemma \ref{lemma:step2-diff}, we have
	\begin{align*}
		\circled{4}_4  = \frac{1}{d} \epsilon m_2[k_0]  \nabla f(x) \cdot \frac{\nabla  p(x)}{p(x)} + O^{[f,p]}(\epsilon r).
	\end{align*}
	Inserting these back to \circled{3}, by the definition of $\Delta_p$ in \eqref{eq:def-Delta-p},
	we have
	\begin{align*}
		\circled{3} = \epsilon \frac{m_2[k_0]}{2} \Delta_pf(x) + O^{[f,p]}(\epsilon^2 + \epsilon r).
	\end{align*}
	Then, we have proved \eqref{eq:lemma-step2-other-cases-rw-fast}. 
	The threshold for $\epsilon$ is $\epsilon <  \epsilon_D(k_0, c_{\max})$ (defined in \eqref{eq:def-epsD}), the same as in Lemma \ref{lemma:step2-diff}.
	The constant in the big-$O$ notation of the final error bound depends on $f$ and $p$ and is uniform for $x$ and $r$.
\end{proof}

\subsection{Random-walk graph Laplacian with density correction}\label{app-subsec:tildeLrw}

We consider the random-walk graph Laplacian associated with $\widetilde{W}$ defined as 
\begin{equation}  \label{eq:def-L-rw-tilde-matrix}
 \widetilde{L}_\rw \coloneqq  - \frac{1}{\frac{m_2}{2m_0}} \diag\left(  \left\{ \frac{1}{ \epsilon \hrho(x_i)^2 }  \right\}_i \right) (I  - \widetilde{D}^{-1} \widetilde{W}) \in \R^{N\times N}, 
\end{equation}
 and the operator $\widetilde{L}_\rw$ applied to $f$ is given by
\begin{equation}\label{eq:def-L-rw-tilde}
	 \widetilde{L}_\rw f(x) =  \frac{1}{\frac{m_2}{2m_0}\epsilon\hrho(x)^2} \left(  \frac{  \sum_{j=1}^N k_0 \left(  \frac{\| x - x_j\|^2}{ \epsilon \phi(\hrho(x), \hrho(x_j))^2 } \right) \frac{f(x_j)}{\phi(\hrho(x), \hrho(x_j))^2}  }{ \sum_{j=1}^N k_0 \left(  \frac{\| x - x_j\|^2}{ \epsilon   \phi(\hrho(x), \hrho(x_j))^2   } \right) \frac{1}{\phi(\hrho(x), \hrho(x_j))^2} }  -f(x) \right) , \quad x \in \sM. 
\end{equation}

\section{Technical Lemmas} \label{app:tech-lemmas}

We always assume Assumption \ref{assump:M} regarding the manifold $\sM$.

\subsection{Lemmas used in Section \ref{sec:concen-hrho} }

Theorem \ref{thm:consist-hrho} uses the following two lemmas. 
Lemma \ref{lemma:Lip-cont-hat-R} provides the Lipschitz continuity and regularity of $\hat{R}$ in $\R^m$, which was originally proved in \cite{cheng2022convergence}.
This lemma is used in the proof of Theorem \ref{thm:consist-hrho} to establish the Lipschitz continuity of the bandwidth function $\hrho$ defined in \eqref{eq:def-hat-rho}.
Lemma \ref{lemma:covering} upper bounds the covering number of $\sM$ using Euclidean balls in $\R^m$,
which is a well-known estimate, see, e.g., Lemma A.2  in  \cite{cheng2022convergence}.

\begin{lemma}[Lemma 2.1 in \cite{cheng2022convergence}]
\label{lemma:Lip-cont-hat-R}
Suppose $X$ has distinct data points $\{x_i\}_{i=1}^N$ 
and  $1< k < N$. 
Then $\hat{R}$ defined in 
\eqref{eq:def-hat-R}
is  Lipschitz
continuous on $\R^m$ with  $\text{Lip}_{\R^m}(\hat{R}) \le 1$.
Moreover, $\hat{R}$ is $C^\infty$ on $\R^m \backslash E$, 
where $E$ is a finite union of ($m$-1)-hyperplanes
(finitely many points when $m=1$).
\end{lemma}

\begin{lemma}[Lemma A.2 in \cite{cheng2022convergence}]
\label{lemma:covering}
For any $0< r \leq \delta_1$, 
where $\delta_1$ is introduced in Lemma \ref{lemma:M-metric}(iii), 
${\cal N}( {\sM},  \| \cdot \|_{\R^m}, r ) \leq  V({\sM}) {r^{-d}}$,
where ${\cal N}( {\sM},  \| \cdot \|_{\R^m}, r ) $ is the smallest cardinality of an $r$-net of $\sM$, and $V(\sM)$ equals an $O(1)$ constant times the Riemannian volume of ${\sM}$.
\end{lemma}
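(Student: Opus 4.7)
The plan is to use the standard volumetric packing argument adapted to the extrinsic (Euclidean) metric. First, I would construct a maximal $r$-packing of $\sM$ with respect to $\|\cdot\|_{\R^m}$: a set $F = \{z_1,\dots,z_n\} \subset \sM$ such that $\|z_i - z_j\| > r$ for all $i \ne j$, and such that no further point of $\sM$ can be added without violating this separation. By maximality of $F$, for every $x \in \sM$ there must exist some $z_i \in F$ with $\|x - z_i\| \le r$, so $F$ is automatically an $r$-net of $\sM$ in the Euclidean metric. Hence $\mathcal{N}(\sM, \|\cdot\|_{\R^m}, r) \le n$, and it suffices to bound $n$ from above.

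The bound on $n$ will come from a volume counting argument on $\sM$ itself. Because the embedding $\iota$ is isometric, paths in $\R^m$ are at most as long as the corresponding paths on $\sM$, so $\|x - y\|_{\R^m} \le d_\sM(x,y)$ for every $x,y \in \sM$. In particular, if $x$ lay in both $B^{\sM}_{r/2}(z_i)$ and $B^{\sM}_{r/2}(z_j)$, then $\|z_i - z_j\| \le d_\sM(z_i,z_j) < r$, contradicting the packing property. Thus the geodesic balls $\{B^{\sM}_{r/2}(z_i)\}_{i=1}^n$ are pairwise disjoint subsets of $\sM$, and summing their Riemannian volumes gives $\sum_{i=1}^n \mathrm{Vol}(B^{\sM}_{r/2}(z_i)) \le \mathrm{Vol}(\sM)$.

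Next I would establish a uniform lower bound $\mathrm{Vol}(B^{\sM}_{r/2}(z)) \ge c_\sM \, r^d$ for every $z \in \sM$ and every $r \le \delta_1$. This follows from the compactness and smoothness of $\sM$: the injectivity radius is bounded below by some $\delta_{\mathrm{inj}} > 0$, and the threshold $\delta_1$ from Lemma~\ref{lemma:M-metric}(iii) can (without loss of generality) be taken small enough that $r/2$ lies below both $\delta_{\mathrm{inj}}$ and a radius where the Jacobian of $\exp_z$ is uniformly close to $1$. Then a change of variables to normal coordinates and standard comparison estimates yield $\mathrm{Vol}(B^{\sM}_{r/2}(z)) \ge c_\sM \, (r/2)^d$, with $c_\sM > 0$ independent of $z$. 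Combining the two estimates gives $n \cdot c_\sM (r/2)^d \le \mathrm{Vol}(\sM)$, i.e.\ $n \le V(\sM)\, r^{-d}$ with $V(\sM) := 2^d \mathrm{Vol}(\sM)/c_\sM$, as claimed.

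The main obstacle is not the counting step itself but securing a uniform volume lower bound for small geodesic balls over all base points $z \in \sM$ simultaneously, which requires invoking the compactness of $\sM$ to get uniform control on the injectivity radius and on the Jacobian of $\exp_z$. Once the threshold $\delta_1$ has been chosen to accommodate this, the rest of the argument is a routine packing/volume computation.
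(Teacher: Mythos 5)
Your argument is correct and is the standard volumetric packing bound, which is essentially the same approach as in the cited source: pass from a covering to a maximal $r$-packing, use $\|x-y\|_{\R^m}\le d_{\sM}(x,y)$ (isometric embedding) to conclude the geodesic balls $B^{\sM}_{r/2}(z_i)$ are disjoint, and count by volume. One small remark: you do not actually need to shrink $\delta_1$ "without loss of generality" to get the uniform volume lower bound $\mathrm{Vol}(B^{\sM}_{r/2}(z))\gtrsim r^d$. The metric and volume comparisons already guaranteed for $\delta<\delta_1$ in Lemma~\ref{lemma:M-metric}(iii) (namely $\|u\|\le\|x-y\|\le 1.1\|u\|$, $\|x-y\|\le d_\sM(x,y)\le 1.1\|x-y\|$, and $0.9\le|\det(dV/du)|\le 1.1$) immediately give $\varphi_z\bigl(B^{\sM}_{r/2}(z)\bigr)\supset B^d_{r/(2\cdot 1.1^2)}(0)$, hence $\mathrm{Vol}(B^{\sM}_{r/2}(z))\ge 0.9\,\alpha_d\,(r/2.42)^d$ uniformly in $z$, which is exactly the lower bound you need. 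Otherwise the reduction from covering number to packing number, the disjointness via the triangle inequality, and the final counting are all sound.
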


\subsection{Differential geometry lemmas} \label{app:diff-geo-lemmas}

Consider the local neighborhood at $x\in \calM$ and the tangent space $T_x \calM$.
We denote the normal coordinate of $y \in \calM$, denoted as $s$, 
and the projected coordinate of $y$, denoted by $u$. 
Let $\varphi_x$ be the orthogonal projection of $y$ to the tangent space $T_x \sM$ (embedded in $\R^m$),
then $u =  \varphi_x(y)$.
Under Assumption \ref{assump:M},  there is  $\delta_0({\sM}) > 0$ such that for any $\delta < \delta_0$ and any $x \in \sM$, 
both $s$ and $u$ are well-defined for $y \in \sM \cap B^m_{\delta}(x)$.
Such $\delta_0$ can be chosen as $\min_x \{  inj(x) , reach (\calM, x)\}/2$, where the minimum is positive by compactness of $\calM$.

The next lemma, Lemma \ref{lemma:M-coord-expansion}, shows the comparison between the normal and projected coordinates,
and it is reproduced from Lemma 6 in \cite{coifman2006diffusion}.

\begin{lemma} [Normal and projected coordinates comparison] \label{lemma:M-coord-expansion}
	When $\delta < \delta_0(\sM)$, for $x\in\sM$ and $y\in B^m_{\delta}(x) \cap {\sM}$ ,  let $s$ and $u$ denote the normal and projected coordinates of $y$, respectively.
	Let $(e_1, \dots, e_d)$ be an orthonormal basis of $T_x \sM$.
	With respect to this basis, $s$ can be expanded as $s= \sum_{i=1}^d s_i e_i$, and the entries of $u$ are given by $u_i = \langle d\iota_x(e_i), \iota(y) - \iota(x) \rangle$.
	Then,
	\begin{equation} \label{eq:coord-comp}
		s_i = u_i +Q_{x,3}^{(N)}(u) + O(\|u\|^4), \quad i = 1,\dots, d,    
	\end{equation}
	where $Q_{x,3}^{(N)}(u)$ denotes a homogeneous polynomial of degree $3$ of $u$, whose coefficients depend on $x$. The superscript $^{(N)}$ stands for ``normal coordinate''.
	
\end{lemma}

The following lemma is adapted from Lemma A.1 in \cite{cheng2022convergence} and Lemma 7 in \cite{coifman2006diffusion}, and it demonstrates the volume and metric comparison on $\sM$.

\begin{lemma}[Volume and metric comparison]
\label{lemma:M-metric}
When $\delta < \delta_0(\sM)$, for $x\in\sM$ and $y\in B^m_{\delta}(x) \cap {\sM}$,  let $s$ and $u$ denote the normal and projected coordinates of $y$, respectively. Then,

(i) The volume comparison:
\begin{equation}\label{eq:volume-comp}
\left| \det\left(\frac{dV(y)}{du}\right) \right| = 1 + Q^{(V)}_{x,2}(u) +  O(\|u\|^3), 
\end{equation}
where $Q_{x,2}^{(V)}(u)$ denotes a homogeneous polynomial of degree $2$ of $u$, whose coefficients depend on $x$. The superscript $^{(V)}$ stands for ``volume''.

(ii)  The metric comparison:
\begin{equation}\label{eq:metric-comp1}
	\| x-y  \|_{\R^m}^2 =  \|u\|^2 + Q_{x,4}^{(D)}(u) + O(\| u \|^5),  
\end{equation}
where $Q_{x,4}^{(D)}(u)$ denotes a homogeneous polynomial of degree $4$ of $u$, whose coefficients depend on $x$. The superscript $^{(D)}$ stands for ``distance''.

(iii) There exists some $0< \delta_1(\sM)  \leq  \delta_0(\sM)$ such that,
when $\delta < \delta_1(\sM)$,
 the following hold for any $x \in \sM$ and $y\in B^m_{\delta}(x) \cap {\sM}$: 
\begin{equation}\label{eq:metric-comp2}
 \| x-y  \|_{\R^m} \leq d_{\sM}(x, y) \leq 1.1 \| x-y  \|_{\R^m}, \quad \|u\| \leq  \| x-y  \|_{\R^m} \leq 1.1 \|u\|, \quad 0.9 \leq \left| \det\left(\frac{dV(y)}{du}\right) \right|  \leq 1.1,
\end{equation}
where $d_{\sM}$ denotes the manifold geodesic distance. 
\end{lemma}

\begin{proof}[Proof of Lemma \ref{lemma:M-metric}]
	The proof of (i)(ii) directly follows from that of Lemma 7 in \cite{coifman2006diffusion}. 
	To prove the first inequality in (iii), we use (i) and Lemma \ref{lemma:M-coord-expansion}, which imply that $d_{\sM}(x, y) =  \| x-y  \|_{\R^m} + O( \| x-y  \|_{\R^m}^3)$. This expansion establishes the first inequality.
	The other two inequalities in (iii) directly follow from (ii) and (i), respectively.
\end{proof}

The following lemma provides the Lipschitz continuity 
of a $C^1$ function $f$ on $\sM$ with respect to the geodesic distance.
The lemma applies when $x,y$ are arbitrary two points on $\calM$, 
while we only use when $y$ lies inside the injectivity radius of $x$ in proving our main result. 
Recall the definitions of $\nabla f$ and $\| \nabla f \|_\infty$ from Definition \ref{def:Dkf-norm}.

\begin{lemma}[Intrinsic Lipschitz continuity of $f$ on $\calM$] \label{lemma:f-diff-bound}
    Under Assumption \ref{assump:M}, for any $f \in C^1(\sM)$, and any two points $x, y \in \sM$, we have that
    \begin{equation}  \label{eq:f-diff-bound}
        \left| f(x) - f(y) \right|  \leq \| \nabla f \|_\infty d_\sM(x, y).
    \end{equation}
\end{lemma}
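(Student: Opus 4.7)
The plan is to reduce the claim to a one-dimensional fundamental-theorem-of-calculus computation along a length-minimizing geodesic. Since $\mathcal{M}$ is connected and compact, the metric space $(\mathcal{M}, d_{\mathcal{M}})$ is complete (as already noted in the Preliminaries), so for any two points $x, y \in \mathcal{M}$ the Hopf-Rinow theorem guarantees the existence of a length-minimizing geodesic $\gamma: [0, L] \to \mathcal{M}$ with $\gamma(0) = x$, $\gamma(L) = y$, and total length $L = d_{\mathcal{M}}(x,y)$. I will parametrize $\gamma$ by arc length, so $\|\gamma'(t)\|_g = 1$ for every $t \in [0, L]$.

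Next, since $f \in C^1(\mathcal{M})$ and $\gamma$ is smooth, the composition $f \circ \gamma: [0, L] \to \mathbb{R}$ is $C^1$. Applying the fundamental theorem of calculus on $[0, L]$ and using the definition of the covariant derivative along a curve from Section \ref{sec:prelim},
\begin{equation*}
f(y) - f(x) = \int_0^L \frac{d}{dt}\bigl(f \circ \gamma\bigr)(t)\, dt = \int_0^L \nabla f\big|_{\gamma(t)}\bigl(\gamma'(t)\bigr)\, dt.
\end{equation*}

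For each $t$, the tangent vector $\gamma'(t) \in T_{\gamma(t)}\mathcal{M}$ is a unit vector, so by Definition \ref{def:Dkf-norm} we obtain
\begin{equation*}
\bigl|\nabla f\big|_{\gamma(t)}(\gamma'(t))\bigr| \;\leq\; \|\nabla f|_{\gamma(t)}\| \;\leq\; \|\nabla f\|_\infty.
\end{equation*}
Taking absolute values in the integral identity and using this pointwise bound then yields
\begin{equation*}
|f(y) - f(x)| \;\leq\; \int_0^L \|\nabla f\|_\infty \, dt \;=\; \|\nabla f\|_\infty \, L \;=\; \|\nabla f\|_\infty \, d_{\mathcal{M}}(x,y),
\end{equation*}
which is \eqref{eq:f-diff-bound}. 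There is no real obstacle in this lemma: the only subtlety is invoking the existence of a length-minimizing geodesic between arbitrary points of $\mathcal{M}$, which follows from compactness. The argument is purely intrinsic and makes no reference to the embedding into $\mathbb{R}^m$.
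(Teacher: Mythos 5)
Your proof is correct and follows essentially the same argument as the paper's: invoke Hopf--Rinow on the compact connected manifold to obtain a unit-speed length-minimizing geodesic from $x$ to $y$, apply the fundamental theorem of calculus to $f\circ\gamma$, and bound the integrand by $\|\nabla f\|_\infty$ using Definition \ref{def:Dkf-norm}. There is nothing to correct.
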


\begin{proof}[Proof of Lemma \ref{lemma:f-diff-bound}]
Since $\sM$ is compact and connected, 
by Hopf-Rinow Theorem,
 for any two points $x,y \in \sM$, there exists a length minimizing geodesic $\gamma$ joining $x$ and $y$ such that $\gamma(0) = x, \gamma(T) = y, \|\gamma'\| = 1$, where $T = d_\sM(x,y)$. 
	
	The function $f \circ \gamma: [0, T] \to \R$ is $C^1$, and then $f(\gamma(T)) - f(\gamma(0)) = \int_0^T \frac{\rd}{\rd t} f(\gamma(t)) \rd t $.
	For the first-order covariant derivative, we have
 $\frac{\rd}{\rd t} f(\gamma(t)) = \nabla_{\gamma'(t)} f(\gamma(t))$, and then 
\begin{align*}
		&\quad \left| f(x) - f(y) \right| = \left| f(\gamma(T)) - f(\gamma(0)) \right| = \left| \int_0^T \frac{\rd}{\rd t} f(\gamma(t)) \rd t \right|  = \left| \int_0^T \nabla_{\gamma'(t)} f(\gamma(t)) \rd t \right| \\
		&\leq  \int_0^T \left| \nabla_{\gamma'(t)} f(\gamma(t))  \right| \rd t  \leq \int_0^T \| \nabla f \|_\infty \rd t = \| \nabla f \|_\infty T = \| \nabla f \|_\infty d_\sM(x,y).
	\end{align*}
\end{proof}

\subsection{Kernel integral expansion lemmas} \label{app:kernel-expan-lemmas}

\subsubsection{Small-$\epsilon$ threshold}

We define several notations for the thresholds of $\epsilon$, which are used both in Lemmas \ref{lemma:step2-diff}, 
\ref{lemma:step2-other-cases-tildeW}, \ref{lemma:right-operator-degree} and frequently throughout our proofs when we require kernel bandwidth $\epsilon $ to be small.
These thresholds of $\epsilon$ are typically dependent on $\sM$, $p$, and $\phi$,  but are uniform for all $x\in \calM$.

We define the thresholds for $h: [0, \infty) \to [0, \infty) $ under Assumption \ref{assump:k0-smooth}.
The following notations are defined assuming that $\phi$ meets Assumption \ref{assump:phi-diff}(iv).

	We first define $\delta_\epsilon[h]$ as follows,
	\begin{equation}  \label{eq:def-delta-eps}
		\delta_\epsilon[h]  \coloneqq 1.1 c_{\max} \rho_{\max} \sqrt{\frac{10+d}{a[h]}\epsilon\log\frac{1}{\epsilon}},     
	\end{equation}
	where $a[h]$ represents the decay parameter constant such that $|h^{(l)}(\eta)| \leq a_l[h] \exp(-a[h] \eta)$ holds, $c_{\max}$ is introduced in Assumption \ref{assump:phi-diff}(iv), and $\rho_{\max}$ is defined in \eqref{eq:def-rhomin-rhomax}. 
	We will consider $\epsilon$ small enough s.t. $\delta_\epsilon[h] < \delta_1(\calM)$, where $\delta_1(\sM)$ is introduced in Lemma \ref{lemma:M-metric}(iii).
	Because we will use such small-$\epsilon$ thresholds for different values of $a[h]$ and $c_{\rm max}$, 
	we  choose any sufficiently small $\epsilon_D(h, c_{\max})\in(0,e^{-1})$ such that
	\begin{align} \label{eq:def-epsD}
		\forall \epsilon < \epsilon_D(h, c_{\max}), \quad \delta_\epsilon[h] < \delta_1(\sM).
	\end{align}
	Such a choice exists because $\delta_\epsilon[h]\to 0$ as $\epsilon \to 0$.
	$\epsilon_D(\cdot,\cdot)$ depends on $\sM$, $p$, $h$, and $\phi$.
	The subscript $_D$ is to emphasize that $h$ is differentiable here.

\subsubsection{Integrals of the form $\int_\sM K_\epsilon(x,y)f(y) dV(y)$ with fixed bandwidth}
\label{app:kernel-expan-lemmas-fixed-band}

We consider  fixed bandwidth kernels, where $K_\epsilon(x,y)$ takes the form $\epsilon^{-d/2} h( \|x-y\|^2 / \epsilon )$ for some $h : [0, \infty) \to [0, \infty)$. 
Recall the definition of  $\| \nabla^l  f \|_\infty$  in Section \ref{sec:prelim}.

When $h$ is $C^2$ and exponentially decaying, Lemma A.3 in \cite{cheng2022convergence} (also reproduced from Lemma 8 in \cite{coifman2006diffusion}) showed that 
	\[
	\int_{\sM} h\left( \frac{ \| x-y \|^2}{\epsilon} \right) f(y) dV(y)= \epsilon^{d/2} \left( m_0[h] f(x) +  \epsilon \frac{m_2[h]}{2} ( \Delta f(x) + \omega(x) f(x)) 
	+ O^{ [ f^{(\le 4)} ]}(\epsilon^2) \right) ,
	\]
	where $\omega(x)$ is a function involving local derivatives of the extrinsic manifold coordinates at $x$. 
	The residual term denoted by big-$O$ with superscript $f^{(\le 4)}$
	means that the constant 
	involves up to the 4-th derivative of $f$ on ${\sM}$.

The next lemma, Lemma \ref{lemma:G-expansion-h-indicator} derives the same expansion for indicator function $h$. 
It can be deduced from the Lemmas SI.1 and SI.3 in \cite{wu2018think}.

\begin{lemma}
\label{lemma:G-expansion-h-indicator}
Under Assumption \ref{assump:M}, 
let $h(\eta) = \mathbf{1}_{[0,1]}(\eta)$, 
when $\epsilon$ is sufficiently small, 
for any $f\in C^3(\sM)$ and $x \in {\sM}$,
\begin{align} \label{eq:G-expansion-h-indicator}
 \int_{\sM} h \left(  \frac{ \| x- y\|^2 }{\epsilon}\right)  f(y) dV(y) 
&=  \epsilon^{d/2} \left(  m_0[h] f(x) + \epsilon\frac{m_2[h]}{2}(\Delta f(x) + \omega(x) f(x)) +  O^{ [ f^{(\le 3)} ]}( \epsilon^{3/2} )  \right),    
\end{align}
where 
\begin{align} \label{eq:def-omega}
	\omega(x) \coloneqq \frac{1}{12m_2[h]} \int_{S^{d-1}} \| \Pi_x(\theta, \theta) \|^2 d\theta - \frac{1}{3} s(x), 
\end{align}
here, $\Pi_x$ is the second fundamental form at $x$, and $s(x)$ is the scalar curvature at $x$.
The residual term denoted by big-$O$ with superscript $f^{(\le 3)}$
 means that the constant 
involves up to the 3rd derivative of $f$ on ${\sM}$.
Specifically, if the residual term is denoted as $r_{f, \epsilon}(x)$, it satisfies 
$ \sup_{x \in {\sM}} |r_{f, \epsilon}(x)| \le C(f) \epsilon^{3/2}$, where
$C(f) = c({\sM}, h) (1+ \sum_{l=0}^3 \| \nabla^l f \|_\infty)$. 
\end{lemma}

\subsubsection{Integrals of the form $\int_\sM K_\epsilon(x,y) \brho_r(y)^i p(y) dV(y)$ } \label{app:kernel-expan-degree}

Lemma \ref{lemma:right-operator-degree} below is used in our proofs in two ways:

\begin{enumerate}

    \item Analyze the bias error of the independent sums $\bar{D}(x) $ in the convergence analysis of the random-walk graph Laplacians $L_\rw$,
    where $\bar{D}(x)$ is defined as in  \eqref{eq:def-Dbar}. 
The expectation of the sum (the population counterpart of $\bar D$) is
\[\epsilon^{-d/2} \int_{\sM} 
{  k_0 \left(  \frac{\| x - y\|^2}{  \epsilon  \phi( \brho_r(x), \brho_r(y) )^2  } \right)}  p(y)  dV(y),
\]
and Lemma \ref{lemma:right-operator-degree}(i) expands the integral up to $O(\epsilon)$ term.
This bounds the bias error. 
Using Bernstein’s inequality and the upper bound of the variance of the independent random variables, which can also be derived from Lemma \ref{lemma:right-operator-degree}, the variance error can also be bounded.

\item  Provide upper bounds for the expectations and variances of independent sums of random variables.
Specifically, these expectations and variances can be upper bounded by the kernel integrals in the form of
\[\epsilon^{-d/2} \int_{\sM} 
k_0 \left(  \frac{\| x - y\|^2}{  \epsilon  \max\{ \brho_r(x), \brho_r(y) \}^2  } \right)  \brho_r(y)^{i} p(y)  dV(y),\quad  i = -4, -2, -1, \dots,2.\] 
See \eqref{eq:bound-bias-13} and \eqref{eq:bound-var2} in the proof of Proposition \ref{prop:step1-diff} for an example.
\end{enumerate}

\begin{lemma} [Integrals of the form $\int_\sM K(x,y) \brho_r(y)^i p(y) dV(y)$]   \label{lemma:right-operator-degree}
	Under Assumptions \ref{assump:M}-\ref{assump:p}, let $\tilde{r}_0$ be defined in \eqref{eq:def-r0-tilder0}.
	Suppose $p \in C^4(\sM)$.
	
	(i) Suppose $k_0$ and $\phi$ satisfy Assumptions \ref{assump:k0-smooth} and \ref{assump:phi-diff} respectively. When $\epsilon$ is sufficiently small,  for any $r \in [0, \tilde{r}_0]$, any $x \in \sM$, and $ i = -2, -1, \dots, 2$, 
	\begin{align}
		&\epsilon^{-d/2} \int_{\sM} 
		{  k_0 \left(  \frac{\| x - y\|^2}{  \epsilon  \phi (\brho_r(x), \brho_r(y) )^2  } \right)}  \brho_r(y)^{i} p(y)  dV(y) 
		= m_0[k_0]  p \brho_r^{d+i}(x) + O^{[p]}( \epsilon ),   	\label{eq:ker-expan-degree-fast} 		\\ 
		&\epsilon^{-d/2} \int_{\sM} 
		k_0 \left(  \frac{\| x - y\|^2}{  \epsilon  \phi (\brho_r(x), \brho_r(y) )^2  } \right)   \frac{\brho_r(x)^2}{   \phi( \brho_r(x), \brho_r(y) )^2  }  p(y)  dV(y) 
		= m_0[k_0]  p \brho_r^d(x) + O^{[p]}( \epsilon ).
		\label{eq:ker-expan-degree-Wtilde-fast}
	\end{align}	
	The constants in $O^{[p]} (\epsilon)$ depend on
	$\sM, p, k_0, \phi$ and are uniform for $x$, $r$, and $i$. Specifically, they depend on $p$ through $p_{\min}$, $p_{\max}$, $\|\nabla^l p\|_\infty, l \leq 4$, and
	on $\phi$ through $c_{\min}, c_{\max}$, and $C_{\phi, l}$ as in \eqref{eq:def-C-phi-l} for $l \leq 2$.
	The threshold for $\epsilon$ is $\epsilon_D(k_0, c_{\max})$ defined in \eqref{eq:def-epsD}, the same as that in Lemma \ref{lemma:step2-diff}.

	(ii) Suppose $k_0$ satisfies Assumption \ref{assump:k0-smooth} and $\phi(u,v) = \max\{u,v\}$. When $\epsilon$ is sufficiently small,  for any $r \in [0, \tilde{r}_0]$, any $x \in \sM$, and $ i = -4, -2, -1, \dots, 2$, 
	\begin{align}
		&\epsilon^{-d/2} \int_{\sM} 
		{  k_0 \left(  \frac{\| x - y\|^2}{  \epsilon  \phi (\brho_r(x), \brho_r(y) )^2  } \right)}  \brho_r(y)^{i} p(y)  dV(y) 
		= m_0[k_0]  p \brho_r^{d+i}(x) 
		+ 
		{O^{[p]}( \sqrt{\epsilon} )},  
		\label{eq:ker-expan-degree-slowI} \\
		&\epsilon^{-d/2} \int_{\sM} 
		k_0 \left(  \frac{\| x - y\|^2}{  \epsilon  \phi (\brho_r(x), \brho_r(y) )^2  } \right)   \frac{\brho_r(x)^2}{   \phi( \brho_r(x), \brho_r(y) )^2 }  p(y)  dV(y) 
		= m_0[k_0]  p \brho_r^d(x) 
		+ 
		{O^{[p]}( \sqrt{\epsilon} )}.
		\label{eq:ker-expan-degree-Wtilde-slowI}
	\end{align}
	The constants in 
	${O^{[p]} (\sqrt{\epsilon})}$ depend on
	$\sM, p, k_0$ and are uniform for $x$, $r$, and $i$.  Specifically, they depend on $p$ through $p_{\min}$, $p_{\max}$, $\|\nabla^l p\|_\infty, l \leq 4$.
	The threshold for $\epsilon$ is $\epsilon_D(k_0, 1)$ defined in \eqref{eq:def-epsD}.
	
\end{lemma}

\begin{proof}[Proof of Lemma \ref{lemma:right-operator-degree}]

We utilize the same notations as in the proof of Lemma \ref{lemma:step2-diff}, including the tangent and normal coordinates $u$ and $s$, and the functions $\tilde{p}$, $\beta_x$, and $\tilde{\beta}_x$.

\vspace{0.5em}
\noindent $\bullet$
Proof of (i): 
To prove \eqref{eq:ker-expan-degree-fast}, by the definition of $\beta_x$ in \eqref{eq:def-beta}, 
it suffices to prove that for $\epsilon$ sufficiently small, for $i = -2, -1, \dots, 2$,
\begin{align}
	\epsilon^{-d/2} \int_{\sM} 
	{  k_0 \left(  \frac{\| x - y\|^2}{  \epsilon  \phi (\brho_r(x), \brho_r(y) )^2  } \right)}  \beta_x(y)^{i} p(y)  dV(y) 
	&= m_0[k_0]  p \brho_r^d(x) + O^{[p]}( \epsilon ).   	\label{eq:ker-expan-degree-fast-1} 
\end{align}

\noindent\underline{Proof of \eqref{eq:ker-expan-degree-fast-1}: }
The proof follows similarly from the proof of Lemma \ref{lemma:step2-diff}. 
In Step 1, the local truncation to $B^m_{\delta_\epsilon}(x) \cap \sM$ is the same as Step 1 of the proof of Lemma \ref{lemma:step2-diff}. 
In Step 2, we apply Taylor expansion to $p$ up to the second degree, i.e. $p(y) = \tilde{p}(0) + \nabla \tilde{p}(0) \cdot u + O^{[p]}(\|u\|^2)$, 
to $\beta_x^i$ up to the second order, i.e.
$	\beta_x(y)^i = 1 + i \nabla \tilde{\beta}_x(0) \cdot u +  O^{[p]}(\|u\|^2)$.
Moreover, we use the expansion of the quantity inside $k_0$ as given in \eqref{eq:phi-taylor}
and apply Taylor expansion to $k_0$ to the second order to expand the kernel.
Then, by these expansions along with the volume comparison in \eqref{eq:volume-comp}, we can change the variable to $u$ as in the argument used to obtain \eqref{eq:lemma-bias-fast-step2}. 
In Step 3, we drop the residual terms in the expansions using \eqref{eq:bound-integrals-eps} and change the integration domain to $\R^d$ using \eqref{eq:bound-integrals-eps-change-Rd} as in the argument used to obtain \eqref{eq:lemma-bias-fast-step2}, with an error of $O^{[p]}(\epsilon)$. 
Then, in Step 4, we collect the leading term, $m_0[k_0]  p \brho_r^d(x)$, and the error remains $O^{[p]}(\epsilon)$.
The threshold for $\epsilon$ is the same as Lemma \ref{lemma:step2-diff}, which can be found in Step 5 in the proof of Lemma \ref{lemma:step2-diff}.
This finishes the proof of \eqref{eq:ker-expan-degree-fast-1}.

\vspace{0.5em}
\noindent\underline{Proof of \eqref{eq:ker-expan-degree-Wtilde-fast}: }
The proof follows the same steps as the proof of \eqref{eq:ker-expan-degree-fast-1} above, with the only difference being the use of the expansion of $\brho_r(x) /  \phi( \brho_r(x), \brho_r(y) )$ as given in \eqref{eq:phi/rho-taylor}.

\vspace{0.5em}
\noindent $\bullet$
Proof of (ii): 
Again, by definition of $\beta_x$ in \eqref{eq:def-beta}, to prove \eqref{eq:ker-expan-degree-slowI} it suffices to prove, for $i = -4, -2, -1, \dots, 2$,
\begin{align}
	\epsilon^{-d/2} \int_{\sM} 
	{  k_0 \left(  \frac{\| x - y\|^2}{  \epsilon  \phi (\brho_r(x), \brho_r(y) )^2  } \right)}  \beta_x(y)^{i} p(y)  dV(y) 
	&= m_0[k_0]  p \brho_r^d(x) +  O^{[p]}( \sqrt{\epsilon} ) .   
		\label{eq:ker-expan-degree-slowI-1} 
\end{align}
After that we will prove \eqref{eq:ker-expan-degree-Wtilde-slowI}.

\vspace{0.5em}
\noindent\underline{Proof of \eqref{eq:ker-expan-degree-slowI-1}:}
The proof follows similarly from the proof of Lemma \ref{lemma:step2-diff}. 
In Step 1, the local truncation on $\sM$ is the same as Step 1 of the proof of Lemma \ref{lemma:step2-diff}, except that the threshold for $\epsilon$ is 
$
{\epsilon_D(k_0,1)}$.
In Step 2, the expansion of $p$ and $\beta_x^i$ are the same as in the proof for case (i).
Then, combined with the fact that $\phi$ is the maximum function,
\begin{align*}
	&\quad \phi(\brho_r(x), \brho_r(y)) =  \brho_r(x) \max\{ 1, \beta_x(y) \}  \\
	&= \brho_r(x)  \left( \max\{ 1, 1 + \nabla \tilde\beta_x(0) \cdot u  + O^{[p]}(\|u\|^2) \} \right)  \\
	&= \brho_r(x)  \left( 1 + [ \nabla \tilde\beta_x(0) \cdot u]_+  + O^{[p]}(\|u\|^2) \right),
\end{align*}
where $a_+ := \max\{a,0\}$.
Consequently, 
\begin{align} \label{eq:phi/rho-taylor-min}
	\frac{\brho_{r}(x)}{ \phi(\brho_r(x), \brho_r(y) ) } = 1 - [ \nabla \tilde\beta_x(0) \cdot u]_+  + O^{[p]}(\|u\|^2).
\end{align}	Combining \eqref{eq:phi/rho-taylor-min} with \eqref{eq:metric-comp1}, we have
\begin{equation} \label{eq:phi-taylor-min}
	\frac{ \|x-y\|^2 }{ (\phi(\brho_r(x), \brho_r(y)) / \brho_r(x))^2 } = \|u\|^2 - 2  \|u\|^2 [ \nabla \tilde\beta_x(0) \cdot u]_+ + O^{[p]}(\|u\|^4).
\end{equation}
For the expansion of the kernel, we use the expansion of the quantity inside $k_0$ in \eqref{eq:phi-taylor-min} and then still apply Taylor expansion to $k_0$ to the second order.
By these expansions and the volume expansion in \eqref{eq:volume-comp}, we change the variable to $u$ as in the argument used to obtain the change of variable.
{The first-order term generated by \eqref{eq:phi-taylor-min} is proportional to $[\nabla \tilde\beta_x(0)\cdot u]_+$ and does not vanish by symmetry. After the change of variables $u=\sqrt{\epsilon}\,\brho_r(x)v$, its integral is $O^{[p]}(\sqrt{\epsilon})$, uniformly in $x$ and $r\in[0,\tilde r_0]$. The remaining first-order terms that are linear in $u$ vanish by symmetry, while the second-order and residual terms, as well as the error from changing the integration domain to $\R^d$, are $O^{[p]}(\epsilon)$. Thus, collecting the leading term gives $m_0[k_0]p\brho_r^d(x)+O^{[p]}(\sqrt{\epsilon})$.}
The threshold for $\epsilon$ is 
$
{\epsilon_D(k_0,1)}$. 
This proves \eqref{eq:ker-expan-degree-slowI-1}.

\vspace{0.5em}
\noindent\underline{Proof of \eqref{eq:ker-expan-degree-Wtilde-slowI}: }
{The proof follows the same steps as the proof of \eqref{eq:ker-expan-degree-slowI-1} for case (ii), using in addition the expansion of $\brho_r(x) /  \phi( \brho_r(x), \brho_r(y) )$ in \eqref{eq:phi/rho-taylor-min}. Its first-order positive-part term contributes $O^{[p]}(\sqrt{\epsilon})$ after the same change of variables, while all remaining terms are $O^{[p]}(\epsilon)$. This proves \eqref{eq:ker-expan-degree-Wtilde-slowI}.}
\end{proof}

\subsection{Other auxiliary lemmas}

\begin{lemma}[Bernstein inequality]
	Let $\{Y_j\}_{j=1}^N$ be i.i.d. bounded random variables.
	Suppose $\E Y_j = 0$, $|Y_j| \leq L$, and $\E Y_j^2 \leq \mu$ for positive constants $L$ and $\mu$. 
	Then, for any $\tau > 0$,
	\[ 
 \Pr \left( \frac{1}{N} \sum_{j=1}^N Y_j  > \tau \right),  
 \Pr \left( \frac{1}{N} \sum_{j=1}^N Y_j  < -\tau \right) \leq \exp\left( - \frac{N \tau^2}{ 2( \mu + \frac{\tau L}{3}) } \right).   \]
	In particular, when $\tau L \leq 3 \mu$, both the tail probabilities are bounded by $\exp(- \frac{1}{4} \frac{N \tau^2}{\mu})$.
\end{lemma}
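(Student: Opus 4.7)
The plan is to prove this classical Bernstein bound by the standard Chernoff (exponential moment) method. First I would work with the centered sum $S_N = \sum_{j=1}^N Y_j$ and, for a parameter $\lambda > 0$ to be chosen, apply Markov's inequality to the nonnegative random variable $e^{\lambda S_N}$, which by independence factorizes into $\prod_j \E[e^{\lambda Y_j}]$. This reduces the problem to bounding the moment generating function of a single $Y_j$.

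The key step is the moment generating function bound. Using $|Y_j| \le L$ and the elementary inequality $|Y_j|^k \le L^{k-2} Y_j^2$ for $k \ge 2$, I would expand
\begin{equation*}
\E[e^{\lambda Y_j}] = 1 + \lambda \E Y_j + \sum_{k\ge 2} \frac{\lambda^k \E Y_j^k}{k!} \le 1 + \frac{\mu}{L^2}\left(e^{\lambda L} - 1 - \lambda L\right),
\end{equation*}
where I used $\E Y_j = 0$ and $|\E Y_j^k| \le \mu L^{k-2}$. Then, using the standard elementary bound $e^x - 1 - x \le \frac{x^2/2}{1 - x/3}$ valid for $0 \le x < 3$, together with $1+t \le e^t$, I would conclude that for $0 \le \lambda < 3/L$,
\begin{equation*}
\E[e^{\lambda Y_j}] \le \exp\!\left(\frac{\lambda^2 \mu / 2}{1 - \lambda L/3}\right).
\end{equation*}

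Combining this with Markov's inequality gives, for any $\tau > 0$,
\begin{equation*}
\Pr\!\left[\frac{1}{N}\sum_j Y_j > \tau\right] \le \exp\!\left(-\lambda N \tau + \frac{N \lambda^2 \mu/2}{1-\lambda L/3}\right).
\end{equation*}
I would then optimize the exponent over $\lambda \in [0, 3/L)$ by choosing $\lambda = \tau/(\mu + \tau L/3)$; a short calculation shows this choice makes the exponent equal to $-N\tau^2/(2(\mu + \tau L/3))$, which is the claimed upper tail bound. The lower tail follows by applying the same argument to $-Y_j$, which satisfies the same hypotheses. Finally, the ``in particular'' statement is immediate: if $\tau L \le 3\mu$, then $\mu + \tau L/3 \le 2\mu$, so the exponent is at most $-N\tau^2/(4\mu)$.

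No step here is a real obstacle since this is a textbook calculation; the only place to be slightly careful is the elementary inequality $e^x - 1 - x \le (x^2/2)/(1 - x/3)$, which I would justify briefly by comparing the Taylor series $\sum_{k\ge 2} x^k/k!$ term-by-term with $\sum_{k\ge 2} x^k/(2\cdot 3^{k-2})$, using $k! \ge 2\cdot 3^{k-2}$ for $k \ge 2$.
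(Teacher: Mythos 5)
The paper states this as a standard auxiliary lemma and does not supply a proof, so there is no paper argument to compare against; it is treated as a textbook fact. Your proof is the standard Chernoff/Cram\'er--type derivation of Bernstein's inequality, and every step is correct: the moment bound $|\E Y_j^k| \le \mu L^{k-2}$ follows from $|Y_j|^k \le L^{k-2} Y_j^2$ and $\E Y_j^2 \le \mu$, the series comparison giving $e^x - 1 - x \le \tfrac{x^2/2}{1-x/3}$ for $0 \le x < 3$ is justified by $k! \ge 2\cdot 3^{k-2}$, the choice $\lambda = \tau/(\mu + \tau L/3)$ satisfies $\lambda L < 3$ and makes the exponent collapse to $-N\tau^2/\bigl(2(\mu + \tau L/3)\bigr)$, and the lower tail and the ``in particular'' clause follow exactly as you say. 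This is a complete and correct proof of the stated bound.
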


\begin{lemma}\label{lemma:c1=1/2}
Suppose $\phi \in C^2(\R_+ \times \R_+)$, $\forall u,v \in \R_+,  \phi(u,v)=\phi(v,u)$, $\phi(u,u) = u$, and $\partial_1 \phi(u,u) = c$ for some constant $c$.
Then $c = 1/2$.
\end{lemma}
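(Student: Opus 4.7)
The plan is to extract the value of $c$ from the constraint $\phi(u,u) = u$ by differentiating in $u$ and exploiting symmetry. The key observation is that the diagonal identity $\phi(u,u) = u$ holds for all $u \in \R_+$, so I can differentiate both sides with respect to $u$.

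First, I would apply the chain rule to the left-hand side. Writing $g(u) := \phi(u,u)$, we have $g(u) = u$ on $\R_+$, so $g'(u) = 1$. On the other hand, since $\phi \in C^2$, the chain rule gives
\begin{equation*}
g'(u) = \partial_1 \phi(u,u) + \partial_2 \phi(u,u).
\end{equation*}
Hence $\partial_1 \phi(u,u) + \partial_2 \phi(u,u) = 1$ for every $u \in \R_+$.

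Next, I would use the symmetry assumption $\phi(u,v) = \phi(v,u)$ to relate the two partials along the diagonal. Differentiating the identity $\phi(u,v) = \phi(v,u)$ with respect to the first argument at the point $(u,u)$ and comparing to the derivative of $\phi(v,u)$ with respect to its second slot (again at $(u,u)$) yields $\partial_1 \phi(u,u) = \partial_2 \phi(u,u)$. Substituting into the previous equation gives $2\partial_1\phi(u,u) = 1$, i.e., $\partial_1\phi(u,u) = 1/2$. Combined with the hypothesis $\partial_1\phi(u,u) = c$, this forces $c = 1/2$.

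There is essentially no obstacle here: the proof is a two-line calculation using only (i) differentiation of the diagonal identity, and (ii) symmetry to equate $\partial_1$ and $\partial_2$ on the diagonal. The $C^2$ regularity of $\phi$ is more than enough to justify the chain rule application; only $C^1$ is strictly needed. No further technical machinery from earlier in the paper is required.
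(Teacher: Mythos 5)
Your proof is correct and rests on the same two ingredients the paper uses (the diagonal identity $\phi(u,u)=u$ and the symmetry $\partial_1\phi(u,u)=\partial_2\phi(u,u)$); the only difference is presentational, as you differentiate $g(u)=\phi(u,u)$ directly via the chain rule, whereas the paper Taylor-expands $\phi(u,u+h)$ and $\phi(u+h,u)$ to first order and compares the $O(h)$ coefficients. Your version is slightly more streamlined and, as you note, needs only $C^1$ regularity.
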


\begin{proof}[Proof of Lemma \ref{lemma:c1=1/2}]
Since $\phi$ is symmetric, $\partial_1 \phi(u,u) = \partial_2 \phi(u,u) = c$, $\forall u \in \R_+$. Then for small $h$, we have
  \begin{align*}
  \phi( u, u+ h) 
    & = \phi(u,u) + c h + O(h^2)
    = u + ch + O(h^2), \\
  \phi( u+h, u) 
   & = \phi(u+h,u+h) - c h + O(h^2) 
   = u+h - ch + O(h^2).
  \end{align*}
Since $  \phi( u, u+ h)  = \phi( u+h, u) $, comparing the $O(h)$ term gives that $ c = 1/2$.
\end{proof}

\begin{lemma} \label{lemma:rela-err-uv-0.1}
	
	For $u, v \in \R_+$, if $\frac{|u-v|}{u} \leq 0.1$, then we have
	\begin{align} \label{eq:rela-err-uv-0.1}
		\left| \frac{1}{u^2} - \frac{1}{v^2}  \right| \leq \frac{3}{u^2} \frac{ |u-v| }{ u }. 
	\end{align}
	
\end{lemma}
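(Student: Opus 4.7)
The plan is to prove the bound by a direct algebraic manipulation, factoring the difference of reciprocals and then controlling the resulting ratio using the hypothesis $|u-v|/u \le 0.1$.

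First, I would write
\[
\frac{1}{u^2} - \frac{1}{v^2} = \frac{v^2 - u^2}{u^2 v^2} = \frac{(v-u)(v+u)}{u^2 v^2},
\]
so that
\[
\left| \frac{1}{u^2} - \frac{1}{v^2} \right| = \frac{|u-v|}{u^2} \cdot \frac{u+v}{v^2}.
\]
The task then reduces to showing $(u+v)/v^2 \le 3/u$, which I would isolate as a standalone estimate to make the comparison to the claimed constant $3$ transparent.

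Next, I would use the hypothesis $|u-v|/u \le 0.1$ to deduce the two-sided bound $0.9 u \le v \le 1.1 u$. Plugging these in gives
\[
\frac{u+v}{v^2} \le \frac{u + 1.1 u}{(0.9 u)^2} = \frac{2.1}{0.81 \, u} < \frac{3}{u}.
\]
Combining this with the previous display yields $\left| \frac{1}{u^2} - \frac{1}{v^2} \right| \le \frac{3}{u^2} \cdot \frac{|u-v|}{u}$, which is the desired inequality.

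There is no real obstacle here: the lemma is an elementary calculus/algebra fact, and the only choice to make is verifying that the constant $2.1/0.81$ actually fits below $3$, which it does with a bit of room to spare. The proof is essentially one display followed by the two-sided bound on $v$.
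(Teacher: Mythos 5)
Your proof is correct and takes essentially the same route as the paper: both factor the difference via $v^2 - u^2 = (v-u)(v+u)$, use the two-sided bound $0.9u \le v \le 1.1u$, and arrive at the same controlling constant $2.1/0.81 < 3$. Your presentation is slightly more direct since it avoids the paper's intermediate normalization to ratios $v/u$, but the algebraic content is identical.
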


\begin{proof}[Proof of Lemma \ref{lemma:rela-err-uv-0.1}]
	
	If $\frac{|u-v|}{u} \leq 0.1$, then $v \geq 0.9 u$. Therefore,
	\begin{align*}
		\left|  1 - \frac{u^2}{v^2}  \right| &= \frac{u^2}{v^2} \left|  1 - \frac{v^2}{u^2}  \right| =   \frac{u^2}{v^2} \left|  1 + \frac{v}{u}  \right|   \left|  1 - \frac{v}{u}  \right|  \leq \frac{1}{0.9^2} \left(  \left|  1 - \frac{v}{u}  \right|  + 2  \right)   \left|  1 - \frac{v}{u}  \right|  \\
		&\leq \frac{2.1}{0.9^2}  \left|  1 - \frac{v}{u}  \right|  <  3  \left|  1 - \frac{v}{u}  \right|. 
	\end{align*}
\end{proof}

\begin{lemma} \label{lemma:fast-rate}
	For the bound in \eqref{eq:fast-rate1}, the optimal choices of $\epsilon$ and $k$ with respect to $N$ are $\epsilon \sim N^{-2/(d+6)}$ and $k \sim N^{6/(d+6)}$, respectively. The resulting overall rate is ${O( N^{-2/(d+6)} \sqrt{\log N})}$, omitting the constant dependence in the big-$O$ notation.
\end{lemma}

\begin{proof} [Proof of Lemma \ref{lemma:fast-rate}]
	
	By the definition of $r_k$ in \eqref{eq:def-rk}, $r_k$ and $k$ have a one-to-one correspondence. Therefore, it is equivalent to find the optimal choices of $\epsilon$ and $r_k$ for the bound below:
	\begin{eqnarray} \label{eq:lema-fast-rate-1}
		O\left(\epsilon  +   \frac{1}{\sqrt{\epsilon}} r_k^{3/2}   +   r_k  +   \sqrt{\frac{1}{N \epsilon r_k^{d/2}}} +   \;   \sqrt{\frac{1}{N\epsilon^{d/2+1}}}    \right),
	\end{eqnarray}
	which reformulates \eqref{eq:fast-rate1} in terms of $r_k$ instead of $k$.
	Here, we omit the constant dependence in the big-$O$ notations and the $\sqrt{\log N}$ factors.
	
	Assume $\epsilon \sim N^{-\alpha}, r_k \sim N^{-\beta}$, where $\alpha, \beta > 0$. Then, the error bound in \eqref{eq:lema-fast-rate-1} becomes
	\[ O\left(N^{-\alpha}  +  N^{- \frac{3}{2} \beta + \frac{1}{2} \alpha}   +   N^{-\beta}  +  N^{- \frac{1}{2} + \frac{1}{2} \alpha + \frac{d}{4}\beta } +   N^{- \frac{1}{2} + \frac{d+2}{4}\alpha}   \right), \]
	which equals
	\begin{align} \label{eq:def-R-alpha-beta-fast}
		O(N^{ - R(\alpha, \beta) }), \quad  R(\alpha, \beta) \coloneqq \min  \left\{  \alpha,  \; \frac{3}{2} \beta - \frac{1}{2} \alpha, \; \beta, \;  \frac{1}{2}  - \frac{d}{4}\beta - \frac{1}{2} \alpha , \;  \frac{1}{2} - \frac{d+2}{4}\alpha  \right\}.  
	\end{align}
	Therefore,  finding the optimal choices of $\epsilon$ and $r_k$ is equivalent to finding the values of $\alpha, \beta$ that achieve
	\[ \max_{\alpha,  \beta}  R(\alpha, \beta) = \max_{\alpha,  \beta}  \; \min  \left\{  \alpha,  \; \frac{3}{2} \beta - \frac{1}{2} \alpha, \; \beta, \;  \frac{1}{2} - \frac{d}{4}\beta  - \frac{1}{2} \alpha , \;  \frac{1}{2} - \frac{d+2}{4}\alpha  \right\}.  \]
	{For any $\alpha,\beta>0$,}
	\[
		{R(\alpha,\beta)
		\leq \min\left\{\alpha,\frac{1}{2}-\frac{d+2}{4}\alpha\right\}
		\leq \frac{2}{d+6}.}
	\]
	{The last upper bound is attained only when $\alpha=2/(d+6)$. Taking
	$\alpha=\beta=2/(d+6)$, all five quantities in the minimum defining
	$R(\alpha,\beta)$ equal $2/(d+6)$, so the upper bound is attained.
	Moreover, if $R(\alpha,\beta)=2/(d+6)$, the first and fifth terms force
	$\alpha=2/(d+6)$; with this value of $\alpha$, the second and fourth terms
	force $\beta\geq 2/(d+6)$ and $\beta\leq 2/(d+6)$, respectively.
	Thus the maximizer is unique.}
	Namely, the optimal choices of $\epsilon$ and $r_k$ in terms of the order of $N$ for the error bound in \eqref{eq:fast-rate1} are $\epsilon \sim N^{-2/(d+6)}$ and  $r_k \sim N^{-2/(d+6)}$ (with the corresponding choice of $k$ being $k \sim N^{6/(d+6)}$).
\end{proof}

\section{Additional experimental details}

\subsection{Simulated 1D data} \label{app:exp-data}

\begin{figure}
\centering
\subfigure[]{\includegraphics[width=0.24\textwidth]{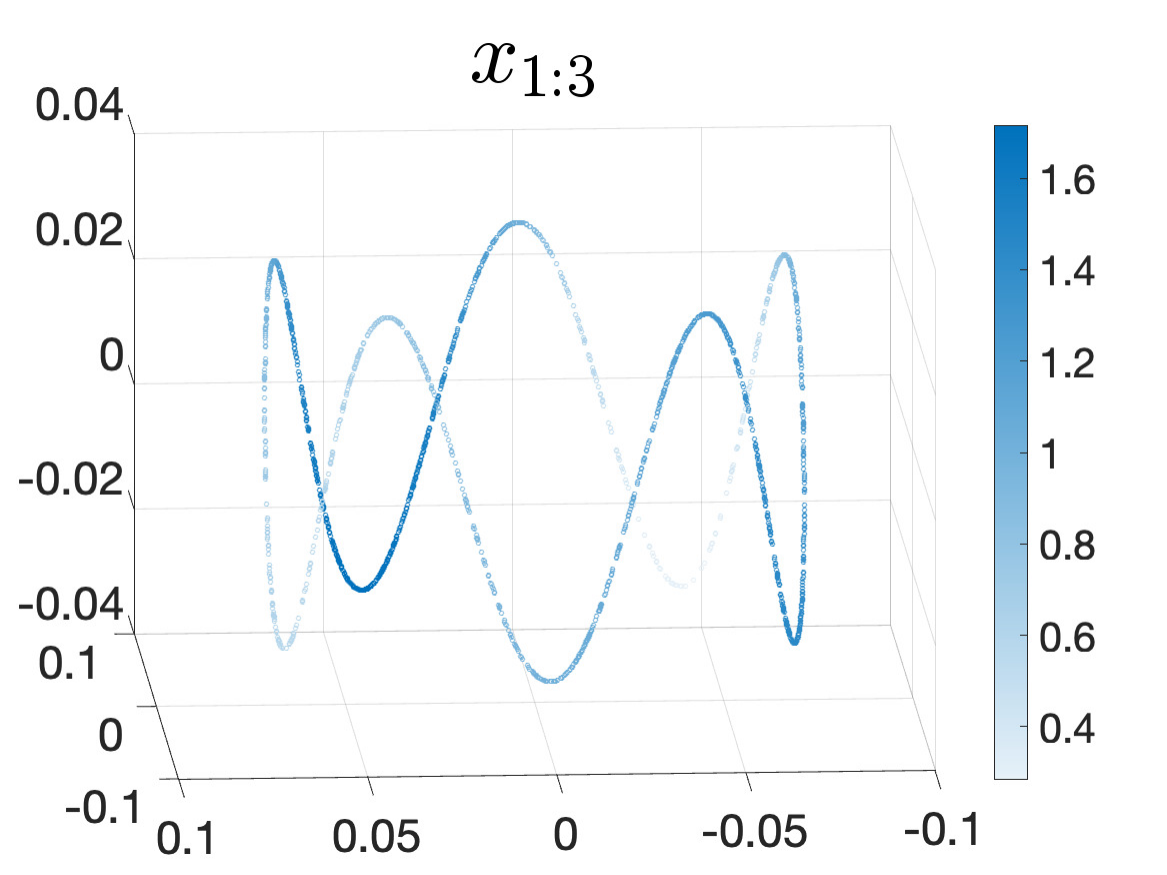}} 
\subfigure[]{\includegraphics[width=0.24\textwidth]{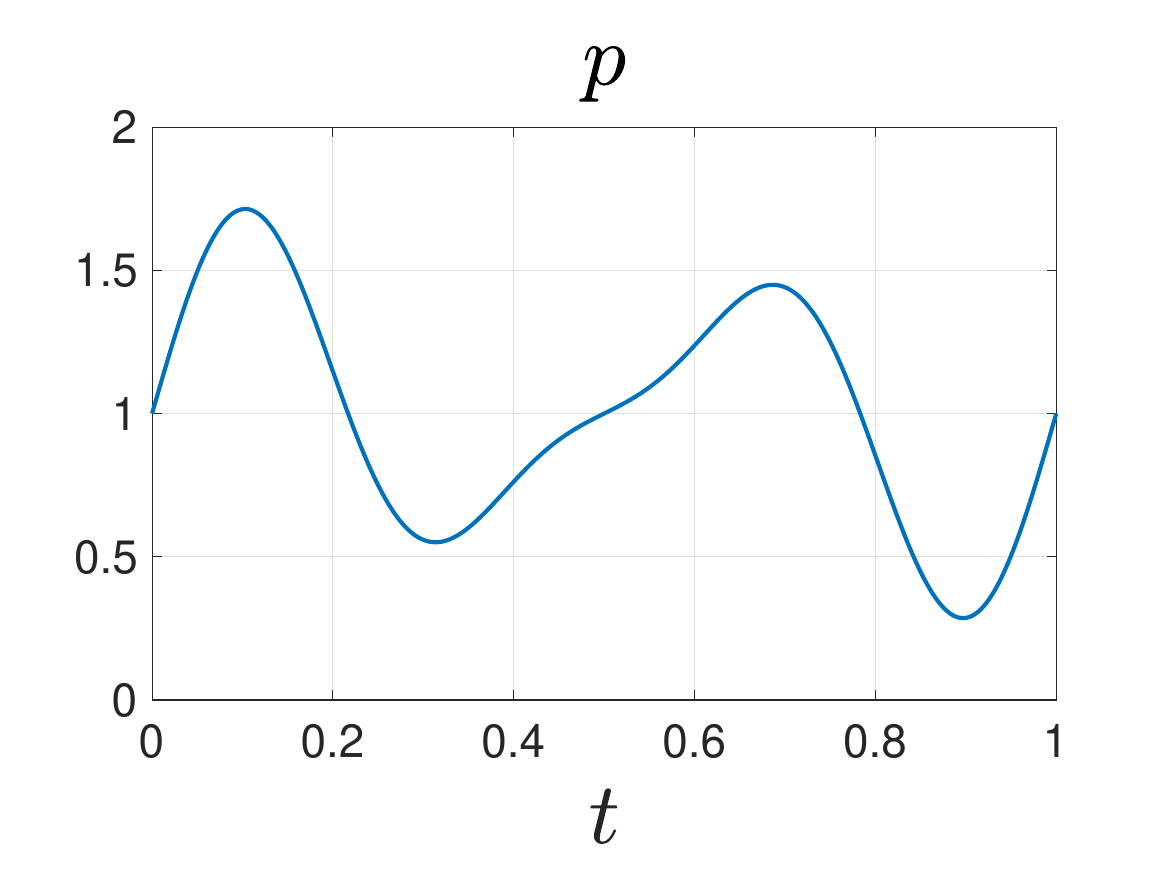}} 
\subfigure[]{\includegraphics[width=0.24\textwidth]{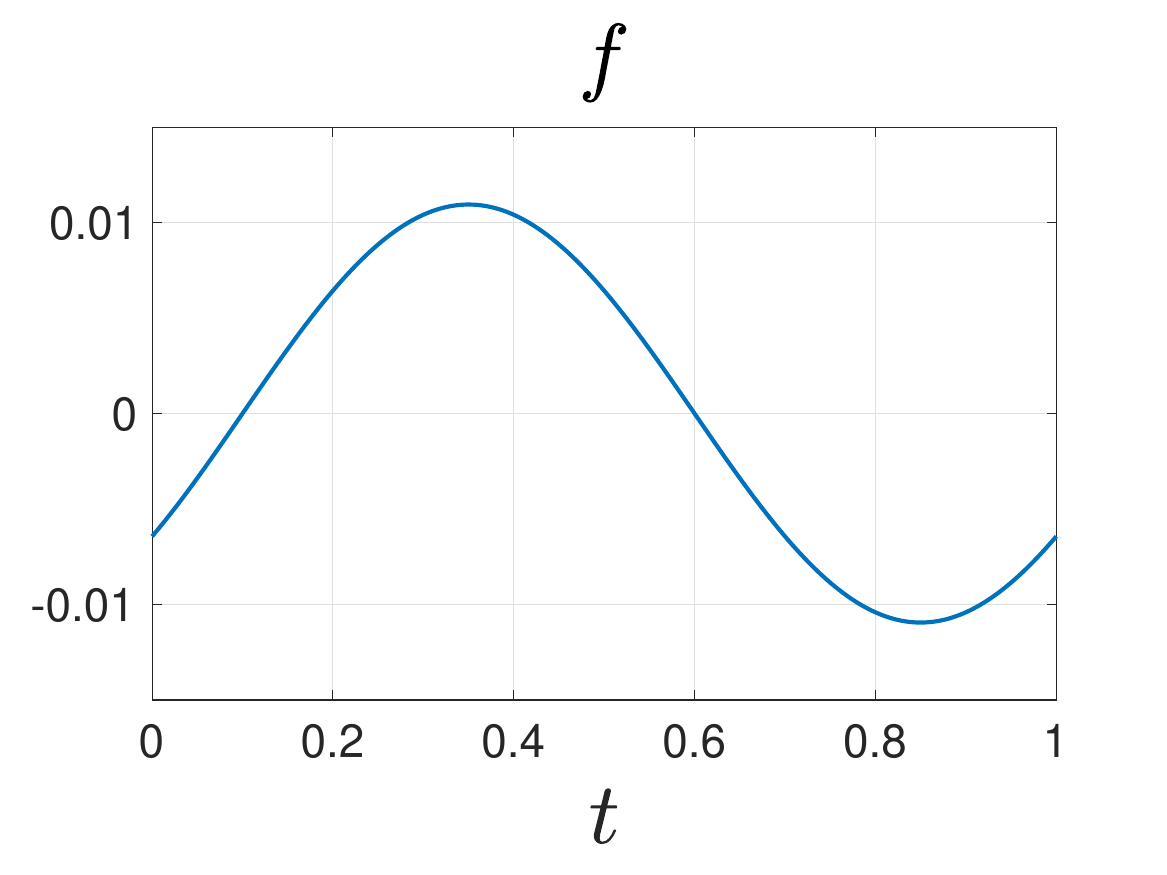}}
\subfigure[]{\includegraphics[width=0.24\textwidth]{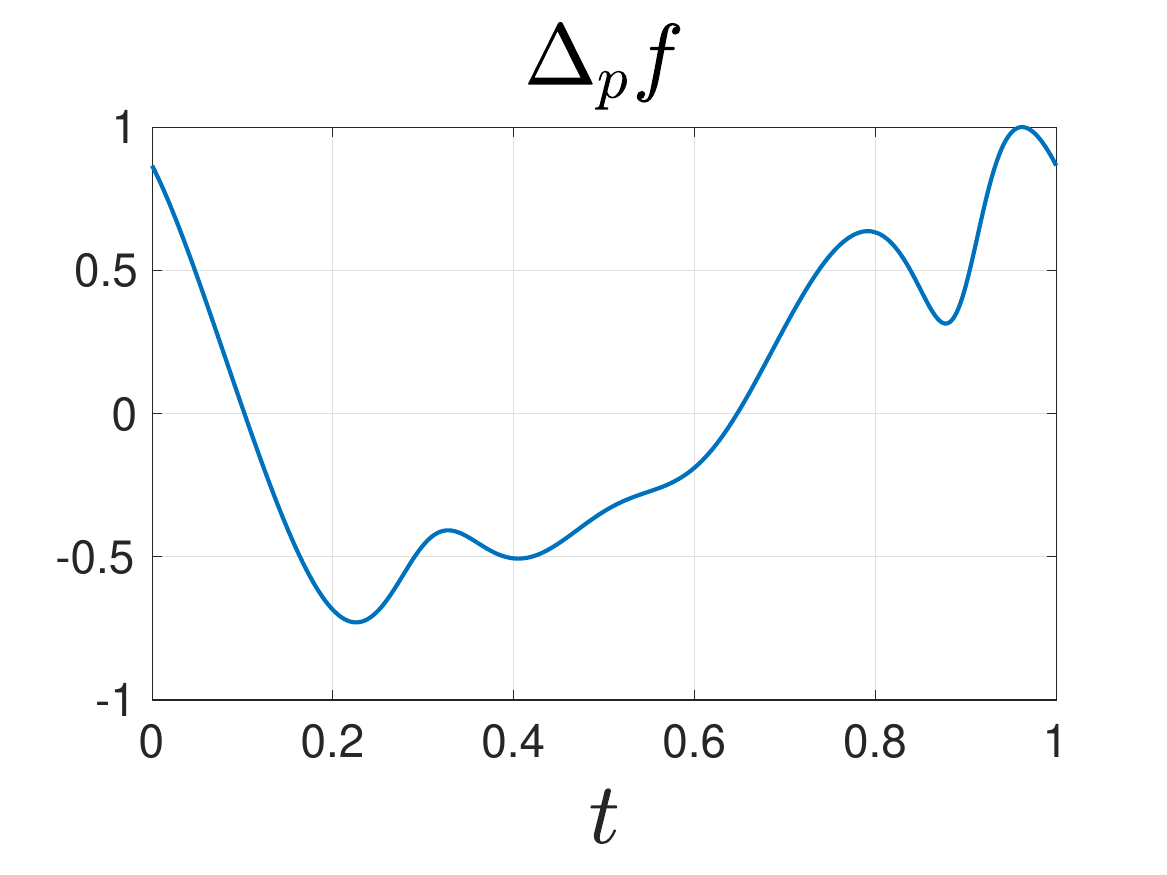}}
\vspace{-5pt}
\caption{
	\small 
	The simulated data on a closed curve in $\R^4$.
	(a) The first three coordinates of 2,000 samples, where the color depth represents the density function $p$.
	(b) The density function $p$.
	(c) Test function $f$. 
	(d) The function $\Delta_p f$ defined as in \eqref{eq:def-Delta-p}. 
	In (b)(c)(d), the functions are plotted against the intrinsic coordinate $ t$ of the curve.
}
\label{fig:data-1d}
\end{figure}

We give details of the simulated 1D manifold data used in Sections \ref{sec:nume-exp-knn} and \ref{sec:nume-exp-GL}. 

\paragraph{Manifold and density.}
The data samples, illustrated in Figure \ref{fig:data-1d}(a), are drawn from a distribution $p$ on a (closed) one-dimensional curve embedded in $\R^4$.
The curve can be parametrized by its intrinsic coordinate $t$, which is the arclength.
Specifically, for $t \in [0,1]$, the embedding mapping is given by 
\begin{equation} \label{eq:1d-embedding}
	\iota(t)  = \frac{1}{2\pi \sqrt{5}} 
	\left(  \cos(2\pi t),  \sin(2\pi t),   
    \frac{2}{5} \cos(10\pi t),    
    \frac{2}{5} \sin(10\pi t)   \right).
\end{equation}
The normalizing constant $\frac{1}{2\pi \sqrt{5}} $ is included such that the embedding $\iota: [0,1] \to \R^4$ is isometric. 

The data density $p$, expressed as a function of the intrinsic coordinate, is specified as 
$
p(t)  =   1  +  \frac{1}{2}  \sin(4\pi  t) +  \frac{1}{4} \sin(6\pi t)$.
The density function $p$ varies from $0.28$ to $1.72$ on the curve, see Figure \ref{fig:data-1d}(b).

\paragraph{Test function $f$ and $\Delta_p f$.}
We choose the test function to be
$f(t) =  \sin(2\pi (t - 0.1))$.
The value of $\Delta_p f$ can be analytically computed according to \eqref{eq:def-Delta-p}, and can be expressed in the intrinsic coordinate $t$ as 
$\Delta_p f(t) = f''(t) + f'(t) p'(t) / p(t)$.
The function $\Delta_p f$ is plotted in Figure \ref{fig:data-1d}(d).

\subsection{Computation of $\brho_{r}$ on 1D data}  \label{app:comp-brho-r}

Recall that $\brho_{r}$ is defined as the solution to a polynomial equation \eqref{eq:def-bar-rho-epsilon}, which involves the correction function $Q$ on the manifold.
In the 1D manifold example given in Appendix \ref{app:exp-data}, the expression of $p$ is already provided, with everything parametrized using the intrinsic coordinate. To obtain the expression of $Q$, it remains to identify the function $\omega$, which is specified in Lemma \ref{lemma:G-expansion-h-indicator}.

We will show that for the embedding given in \eqref{eq:1d-embedding}, 
the function $\omega(x)$ appearing in the definition of $\brho_r$ in \eqref{eq:def-bar-rho-epsilon} and defined in Lemma \ref{lemma:G-expansion-h-indicator} equals a constant $\frac{ 101 \pi^2}{5}$ for any $x \in [0,1]$, where $x$ is the intrinsic coordinate on the 1D closed curve.

To derive the explicit expression of $\omega$, we utilize lemma \ref{lemma:G-expansion-h-indicator} with $d=1$ and $\iota$ given in \eqref{eq:1d-embedding}.
Specifically, letting $f=1$, then Lemma \ref{lemma:G-expansion-h-indicator} gives 
\begin{equation} \label{eq:brho-compute1}
	\epsilon^{-1/2}  \int_0^1 h \left(  \frac{ \| \iota(x)- \iota(y)\|^2 }{\epsilon}\right)  dy
	=  m_0[h] + \epsilon\frac{m_2[h]}{2} \omega(x) +  O( \epsilon^{3/2} )  ,  
\end{equation}
where $m_0[h] = 2$, $m_2[h] = \frac{2}{3}$. 
Here, $x,y \in [0,1]$ are intrinsic coordinates on the 1D closed curve. Since $f \equiv 1$, the constant in the big-$O$ notation only depends on $\sM$.

We use the explicit expression of $\iota$ to expand the l.h.s. of \eqref{eq:brho-compute1} in terms of $\epsilon$. By comparing the $O(\epsilon)$ term on both sides of \eqref{eq:brho-compute1}, the expression of $\omega(x)$ can be derived.

By the expression of $\iota$ in \eqref{eq:1d-embedding} and the Taylor expansion $\iota(x+r) = \iota(x) + r \iota'(x) + \frac{r^2}{2} \iota''(x) + \frac{r^3}{6} \iota'''(x) + O(r^4)$, we have that 
\[ \|\iota(x+r) - \iota(x) \|^2 = r^2 +  r^4 \left(  \frac{1}{4} \| \iota''(x) \|^2 + \frac{1}{3} \iota'(x) \cdot \iota'''(x)  \right) + O(r^5) =r^2 - \frac{101\pi^2}{15}  r^4 + O(r^5), \]
which further implies
\[  \|\iota(x+r) - \iota(x) \| = r - \frac{101\pi^2}{30}  r^3 + O(r^4).\]
Therefore, 
\[ \{  y \in [0,1] \mid  \|\iota(x) - \iota(y)\| \leq \sqrt{\epsilon} \} =  \{ y \in  [ x - r_-, x + r_+] \mod 1\}, \]
where 
\[ r_- = \sqrt{\epsilon} +   \frac{101\pi^2}{30}  \epsilon^{3/2} + O(\epsilon^2), \quad   r_+ = \sqrt{\epsilon} +   \frac{101\pi^2}{30}  \epsilon^{3/2} + O(\epsilon^2). \]
Thus, the l.h.s. of \eqref{eq:brho-compute1} equals
\begin{align*}
	\epsilon^{-1/2}\int_0^1 h \left(  \frac{ \| \iota(x)- \iota(y)\|^2 }{\epsilon}\right)  dy = \epsilon^{-1/2} (r_+ + r_-) = 2 + \epsilon  \; \frac{1}{3} \; \frac{ 101 \pi^2}{5} + O(\epsilon^{3/2}).
\end{align*}
Comparing the $O(\epsilon)$ terms, we conclude that $\omega(x) = \frac{ 101 \pi^2}{5}$ holds for any $x \in [0,1]$.

\begin{figure}[tb]
	\centering{
		\includegraphics[width=0.9\linewidth]{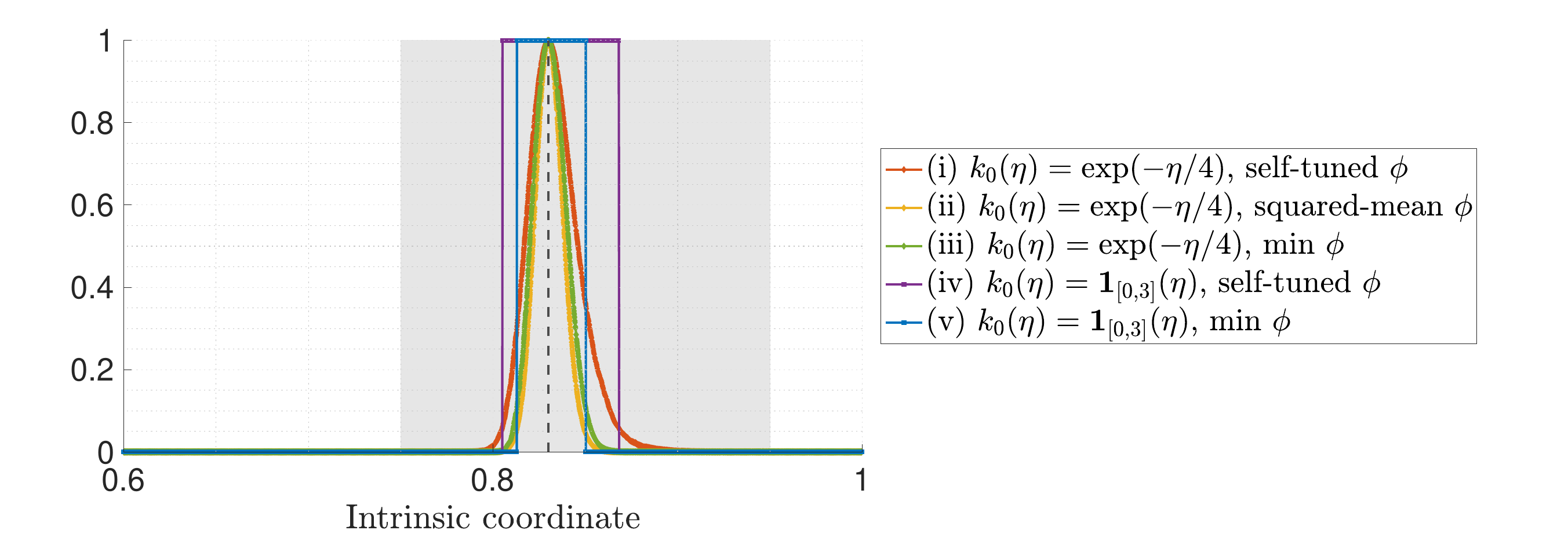}
	}
	\vspace{-5pt}
	\caption{
		\small
		Values of $k_0( \|x_0 - x_j \|^2 / ( \sigma_0^2 \phi(\hat{R}(x_0), \hat{R}(x_j))^2  ) )$ plotted against $x_j$ in its intrinsic coordinate on $[0,1]$ for a fixed $x_0$.
		The local neighborhood around $x_0$ used in the experiments is colored in grey.
		For each of the five types of affinities, 
		we use the value of $\sigma_0$ that achieves the minimum Err in Figure \ref{fig:converg-rates}.
	}
	\label{fig:kernel}
\end{figure}

\subsection{Computation on local neighborhood around $x_0$} \label{app:choice-neighhood}

\paragraph{Choice of the local neighborhood in practice.}

For the fixed $x_0 \in \sM$, where $\sM$ is the 1D closed curve embedded in $\R^4$ as given in \eqref{eq:1d-embedding},  the local neighborhood is selected so that, for all tested $\sigma_0$ and $k$, the kernel $k_0( \| \iota(x_0) - \iota(y) \|^2 / ( \sigma_0^2 \phi(\hat R(x_0), \hat R(y))^2 )  )$ remains below a pre-chosen threshold for $y$ outside this neighborhood. 
In the experiments, we set this threshold to be $10^{-3}$.
After choosing the neighborhood, the graph Laplacian $\widetilde{L}_\rw f(x_0)$ is computed only using the data points within this neighborhood, as described below. In practice, we use approximately $4,800$ points within this neighborhood.

\paragraph{Computation of $\widetilde{L}_\rw f$ on the local neighborhood.}

In Section \ref{sec:nume-exp-GL}, we compute $\widetilde{L}_\rw f(x_0)$ using only data points within this local neighborhood around $x_0$. Specifically, in each run, we draw data points from the distribution $p$ described in Appendix \ref{app:exp-data} within this neighborhood and denote these points as $\{x_i\}_{i=1}^N$.
Then, 
to compute $\widetilde{L}_\rw f(x_0)$, we first construct the vector $\widetilde{W}_{\mathrm{vec}} \in \R^N$ as follows (where the pair $(k_0,\phi)$ is one of the five types (i)-(v) listed in Section \ref{sec:nume-exp-GL}):
\[  (\widetilde{W}_{\mathrm{vec}} )_j = k_0 \left(  \frac{\| \iota(x_0) - \iota(x_j)\|^2 }{ \sigma_0^2 {\phi( \hat R(x_0), \hat R(x_j)  )^2} }  \right)  \frac{1}{\sigma_0^2 \phi( \hat R(x_0), \hat R(x_j) )^2}. \]
Then, by the definition of $\widetilde{L}_\rw f(x_0)$ in \eqref{eq:def-L-rw-tilde}, it can be computed as
\[\widetilde{L}_\rw f(x_0) = \frac{1}{ \frac{m_2}{2m_0} \sigma_0^2 \hat R(x_0)^2 } \left(  \frac{1}{ \widetilde{W}_{\mathrm{vec}}^T \mathbf{1}_N}  \widetilde{W}_{\mathrm{vec}}^T \rho_X f - f(x_0) \right), \]
where $\mathbf{1}_N = (1,1,\dots,1) \in \R^N$ and $\rho_X f \in \R^N$ is as in \eqref{eq:def-rhoX}.

\end{appendices}

\end{document}